\def\vb{{\bm{b}}}
\def\vs{{\bm{s}}}
\def\vw{{\bm{w}}}
\def\vx{{\bm{x}}}
\def\vy{{\bm{y}}}
\def\vz{{\bm{z}}}
\newcommand{\diff}{\mathrm d}
\newcommand{\mat}[1]{\mathbf{#1}}
\theoremstyle{plain}
\newtheorem{theorem}{Theorem}[section]
\newtheorem{proposition}[theorem]{Proposition}
\theoremstyle{definition}
\theoremstyle{remark}
\newtheorem{remark}[theorem]{Remark}
\icmltitlerunning{DAISI: Data Assimilation with Inverse Sampling using Stochastic Interpolants}
\begin{document}

\twocolumn[
\icmltitle{DAISI: Data Assimilation with Inverse Sampling using Stochastic Interpolants}

\icmlsetsymbol{equal}{*}

\begin{icmlauthorlist}
\icmlauthor{Martin Andrae}{equal,STIMA}
\icmlauthor{Erik Wikingsson}{equal,STIMA}
\icmlauthor{So Takao}{equal,Caltech,PX}
\icmlauthor{Tomas Landelius}{STIMA,SMHI}
\icmlauthor{Fredrik Lindsten}{STIMA}
\end{icmlauthorlist}

\icmlaffiliation{STIMA}{Division of Statistics and Machine Learning, Linköping University, Linköping, Sweden}
\icmlaffiliation{Caltech}{California Institute of Technology, Pasadena, USA}
\icmlaffiliation{PX}{PhysicsX, New York, USA}
\icmlaffiliation{SMHI}{Swedish Meteorological and Hydrological Institute, Norrköping, Sweden}

\icmlcorrespondingauthor{Martin Andrae}{martin.andrae@liu.se}
\icmlcorrespondingauthor{Erik Wikingsson}{erik.wikingsson@gmail.com}
\icmlcorrespondingauthor{So Takao}{so.takao@physicsx.ai}

\icmlkeywords{Machine Learning, ICML, Filtering, Diffusion models, Data Assimilation, Generative Models}

\vskip 0.3in
]

\printAffiliationsAndNotice{\icmlEqualContribution} %

\begin{abstract}
Data assimilation (DA) is a cornerstone of scientific and engineering applications, combining model forecasts with sparse and noisy observations to estimate latent system states. Classical high-dimensional DA methods, such as the ensemble Kalman filter, rely on Gaussian approximations that are violated for complex dynamics or observation operators. To address this limitation, we introduce DAISI, a scalable filtering algorithm built on flow-based generative models that enables flexible probabilistic inference using data-driven priors. The core idea is to use a stationary, pre-trained generative prior that first incorporates forecast information through a novel {\em inverse-sampling step}, before assimilating observations via guidance-based conditional sampling. This allows us to leverage any forecasting model as part of the DA pipeline without having to retrain or fine-tune the generative prior at each assimilation step. Experiments on challenging nonlinear systems show that DAISI achieves accurate filtering results in regimes with sparse, noisy, and nonlinear observations where traditional methods struggle. The code for DAISI is available at \href{https://github.com/Erik-Wikingsson/DAISI}{https://github.com/Erik-Wikingsson/DAISI}
\end{abstract}

\section{Introduction}\label{intro}

\begin{figure*}[t]
   
    \includegraphics[width=\textwidth]{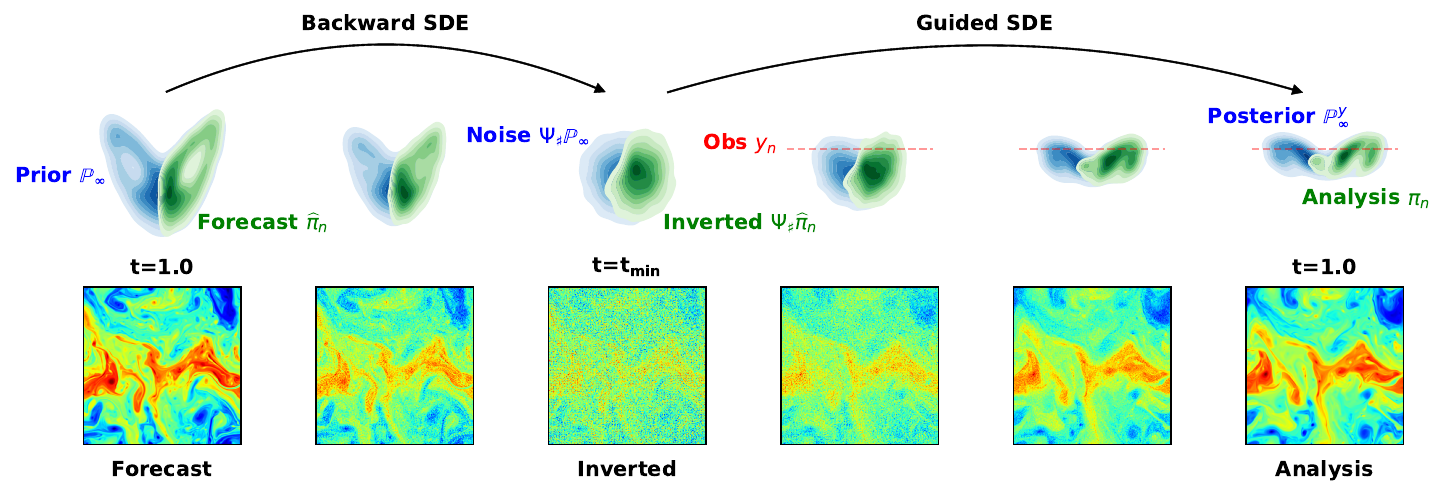}
   \vspace{-1em} 
   \caption{
   DAISI combines a flow-based unconditional prior $\mathbb{P}_\infty$ with the forecast ensemble $\hat{\pi}_n$. To condition on an observation $\vy_n$, one could apply the guided SDE starting from random noise, producing $\mathbb{P}_\infty^{\vy}$  (blue). However, this would ignore the information contained in the forecast. Instead, DAISI (green): (i) applies the backward SDE to the forecast ensemble, producing inverted samples $\Psi_\sharp \hat{\pi}_n$, and (ii) uses these latents as initial conditions for the guided SDE, generating approximate samples from the filtering distribution $\pi_n$.
    \label{fig:DAISI}
    }
\end{figure*}

Estimating the evolving state of a complex dynamical system is a fundamental challenge in science and engineering. In many real-world problems---such as weather forecasting, fluid mechanics, neuroscience and robotics---the states of interest are only partially observed and subject to multiple sources of uncertainty \cite{asch_DA_2016}. DA, and in particular, {\em filtering}, seeks to combine imperfect model forecasts with sparse, noisy observations to reconstruct these hidden states. Mathematically, this corresponds to estimating the {\em filtering distribution} $p(\vx_n|\vy_{1:n})$ where $\vx_n$ denotes the latent state and $\vy_{1:n} := (\vy_1, \, \dots, \, \vy_n)$, where $\vy_n$ is the observation at time $n$ modeled via the likelihood $p(\vy_n|\vx_n)$. 

Weather forecasting provides a clear illustration of the challenges present in DA. The atmosphere is chaotic, high-dimensional, and observed through nonlinear, noisy measurements. Accurate state estimation is essential for applications ranging from early warning systems and agriculture to renewable energy management \citep{noaa2025billiondollar,whitt2023economic,ipcc2023synthesis,energy_pred}. 
To meet these demands, classical DA methods, such as the Ensemble Kalman Filter (EnKF), or variational methods such as 4DVar, have been developed over decades of DA research.
However, these methods have respective drawbacks; EnKF is only guaranteed to work under near-Gaussian settings \cite{calvello2024accuracy}, and the inflation and localization parameters necessary to stabilize the filter are notoriously challenging to tune \cite{bannister2017review}. 4DVar, on the other hand, requires the tedious development of an adjoint model and cannot quantify uncertainty, due to it being a MAP estimation method. These methods have also been shown to struggle when applied to ML-based forecasting models \cite{tian2024exploring}.
Particle filters (e.g., \citet{naesseth2019elements}) can, in principle, solve the filtering problem, but suffers from the curse of dimensionality \cite{bengtsson2008curse}. These limitations motivate the development of more flexible, data-driven approaches to high-dimensional DA that can leverage non-Gaussian priors without suffering from the same curse of dimensionality as particle filters.

Recent progress in offline inverse problems---such as those in medical imaging, astrophysics, and computer vision---has shown that flow- and diffusion-based generative models can serve as powerful priors for conditional sampling (see, e.g., \citet{zhenginversebench, chung2025-survey-diffusionmodelsinverseproblems, Zheng-2025-survey}). 
However, extending these ideas to sequential filtering presents unique challenges. 
One possibility is to learn a generative prior for the predictive distribution $p(\vx_n|\vy_{1:n-1})$, but this would require retraining the model at every time step $n$ 
\cite{bao_1_trained}, making it impractical for operational use.
Another strategy is to use a pre-trained generative forecast model to guide towards observations \cite{chen2025flowdas, savary2025training}, though such models require the use of specialized dynamical models.
Alternatively, approaches based on approximating the smoothing distribution $p(\vx_{0:n}|\vy_{1:n})$ \cite{rozet2023score} exist. However, they are memory-intensive and scale poorly with the number of time steps. We elaborate on how these approaches relate to our proposed method in Section~\ref{sec:rw}.

\subsection{Contributions}
We present DAISI ({\bf D}ata {\bf A}ssimilation with {\bf I}nverse sampling using {\bf S}tochastic {\bf I}nterpolants), a scalable filtering algorithm built on flow-based generative models. 
DAISI avoids retraining at each assimilation step by leveraging a {\em stationary} pre-trained generative prior to condition on new observations via guidance.
To integrate dynamical information, DAISI couples this generative prior with a forecast model that advances an ensemble of states in time. Following the forecast, an {\em inverse-sampling} step runs the generative SDE backward from the forecast ensemble, mapping forecasted states to latent variables. These latent states then serve as initial conditions for conditional sampling under the learned prior.
The full procedure is illustrated in \cref{fig:DAISI}.

In summary, the strength and novelty of DAISI lie in its following capabilities:
\begin{enumerate}[label=(\roman*)]
\item \textbf{Zero-shot compatibility} with both numerical and ML-based forecast and observation models.
\item \textbf{Modular design}, supporting any flow-based generative model and gradient-based guidance method.
\item \textbf{Expressive uncertainty quantification}, capturing complex, multimodal, high-dimensional posteriors under sparse, noisy, and nonlinear observations.
\end{enumerate}

\section{Preliminaries}
\subsection{Data Assimilation}
Consider the state-space model
\begin{align}
    \vx_{n} &= \mathcal{F}(\vx_{n-1}, \boldsymbol{\omega}_n), \quad
    \boldsymbol{\omega}_n \sim p( \boldsymbol{\omega})
    \label{eq:state-eq} \\
    \vy_n &= \mathcal{H}(\vx_n) + \boldsymbol{\nu}_n, \quad
    \boldsymbol{\nu}_n \sim p( \boldsymbol{\nu})
    \label{eq:obs-eq}
\end{align}
defined for $n = 1, \ldots, N$ and $\vx_0 \sim p(\vx_0)$. Here, $\mathcal{F} : \mathcal{X}\times{\Omega} \rightarrow \mathcal{X}$ and $\mathcal{H} : \mathcal{X} \rightarrow \mathcal{Y}$ are the dynamics and observation operators, respectively, and $\boldsymbol{\omega}_n\in{\Omega}, \boldsymbol{\nu}_n \in\mathcal{Y}$ are stochastic noise sources. 
Our general formulation of the state evolution \eqref{eq:state-eq} encompasses various settings such as a deterministic physics-based simulator (in which case $\boldsymbol{\omega}_n\equiv0$) or a pre-trained generative model (in which case $\boldsymbol{\omega}_n$ corresponds to the driving noise process of the model).

DA seeks to infer the latent states $\{\vx_n\}_{n=0}^N$ from the noisy observations $\{\vy_n\}_{n=1}^N$. In particular, {\em filtering} refers to the problem of estimating the distribution $p(\vx_n | \vy_{1:n})$, while {\em smoothing} targets the full posterior $p(\vx_{0:N} | \vy_{1:N})$. 
In this work, we focus on the filtering problem, generally solved by alternating between a {\em forecast} and {\em analysis} step. Denoting by $\pi_n(\diff \vx_n) := p(\vx_n | \vy_{1:n}) \diff \vx_n$, the measure corresponding to the filtering distribution, this proceeds abstractly as
\begin{align}
    &\text{(Forecast)} \,\,\,\, \hat{\pi}_n(\diff \vx_n) := \mathbb{E}_{\boldsymbol{\omega}_n}\left[\mathcal{F}(\cdot, \boldsymbol{\omega}_n)_\sharp \pi_{n-1}\right](\diff \vx_n) \label{eq:forecast} \\
    &\text{(Analysis)} \,\,\,\, \pi_n(\diff \vx_n) \propto p(\vy_n | \vx_n) \hat{\pi}_n(\diff \vx_n), \label{eq:analysis}
\end{align}
where $\mathcal{F}_\sharp \pi$ denotes the pushforward of the measure $\pi$ with respect to a map $\mathcal{F}$, defined by $\mathcal{F}_\sharp \pi := \mathrm{Law}(\mathcal{F}(X))$ for $X \sim \pi$. Starting from $\pi_0$, repeated applications of \eqref{eq:forecast}--\eqref{eq:analysis} yields the sequence $\pi_1, \ldots, \pi_N$.
Although this recursion is well defined, computing these measures exactly is typically infeasible in practice.
Our aim is therefore to develop a practical and scalable approximation to the cycle \eqref{eq:forecast}–\eqref{eq:analysis}, capable of operating reliably in high-dimensional settings with sparse, nonlinear observations.

\subsection{Stochastic Interpolants for Generative Modeling}
\label{sec:background-interpolants}
Our proposed DA method leverages recent advances in generative modeling. In the static setting (i.e., ignoring time evolution), flow-based models enable sampling from a target distribution $\rho_1$ by learning a transport map from a simple latent distribution $\rho_0$.
Such transports can be parameterized in multiple ways; in this work, we adopt the framework of stochastic interpolants \cite{albergo2023stochastic}, though closely related formulations appear in the probability flow ODE for diffusion models \cite{song2020score} and in flow matching \cite{lipmanflow}.

We define a {\em stochastic interpolant} as a stochastic process
\begin{align}\label{eq:interpolant}
    \vz_t = \alpha_t \vz_0 + \beta_t \vz_1, \quad t \in [0, 1],
\end{align}
where the curves $\alpha, \beta \in C^2([0,1])$ satisfy the endpoint conditions $\alpha_0 = \beta_1 = 1$ and $\alpha_1 = \beta_0 = 0$. The data pair $(\vz_0, \vz_1)$ is sampled from a measure $\nu(\diff \vz_0, \diff \vz_1)$ whose marginals correspond to $\rho_0(\diff \vz_0)$ and $\rho_1(\diff \vz_1)$, respectively.

Let $\rho_t := \mathrm{Law}(\vz_t)$ denote the distribution of $\vz_t$ with density $p_t$, and define its {\em score} $\vs(t, \vz) := \nabla \log p_t(\vz)$ and {\em drift} $\vb(t, \vz) := \mathbb{E}[\dot{\vz}_t | \vz_t=\vz]$. \citet{albergo2023stochastic} show that for any non-negative $\epsilon\in C^0([0,1])$, the SDE 
\begin{subequations}\label{eq:forward-sde}
    \begin{align}
    &\diff \vz_t = (\vb(t, \vz_t) + \epsilon_t \vs(t, \vz_t)) \diff t + \sqrt{2\epsilon_t} \diff W_t, \\
    &\vz_0 \sim \rho_0, \quad t \in [0, 1], \label{eq:forward-sde-ic}
    \end{align}
\end{subequations}
where $W_t$ is the Brownian motion, has the same marginal laws $\rho_t$ as the interpolant in \eqref{eq:interpolant}. Furthermore, this is also true for the corresponding reverse-time process,
\begin{subequations}\label{eq:backward-sde}
    \begin{align}
        &\diff \vz_t = (\vb(t, \vz_t) - \epsilon_t \vs(t, \vz_t)) \diff t + \sqrt{2\epsilon_t} \diff \widehat{W}_t, \\
        &\vz_1 \sim \rho_1, \quad t \in [0, 1], \label{eq:backward-sde-ic}
    \end{align}
\end{subequations}
where $\widehat{W}_t$ is the Brownian motion in reverse time.
In the important special case when $\rho_0 = \mathcal{N}(\boldsymbol{0}, \mat{I})$, the score $\vs(t, \vz)$ can be explicitly related to the drift $\vb(t,\vz)$. This enables sampling via \eqref{eq:forward-sde} and \eqref{eq:backward-sde} for arbitrary $\epsilon_t$ using only the drift, which can be learned from samples $(\vz_0, \vz_1) \sim \nu$; 
See Appendix \ref{app:stochastic_interpolants} for details.

\subsection{Conditional Generation via Guidance}
\label{sec:cond-gen}
Given an observation $\vy \sim p(\vy | \vx)$ and a 
forward SDE \eqref{eq:forward-sde}, we can sample from the posterior measure $\rho^{\vy}_1(\diff \vx) \propto p(\vy | \vx) \rho_1(\diff \vx)$ by solving the following forward SDE with {\em guidance}:
\begin{subequations}\label{eq:forward-guided-sde}
    \begin{align}
        &\diff \vz_t = (\tilde{\vb}(t, \vz_t;\vy) + \epsilon_t\tilde{\vs}(t, \vz_t;\vy)) \diff t + \sqrt{2\epsilon_t} \diff W_t, \\
        &\vz_0 \sim \rho_0, \quad t \in [0, 1], \label{eq:forward-guided-sde-ic}
    \end{align}
\end{subequations}
where the {\em guided score} $\tilde{\vs}$ is given by
\begin{align}
    \tilde{\vs}(t, \vz_t; \vy) = \vs(t, \vz_t) + \nabla_{\vz_t} \log p(\vy | \vz_t),
\end{align}
and the {\em guided drift} $\tilde{\vb}$ by 
\begin{align}
    \tilde{\vb}(t, \vz_t; \vy) =
    \vb(t, \vz_t) + 
\lambda_t  \nabla_{\vz_t} \log p(\vy | \vz_t),
\end{align}
for $\lambda_t$ defined as in Appendix \ref{app:conditional-drift}.

In general, the guidance term is intractable as we usually only have access to $p(\vy | \vz_1)$ but not $p(\vy | \vz_t) = \mathbb{E}_{\vz_1 | \vz_t}[p(\vy | \vz_1)]$. Various approaches exist to approximate this term, e.g. diffusion posterior sampling (DPS) \cite{chung2023diffusion} and moment matching posterior sampling (MMPS) \cite{Rozet_Andry_Lanusse_Louppe_2024} (see Appendix \ref{app:guidance} for details and \cite{daras2024survey} for a survey on guidance methods).

\section{Method}
Our method, DAISI, performs filtering by combining the strengths of ensemble-based methods with the expressive priors offered by flow-based generative models. This is conceptually similar to the {\em hybrid ensemble-variational (EnVar) method} in classical DA \cite{hamill2000hybrid, lorenc2003potential}, which blend a static climatological background covariance $\mat{P}$ with an ``error-of-the-day" ensemble covariance to represent forecast uncertainty. This compensates for the limitations of either component alone: relying solely on a static background covariance ignores the dynamical evolution of errors, while using only a small ensemble fails to characterize high-dimensional uncertainties accurately.

DAISI builds on this philosophy but moves beyond Gaussian background covariances $\mat{P}$ used in hybrid EnVar by considering a full {\em background measure} $\mathbb{P}_\infty$, learned using flow-based generative models. In practice, we take $\mathbb{P}_\infty$ to be the invariant measure of the dynamical system \eqref{eq:state-eq}, whose samples can be approximated from trajectories.
While its existence and ergodicity are not generally guaranteed, it is a reasonable assumption for many systems, including geophysical flows.
Given the learned prior $\mathbb{P}_\infty$, we can in principle assimilate the observation $\vy_n$ by sampling from the posterior
\begin{align}\label{eq:posterior}
\mathbb{P}_\infty^\vy(\diff \vx_n) \propto p(\vy_n | \vx_n) \mathbb{P}_\infty(\diff \vx_n)
\end{align}
using guidance (Section \ref{sec:cond-gen}). However, this alone neglects the dynamical evolution encoded in \eqref{eq:state-eq}, which is essential for accurately tracking the latent states through time. This necessitates coupling $\mathbb{P}_\infty$ with the forecast ensemble, echoing the role of the ensembles in hybrid EnVar schemes.

To achieve this, we introduce {\em inverse sampling}, which transfers dynamical information from the ensemble forecast into the latent space of the generative model. Given forecast particles $\{\hat{\vx}^{(j)}_n\}_{j=1}^J$, this proceeds by solving the unconditional backward SDE \eqref{eq:backward-sde} from $t=1$ to $t = t_{\min} \in (0, 1)$, using each $\hat{\vx}_n^{(j)}$ as terminal conditions. The resulting latent variables $\{\vz_{t_{\min}, n}^{(j)}\}_{j=1}^J$ encode the forecast information in ``noise space". We can then perform conditional sampling by integrating the guided forward SDE \eqref{eq:forward-guided-sde} from $t_{\min}$ to 1, initialized at these latent states. This produces updated particles $\{\vx_n^{(j)}\}_{j=1}^J$ that are samples of $\mathbb{P}_\infty^\vy$, while simultaneously containing information about the ensemble $\{\hat{\vx}^{(j)}_n\}_{j=1}^J$ through its latent representations $\{\vz_{t_{\min}, n}^{(j)}\}_{j=1}^J$.

Taken altogether, DAISI performs filtering by alternating between the following {\em forecast} and {\em analysis} steps:

\paragraph{Forecast.} Given particles $\{\vx^{(j)}_{n-1}\}_{j=1}^J$ approximating the filtering distribution ${\pi}_{n-1}$, generate forecasts
\begin{align}\label{eq:pred}
    \hat{\vx}_{n}^{(j)} &= 
    \mathcal{F}(\vx_{n-1}^{(j)}, \boldsymbol{\omega}_n^{(j)}), \quad j = 1, \ldots, J,
\end{align}
for i.i.d. noise realizations $\boldsymbol{\omega}_n^{(j)} \sim p(\boldsymbol{\omega})$. This produces samples from the {
\em predictive distribution} $\hat{\pi}_n$ (see \cref{eq:forecast}).

\paragraph{Analysis.} 
Assuming access to a flow-based generative model bridging $\rho_0 = \mathcal{N}(\boldsymbol{0}, \mat{I})$ to $\rho_1 = \mathbb{P}_\infty$, and given forecast particles $\{\hat{\vx}_{n}^{(j)}\}_{j=1}^J$:
\begin{enumerate}
    \item {\bf Inverse sampling:} Solve the backward SDE \eqref{eq:backward-sde} from $t=1$ to $t_{\min}$ with terminal conditions $\hat{\vx}_n^{(j)}$, to obtain the latent states $\vz_{t_{\min}, n}^{(j)}$.
    \item {\bf Forward guided sampling:} Solve the conditional forward SDE \eqref{eq:forward-guided-sde}, from $t = t_{\min}$ to $1$ with initial condition $\vz^{(j)}_{t_{\min},n}$ to obtain updated particles $\vx_{n}^{(j)}$.
\end{enumerate}

\begin{algorithm}[t]
\caption{DAISI filtering}
\begin{algorithmic}[1]
  \STATE \texttt{In:} Samples $\{\vx_{0}^{(j)}\}_{j=1}^J \sim \pi_{0}$, observations $\{\vy_n\}_{n=1}^N$
  \FOR{$n = 1, \dots, N$}
      \STATE \texttt{Forecast:}
      \FOR{$j = 1, \dots, J$}
        \STATE Sample $\boldsymbol{\omega}_n^{(j)} \sim p(\boldsymbol{\omega})$
        \STATE $\widehat{\vx}_n^{(j)} \gets \mathcal{F}(\vx_{n-1}^{(j)}, \boldsymbol{\omega}_n^{(j)})$
      \ENDFOR
      \STATE \texttt{Inverse sampling:}
      \FOR{$j = 1, \dots, J$}
        \STATE Integrate SDE \eqref{eq:backward-sde} backward from $t=1$ to $t_{\min}$
        with terminal state $\widehat{\vx}_n^{(j)}$ to obtain $\vz^{(j)}_{t_{\min},n}$
      \ENDFOR
      \STATE \texttt{Guided sampling:}
      \FOR{$j = 1, \dots, J$}
        \STATE Integrate guided SDE \eqref{eq:forward-guided-sde} from $t_{\min}$ to $1$
        with initial state $\vz^{(j)}_{t_{\min},n}$ and observation $\vy_n$ to obtain $\vx_n^{(j)}$
      \ENDFOR
    \ENDFOR
  \STATE \texttt{Out:} Updated particles $\{\vx_n^{(j)}\}_{j=1}^J$ for $n = 1, \ldots, N$
\end{algorithmic}
\label{alg:daisi}
\end{algorithm}

We summarize this assimilation procedure in Algorithm \ref{fig:DAISI}.

\subsection{Does DAISI sample from the filtering distribution?}\label{sec:error}

For $0 < s \leq t \leq 1$ and $\epsilon \geq 0$, denote by $\Phi_{s,t}^\epsilon, \Phi_{s,t}^{\vy, \epsilon}$ the stochastic flows of the forward SDEs \eqref{eq:forward-sde}, \eqref{eq:forward-guided-sde}, respectively. 
Likewise, let $\Psi_{t,s}^\epsilon, \Psi_{t,s}^{\vy, \epsilon}$ denote the corresponding backward-flow maps. 
We assume for simplicity that the noise coefficient in the SDEs are given by constants $\epsilon_t \equiv \epsilon$.
For a fixed time step $n$ and some $t_{\min} \in (0, 1)$, we see that the analysis step in DAISI samples from the measure
\begin{align}\label{eq:daisi-distribution}
    \pi_{n, t_{\min}, \epsilon}^{\text{DAISI}} := \mathbb{E}\left[(\Phi_{t_{\min}, 1}^{\vy, \epsilon} \circ \Psi_{1, t_{\min}}^\epsilon)_\sharp \hat{\pi}_n \right],
\end{align}
where $\hat{\pi}_n$ denotes the predictive distribution at time $n$, and the expectation is over the Brownian motions used when integrating the backward and forward SDEs.
For DAISI to function as a reliable filter, this measure should approximate the true filtering distribution $\pi_n$ closely. While there is no guarantee that \eqref{eq:daisi-distribution} exactly matches $\pi_n$, our experiments show that we can achieve a good approximation by tuning the hyperparameters $t_{\min}$ and $\epsilon$. We therefore seek to understand why tuning these hyperparameters in particular can help to mitigate the bias.

In the following, let us assume that for all time $n$, the predictive distribution $\hat{\pi}_n$ is {\em absolutely continuous} with respect to the invariant measure $\mathbb{P}_\infty$, i.e., there exists a measurable density ratio $f_n : \mathbb{R}^{d_x} \rightarrow \mathbb{R}_{\geq 0}$ such that $\hat{\pi}_n(\diff \vx_n) \propto f_n(\vx_n) \mathbb{P}_\infty(\diff \vx_n)$. Then, we can rewrite the filtering distribution as follows
\begin{align}
\pi_n(\diff \vx_n) &\stackrel{\eqref{eq:analysis}}{\propto} p(\vy_n | \vx_n) \hat{\pi}_n(\diff \vx_n) \\
&\propto p(\vy_n | \vx_n) f_n(\vx_n) \mathbb{P}_\infty(\diff \vx_n) \\
&\propto f_n(\vx_n) \mathbb{P}^\vy_\infty(\diff \vx_n). \label{eq:filtering-measure}
\end{align}
We now compare the DAISI analysis distribution \eqref{eq:daisi-distribution} with the ideal target \eqref{eq:filtering-measure} in various limits to understand the role of each hyperparameter.

\paragraph{Effect of $t_{\min}$.}
For simplicity, assume that $\epsilon = 0$ and introduce the shorthands $\pi_{n}^{\text{DAISI}} := \pi_{n, 0, 0}^{\text{DAISI}}$, $\Phi: = \Phi_{0,1}^0$ and $\Psi := \Psi_{1,0}^0$. Similarly, write $\Phi^\vy: = \Phi_{0,1}^{\vy,0}$ and $\Psi^\vy := \Psi_{1,0}^{\vy, 0}$. Note that in this case, we have $\Psi = \Phi^{-1}$ and $\Psi^\vy = (\Phi^\vy)^{-1}$.
To understand the effect of $t_{\min}$, we consider the limiting cases $t_{\min} \rightarrow 1$ and $t_{\min} \rightarrow 0$.
By time-continuity of the flows on $[0, 1]$, the first limit $t_{\min} \rightarrow 1$ trivially yields $\pi_{n, t_{\min}, 0}^{\text{DAISI}} \rightarrow \hat{\pi}_n \propto f_n \mathbb{P}_\infty$. Comparing with \eqref{eq:filtering-measure}, we observe that although the density ratios agree, the underlying base measures differ.

Now for the limit $t_{\min} \rightarrow 0$, first note that the inverse sampling step in DAISI produces the intermediary measure
\begin{align}\label{eq:modified-reference}
    &\widetilde{\rho}_0(\diff \vx) := \Psi_\sharp \hat{\pi}_n(\diff \vx) \\
    &\propto f_n( \Phi(\vx))\Psi_\sharp \mathbb{P}_\infty(\diff \vx) = f_n( \Phi(\vx)) \rho_0(\diff \vx),
\end{align}
where we used that $\Psi_\sharp \mathbb{P}_\infty = \rho_0$.
Next applying the guided flow in the second step gives us
\begin{align}
    &\pi_{n}^{\text{DAISI}}(\diff \vx) = \Phi^{\vy}_\sharp \widetilde{\rho}_0(\diff \vx) \\
    & \stackrel{\eqref{eq:modified-reference}}{\propto} f_n(\Phi(\Psi^\vy(\vx)))(\Phi^{\vy})_\sharp \rho_0(\diff \vx) = g_n^{\vy}(\vx) \, \mathbb{P}^\vy_\infty(\diff \vx), \label{eq:daisi-posterior}
\end{align}
where $g_n^{\vy}(\vx) := f_n(\Phi(\Psi^\vy(\vx)))$ and we used that $\Phi^{\vy}$ transports $\rho_0$ to $\mathbb{P}_\infty^\vy$ by construction. Comparing \eqref{eq:daisi-posterior} with \eqref{eq:filtering-measure}, we see that in this limit, the base measures match, while the density ratios differ. Thus, the extremes $t_{\min} \rightarrow 1$ and $t_{\min} \rightarrow 0$ each match {\em only one component} of the true filtering distribution, motivating the choice of an intermediate $t_{\min} \in (0, 1)$ that provides the best trade-off between these two types of mismatch. We also note that larger $t_{\min}$ reduces computational cost by shortening the integration interval.

In Figure \ref{fig:tmin-ablation}, we illustrate this trade-off using a simple 1D toy problem (see Appendix \ref{app:1d-gmm} for details). When $t_{\min} = 0.01$, the distribution $\pi^{\text{DAISI}}_n$ (in orange) shares characteristics of $\mathbb{P}^\vy_\infty$, as predicted from \eqref{eq:daisi-posterior}. Conversely, for a large value such as $t_{\min} = 0.6$, the distribution closely matches $\hat{\pi}_n$, as expected. The intermediate setting $t_{\min} = 0.3$ yields a distribution that interpolates between these two extremes, yielding a distribution that better aligns with the filtering distribution, which combines features of both $\mathbb{P}^\vy_\infty$ and $\hat{\pi}_n$ --- as reflected by the noticeably lower Maximum Mean Discrepancy (MMD) score \cite{gretton2012kernel}.

\begin{figure}[t]
  \centering
  \includegraphics[width=0.8\linewidth]{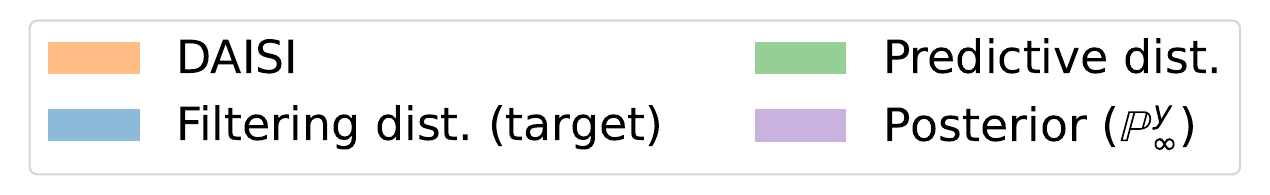}

  \begin{minipage}{.32\linewidth}\centering
    \includegraphics[width=\linewidth]{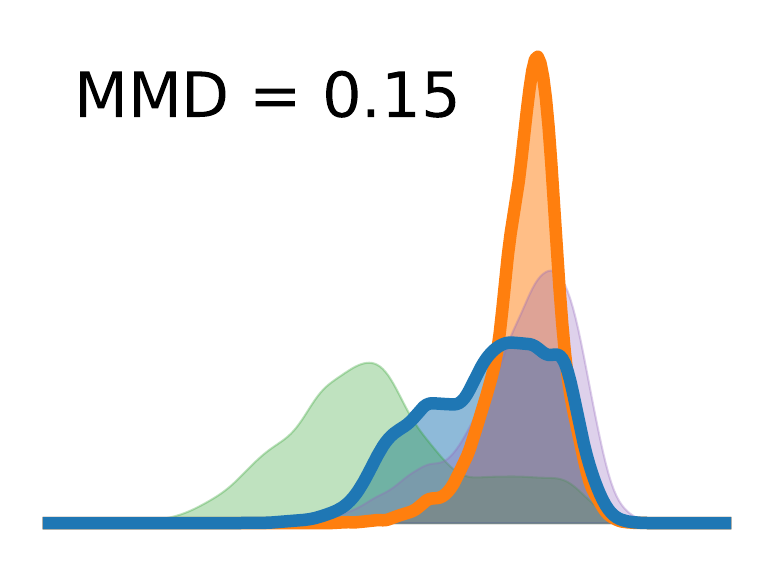}
    \subcaption{$\quad t_{\min} = 0.01$}\label{fig:t_min_ablation_0.01}
  \end{minipage}\hfill
  \begin{minipage}{.32\linewidth}\centering
    \includegraphics[width=\linewidth]{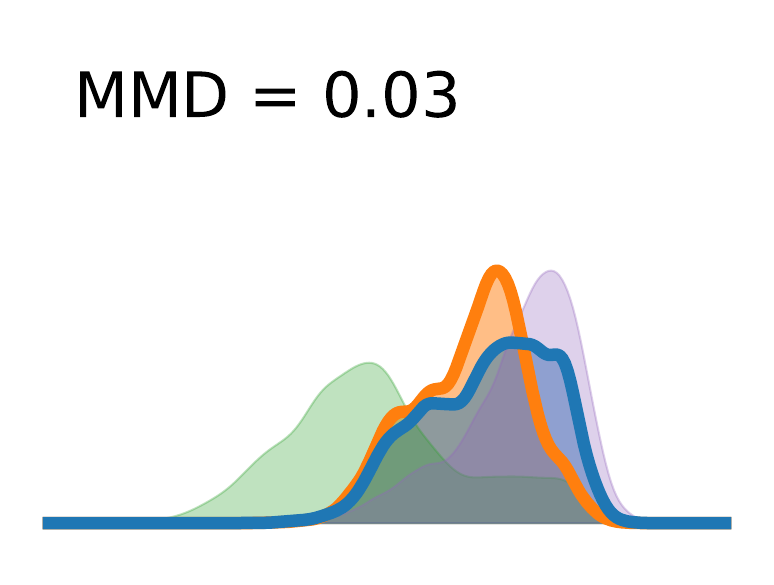}
    \subcaption{$\quad t_{\min} = 0.3$}\label{fig:t_min_ablation_0.3}
  \end{minipage}\hfill
  \begin{minipage}{.32\linewidth}\centering
    \includegraphics[width=\linewidth]{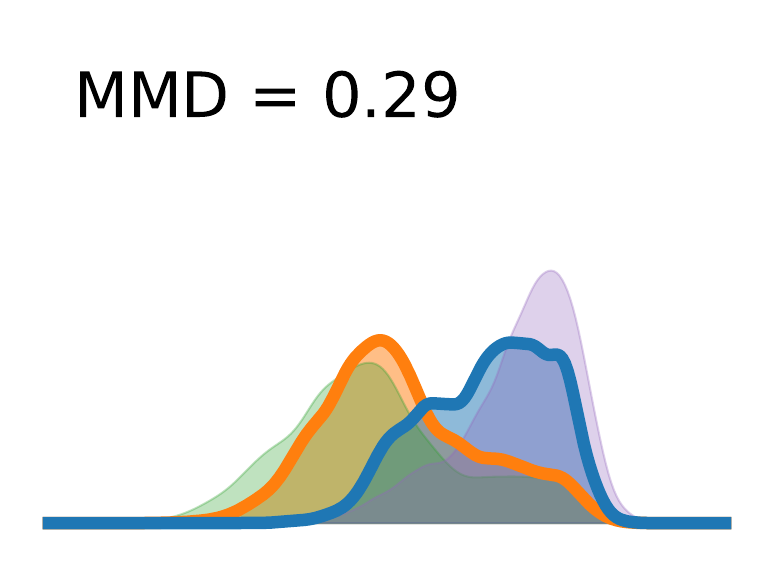}
     \subcaption{$\quad t_{\min} = 0.6$}\label{fig:t_min_ablation_0.6}
  \end{minipage}
  \caption{Ablation with respect to $t_{\min}$, fixing $\epsilon = 0$. The measure $\pi^{\text{DAISI}}_{n, t_{\min}, \epsilon}$ (orange) is pulled towards $\hat{\pi}_n$ (green) as $t_{\min} \rightarrow 1$. There is an intermediate $t_{\min}^*$ where it matches $\pi_n$ (blue) the best.}
  \label{fig:tmin-ablation}
\end{figure}

\paragraph{Effect of $\epsilon$.}
We next examine the effect of the noise parameter $\epsilon$. In the absence of observations, applying the backward process \eqref{eq:backward-sde} to the forecast samples, followed by the forward process \eqref{eq:forward-sde}, provides a mechanism to resample the forecast ensemble. In the deterministic case $\epsilon = 0$, the original ensemble members are exactly recovered. Introducing a small amount of noise via $\epsilon$ allows the generation of ``new" forecast samples that are likely under $\mathbb{P}_\infty$, yet remains close to the original forecasts. 
As ${\epsilon}$ increases, the resampled ensemble progressively deviates, and in the limit ${\epsilon}\to\infty$, we have $\pi_{n, 0, \epsilon}^{\text{DAISI}} \rightarrow \mathbb{P}_\infty^\vy$ and therefore all ensemble information is lost. In the conditional setting, $\epsilon$ therefore controls the extent to which conditional samples retain forecast information. 
In Appendix~\ref{app:entropy-dissipation} we provide a more precise statement of this in the form of a Bakry-\'Emery-type entropy dissipation result.
In practice, we find that taking a small but non-zero $\epsilon$ is useful for mitigating the tendency of particles from collapsing to a single state as assimilation progresses. In our 1D toy experiment (Appendix \ref{app:1d-gmm}), we show that tuning both $t_{\min}$ and $\epsilon$ yields better performance than adjusting $t_{\min}$ alone.

\subsection{Complexity Analysis}
Assuming the dynamics model $\mathcal{F}(\cdot)$ incurs a cost of $\mathcal{O}(f(d_x))$, the prediction step~\eqref{eq:pred} costs $\mathcal{O}(J f(d_x))$. Denoting by $\mathcal{O}(g(d_x, d_y))$ the cost of a single Euler–Maruyama (EM) step for solving \eqref{eq:forward-sde} or \eqref{eq:backward-sde} (with or without guidance), the analysis step costs $\mathcal{O}(J T g(d_x, d_y))$, where $T$ is the number of EM steps. Hence, the overall computational complexity of DAISI is $\mathcal{O}\big(J(f(d_x) + T g(d_x, d_y))\big)$. For comparison, the ensemble transform Kalman filter (ETKF) has cost $\mathcal{O}\big(J(f(d_x) + (d_x + d_y)J + J^2)\big)$. Since the forward evaluation of the U-net parameterizing the drift $\vb(t, \vz)$ scales linearly with input dimension (i.e., $g(d_x, d_y) = \mathcal{O}(d_x + d_y)$), DAISI achieves comparable cost to ETKF when $T \sim J$, while scaling linearly in ensemble size $J$—in contrast to ETKF’s cubic scaling in $J$. However, in settings where $J$ is small (e.g., $\mathcal{O}(10)$), DAISI can be more expensive due to the need for $T = \mathcal{O}(10^2)$ EM steps to accurately solve the forward and backward SDEs. 

We note that the \texttt{Forecast}, \texttt{Inverse sampling}, and \texttt{Guided sampling} steps in \cref{alg:daisi} are all embarrassingly parallelizable, enabling further reductions in computational cost at the expense of increased memory use.

\subsection{Related Works}\label{sec:rw}
Early work on diffusion priors for data assimilation includes Score-based Data Assimilation (SDA) \cite{rozet2023score}, which learns the score of full trajectories to perform smoothing without an explicit forecast model. While effective, it is memory-intensive and scales poorly to the size of the filtering window. Adapting it to online filtering requires truncating the observation window, introducing a bias.

Closer to our setting, \citet{yang2025generative} address linear filtering by guiding a pre-trained diffusion model using both forecasts and observations. 
Instead of solving the backward SDE, they rely on SDEdit \cite{meng2022sdedit}, which partially inverts a sample by re-noising it, and incorporate observations via RePaint \cite{lugmayr2022repaint}, an iterative noising-denoising procedure. Together, these steps create a trade-off between preserving forecast information and enforcing the conditioning. In Appendix~\ref{app:sqg_ablation}, we show that replacing the backward SDE with SDEdit leads to substantially worse results.

The Ensemble Score Filter (EnSF) \cite{bao2024ensemble} replaces the learned score with an analytical approximation, enabling training-free assimilation of nonlinear observations. However, it struggles in sparse settings due to the lack of a learned structure. An extension of EnSF in latent-space \cite{si2025latentensf} partially addresses this; however, this does not easily amortize over observation models and its results depend heavily on the quality of the trained autoencoder. EnSF and its variants have been applied to the Surface Quasi-Geostrophic model \cite{bao2025nonlinear, yin2024scalable, liang2025ensemble}, and extensions with Gaussian mixture score approximations \cite{zhang2025iensf} and problem-dependent data coupling \cite{transue2025flow} have been proposed.

Finally, recent works explore guidance for diffusion-based forecasting models. 
For instance, FlowDAS \cite{chen2025flowdas} employs stochastic interpolants to learn a one-step forecast distribution $p(\vx_{n+1}|\vx_n)$, which can serve as a generative prior to condition on new observations $\vy_{n+1}$ via guidance. However, this procedure effectively targets the local conditional $p(\vx_{n+1}|\vx_n, \vy_{n+1})$, which can be arbitrarily far from the true filtering distribution $p(\vx_{n+1}|\vy_{1:n+1})$. The work
\cite{savary2025training}\footnote{A minor difference from \cite{chen2025flowdas} is that it uses GenCast \cite{gencast} as the predictive model instead of the stochastic interpolant.} proposes a correction to this mismatch using particle reweighting. However, this reintroduces particle-filter-style weight degeneracy and thus susceptibility to the curse of dimensionality.

\section{Experiments}
We evaluate DAISI on three systems: the  Lorenz '63 (L63) system \cite{lorenz1963deterministic}, a Surface Quasi-Geostrophic system (SQG) \cite{tulloch2009quasigeostrophic}, and SEVIR, a real-world radar dataset \cite{veillette2020sevir}.

\subsection{Lorenz '63 System}\label{sec:lorenz}
The goal of this experiment is to demonstrate on the L63 system that an appropriate choice of $t_{\min}$ and $\epsilon$ can approximate the filtering distribution closely. The L63 system is integrated with standard parameters
from random initial conditions, and the final 500 steps are used to test the assimilation methods.  For reference, we compare DAISI with the Bootstrap Particle Filter (BPF) \cite{gordon1993novel}. As BPF is asymptotically exact and non-degenerate for the low-dimensional L63 system, we take this as the ``ground truth" filtering result to compare against. We also use an asymptotically exact guidance method for DAISI (see Appendix \ref{app:mc-guidance}) to isolate DAISI's intrinsic error. 

\begin{table}
    \centering
    \resizebox{\linewidth}{!}{
    \begin{tabular}{lccc}
        \toprule
        & RMSE ($\downarrow$) & CRPS ($\downarrow$) & SSR ($\approx 1$) \\
        \midrule
        BPF & $\mathbf{1.18}_{\pm 0.50}$ & $\mathbf{1.46}_{\pm 0.49}$ & $2.57_{\pm 0.56}$ \\
        \midrule
        DAISI (no inversion) & $5.90_{\pm 1.09}$ & $7.17_{\pm 1.38}$ & $\underline{1.40}_{\pm 0.25}$ \\
        DAISI ($t_{\min} \approx 0$, $\epsilon=0$) & $7.04_{\pm 0.90}$ & $11.9_{\pm 1.55}$ & $0.06_{\pm 0.02}$ \\
        DAISI (tuned $t_{\min}$, $\epsilon=0$) & $2.44_{\pm 2.89}$ & $3.58_{\pm 4.79}$ & $\mathbf{1.07}_{\pm 1.09}$ \\
        \midrule
        DAISI (tuned both $t_{\min}$ \& $\epsilon$)  & $\underline{2.03}_{\pm 1.01}$ & $\underline{2.61}_{\pm 1.38}$ & $\mathbf{1.08}_{\pm 0.41}$ \\
        DAISI (tuned both, set $\epsilon=0$) & $2.05_{\pm 1.31}$ & $3.19_{\pm 2.14}$ & $0.32_{\pm 0.09}$ \\
        \bottomrule
    \end{tabular}
    }
    \caption{Summary metrics on the L63 example. We display the mean and standard deviation across 10 independent experiments, over the last $100$ assimilation steps. The best score for each metric is highlighted in \textbf{bold} and the second best with an \underline{underline}.}
    \label{tab:l63-metrics}
\end{table}

\begin{figure}
  \centering
  \includegraphics[width=0.9\linewidth]{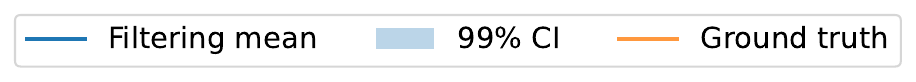}

  \begin{minipage}{.347\linewidth}\centering
    \includegraphics[width=\linewidth]{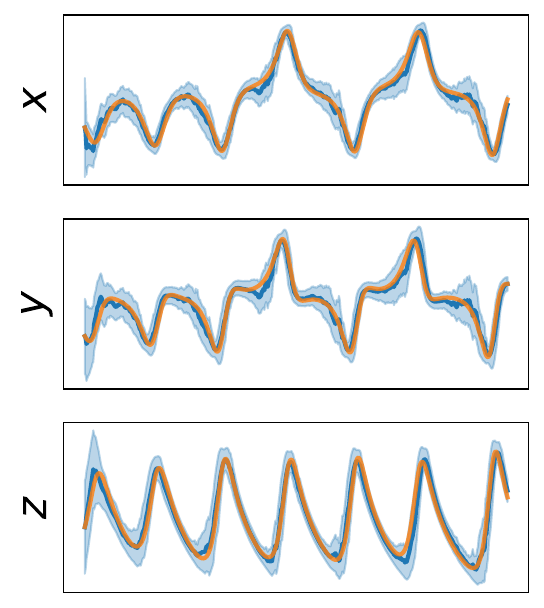}
    \subcaption{\, {\small BPF}}\label{fig:l63_smc_results}
  \end{minipage}
  \begin{minipage}{.315\linewidth}\centering
    \includegraphics[width=\linewidth]{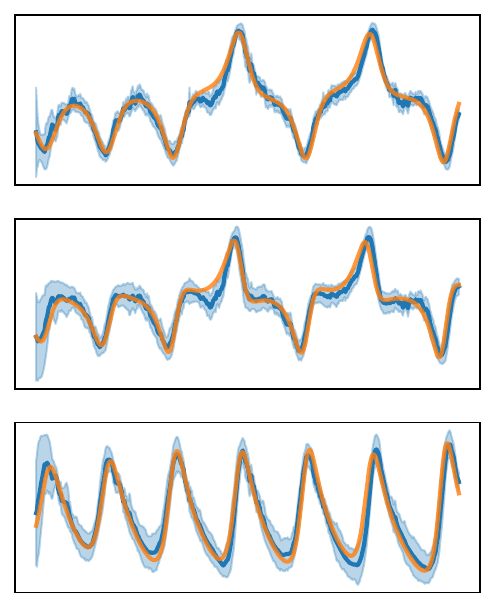}
    \subcaption{{\small DAISI (tuned)}}\label{fig:l63_daisi_both_tuned}
  \end{minipage}
  \begin{minipage}{.315\linewidth}\centering
    \includegraphics[width=\linewidth]{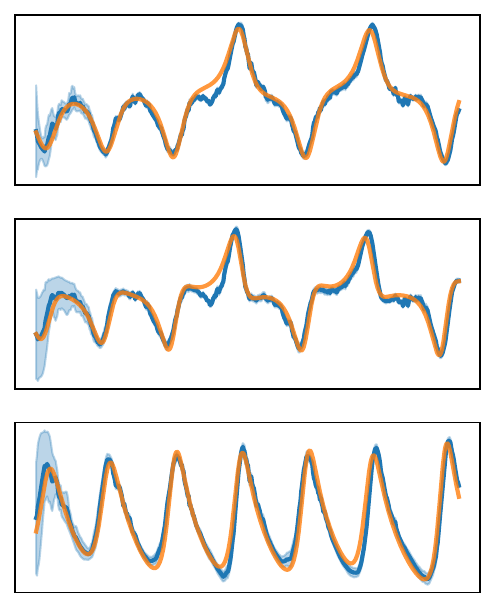}
     \subcaption{{\small DAISI ($\epsilon=0$)}}\label{fig:l63_both_tuned_eps_0}
  \end{minipage}
  \caption{A comparison of filtering results from bootstrap particle filter (BPF) vs DAISI on the L63 system. We display the ground truth alongside the filtering mean and 99\% credible interval.}
  \label{fig:lorenz_coparison}
\end{figure}

\cref{tab:l63-metrics} summarizes the RMSE and probabilistic metrics—the Continuous Ranked Probability Score (CRPS) and Spread–Skill Ratio (SSR). For comparison, we also include the results of DAISI without inverse-sampling (i.e., sampling from $\mathbb{P}_\infty^\vy$); this performs substantially worse than BPF across all metrics, demonstrating that incorporating dynamical information is essential for accurate filtering.
With 
$t_{\min} \approx 0$ and $\epsilon=0$, DAISI still underperforms, and yields worse results than the version without inversion. Tuning $t_{\min}$ alone\footnote{The hyperparameters are tuned on a short held-out validation trajectory to minimize CRPS.} already leads to substantial improvement, and jointly tuning $t_{\min}$ and $\epsilon$ produces the best performance, achieving metrics closest to those of BPF.

\begin{figure*}
    \begin{subfigure}[t]{0.121\textwidth}
    \vspace{0.05em}
        \includegraphics[height=12.0em]{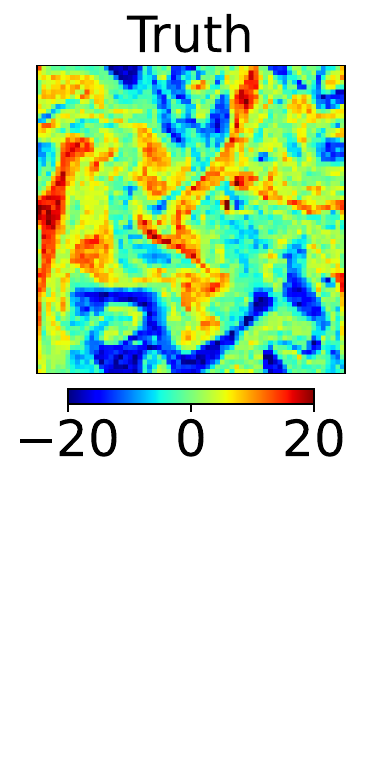}
        \caption{\texttt{Truth}}
    \end{subfigure}%
    \begin{subfigure}[t]{0.22\textwidth}%
    \vspace{0.05em}
        \includegraphics[height=12.0em]{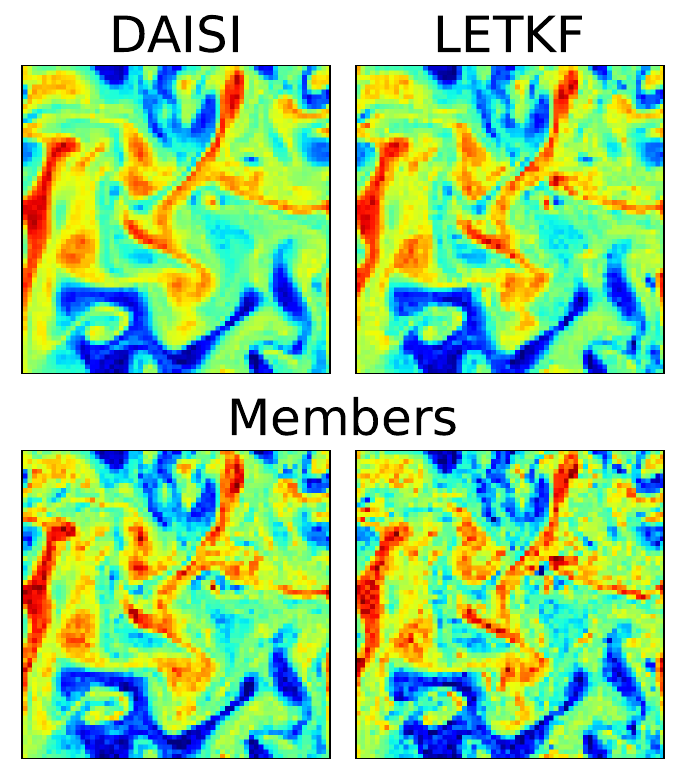}
        \caption{\texttt{Noisy}}
    \end{subfigure}
    \begin{subfigure}[t]{0.22\textwidth}
    \vspace{0.05em}
        \includegraphics[height=12.0em]{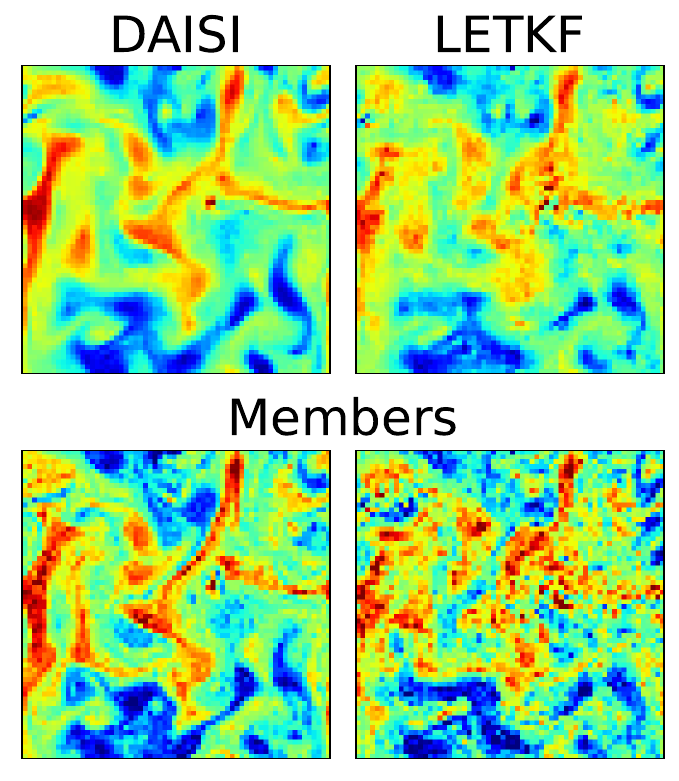}
        \caption{\texttt{Sparse}}
    \end{subfigure}%
    \begin{subfigure}[t]{0.22\textwidth}
    \vspace{0.05em}
        \includegraphics[height=12.0em]{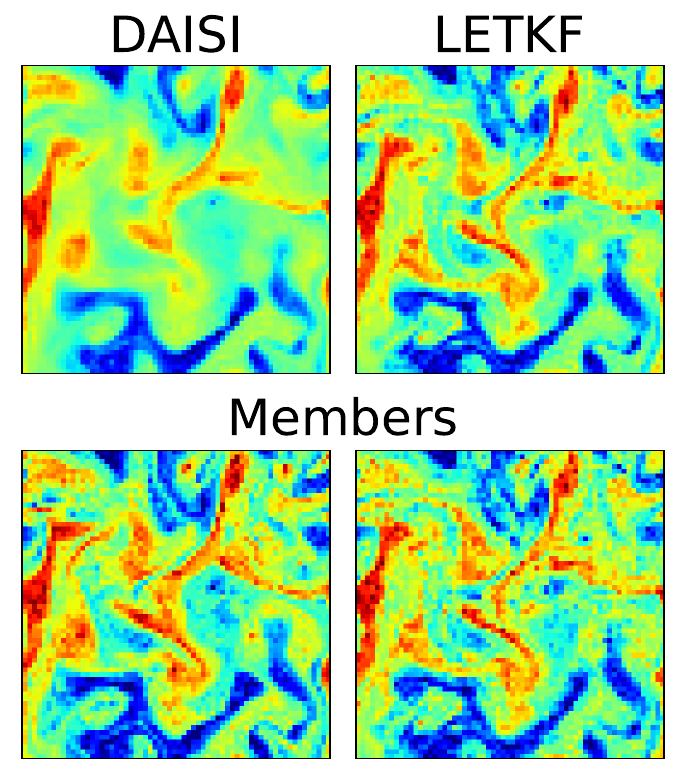}
        \caption{\texttt{Multimodal}}
    \end{subfigure}%
    \begin{subfigure}[t]{0.22\textwidth}
    \vspace{0.05em}
        \includegraphics[height=12.0em]{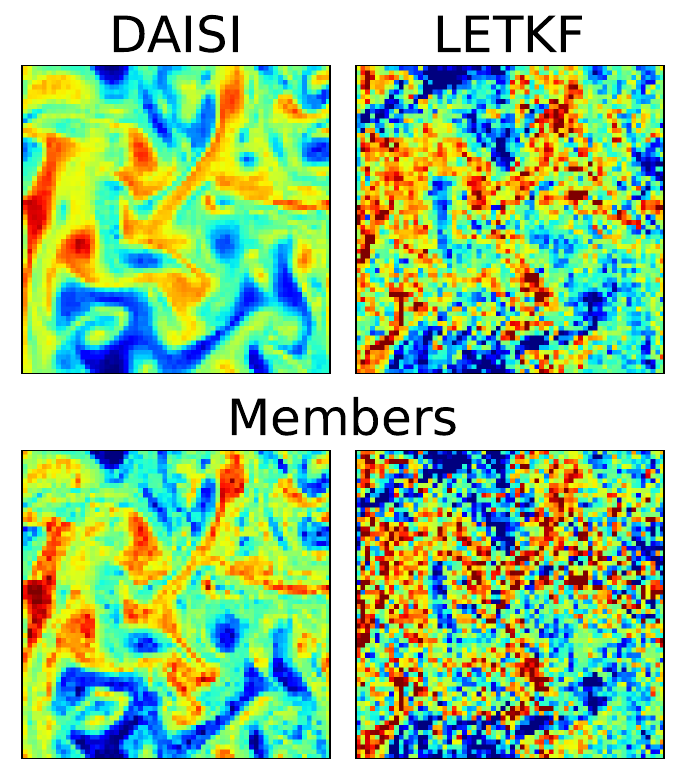}
        \caption{\texttt{Saturating}}
    \end{subfigure}%
    \caption{Ensemble mean and a single member for DAISI and LETKF for SQG experiments at the last step of the assimilated trajectory.}
    \label{fig:results_DAISI}
\end{figure*}

Visually, we see that after tuning both $t_{\min}$ and $\epsilon$, the DAISI filtering results (\cref{fig:l63_daisi_both_tuned}) closely resemble those from BPF (\cref{fig:l63_smc_results}). To see the effect of $\epsilon$, we repeat the experiment with the same tuned $t_{\min}$ but fix $\epsilon=0$. The results, shown in \cref{fig:l63_both_tuned_eps_0} and summarized in the final row of \cref{tab:l63-metrics}, exhibit similar mean behaviour but has narrower uncertainty bands. This reduction in ensemble spread leads to a much lower SSR and a degraded CRPS, despite similar RMSE. This highlights the importance of $\epsilon$ for maintaining ensemble spread, while tuning $t_{\min}$ is necessary for accuracy.

\setlength{\tabcolsep}{4pt}
\begin{table}
\centering
\small
\caption{Experiment configurations.}
\label{tab:experiments}
\begin{tabularx}{\columnwidth}{c c c c c}
\toprule
\textbf{Experiment} & \textbf{Data} & \textbf{Obs operator}& $\sigma_{\text{obs}}$ & \textbf{Sparsity} \\
\midrule
\texttt{Noisy} & SQG-$64$  & $x$& $5$& $25\%$\\
\texttt{Sparse} & SQG-$64$& $x$&  $1$& $5\%$\\
\texttt{Multimodal} & SQG-$64$  & $(x/7)^2$&  $1$& $25\%$\\
\texttt{Saturating} & SQG-$64$  & $\arctan(x)$&  $0.01$& $25\%$\\
\texttt{High-dim.} & SQG-$256$  & $\text{Avg}(x)$&  $1$& $5\%$\\
\texttt{SEVIR} & SEVIR  & $x$&  $0.001$& $10\%$\\
\bottomrule
\end{tabularx}
\end{table}
\begin{figure}
  \centering
  \includegraphics[width=\linewidth]{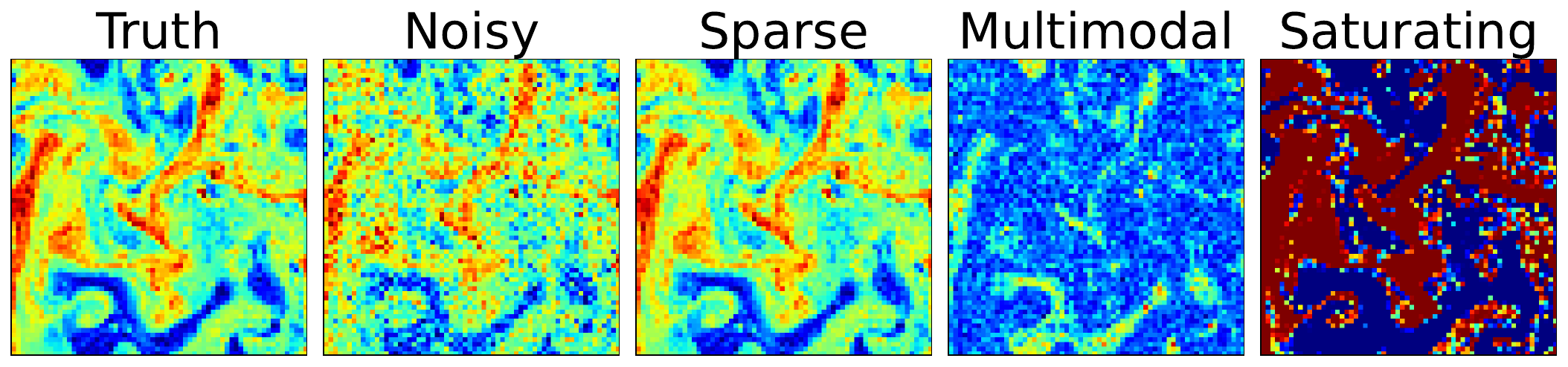}
  \caption{Visualization of the observations for each configuration before sparsity is applied.}
  \label{fig:obs}
\end{figure}

\subsection{Surface Quasi-Geostrophic (SQG) Dynamics}
We next evaluate DAISI on a Surface Quasi-Geostrophic (SQG) model, a standard benchmark for turbulent geophysical flows. 
The dynamics evolve a scalar field $\theta$ under nonlinear advection, combined with forcing and dissipation mechanisms including thermal relaxation and hyperdiffusion. Despite its simplicity, SQG exhibits strong sensitivity to initial conditions and multiscale turbulent behavior while remaining computationally tractable, making it a valuable benchmark for DA \cite{tulloch2009quasigeostrophic}. 

Following \cite{liang2025ensemble}, we use a $64\times64$ grid over 100 time steps with 3-hour intervals. DAISI is tested across a range of observation operators, noise levels, and sparsity settings listed in \cref{tab:experiments}, and visualized in \cref{fig:obs}. To assess scalability we additionally include a single $256\times256$ experiment with an averaging observation operator similar to lower-resolution sensing. Since the guidance methods in \cref{sec:cond-gen} have been applied almost exclusively to Gaussian observations, we stay within this setting, but note that this is not a limitation with DAISI. 

We compare DAISI to both classical and ML-based DA methods. Classical baselines include the Local Ensemble Transform Kalman Filter ({LETKF}), while ML baselines include {FlowDAS}, Score-based Data Assimilation ({SDA}) and the Ensemble Score Filter ({EnSF}). We also evaluate a DAISI variant that replaces the numerical model with the learned FlowDAS model, denoted {DAISI-ML}. Since SDA is originally a smoothing method, it is not directly comparable to filtering approaches. To address this, we additionally consider a filtering adaptation of SDA (see Appendix~\ref{app:baselines}) and report results for both filtering and smoothing.

\begin{figure}[t]
    \begin{subfigure}[t!]{0.5\columnwidth}
    \includegraphics[width=0.95\columnwidth]{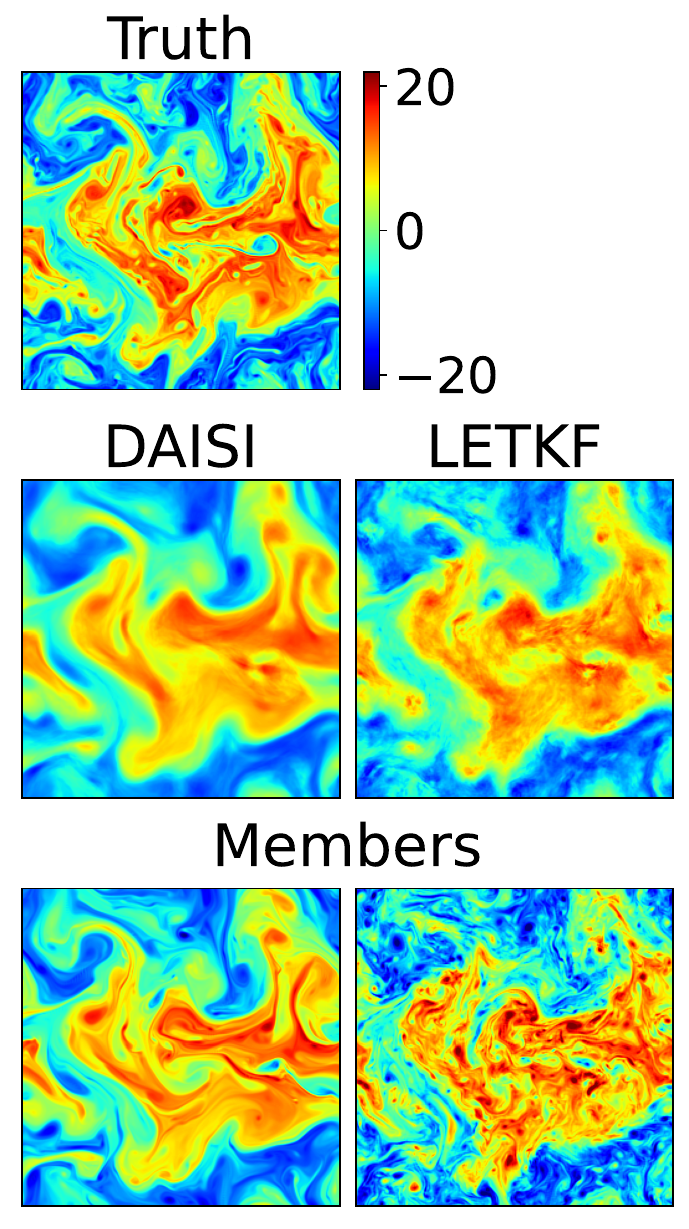}
        \caption{\texttt{High-dimensional}}
    \end{subfigure}%
    \begin{subfigure}[t!]{0.5\columnwidth}
    \includegraphics[width=0.94\columnwidth]{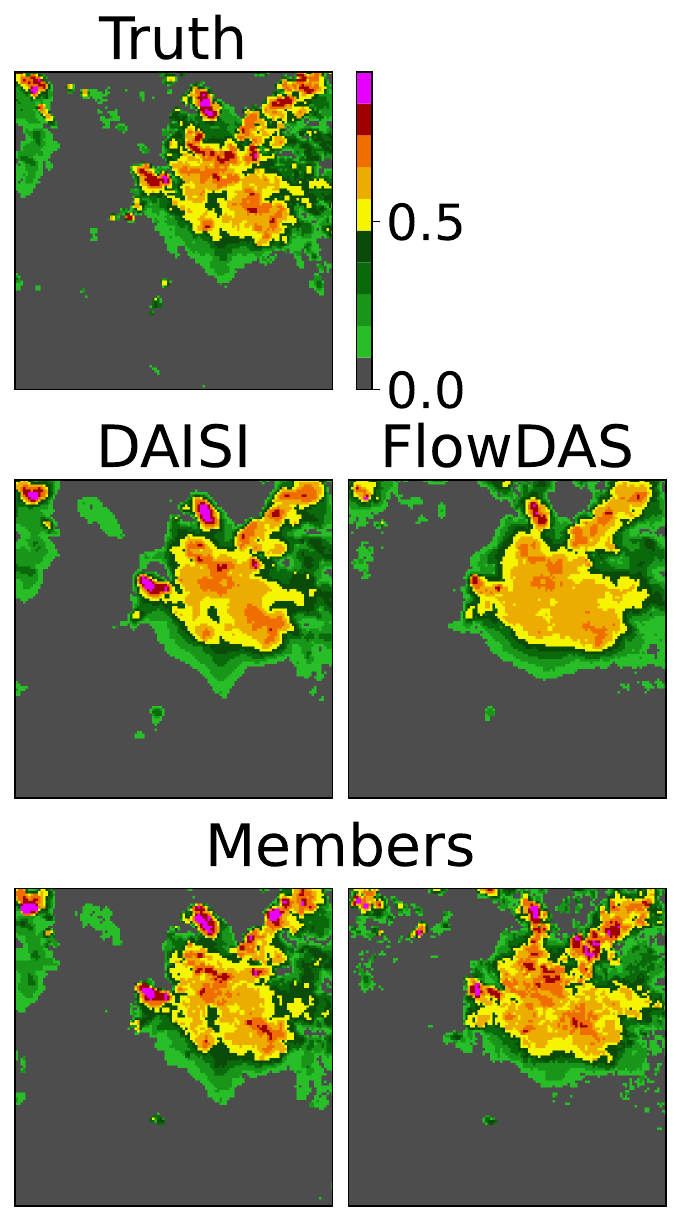}
        \caption{\texttt{SEVIR}}
    \end{subfigure}%
    \caption{Ensemble mean and a single member for DAISI and LETKF/FlowDAS for the \texttt{High-dimensional}/\texttt{SEVIR} experiments at the last step of the assimilated trajectory.}
    \label{fig:results_DAISI_highdim_sevir}
\end{figure}

\begin{table*}
\centering
\small
\caption{The CRPS for experiments on SQG and SEVIR. 
We display the mean and standard deviation across 10 independent trajectories, averaged over the last 20 (10 for SEVIR) steps.
The best score for each experiment is highlighted in \textbf{bold} and the second best with an \underline{underline}. Since SDA (smoothing) solves a different problem, we exclude it from the relative ranking.}
\label{tab:results}
\setlength{\tabcolsep}{4pt}
\begin{tabular}{c c c c c c |c c}
\toprule
\textbf{Experiment} & \textbf{DAISI} & \textbf{LETKF} & \textbf{FlowDAS} & \textbf{EnSF}  & \textbf{SDA (filtering)} &\textbf{DAISI-ML} &\textbf{SDA (smoothing)}   \\
\midrule
\texttt{Noisy}		& $\underline{1.32}{\scriptstyle \pm0.13}$	& $1.34{\scriptstyle \pm0.14}$	& $2.81{\scriptstyle \pm0.43}$	& $5.15{\scriptstyle \pm0.78}$	& $\mathbf{1.13}{\scriptstyle \pm0.10}$ &$1.34{\scriptstyle \pm0.13}$&$1.21{\scriptstyle \pm0.11}$\\
\texttt{Sparse}		& $\mathbf{1.73}{\scriptstyle \pm0.18}$	& $2.35{\scriptstyle \pm0.28}$	& $3.34{\scriptstyle \pm0.53}$	& $4.20{\scriptstyle \pm0.56}$	& $\underline{2.22}{\scriptstyle \pm0.25}$ &$1.72{\scriptstyle \pm0.16}$&$1.53{\scriptstyle \pm0.11}$\\
\texttt{Multimodal}		& $\mathbf{1.81}{\scriptstyle \pm0.44}$	& $\underline{1.97}{\scriptstyle \pm0.69}$	& $3.66{\scriptstyle \pm0.42}$	& $6.38{\scriptstyle \pm0.75}$	& $3.88{\scriptstyle \pm0.48}$ &$1.78{\scriptstyle \pm0.41}$&$3.82{\scriptstyle \pm0.32}$\\
\texttt{Saturating}		& $\underline{1.54}{\scriptstyle \pm0.09}$	& $5.24{\scriptstyle \pm0.35}$	& $2.41{\scriptstyle \pm0.39}$	& $\mathbf{1.33}{\scriptstyle \pm0.16}$	& $4.20{\scriptstyle \pm0.44}$ &$1.53{\scriptstyle \pm0.11}$&$3.37{\scriptstyle \pm0.14}$\\
\texttt{SEVIR}		& $\mathbf{0.016}{\scriptstyle \pm0.01}$	& $\underline{0.018}{\scriptstyle \pm0.01}$ & $0.045{\scriptstyle \pm0.01}$	& $0.075{\scriptstyle \pm0.02}$ & $\underline{0.018}{\scriptstyle \pm0.01}$ &${0.016}{\scriptstyle \pm0.01}$&$0.013{\scriptstyle \pm0.00}$\\
\bottomrule
\end{tabular}
\end{table*}

All methods use 20 members and are tuned specifically for each setting (see hyperparameter ablation for \texttt{Sparse} in Figures \ref{fig:results_eps}--\ref{fig:results_tmin}; we observe that the performance is not highly sensitive to their precise values, requiring minimal tuning). With the exception of FlowDAS, we initialize all methods from $\vx_0 \sim \mathcal{N}(\vx_0^{\mathrm{gt}}, \sigma_{\mathrm{init}}^2 \mat{I})$ with $\sigma_{\mathrm{init}}=3$, and propagate ensemble members using the numerical model. FlowDAS instead uses its own learned autoregressive forecast model conditioned on the six previous states, so assimilation begins from samples drawn from $p(\vx_0 \mid \vx_{-1}^{\mathrm{gt}}, \dots, \vx_{-6}^{\mathrm{gt}})$. DAISI uses a U-Net architecture based on \citet{Karras2022edm} with 3.5M parameters, while all other models use their original implementations.

\cref{tab:results} summarizes CRPS across all SQG experiments. 
DAISI consistently achieves accurate assimilation, producing temporally coherent and physically plausible ensemble members (\cref{fig:results_DAISI,fig:results_DAISI_highdim_sevir,fig:main_samples}). 
It matches LETKF in the noisy setting,
and clearly outperforms it under sparse or nonlinear observations. 
DAISI also remains stable when assimilation is performed every 12 hours instead of every 3, whereas LETKF degrades (\cref{fig:results_noisy_12h}). 
In multimodal settings, DAISI reliably tracks many plausible modes, whereas LETKF collapses to a single mode and typically diverges if that mode becomes inconsistent.
In the high-dimensional experiment, both DAISI and LETKF track the ensemble mean accurately, but exhibit qualitatively different behaviors. DAISI produces smoother reconstructions, while LETKF tends to introduce spurious fine-scale structure. This behavior is likely due to the challenging observation setup and the lack of tuning at this resolution.

\begin{figure}
        \centering
        \vspace{-2mm}
        \includegraphics[width=0.9\columnwidth]
        {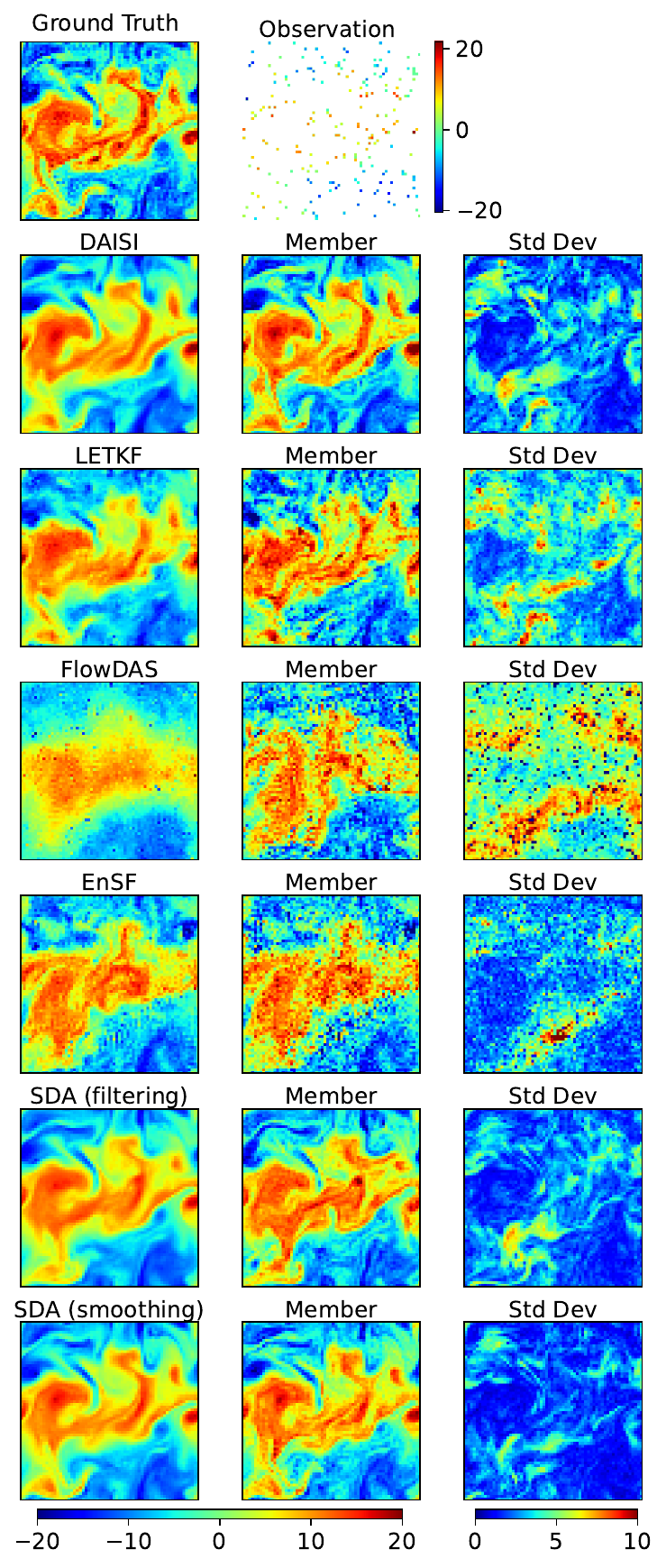}
        \caption{Ensemble mean, members, and standard deviation for each method at final assimilation step for the \texttt{Sparse} experiment.}
        \label{fig:main_samples}
\end{figure}

For the saturating arctan observations, DAISI performs comparably to EnSF, which handles such nonlinearities well. However, EnSF fails in other regimes due to the lack of a learned prior, and is prone to mode collapse, requiring inflation to perform well. LETKF diverges under these nonlinear observations, and FlowDAS underperforms in all settings, as it samples from $p(\vx_{n+1} | \vx_n, \vy_{n+1})$ rather than $p(\vx_{n+1} | \vy_{1:n+1})$ at each step, leading to errors accumulating over time.
Replacing the numerical model with the FlowDAS forecast model within DAISI (DAISI-ML) shows no performance degradation, confirming FlowDAS's failures arise from the filtering scheme rather than its forecast model. Additional results, including RMSE and SSR scores, are provided in Appendix \ref{app:SQG_results}. We also display results for more realistic observations simulating moving satellite tracks (refered to as \texttt{Non-stationary}), showing comparable performance to LETKF.

On the linear problems (\texttt{Noisy}, \texttt{Sparse} and \texttt{SEVIR}), SDA smoothing consistently outperforms DAISI. However, this is likely due to the effect of smoothing, which improves past state estimation. Evidently, the results at the last time step are similar to DAISI (Figures \ref{fig:results_noisy}, \ref{fig:results_sparse} and \ref{fig:results_sevir}). The filtering variant of SDA is slightly worse than DAISI, except on \texttt{Noisy}. On the nonlinear problems (\texttt{Multimodal} and \texttt{Saturating}), both the smoothing and filtering variants of SDA struggle to achieve strong performance. 

\subsection{Precipitation Nowcasting using SEVIR}
To evaluate on a real-world dataset, we apply DAISI to the Storm EVent Imagery and Radar (SEVIR) dataset \cite{veillette2020sevir}, a radar observation dataset of convective storms over the United States. We use the vertically integrated liquid on a $384\times$\SI{384}{\kilo\meter} grid at \SI{2}{\kilo\meter} resolution available every \SI{10}{\minute} for \SI{250}{\minute} \cite{gao2023prediff}.

This dataset has been applied to DA using the FlowDAS method by \citet{chen2025flowdas}. Thus, we mirror their setting and consider a linear Gaussian observation with standard deviation 0.001 and 10\% sparsity. To generate the forecasts, we use the pre-trained forecasting model of \cite{chen2025flowdas}. This takes six consecutive frames as input and predicts the state 10 min into the future.

As shown in \cref{fig:results_DAISI_highdim_sevir}, both DAISI and FlowDAS are able to accurately reconstruct the state, although the peaks are better represented in DAISI. This is also reflected by the much lower CRPS, as shown in \cref{tab:results}.

\section{Conclusions}
We introduced DAISI, a robust and flexible filtering framework built on flow-based generative priors. 
A key component is the inversion of the generative SDE: running the flow backward from the forecast ensemble recovers latent representations that serve as informative initialization for conditional sampling, effectively combining the prior, the forecast model, and observational information in a unified way. Empirically, DAISI delivers accurate filtering results across a spectrum of challenging settings, including sparse, noisy, nonlinear, and multimodal observations.

\section{Limitations \& Future Work}

A fundamental limitation of DAISI is that it does not sample exactly from the true filtering distribution. While our experiments show that tuning $t_{\min}$ and $\epsilon$ can mitigate this discrepancy, developing principled correction schemes for debiasing remains an important direction for future work.

DAISI also inherits the high inference cost of ODE/SDE-based generative models, driven by the high number of function evaluations required to integrate generative SDEs. Approaches such as performing data assimilation in latent spaces \cite{andry2025appa} or distilling the flow model \cite{boffi2026how} may help reduce these costs, and we leave these extensions for future work.

Its performance is further limited by the guidance mechanism. Although MMPS performed well even under strongly nonlinear observations, it is inherently biased. A promising direction is to reduce this bias by replacing MMPS with more accurate estimators of the guidance term, for example using stochastic flow maps \cite{potaptchik2026metaflowmapsenable}.

Finally, our experimental setup is simplified relative to operational data assimilation systems. Given recent progress in ML-based weather forecasting \cite{FGNalet2025skillfuljointprobabilisticweather, andraecontinuous, larsson2025crps}, a natural next step is to scale DAISI to realistic large-scale forecasting settings with complex observation operators \cite{diffDA, andry2025appa, savary2025training}.

\section*{Acknowledgements}
This research is financially supported by the Swedish Research Council (grant no: 2024-05011)
the Wallenberg AI, Autonomous Systems and Software Program (WASP) funded by the Knut and Alice Wallenberg Foundation,
and
the Excellence Center at Linköping--Lund in Information Technology (ELLIIT).
Our computations were enabled by the Berzelius resource at the National Supercomputer Centre, provided by the  Knut and Alice Wallenberg Foundation.
Landelius was financially supported by the Swedish Foundation for Strategic Research.
ST acknowledges support by a Department of Defense Vannevar Bush Faculty Fellowship held by Prof. Andrew Stuart, and by the SciAI Center, funded by the Office of Naval Research (ONR), under Grant Number N00014-23-1-2729.

\section*{Impact Statement}
Improving the accuracy and efficiency of initial state assimilation can significantly enhance prediction reliability in high-dimensional nonlinear systems. In weather forecasting, better assimilation of observations into initial conditions is key to improving short- and medium-term forecasts, with downstream benefits for sectors such as agriculture, energy, transportation, and disaster preparedness. Developing robust probabilistic data assimilation methods is also crucial to avoid overconfident or miscalibrated state estimates in safety-critical settings.

\bibliography{biblio}
\bibliographystyle{icml2026}

\newpage
\appendix
\onecolumn
\section{Details on Stochastic Interpolants}
\subsection{Stochastic Interpolants}
\label{app:stochastic_interpolants} 
Given a measure $\rho_1$, a (linear one-sided) stochastic interpolant is a stochastic process of the following form:
\begin{align}\label{eq:interpolant-appendix}
        \vz_t = \alpha_t \vz_0 + \beta_t \vz_1
\end{align}
where $\vz_1 \sim \rho_1$ and $\vz_0 \sim \mathcal{N}(\boldsymbol{0}, \mat{I})$. We have the following result, found in Theorem 2.6 of \cite{albergo2023stochastic}.
\begin{proposition}
    \label{prop:stochastic-interpolant}
    The probability distribution $\rho_t$ of the interpolant $\vz_t$ admits Lebesgue densities $p(t)$ for all times $t\in[0,1]$ and moreover, it satisfies the endpoint conditions $\rho(0)=\mathcal{N}(\boldsymbol{0}, \mat{I}), \rho(1)=\rho_1$. In addition, the Lebesgue densities satsify the transport equation
\begin{align}
   \partial_t p_t + \nabla\cdot(\vb_t p_t)=0,
\end{align}
where $\vb_t$ is the drift of the interpolant, defined by
\begin{align}
    \vb(t,\vz_t) = \mathbb{E}[\dot{\vz}_t|\vz_t].
\end{align}
\end{proposition}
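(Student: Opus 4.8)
\textbf{Proof proposal for Proposition~\ref{prop:stochastic-interpolant}.}

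The plan is to follow the standard stochastic-interpolant argument: express everything in terms of the conditional law of $\vz_t$ given the endpoints, then compute the time derivative of a test-function expectation and integrate by parts. First I would fix $\vz_0 \sim \mathcal{N}(\vzero,\mat{I})$ independent of $\vz_1 \sim \rho_1$ and note that, conditionally on $\vz_1$, the vector $\vz_t = \alpha_t \vz_1 + \beta_t \vz_0$ is Gaussian with mean $\alpha_t \vz_1$ and covariance $\beta_t^2 \mat{I}$. Since $\beta_0 = 1 > 0$ and, more generally, $\beta_t > 0$ for $t \in [0,1)$ (which follows from $\beta \in C^2$, $\beta_0 = 1$, $\beta_1 = 0$ together with the structural assumptions on the interpolant coefficients; this positivity needs to be invoked carefully), this conditional law has a smooth Lebesgue density. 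Averaging the Gaussian density over $\vz_1 \sim \rho_1$ then shows $\rho_t$ has a density $p_t$ for $t \in [0,1)$, and the endpoint $t=1$ is handled separately: at $t=1$ the interpolant collapses to $\vz_1$, so $\rho_1$ is recovered by continuity of $t \mapsto \vz_t$ in $L^2$ (hence weak convergence $\rho_t \to \rho_1$), and at $t=0$ it collapses to $\vz_0 \sim \mathcal{N}(\vzero,\mat{I})$, giving the stated endpoint conditions.

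Next, for the transport equation, I would take an arbitrary test function $\varphi \in C_c^\infty(\mathbb{R}^{d})$ and compute
\begin{align}
\frac{\diff}{\diff t}\, \mathbb{E}[\varphi(\vz_t)] = \mathbb{E}\big[\dot{\vz}_t \cdot \nabla \varphi(\vz_t)\big] = \mathbb{E}\big[\,\mathbb{E}[\dot{\vz}_t \mid \vz_t] \cdot \nabla \varphi(\vz_t)\big] = \mathbb{E}\big[\vb(t,\vz_t)\cdot \nabla\varphi(\vz_t)\big],
\end{align}
where the first equality uses the chain rule and differentiability of $t \mapsto \vz_t$ (justified by $\alpha,\beta \in C^2$ and dominated convergence, using integrability of $\vz_0,\vz_1$), the second is the tower property, and the third is the definition of the drift $\vb(t,\vz_t) = \mathbb{E}[\dot{\vz}_t \mid \vz_t]$. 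Rewriting both sides as integrals against $p_t$ gives $\int \varphi\, \partial_t p_t \, \diff \vz = \int \vb_t \cdot \nabla \varphi \, p_t \, \diff \vz = -\int \varphi\, \nabla\cdot(\vb_t p_t)\,\diff \vz$, and since $\varphi$ is arbitrary we obtain $\partial_t p_t + \nabla\cdot(\vb_t p_t) = 0$ in the distributional sense.

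I expect the main obstacles to be the regularity bookkeeping rather than the core identity: one must verify that $p_t$ is regular enough (and $\vb_t p_t$ integrable enough) for the integration by parts and for the PDE to hold in a meaningful sense, and one must carefully handle the degenerate endpoint $t = 1$ where $\beta_t \to 0$ and the density of the conditional law blows up, so that $p_t$ may fail to be bounded near $t=1$. The clean way around this is to establish the transport equation on $[0,1)$ by the weak argument above, and treat $t=1$ purely through the $L^2$-continuity of the interpolant path, which gives the endpoint marginal without needing pointwise control of $p_t$ there. All of the quantitative content (Gaussian conditional densities, differentiation under the expectation, the tower property) is routine; citing Theorem~2.6 of \cite{albergo2023stochastic} for the precise regularity hypotheses under which $\vb_t$ is well-defined and $p_t \in C^1((0,1)\times\mathbb{R}^d)$ closes the argument.
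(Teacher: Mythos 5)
The paper does not prove this proposition; it simply cites it as Theorem~2.6 of Albergo et al.\ (2023). Your argument is a correct reproduction of the standard proof from that reference: condition on $\vz_1$ to obtain a Gaussian smoothing kernel (giving a Lebesgue density of $\rho_t$ for $t\in[0,1)$ whenever $\beta_t>0$), then differentiate $\mathbb{E}[\varphi(\vz_t)]$ for a test function, apply the tower property to introduce $\vb(t,\vz_t)=\mathbb{E}[\dot{\vz}_t\mid\vz_t]$, and integrate by parts to obtain the weak form of the continuity equation. Two caveats are worth keeping explicit, and you flag both: (i) the positivity of $\beta_t$ on $[0,1)$ is not a consequence of the stated endpoint conditions $\beta_0=1$, $\beta_1=0$ alone --- it must be assumed as part of the interpolant schedule, as it is in Albergo et al.; and (ii) the claim that $\rho_t$ has a Lebesgue density ``for all $t\in[0,1]$'' implicitly requires $\rho_1$ itself to be absolutely continuous, since the Gaussian smoothing degenerates as $\beta_t\to 0$. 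Your handling of $t=1$ via $L^2$-continuity recovers the marginal $\rho_1$ but not, by itself, a density there; this is a minor imprecision in the proposition's statement rather than a gap in your argument, and is the same implicit assumption made in the cited theorem.
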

This result implies that the flow map $\{\Phi_t\}_{t \in [0, 1]}$ of the ODE 
\begin{align}
\label{eq:app-ode}
    &\frac{\diff \vz_t}{\diff t} = \vb(t, \vz_t).
\end{align}
transports $\mathcal{N}(\boldsymbol{0}, \mat{I})$ to $\rho_1$, i.e., $(\Phi_1)_\sharp \mathcal{N}(\boldsymbol{0}, \mat{I}) = \rho_1$.

Furthermore, \cite{albergo2023stochastic} shows that the drift $\vb(t, \vz)$ can be learned by minimizing the objective %
\begin{align}
    \mathcal{L} (\theta)&= \mathbb{E}_{\vz_0\sim\mathcal{N}(\boldsymbol{0}, \mat{I}), \vz_1\sim\rho_1, t\sim\mathcal{U}([0,1])}\Big[\|{\bm{\vb}}_\theta(t, \vz_t) - 
    (\dot{\alpha}_t\vz_0 + \dot{\beta}_t\vz_1)\|^2\Big].
    \label{eq:fm-objective}
\end{align}

\subsection{Turning ODEs into SDEs}
We now show how one can transform an ODE into an SDE that shares the same marginal laws. 

By \cref{prop:stochastic-interpolant} we know that the marginals of the stochastic interpolant in \cref{eq:interpolant-appendix} satisfy the continuity equation
\begin{align}
\label{eq:app-conteq}
   \partial_t \rho_t + \nabla\cdot(\vb_t \rho_t)=0.
\end{align}
By noticing that for any non-negative $\epsilon_t$, we have the identity
\begin{align}
    \epsilon_t\Delta p_t = \epsilon_t\nabla\cdot(p_t\nabla \log p_t) = \nabla\cdot(\epsilon_t\vs p_t),
\end{align}
we can add and subtract a term from \cref{eq:app-conteq}, giving us the equivalent expression
\begin{align}
    \partial_t p_t = -\nabla\cdot((\vb + \epsilon_t\vs)p_t) + \epsilon_t\Delta p_t.
\end{align}
We recognize that this is the Fokker-Planck equation whose samples satisfies the forward/backward SDEs
\begin{align}\label{eq:app-bwd-fwd-sde}
    &\diff \vz_t = (\vb(t, \vz_t) \pm \epsilon_t \vs(t, \vz_t)) \diff t + \sqrt{2\epsilon_t} \diff W_{\pm t}, \\
    &\vz_0 \sim \rho_0, \quad \vz_1 \sim \rho_1, \quad t \in [0, 1]. \label{eq:app-bwd-fwd-ic}
\end{align}
Thus, we have recovered another family of generative models based on SDEs that share the same marginals as the generative ODE \cref{eq:app-ode}.

\subsection{Identities for drifts and score}
\label{app:identities}
We now derive some important relationships between the drift $\vb(t,\vz)$, score $\vs(t, \vz) := \nabla \log p_t(\vz)$, and $\mathbb{E}[\vz_1 | \vz_t]$. 

Taking the time derivative of \eqref{eq:interpolant-appendix}, we have
\begin{align}
        \dot{\vz}_t = \dot{\alpha}_t \vz_0 + \dot{\beta}_t \vz_1
\end{align}
and therefore by definition of the drift, we have
\begin{align}
       \vb(t, \vz) = \dot{\alpha}_t \mathbb{E}[\vz_0| \vz_t] + \dot{\beta}_t \mathbb{E}[\vz_1| \vz_t].
\end{align}
Now, since $\vz_0 \sim \mathcal{N}(\boldsymbol{0}, \mat{I})$, the interpolant expression implies that $\vz_t|\vz_1\sim\mathcal{N}(\beta_t\vz_1, \alpha_t^2 I)$. Thus, we can write down the following Tweedie-type estimate for the score function
\begin{align}
    \vs(t,\vz_t)
  &:= \nabla_{\vz_t}\log p(\vz_t) \\
    &= \frac{\nabla_{\vz_t} p(\vz_t)}{p(\vz_t)} \\
    &= \frac{\nabla_{\vz_t} \int p(\vz_t|\vz_1)p(\vz_1)\diff\vz_1}{p(\vz_t)} \\
    &= \frac{ \int \nabla_{\vz_t}p(\vz_t|\vz_1)p(\vz_1)\diff\vz_1}{p(\vz_t)} \\
    &= \frac{ \int (\nabla_{\vz_t}\log p(\vz_t|\vz_1))p(\vz_t|\vz_1)p(\vz_1)\diff\vz_1}{p(\vz_t)} \\
    &= \int (\nabla_{\vz_t}\log p(\vz_t|\vz_1))p(\vz_1|\vz_t)\diff\vz_1.
\end{align}
Using that $\vz_t|\vz_1\sim\mathcal{N}(\beta_t\vz_1, \alpha_t^2 I)$, we get
\begin{align}
    \nabla_{\vz_t}\log p(\vz_t|\vz_1) = 
    - \frac{\vz_t - \beta_t\vz_1}{\alpha_t^2},
\end{align}
which allows us to obtain
\begin{align}
     \vs(t,\vz_t) = 
     - \frac{\vz_t - \beta_t\mathbb{E}[\vz_1 | \vz_t]}{\alpha_t^2}.
\end{align}
Now, taking the conditional expectation $\mathbb{E}[\,\cdot\, | \vz_t]$ of \eqref{eq:interpolant-appendix} we get that
\begin{align}
    \vz_t = \alpha_t \mathbb{E}[\vz_0 | \vz_t] + \beta_t \mathbb{E}[\vz_1 | \vz_t],
\end{align}
which implies
\begin{align}
    \mathbb{E}[\vz_0 | \vz_t] = -\alpha_t\vs(t,\vz_t).
\end{align}
Finally, combining these with the expression for $\vb$ found earlier, we arrive at
\begin{align}
   \label{eq:app-score-from-drift}
\vs(t,\vz_t) = 
\frac{\beta_t\vb(t,\vz_t) - \dot{\beta}_t\vz_t}{\alpha_t\gamma_t},
\end{align}
and
\begin{align}
\label{eq:app-expec-from-drift-0}
\mathbb{E}[\vz_1 | \vz_t] =
\frac{\alpha_t\vb(t,\vz_t) - \dot{\alpha}\vz_t}{\gamma_t},
\end{align}
where $\gamma_t:=\dot{\beta}_t\alpha_t- \beta_t\dot{\alpha}_t$.

\subsection{Conditional drift and score}
\label{app:conditional-drift}
Now replace the data distribution $p_1$ with the posterior distribution
\begin{align}
    p_1^{\vy}(\vz_1) := \frac{p(\vy|\vz_1)p_1(\vz_1)}{p(\vy)},
\end{align}
and again consider a stochastic interpolant \eqref{eq:interpolant-appendix}, where now $\vz_1\sim p_1^{\vy}$. Then the law of the interpolant is given by
\begin{align}
    p_t^{\vy}(\vz_t)
    &:= \int p(\vz_t|\vz_1)p_1^{\vy}(\vz_1)\diff\vz_1 \\
      &= \frac{1}{p(\vy)}\int p(\vz_t|\vz_1)p_1(\vz_1)p(\vy|\vz_1)\diff\vz_1 \\
      &= \frac{1}{p(\vy)}\int p(\vz_1|\vz_t)p_t(\vz_t)p(\vy|\vz_1)\diff\vz_1 \\
      &= \frac{1}{p(\vy)}p_t(\vz_t)\int p(\vz_1|\vz_t)p(\vy|\vz_1)\diff\vz_1 \\
      &= \frac{1}{p(\vy)}p_t(\vz_t)p(\vy|\vz_t),
\end{align}
where $p_t(\vz_t)$ is the law of the \textbf{unconditional} interpolant. We note that this derivation relies on the fact that the interpolant $\vz_t$ has the same conditional structure $p(\vz_t | \vz_s)$, regardless of whether the target is $p_1(\vz_1)$ or $p_1^\vy(\vz_1)$.

Thus, the conditional score is given by
\begin{align}
\vs^{\vy}(t,\vz_t)
    &:= \nabla_{\vz_t}\log p_t^{\vy}(\vz_t) \\
    &= \nabla_{\vz_t}\log p_t(\vz_t) + \nabla_{\vz_t}\log p(\vy|\vz_t) \\
    &=   \vs(t,\vz_t) + \nabla_{\vz_t}\log p(\vy|\vz_t),
\end{align}
where $\vs(t,\vz_t)$ denotes the score of the interpolant that samples from the original measure $p_1$. Next, using our relation between score and drift, which holds for {\em arbitrary} data measures and therefore also the case of sampling from the posterior, we have the following corresponding drift
\begin{align}
    \vb^{\vy}(t,\vz_t) 
    &= \frac{\dot{\beta}_t\vz_t + \alpha_t\gamma_t \vs^{\vy}(t,\vz_t)}{\beta_t} \\
    &= \frac{\dot{\beta}_t\vz_t + \alpha_t\gamma_t (\vs(t,\vz_t) + \nabla_{\vz_t}\log p(\vy|\vz_t))}{\beta_t} \\
    &= \vb(t,\vz_t) + \lambda_t \nabla_{\vz_t}\log p(\vy|\vz_t),\label{eq:by-definition}
\end{align}
where $\lambda_t:=\alpha_t\gamma_t/\beta_t$, and $\vb(t,\vz_t)$ denotes the drift corresponding to the original interpolant. Then, by \cref{prop:stochastic-interpolant}, the flow of the ODE with drift $\vb^{\vy}$ transports $\mathcal{N}(\boldsymbol{0}, \mat{I})$ to $p_1^{\vy}$. This observation serves as the basis for sampling from the posterior using {\em guidance}, which we look at in the following section.

\subsection{Guidance Methods}\label{app:guidance}
As demonstrated in the previous section, conditional sampling requires access to the likelihood score $\nabla_{\vz_t}\log p(\vy|\vz_t)$. Unfortunately, this is, for the most part, analytically intractable and requires approximations. In this section, we present three such approximations that are used in this work. In addition to the approximation below, we may also multiply the likelihood score with a guidance strength $\zeta > 0$, which we tune based on the problem setup.

\subsubsection{Diffusion Posterior Sampling (DPS)}
DPS \cite{chung2023diffusion} proceeds by approximating $p(\vy \mid \vz_t) = \mathbb{E}_{\vz_1 | \vz_t}[p(\vy \mid \vz_1)]$ by simply taking the expectation inside the likelihood:
\begin{align}
    p(\vy \mid \vz_t) \approx p(\vy \mid \hat{\vz}_1),
\end{align}
where $\hat{\vz}_1(\vz_t) = \mathbb{E}[\vz_1 | \vz_t]$. This incurs a bias, known as the Jensen gap; however, in many inverse problem settings, the approximation is known to work well.
The resulting likelihood score used in DPS is thus
\begin{align}
    \nabla_{\vz_t} \log p(\vy \mid \vz_t) \approx \nabla_{\vz_t} \log p(\vy \mid \hat{\vz}_1(\vz_t)),
\end{align}
where we estimate $\hat{\vz}_1(\vz_t) = \mathbb{E}[\vz_1 | \vz_t]$ explicitly from the drift via the relation \eqref{eq:app-expec-from-drift-0}.

When $p(\vy \mid {\vz}_1)$ is a Gaussian, we use the additional scaling factor proposed by \cite{chung2023diffusion} which gives the expression 
\begin{align}
    \nabla_{\vz_t} \log p(\vy \mid \vz_t) \approx \nabla_{\vz_t}^\top \mathbb{E}[\vz_1 | \vz_t] \mat{H}_t^\top(\vy - \mathcal{H}(\mathbb{E}[\vz_1 | \vz_t]))
\end{align}
where $\mat{H}_t := \nabla_{\vz_1} \mathcal{H}(\vz_1)\vert_{\vz_1=\mathbb{E}[\vz_1 | \vz_t]}$.

\subsubsection{Moment-Matching Posterior Sampling (MMPS)}
MMPS extends DPS by considering the approximation $p(\vz_1 | \vz_t) \approx \mathcal{N}(\mathbb{E}[\vz_1 | \vz_t], \mathbb{V}[\vz_1 | \vz_t])$, where $\mathbb{E}[\vz_1 | \vz_t]$ is obtained as before, and $\mathbb{V}[\vz_1 | \vz_t] := \mathbb{E}[\vz_1 \vz_1^\top | \vz_t] - \mathbb{E}[\vz_1 | \vz_t]\mathbb{E}[\vz_1 | \vz_t]^\top$ can be computed using the following formula \cite{Rozet_Andry_Lanusse_Louppe_2024}:
\begin{align}
    \mathbb{V}[\vz_1 | \vz_t]
    &= \frac{\alpha_t^2}{\beta_t} \nabla_{\vz_t}^\top \mathbb{E}[\vz_1 | \vz_t].
\end{align}

Subsequently, the likelihood score can be approximated as
\begin{align}
    \nabla_{\vz_t} \log p(\vy | \vz_t) &= \nabla_{\vz_t} \log \left(\int_{\mathbb{R}^{d}} p(\vy | \vz_1) p(\vz_1 | \vz_t) \diff \vz_1\right) \\
    &\approx \nabla_{\vz_t} \log \left(\int_{\mathbb{R}^{d}} p(\vy | \vz_1) \mathcal{N}(\vz_1 \mid \mathbb{E}[\vz_1 | \vz_t], \,\mathbb{V}[\vz_1 | \vz_t]) \diff \vz_1\right) \\
    &\approx \nabla_{\vz_t}^\top \mathbb{E}[\vz_1 | \vz_t] \mat{H}_t^\top \left(\sigma_{\vy}^2 \mat{I} + \frac{\alpha_t^2}{\beta_t} \mat{H}_t \nabla_{\vz_t}^\top \mathbb{E}[\vz_1 | \vz_t]) \mat{H}_t^\top \right)^{-1} \big(\vy - \mathcal{H}(\mathbb{E}[\vz_1 | \vz_t])
    \big),
\end{align}
where $\mat{H}_t := \nabla_{\vz_1} \mathcal{H}(\vz_1)\vert_{\vz_1=\mathbb{E}[\vz_1 | \vz_t]}$. The last line becomes exact when $p(\vy | \vz_1)=\mathcal{N}(\vy \mid \mathcal{H}\vz_1, \sigma_{\vy}^2 \mat{I})$, where $\mathcal{H}$ is a linear operator.

\subsubsection{Monte Carlo Guidance}\label{app:mc-guidance}
On problems with smaller dimensions, we can develop an asymptotically exact guidance method using Monte Carlo integration. This follows from the following straightforward computation
\begin{align}
    \nabla_{\vz_t} \log p(\vy | \vz_t) &= \nabla_{\vz_t} \log \left(\int_{\mathbb{R}^d} p(\vy | \vz_1) p(\vz_1 | \vz_t) \diff \vz_1 \right) \\
    &= \nabla_{\vz_t} \log \left(\int_{\mathbb{R}^d} p(\vy | \vz_1) \frac{p(\vz_t | \vz_1) p_1(\vz_1)}{\int_{\mathbb{R}^d} p(\vz_t | \vz_1) p_1(\vz_1) \diff \vz_1} \diff \vz_1 \right) \\
    &\stackrel{\text{MC}}{\approx} \nabla_{\vz_t} \log \left(\frac{\sum_{i=1}^J p(\vy | \vz_1^{(i)}) p(\vz_t | \vz_1^{(i)})}{\sum_{j=1}^J p(\vz_t | \vz_1^{(j)})} \right), \quad \text{for} \quad \vz_1^{(i)}, \vz_1^{(j)} \sim p_1.
\end{align}
Using the formulation of the stochastic interpolant, we have $p(\vz_t|\vz_1) = \mathcal{N}(\vz_t | \beta_t\vz_1, \alpha_t^2 I)$ and provided we have a closed form expression for $p(\vy | \vz_1)$, we can compute the log-density $\log p(\vy | \vz_t)$ via Monte Carlo and use automatic differentiation to calculate its $\vz_t$-gradient.

\subsection{Rescaling of trained interpolant}
Often, the model is trained using normalized data $\vw_t = (\vz_t - \mu) / \sigma$ to stabilize training. If we have an interpolant in the normalized space,
\begin{align}\label{eq:w-interpolant}
    \vw_t = \alpha_t \boldsymbol{\xi} + \beta_t \vw_1, \quad \boldsymbol{\xi} \sim \mathcal{N}(\boldsymbol{0}, \mat{I}),
\end{align}
then, in the original space, we have the following un-normalized interpolant and its time derivative:
\begin{align}
    \vz_t &= \varphi(\vw_t) := \sigma \vw_t + \mu \\
    &= \sigma \alpha_t \boldsymbol{\xi} + \beta_t \vz_1 + (1 - \beta_t) \mu, \label{eq:x-scaled}\\
    \dot{\vz}_t &= \sigma \dot{\alpha}_t \boldsymbol{\xi} + \dot{\beta}_t \vz_1 - \dot{\beta}_t \mu, \label{eq:xdot-scaled}
\end{align}
where $\vz_1 = \sigma \vw_1 + \mu$. Noting that
\begin{align}
p(\vz_t | \vz_1) &\stackrel{\eqref{eq:x-scaled}}{=} \frac{1}{(2\pi\sigma^2 \alpha_t^2)^{d/2} } \exp\left(-\frac{1}{2\sigma^2 \alpha_t^2} \|(\vz_t - \mu) - \beta_t(\vz_1 - \mu)\|^2\right) \\
&= \frac{1}{\sigma^d}\frac{1}{(2\pi \alpha_t^2)^{d/2}}  \exp\left(-\frac{1}{2 \alpha_t^2} \left\|\left(\frac{\vz_t - \mu}{\sigma}\right) - \beta_t \left(\frac{\vz_1 - \mu}{\sigma}\right)\right\|^2\right) \\
&= \frac{1}{\sigma^d}\frac{1}{(2\pi \alpha_t^2)^{d/2}}\exp\left(-\frac{1}{2\alpha_t^2} \|\vw_t - \beta_t \vw_1\|^2\right) \\
&\stackrel{\eqref{eq:w-interpolant}}{=} \frac{1}{\sigma^d} p(\vw_t | \vw_1)
\end{align}
and $\frac{\diff \mathbb{P}_Z}{\diff \mathbb{P}_W} = \mathrm{det}(D \varphi) = \sigma^{d}$, then by change of variables, we get
\begin{align}
    \vb_Z(t, \vz_t) &= \int \dot{\vz}_1 \mathbb{P}(\diff \vz_1 | \vz_t) \\
    &= \frac{\int \dot{\vz}_1 p(\vz_t | \vz_1) \mathbb{P}_Z(\diff \vz_1)}{\int p(\vz_t | \vz_1) \mathbb{P}_Z(\diff \vz_1)} \\
    &= \frac{\cancel{(\sigma^{d}/\sigma^d)} \int (\sigma \dot{\vw}_1) p(\vw_t | \vw_1) \mathbb{P}_W(\diff \vw_1)}{\cancel{(\sigma^{d}/\sigma^d)} \int p(\vw_t | \vw_1) \mathbb{P}_W(\diff \vw_1)} \\
    &= \sigma \mathbb{E}[\dot{\vw_1} | \vw_t] \\
    &= \sigma \vb_W(t, (\vz_t - \mu)/\sigma). \label{eq:scaled-b}
\end{align}

Similarly, for the score, we get
\begin{align}
    \vs_Z(t, \vz_t) &= \nabla_{\vz_t} \log p_Z(\vz_t) \\
    &= D\varphi^{-1}(\vw_t) \nabla_{\vw_t} \Big(\log \underbrace{\frac{\diff \mathbb{P}_Z}{\diff \mathbb{P}_W}(\vw_t)}_{= \sigma^d}  + \log p_W(\vw_t)\Big) \\
    &= \sigma^{-1} \nabla_{\vw_t} \log p_W(\vw_t) \\
    &= \sigma^{-1} \vs_W(t, (\vz_t - \mu)/\sigma).
\end{align}

From the relation \eqref{eq:app-score-from-drift}, we can write down the scaled score in terms of the scaled drift as
\begin{align}
    \vs_Z(t,\vz_t) = 
\frac{\beta_t \vb_Z(t,\vz_t) - \dot{\beta}_t (\vz_t - \mu)}{\sigma^2 \alpha_t\gamma_t}.
\end{align}
Similarly, we can express $\mathbb{E}[\vz_1 | \vz_t]$ and $\mathbb{E}[\boldsymbol{\xi} | \vz_t]$ in terms of $\vb_W$:
\begin{align}
    \mathbb{E}[\vz_1 \mid \vz_t] &= \mu + \sigma \frac{\alpha_t\vb_W(t, \frac{\vz_t - \mu}{\sigma}) - \dot{\alpha}_t \frac{\vz_t - \mu}{\sigma}}{\gamma_t}, \\
    \mathbb{E}[\boldsymbol{\xi} \mid \vz_t] &= -\frac{\beta_t\vb_W(t, \frac{\vz_t - \mu}{\sigma}) - \dot{\beta}_t \frac{\vz_t - \mu}{\sigma}}{\gamma_t}
\end{align}

\subsection{Probability flow ODE}
Diffusion models work with SDEs instead of ODEs but can still be used within our framework through the so-called {\em Probability Flow ODE} \cite{song2020score}, which we demonstrate here.

Consider a forward (noising) SDE 
\begin{align}
\diff \vz_t = f(t, \vz_t) \diff t + g(t) \diff W_{t}.
\end{align}
This induces a reverse-time SDE given by
\begin{align}
    \diff \vz_t = (f(t, \vz_t)- g(t)^2 \vs(t, \vz_t) )\diff t + g(t) \diff W_{t}
\end{align}
which, after learning $\vs$, can be used to generate samples. The probability flow ODE sharing the same marginals as this SDE is given by
\begin{align}
    \frac{\diff \vz_t}{\diff t} = f(t, \vz_t)- \frac{1}{2}g(t)^2 \vs(t, \vz_t).
\end{align}
With the same argument as for the stochastic interpolant, this shares the same marginals as
\begin{align}
    \diff \vz_t = \left(f(t, \vz_t) + \left(\epsilon_t - \frac{1}{2}g(t)^2 \right)\vs(t, \vz_t) \right)\diff t +  \sqrt{2\epsilon_t}\diff W_{t}.
\end{align}

\section{Model Details}

In this work, we make the standard choice of a linear scheduler $\alpha_t=1-t, \beta_t=t$, which implies a cross-term $\gamma_t=1$ and the guided drift scaling $\lambda_t=(1-t)/t$. This gives the score
\begin{align}
\vs(t,\vz_t) = 
\frac{t\vb(t,\vz_t) - \vz_t}{1-t},
\end{align}
and the expectation
\begin{align}
\label{eq:app-expec-from-drift}
\mathbb{E}[\vz_1 | \vz_t] =
\vz_t + (1-t)\vb(t,\vz_t).
\end{align}
We note that this choice of schedule leads to an expectation given by a single Euler step to the final time. We also note that the score diverges as $t\to 1$. To avoid numerical issues, we let $\epsilon_t=\epsilon(1-t)$ for some $\epsilon\geq0$. To solve the SDE, we use Euler–Maruyama; we do not use any additional Langevin correction steps, as these did not improve results.

\section{Experimental details}

\subsection{Metrics}
Given an ensemble of assimilated states $\{\widehat{\vx}_n^{(j)}\}_{j=1}^J$, and a true state ${\vx}_n$ at step $n$ we define the Root Mean Squared Error (RMSE) as 
\begin{align}
    \text{RMSE}_n = \sqrt{\langle(\bar{{\vx}}_n - \vx_n)^2\rangle},
\end{align}
where $\langle\cdot\rangle$ denotes the averaging over variable and spatial dimensions and 
\begin{align}
    \bar{\vx}_n = \frac{1}{J} \sum^{J}_{j=1} \widehat{\vx}_n^{(j)}
\end{align}
is the ensemble mean of the assimilated state. We also evaluate the RMSE of each ensemble member $j$
\begin{align}
    \text{RMSE}_{n, j} = 
    \sqrt{\langle(\widehat{\vx}_n^{(j)} - \vx_n)^2\rangle}.
\end{align}
To measure the calibration of the ensemble forecasts, we measure the Continuous Ranked Probability Score (CRPS) \cite{gneiting2007strictly}. We follow and compute the fair unbiased CRPS estimate \citep{fairCRPS,zamo2018estimation} for our ensemble, which reads
\begin{align}
   \text{CRPS}_n =  \frac{1}{J}\sum^{J}_{j=1} \lvert\lvert \widehat{\vx}_n^{(j)} - {\vx}_n \rvert\rvert_{L_1} -
   \frac{1}{2J(J-1)}\sum^{J}_{j=1}\sum^{J}_{j^*=1}\lvert\lvert \widehat{\vx}_n^{(j)} - \widehat{\vx}_n^{(j^*)} \rvert\rvert_{L_1} . 
\end{align}
Additionally, we measure the Spread Skill Ratio (SSR) to evaluate the ensemble calibration, where a well calibrated ensemble should have $\text{SSR} \approx 1$ \cite{fortin2014should}. This is defined as
\begin{align}
    \text{SSR}_n =\sqrt{\frac{J + 1}{J}}  \frac{\text{Spread}_n}{\text{RMSE}_n},
\end{align}
where 
\begin{align}
    \text{Spread}_n = \sqrt{\langle(\widehat{\vx}_n^{(j)} - \bar{\vx}_n)^2\rangle}.
\end{align}
In addition to ensuring that the ensemble mean and individual members are accurate and that the ensemble is well calibrated, we also require each ensemble member to be physically realistic and to reproduce the energy spectrum of the ground truth. To evaluate this, we compute the Log Spectral Distance (LSD). Given a field \(X\) defined on a 2D grid, we define its 2D power spectrum as
\begin{align}
    P_X(\mathbf{k}) = \left| \mathcal{F}\{X\}(\mathbf{k}) \right|^2,
\end{align}
where \(\mathcal{F}\{\cdot\}\) denotes the 2D Fourier transform and \(\mathbf{k} = (k_x, k_y)\) is the wavenumber vector. We obtain an isotropic (radially averaged) power spectrum by averaging over all Fourier coefficients with the same radial wavenumber \(r = \|\mathbf{k}\|\):
\begin{align}
    \bar{P}_X(r) = \frac{1}{N(r)} \sum_{\|\mathbf{k}\| = r} P_X(\mathbf{k}),
\end{align}
where \(N(r)\) is the number of coefficients at radius \(r\). 

The Log Spectral Distance (LSD) between a forecast field \(\hat{X}\) and the ground truth \(X\) at time \(t\) is then defined as
\begin{align}
    \text{LSD}^t
    =
    \sqrt{
    \frac{1}{R}
    \sum_{r = 1}^{R}
    \left(
        \log\!\left(\bar{P}_{\hat{X}}(r) + \varepsilon\right)
        -
        \log\!\left(\bar{P}_{X}(r) + \varepsilon\right)
    \right)^2 },
\end{align}
where \(R\) is the maximum resolved radial wavenumber and \(\varepsilon\) is a small constant included for numerical stability.

When we have access to samples $\{\vx_i\}_{i=1}^N$, $\{\vy_j\}_{j=1}^M$ from two measures $\pi_1$ and $\pi_2$, respectively, we can compute a particular notion of distance between them, using the Maximum Mean Discrepancy (MMD) \cite{gretton2012kernel}, defined as the distance between the kernel embeddings of the two measures in the corresponding reproducing kernel Hilbert space (RKHS) $\mathcal{H}$. In practice, given a kernel $k(\cdot, \cdot) : \mathbb{R}^d \times \mathbb{R}^d \rightarrow \mathbb{R}$, this is computed as
\begin{align}
    \text{MMD}^2(\pi_1, \pi_2) &\approx \left\|\frac{1}{N}\sum_{i=1}^N k(\vx_i, \cdot) - \frac{1}{M}\sum_{j=1}^M k(\vy_j, \cdot)\right\|_{\mathcal{H}}^2 \\
    &= \frac{1}{N(N-1)} \sum_{i=1}^N \sum_{j \neq i}^N k(\vx_i, \vx_j) - \frac{2}{NM} \sum_{i=1}^N \sum_{j=1}^M k(\vx_i, \vy_j) + \frac{1}{M(M-1)} \sum_{i=1}^M \sum_{j=1}^M k(\vy_i, \vy_j).
\end{align}
For the choice of $k(\cdot, \cdot)$, we use the squared-exponential kernel $k(\vx, \vy) = \exp\left(-\|\vx - \vy\|^2 / 2\sigma^2 \right)$, where the bandwidth $\sigma^2$ is chosen to be the median heuristic $\sigma^2 = \mathrm{Median}(\{\|\vx_i - \vy_j\|\}_{i, j}) / 2$.

\subsection{1D Gaussian Mixture}\label{app:1d-gmm}

In our toy 1D experiment, we considered an artificial setup whereby the invariant measure is given by a 1D Gaussian mixture $\mathbb{P}_\infty(\cdot) = \sum_{k=1}^K \phi_k \,\mathcal{N}(\cdot | \mu_k, \sigma_k^2)$ with $K = 3$ and
\begin{align}
    (\phi_1, \phi_2, \phi_3) = (0.5, 0.3, 0.2), \qquad (\mu_1, \mu_2, \mu_3) = (0.0, 3.0, -2.0), \qquad (\sigma_1, \sigma_2, \sigma_3) = (1.0, 0.5, 0.8).
\end{align}
We trained a stochastic interpolant model on $50,000$ i.i.d. samples of this Gaussian mixture model, where we used a standard MLP with one hidden layer of size 64 and ReLU activations to parameterize the drift $\vb(\vx, t)$.

We use this toy experiment to understand the effect of the inherent error of DAIS on sampling from the filtering distribution, as highlighted in Section \ref{sec:error}, and to investigate the impact of the hyperparameters $t_{\min}$ and $\epsilon$ on reducing this error.

To this end, we simulate the analysis step of DAISI by artificially constructing a predictive distribution $\hat{\pi} \propto f \,\mathbb{P}_\infty$, where we took $f(x) = \exp\left(-\frac{1}{2\sigma^2} |x - \mu|^2\right)$ with $\mu = 0.5$ and $\sigma = 1.5$. We obtained samples $\{\hat{x}_i\}_{i=1}^N$ from $\hat{\pi}$ via samples $\{x_i\}_{i=1}^N$ of the Gaussian mixture $\mathbb{P}_\infty$ by first re-weighting the particles by $f$, then re-sampling according to the weights, i.e.,
\begin{enumerate}
    \item Compute weights $w_i = f(x_i)$ for all $i = 1, \ldots, N$,
    \item Resample particles $\hat{x}_i = x_{j_i}$, where $j_i \sim \mathrm{Multinomial}(w_1, ..., w_N)$ for $i = 1, \ldots, N$.
\end{enumerate}
Then, starting from the particles $\{\hat{x}_i\}_{i=1}^N$, we applied the analysis step of DAISI to obtain the particles $\{x^y_i\}_{i=1}^N$ post-assimilation, where, for the observation model, we took $p(y | x) = \mathcal{N}(y | x, \sigma_{\text{obs}}^2)$ with $y = 2.5$ and $\sigma_{\text{obs}} = 1.0$. For the ensemble size, we set $N = 10,000$ to accurately capture the resulting distributions. For the guidance method, we considered the asymptotically exact Monte Carlo strategy (see Section \ref{app:mc-guidance}) to isolate the sources of error as arising solely from the inexactness of DAISI (there will still be errors arising from numerical discretization and Monte Carlo estimation; however, these errors can be made arbitrarily small). For the numerical discretisation of the interpolant ODE and SDE, we used Euler-Maruyama integration with $200$ time steps and for the Monte-Carlo integration used in guidance, we used $J = 10,000$ particles.
As a point of reference to compare against, we also obtain samples directly from the filtering distribution $\pi \propto p(y | \cdot) \hat{\pi}$ by a similar reweight-resample strategy used before to obtain samples from $\hat{\pi}$.

We simulated the analysis step of DAISI with various combinations of hyperparameters $t_{\min}$ and $\epsilon$ and display the heatmap of the MMD between samples of the filtering distribution $\pi$ and samples of the distribution $\pi^{\text{DAISI}}$ obtained by DAISI in Figure \ref{fig:ablation-heatmap}. We see that the best result is achieved by the combination $t_{\min} = 0.3$ and $\epsilon = 0.1$, which yields an MMD of 0.004, indicating a very close match with the true filtering distribution. We also display ablations with respect to the individual hyperparameters in Figure \ref{fig:tmin-ablation} (for ablation with respect to $t_{\min}$, with $\epsilon$ fixed to 0) and Figure \ref{fig:epsilon-ablation-app} (for ablation with respect to $\epsilon$, with $t_{\min}$ fixed to $0.01$). The figures demonstrate that the ``default" setting of $\epsilon=0$ and $t_{\min} \approx 0$ yields a distribution $\pi^{\text{DAISI}}$ that is much more ``peaked" and therefore mismatched from the filter distribution $\pi$. This error can be reduced by increasing $t_{\min}$, which pulls the profile of $\pi^{\text{DAISI}}$ towards the predictive distribution $\hat{\pi}$, while increasing $\epsilon$ pulls it towards the posterior $\mathbb{P}_\infty^y$. Since the filtering distribution $\pi \propto f \mathbb{P}_\infty^y = p(y | \cdot) \hat{\pi}$ has elements of both $\mathbb{P}_\infty^y$ and $\hat{\pi}$, we find that a slight nudge in these directions can help to match $\pi$ better.

\begin{figure}
  \centering
  \includegraphics[width=0.4\linewidth]{figures/1DExample/t_min_ablation_legend.pdf}

  \begin{minipage}{.2\linewidth}\centering
    \includegraphics[width=\linewidth]{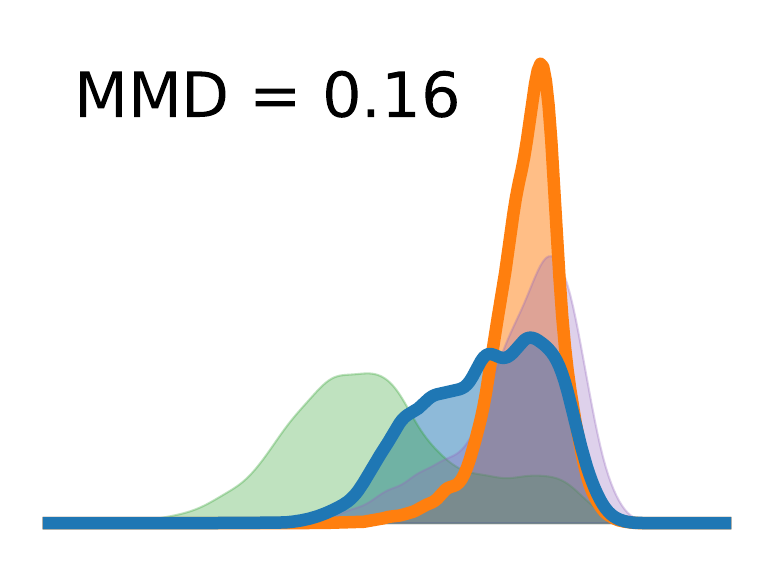}
    \subcaption{$\quad \epsilon = 0.0$}\label{fig:eps_ablation_0.0}
  \end{minipage}
  \begin{minipage}{.2\linewidth}\centering
    \includegraphics[width=\linewidth]{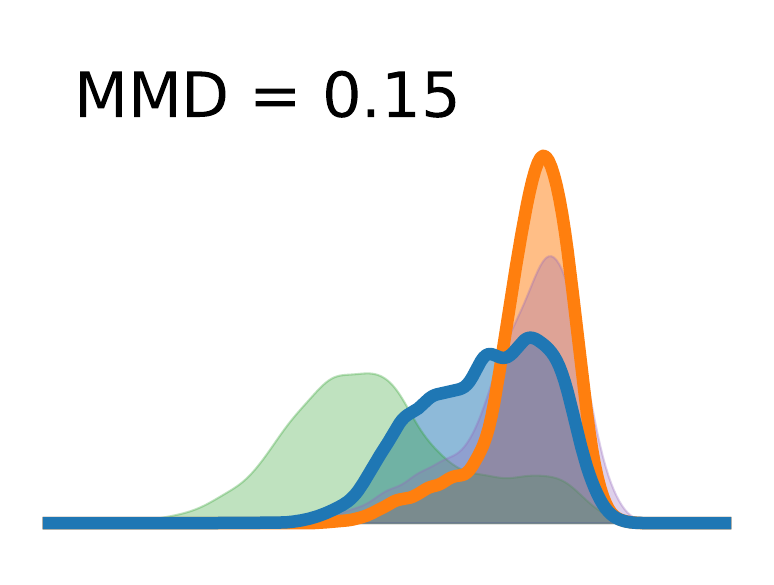}
    \subcaption{$\quad \epsilon = 0.1$}\label{fig:eps_ablation_0.1}
  \end{minipage}
  \begin{minipage}{.2\linewidth}\centering
    \includegraphics[width=\linewidth]{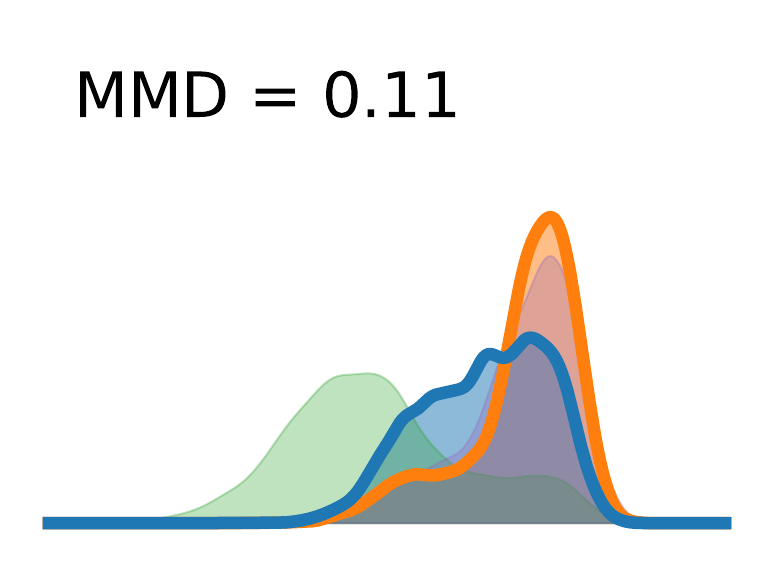}
     \subcaption{$\quad \epsilon = 1.0$}\label{fig:eps_ablation_1.0}
  \end{minipage}
  \caption{Ablation with respect to the $\epsilon$ hyperparameter in the 1D Gaussian mixture experiment, fixing $t_{\min} = 0.01$. The distribution obtained by DAISI when $\epsilon=0$ is highly peaked, making the resulting distribution overconfident. By increasing $\epsilon$, it loses information about the predictive ensemble and pulls the distribution towards $\mathbb{P}_\infty^y$, which can help to rejuvenate sample variance.}
  \label{fig:epsilon-ablation-app}
\end{figure}

\begin{figure}
    \centering
    \includegraphics[width=0.3\linewidth]{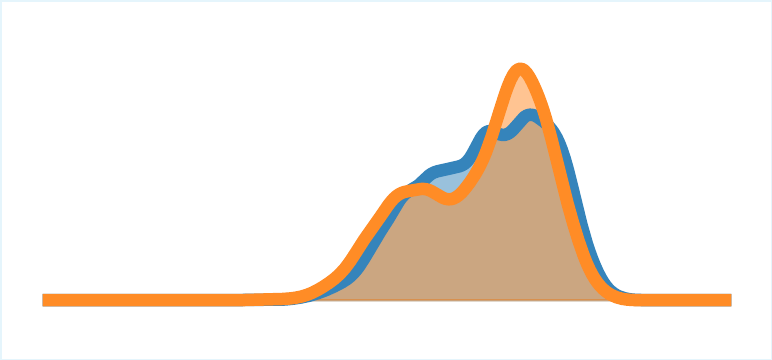}
    \\
    \includegraphics[width=0.8\linewidth]{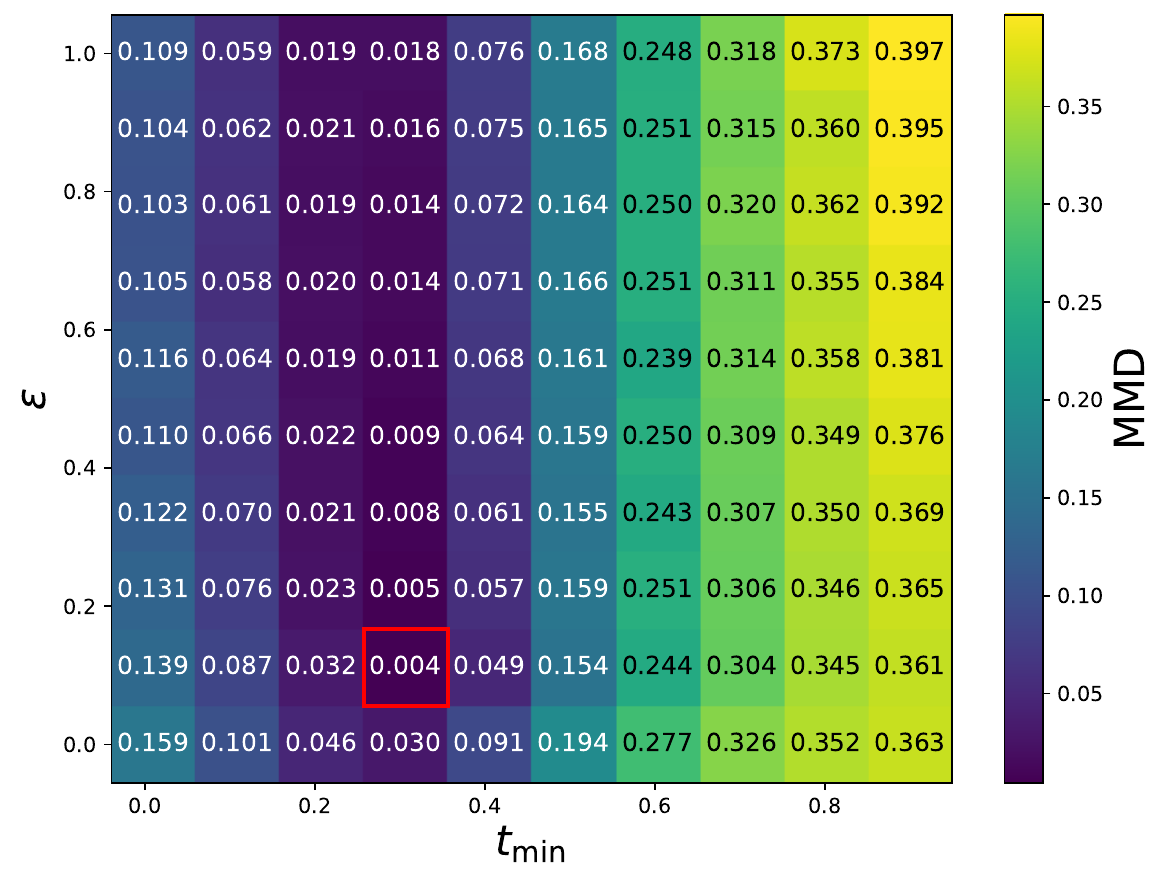}
    \caption{Heatmap of the maximum mean discrepancy (MMD) between  the true filtering distribution $\pi$ and the DAISI analysis $\pi^{\text{DAISI}}$ using different combinations of $t_{\min}$ and $\epsilon$. The highlighted square corresponds to the $(t_{\min}, \epsilon)$-combination that yielded the lowest MMD. The corresponding distribution obtained by DAISI is plotted above (orange) against the true filtering distribution (blue). We observe a close match between the two distributions under the optimal hyperparameters.}
    \label{fig:ablation-heatmap}
\end{figure}

\subsection{Lorenz '63}\label{app:l63-details}
The Lorenz '63 (L63) model \cite{lorenz1963deterministic} is given by the following system of ODEs
\begin{align}
    \frac{\diff x}{\diff t} &= \sigma (y - z) \label{eq:l63-x}\\
    \frac{\diff y}{\diff t} &= x (\rho - z) - y \label{eq:l63-y} \\
    \frac{\diff z}{\diff t} &= xy - \beta z \label{eq:l63-z}
\end{align}
with the parameters set to $\sigma = 10$, $\rho = 28$ and $\beta = \frac{8}{3}$. In our experiments in Section \ref{sec:lorenz}, we integrate this system forward in time using a fourth order Runge-Kutta solver with time step $\Delta t = 0.01$.

\subsubsection{Model training}
To train the stochastic interpolant model, we generate training data $\{x_i\}_{i=0}^{N-1}$ by integrating the L63 ODE \eqref{eq:l63-x}--\eqref{eq:l63-z} starting from the initial condition $x_0 = (0,1,1.05)$ with $N = 10^6$. We consider an 80-20 split for training and validation. Both datasets are normalized using the empirical mean and standard deviation of the training portion:
\begin{align}
w_i = \frac{x_i - \mu_{\text{train}}}{\sigma_{\text{train}}}.
\end{align}

To parameterize the drift $\vb(w, t)$, we use an MLP with two hidden layers; each with 128 neurons:
\[
(w, t) \in \mathbb{R}^4 \;\mapsto\; 
128 \;\mapsto\; 128 \;\mapsto\; 3,
\]
and using ReLU activations.
We train for 20 epochs using the Adam optimizer with learning rate $10^{-4}$ and batch size 64.  

\subsubsection{Data assimilation setup}
To perform data assimilation with the L63 model, we generate scalar observations at each time step
\begin{align}
y_n = H x_n + \eta_n, \qquad
H = \begin{bmatrix} 1 & 0 & 0 \end{bmatrix}, \qquad
\eta_n \sim \mathcal{N}(0,\sigma_{\mathrm{obs}}^2), \quad \sigma_{\mathrm{obs}} = 5.
\end{align}
At initialization, we sample $J$ particles from a Gaussian ball around the ground truth,
\begin{align}
x_0^{(j)} = x_0 + \sigma_{\mathrm{init}} \,\xi^{(j)}, \quad
\xi^{(j)} \sim \mathcal{N}(\boldsymbol{0}, \mat{I}), \quad \sigma_{\mathrm{init}} = 5, \quad j = 1, \ldots, J.
\end{align}

\subsubsection{Hyperparameter tuning}
To tune the hyperparameters $t_{\min}$ and $\epsilon$ of DAISI, we use Bayesian optimization to minimize the CRPS on a separate trajectory, which we generate from the initial condition $x_0 = (0,1,1.05)$ and integrate for $5000$ steps. We use DAISI with fixed values of $t_{\min}$ and $\epsilon$ and ensemble size $J=100$, to assimilate data on the last $200$ steps of the generated trajectory (this is to ensure the dynamics has reached statistical equilibrium), and evaluate its CRPS averaged across the last 100 steps (this is to ensure that the filter has stabilized).
For Bayesian optimization, we used the following hyperpriors:
\begin{align}
\epsilon \sim \mathrm{LogUniform}\bigl(10^{-2},\, 1\bigr), \qquad
t_{\min} \sim \mathrm{Uniform}\bigl(10^{-2},\, 0.99\bigr),
\end{align}
and ran until convergence was observed.

\subsubsection{Evaluation}
After tuning the values for $t_{\min}$ and $\epsilon$, we evaluated DAISI on ten different trajectories starting from random initial conditions
\begin{align}
    x_0^{(i)} \sim \mathcal{N}(\mu_{\text{train}}, \sigma_{\text{train}}^2 \mat{I}), \quad i = 1, \ldots, 10,
\end{align}
and integrated for $5000$ time steps. We perform data assimilation with DAISI on the last $500$ steps of the generated trajectories, and evaluated the RMSE, CRPS and SSR, averaged across the last 100 steps of the assimilation window. As with the previous example, we used the Monte Carlo strategy for guidance with $J=10,000$ particles.
For the bootstrap particle filter (BPF) baseline, we used $N_p = 10,000$ particles to ensure high accuracy of the obtained results.

\subsubsection{Ablation}
In Figures \ref{fig:l63_daisi_results_tuned_default}--\ref{fig:l63_daisi_results_both_tuned_eps_0}, we plot the results of filtering using DAISI under different $(t_{\min}, \epsilon)$-combinations. For reference, we also plot the result when no inverse sampling is performed in Figure \ref{fig:l63_daisi_results_noinv} (i.e., it just samples from the posterior $\mathbb{P}_\infty^{\vy}$ at every time step). This shows the importance of the inverse sampling step in DAISI to firstly, reduce the excessive sample variance when just using $\mathbb{P}_\infty^{\vy}$ for assimilation, and secondly, for establishing time-continuity of the filter.

In Figure \ref{fig:l63_daisi_results_tuned_default}, we plot the results of DAISI with the default setting $t_{\min} = 0.01$ and $\epsilon = 0$. We see that while the filter is able to roughly track the overall pattern of the ground truth, the result is not very accurate, with narrow uncertainty bars that do not cover the ground truth consistently. This demonstrates the importance of tuning the hyperparameters $t_{\min}$ and $\epsilon$ to obtain an accurate filter. For example, Figure \ref{fig:l63_daisi_results_tuned_t_min_only} displays the result of when just $t_{\min}$ is tuned, with $\epsilon$ fixed to 0 and Figure \ref{fig:l63_daisi_results_both_tuned} displays the result when both hyperparameters are tuned. Both results show higher accuracy than the filter in Figure \ref{fig:l63_daisi_results_tuned_default}, highlighting the importance of tuning $t_{\min}$. Furthermore, the result in Figure \ref{fig:l63_daisi_results_both_tuned} is more accurate than that of Figure \ref{fig:l63_daisi_results_tuned_t_min_only}, which shows how tuning $\epsilon$ can lead to further improvements in performance. Finally, in Figure \ref{fig:l63_daisi_results_both_tuned_eps_0}, we display the results where we use the same value of $t_{\min}$ as used in Figure  \ref{fig:l63_daisi_results_both_tuned} (see Table \ref{tab:l63-tuning}), but setting $\epsilon$ to 0 to better observe the effect of $\epsilon$ on the filter. We clearly see that when $\epsilon$ is set to 0, the uncertainty bars become visibly narrower, making the predictions overconfident.

\begin{figure}
  \centering
  \includegraphics[width=0.4\linewidth]{figures/Lorenz/l63_legend.pdf}

  \begin{minipage}{.197\linewidth}\centering
    \includegraphics[width=\linewidth]{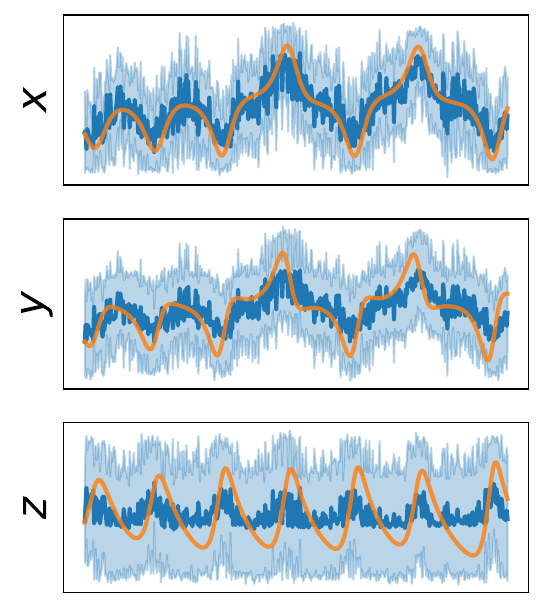}
    \subcaption{No inversion}\label{fig:l63_daisi_results_noinv}
  \end{minipage}
  \begin{minipage}{.18\linewidth}\centering
    \includegraphics[width=\linewidth]{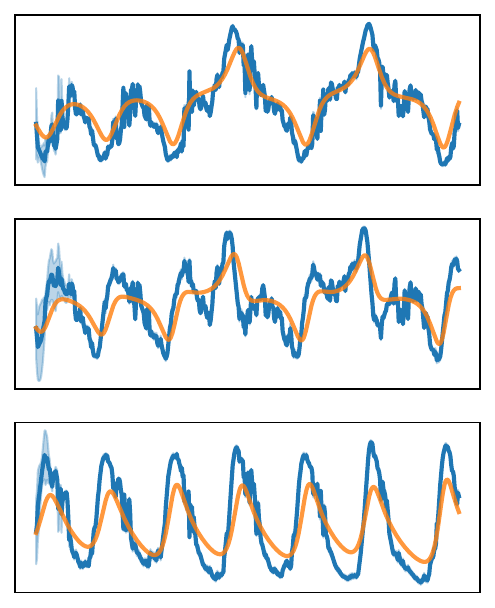}
    \subcaption{No tuning}\label{fig:l63_daisi_results_tuned_default}
  \end{minipage}
  \begin{minipage}{.18\linewidth}\centering
    \includegraphics[width=\linewidth]{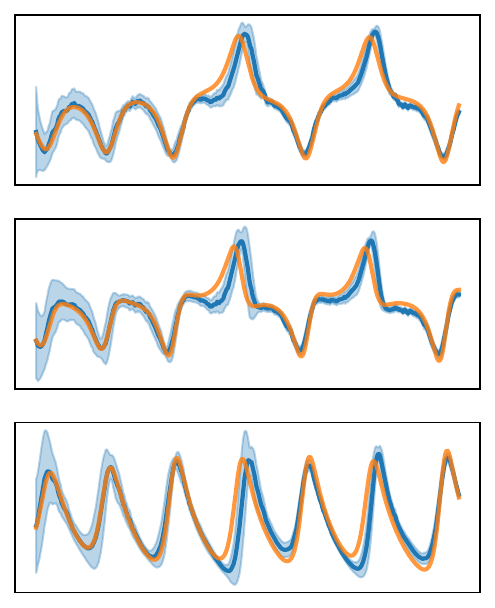}
    \subcaption{Tuned $t_{\min}$ only}\label{fig:l63_daisi_results_tuned_t_min_only}
  \end{minipage}
  \begin{minipage}{.18\linewidth}\centering
    \includegraphics[width=\linewidth]{figures/Lorenz/l63_daisi_results_both_tuned.pdf}
     \subcaption{Tuned both}\label{fig:l63_daisi_results_both_tuned}
  \end{minipage}
  \begin{minipage}{.18\linewidth}\centering
    \includegraphics[width=\linewidth]{figures/Lorenz/l63_daisi_results_both_tuned_eps_0.pdf}
     \subcaption{Tuned both, set $\epsilon=0$}\label{fig:l63_daisi_results_both_tuned_eps_0}
  \end{minipage}
  \caption{Filtering results on the L63 system with DAISI under various hyperparameter settings. }
  \label{fig:l63-ablation}
\end{figure}

\begin{table}[h]
    \centering
    \begin{tabular}{ccc}
    \toprule
         & $t_{\min}$ & $\epsilon$ \\
    \midrule
       Figure \ref{fig:l63_daisi_results_tuned_t_min_only} & 0.75 & -- \\
       Figure \ref{fig:l63_daisi_results_both_tuned} & 0.65 & 0.15 \\
    \bottomrule
    \end{tabular}
    \caption{Tuned hyperparameters used in the L63 experiments.}
    \label{tab:l63-tuning}
\end{table}

\subsection{Surface Quasi-Geostrophic (SQG)}
\label{app:sqg}
For the SQG experiments, we use the surface quasi-geostrophic model presented in \cite{tulloch2009quasigeostrophic}. This formulation evolves potential temperature at both the lower boundary ($z=0$) and an upper lid ($z=H$), capturing the influence of surface and tropopause-level dynamics on the interior flow. The three-dimensional geostrophic velocity field is diagnosed via a nonlocal spectral inversion, which couples the boundaries through stratified Green’s functions. Compared to classical SQG, which assumes decay into the deep interior ($z \to \infty$), the two-layer model confines dynamics to a finite slab and supports richer vertical structure, including baroclinic interactions. The corresponding PDE is given by \cref{eq:sqg_pde}.
\begin{align}
\underbrace{\frac{\partial \theta}{\partial t}}_{\text{time tendency}}=\underbrace{-J(\psi,\theta)}_{\text{nonlinear advection}}+ \underbrace{\frac{1}{t_{\text{diab}}}(\bar{\theta}-\theta)}_{\text{thermal relaxation}}+ \underbrace{r\nabla^2 \psi}_{\text{Ekman damping}}- \underbrace{\nu(-\nabla^2)^{n/2} \theta}_{\text{hyperdiffusion }}, 
\label{eq:sqg_pde}
\end{align}
The nonlinear advection term represents the advection of $\theta$ by the geostrophic velocity field and is responsible for vortex formation and turbulent cascades, serving as the main nonlinear mechanism in the dynamics. The thermal relaxation term acts as a large-scale restoring forcing toward a prescribed equilibrium profile $\bar{\theta}$ corresponding to a background jet. This term maintains a statistically steady turbulent state by continuously driving the system. The Ekman damping term represents frictional interaction with a boundary layer and damps the streamfunction, dissipating energy primarily at large scales. Following \cite{liang2025ensemble}, we set this term to zero in our experiments. The hyperdiffusion term is a high-order dissipation operator that selectively damps the smallest resolved scales. It does not correspond to physical diffusion but is introduced for numerical stability to prevent energy accumulation at the grid scale due to the turbulent cascade.

The equations are solved numerically by first applying a fast Fourier transform (FFT) to map model variables to spectral space. They are then integrated forward with a fourth order Runge-Kutta solver that uses a 2/3 dealiasing rule and implicit treatment of hyperdiffusion. For more details, we refer to the GitHub repository of the model \cite{sqgturb}.

\subsubsection{Model training}
\label{app:architecture}
For the $64\times 64$ experiments, we generate 2000 training trajectories of 100 steps each at 3-hour intervals. Evaluation is done on 10 trajectories of 100 steps, and metrics are averaged over the final 20 steps. For the $256\times256$ experiment, we generate 10 trajectories of length 1000 steps, but due to computational cost, evaluation is performed on a single 200-step run. All trajectories are initialized from approximate stationarity by spinning up the model for 300 days from random initial conditions. 
The data already has mean zero, but we normalize it with the standard deviation $\sigma_{\text{train}}=2660$.

Our backbone model for learning the drift $\vb$  is a modified U-Net \citep{UNET_Original} following the design of \citet{song2020score} and \citet{Karras2022edm} with 3.5M parameters. Since the SQG data is 2D periodic, we use circular padding to preserve spatial continuity. The same network configuration is used for both the $64\times64$ and $256\times256$ experiments, but each model is trained independently. The U-Net employs a hidden dimension of 32 throughout and consists of three hierarchical levels, with attention applied at the second level. Although we did not explore more advanced architectures, our framework is compatible with any model capable of learning a flow-matching objective.

The training process is executed in Pytorch, with setup and parameters detailed in \cref{tab:optimizer}.

\begin{table}
\centering
\caption{Optimizer Hyperparameters.}
\begin{tabular}{lc}
\toprule
\textbf{Optimizer hyperparameters} & \\
\midrule
 Optimiser&AdamW  \citep{loshchilov_fixing_2017} \\
 Initialization&Xavier Uniform \citep{glorot_understanding_2010} \\
 LR decay schedule&Cosine \citep{loshchilov_sgdr_2017} \\

 Peak LR&1e-3\\
 Weight decay&1e-4\\
 Warmup steps& 1e3 \\
 Epochs&50\\
 Batch size&200 (SQG 64), 2 (SQG 256), 10 (SEVIR)\\
 \bottomrule
 \end{tabular}
\label{tab:optimizer}
\end{table}

\subsubsection{Data assimilation setup}\label{sec:sqg-da-setup}
We perform data assimilation for all experiment setups in \cref{tab:experiments}. The observations are given by 
\begin{align}
y_n = \mathcal{H}(x_n) + \eta_n, \qquad
\mathcal{H}(x_n) = (A \circ h)(x_n), \qquad
\eta_n \sim \mathcal{N}(0,\sigma_{\mathrm{obs}}^2 I),
\end{align}
where $h$ is the operator in \cref{tab:experiments} and $A$ is the sparsity operator. The observation locations in $A$ are chosen uniformly at random before assimilation and remain the same across time and members and channels.

For the \texttt{High-dim.} experiments on $256\times256$, observations are generated through a three-step process. The state is first subsampled to $64\times64$, after which each grid point is replaced by the average over a $5\times5$ neighborhood, mimicking lower-resolution sensing. Finally, only 5\% of these pixels are observed at random with added noise, yielding highly sparse observations.

\paragraph{Options for initializing DAISI.}
There are several options for initializing DAISI, depending on the available initial conditions. In our numerical experiments, assimilation begins from a noisy version of the true initial state $\tilde\vx_0\sim\mathcal{N}(\vx_0, \sigma_{\text{init}}^2 \mat{I})$. This perturbation is needed for LETKF to work, and is thus also used by DAISI to ensure fairness. In particular, this fairness is important for the squared observations, where the initial conditions affect how multimodal the filtering becomes.

Since these initial conditions will have unphysical noise, applying DAISI directly leads to unrealistic samples for the first few iterations. 
To address this, we perform an additional sampling step, running the forward SDE starting in the latent $\vz_{t^*} = \beta_{t^*}\tilde\vx_0$, where $t^*\in[0,1]$ is chosen such that $\alpha_{t^*}/\beta_{t^*}=\sigma_{\text{init}}$. Conceptually, this treats the noisy initial condition as a partially inverted sample and lets DAISI remove the noise. We remark that this is an experimental detail and not a part of DAISI. If ones knowledge of the initial condition was an observation $\vy_0$, one could start the assimilation by conditionally sampling from $\mathbb{P}_\infty^{\vy_0}$. 

This initialization procedure is reminiscent of SDEdit \cite{meng2022sdedit}, in which a sample is partially inverted by the stochastic interpolant $\vz_t=\alpha_t\vz_0 + \beta_t\vz_1$. This has been used for image editing, and in \cref{tab:ablation}, we perform an ablation where it replaces the backward SDE in the inversion step.

\subsubsection{Baselines}
\label{app:baselines}

We compare DAISI against both classical and ML-based DA methods. Classical baselines include the Local Ensemble Transform Kalman Filter (LETKF), while ML baselines include Score-based Data Assimilation (SDA), FlowDAS, and the Ensemble Score Filter (EnSF). 

\paragraph{SDA (smoothing):} The score-based data assimilation (SDA) algorithm of \cite{rozet2023score} uses a diffusion model trained on a short window of a dynamical system's trajectory, referred to as the {\em Markov blanket}, to sample trajectories from the smoothing distribution $p(\vx_{1:N} | \vy_{1:N})$ of an arbitrary length $N$, using a guidance-based method for posterior sampling. To be consistent with the other baselines, we also conditioned on noised initial states $\tilde{\vx}_0^{(j)}$ as described in Section \ref{sec:sqg-da-setup}, yielding samples from the distribution $p(\vx_{0:N} | \vy_{1:N}, \tilde{\vx}_0^{(j)})$.
For the window length $W$ of the Markov blanket, we set $W = 5$ in all of our experiments. For the score network, we used the default U-net architecture found in the original SDA github repository (\url{https://github.com/francois-rozet/sda}) and trained the model for 4096 epochs using the AdamW optimizer, with learning rate $10^{-3}$, weight decay $10^{-3}$ and linear training scheduler. We use the optimal hyperparameters in Table~\ref{tab:sda_smoothing_hyperparams}, obtained by grid search.

We note, however, that SDA is a {\em smoothing algorithm} and therefore does not provide a completely fair comparison with DAISI and the other baselines, which are essentially filtering algorithms. For this purpose, we also propose a filtering variant of SDA that we describe in the following.

\begin{table}[ht]
\centering
\caption{Hyperparameter configurations for SDA smoothing.}
\label{tab:sda_smoothing_hyperparams}
\begin{tabular}{lcccc}
\toprule
\textbf{Experiment} & \textbf{Guidance method} & \textbf{Guidance strength} $\zeta$ & \textbf{Euler steps} & \textbf{Correction steps} \\
\midrule
\texttt{Noisy} & MMPS & 1 & 100 & 1\\
\texttt{Sparse} & MMPS & 1 & 500 & 1\\
\texttt{Multimodal} & MMPS & 10 & 250 & 2\\
\texttt{Saturating} & MMPS & 5 & 100 & 2\\
\texttt{SEVIR} & MMPS & 1 & 500 & 2\\
\bottomrule
\end{tabular}
\end{table}

\paragraph{SDA (filtering):} In order to perform approximate filtering using the spatio-temporal diffusion model trained for SDA with window size $W$, we propose to iteratively sample $\vx_{n:n+W-1}^{(j)} \sim p(\vx_{n:n+W-1} | \vy_{n+1:n+W-1}, \tilde{\vx}_n^{(j)})$ for $n = 0, \ldots, N-W+1$ and $j = 1, \ldots, J$, storing $\vx_{n+W-1}^{(j)}$ as the filtered state at time step $n+W-1$ and setting $\tilde{\vx}_{n+1}^{(j)} \leftarrow \vx_{n+1}^{(j)}$ for the ``initial condition" in the next iteration. We summarize this in Algorithm \ref{alg:sda-filtering}. We use a window size $W = 5$ for all of our experiments and the same the score network as used in the smoothing variant of SDA. The hyperparameters used are displayed in Table~\ref{tab:sda_filtering_hyperparams}. We also note that this setup is similar to the joint guided autoregressive model proposed in \cite{shysheya2024on}. 

\begin{algorithm}[t]
\caption{SDA filtering}
\begin{algorithmic}[1]
  \STATE \texttt{Inputs:} Initial ensemble $\tilde{x}_0^{(j)} \sim \mathcal{N}(\vx_0, \sigma_{\text{init}}^2 \mat{I})$ for $j=1, \ldots, J$, observations $\{\vy_n\}_{n=1}^N$, and $A = \varnothing$
  \FOR{$n = 0, \dots, N-W+1$}
        \STATE Initialize $B_n = \varnothing$
      \FOR{$j = 1, \dots, J$}
        \STATE Sample $\vx_{n:n+W-1}^{(j)} \sim p(\vx_{n:n+W-1} | \vy_{n+1:n+W-1}, \tilde{\vx}_{n}^{(j)})$ via guidance
        \STATE Append last state $\vx_{n+W-1}^{(j)}$ to $B_n$
        \STATE Set $\tilde{\vx}_{n+1}^{(j)} \leftarrow \vx_{n+1}^{(j)}$
      \ENDFOR
      \STATE Append $B_n$ to $A$
    \ENDFOR
  \STATE \texttt{Output:} Sequence of SDA filtered states $A = \{\{\vx_{n}^{(j)}\}_{j=1}^J\}_{n=W-1}^N$
\end{algorithmic}
\label{alg:sda-filtering}
\end{algorithm}

\begin{table}[ht]
\centering
\caption{Hyperparameter configurations for SDA filtering.}
\label{tab:sda_filtering_hyperparams}
\begin{tabular}{lcccc}
\toprule
\textbf{Experiment} & \textbf{Guidance method} & \textbf{Guidance strength} $\zeta$ & \textbf{Euler steps} & \textbf{Correction steps} \\
\midrule
\texttt{Noisy} & MMPS & 1 & 100 & 1\\
\texttt{Sparse} & MMPS & 1 & 100 & 1\\
\texttt{Multimodal} & MMPS & 5 & 100 & 1\\
\texttt{Saturating} & MMPS & 5 & 100 & 1\\
\texttt{SEVIR} & MMPS & 1 & 100 & 1\\
\bottomrule
\end{tabular}
\end{table}

\paragraph{EnSF:}
We used the Ensemble Score Filter (EnSF) as described in \cite{liang2025ensemble} with $\epsilon_{\alpha}=0.05$ and 1000 Euler steps.
We note that the EnSF implementation includes an inflation step not mentioned in the original article, which we found crucial to prevent mode collapse. After each assimlation cycle, the particles are updated as 
\begin{align}
    \tilde{\vx}^{(j)}_n = \bar{\vx}_n + \frac{\sigma_{\text{init}}}{\sigma_{{\vx}_n}}({\vx}^{(j)}_n - \bar{\vx}_n),
\end{align}
where $\bar{\vx}_n$ is the ensemble mean, $\sigma_{\text{init}}$ is the initial ensemble standard deviation and $\sigma_{{\vx}_n}$ is the current ensemble standard deviation, averaged over all grid points.

\paragraph{FlowDAS:}
We trained a conditional stochastic interpolant model for probabilistic forecasting \cite{chen2024probabilistic}, as used in FlowDAS, to emulate the forward dynamics of SQG at
$64\times 64$ resolution. Our model conditions on the past six states for roll-out, i.e., generates samples from $p(\vx_n | \vx_{n-1}, \ldots, \vx_{n-6})$.
For the drift model $\vb$, we used a U-Net, similar to the architecture used for the interpolant in DAISI, with $64$ channels, and consisting of three hierarchical
levels with channel multipliers~$(1,2,2)$. Each level uses group-normalized
residual blocks with 8 groups, and time embeddings are provided through learned sinusoidal
features of dimension~32. Linear self-attention is applied at every resolution, together with a full multi-head self-attention block applied at the bottleneck between the encoder and decoder.
The model is trained using
AdamW with learning rate $2\times 10^{-4}$, cosine learning-rate decay, gradient-norm
clipping, batch size 32, and dataset normalization with standard deviation $\sigma_{\text{train}}=2660$. Training is performed for 4096
epochs on 3-hourly SQG data. 

For sampling, we use the parameters in \cref{tab:FlowDAS_hyperparams}, which are based on \cite{chen2025flowdas} and finetuned for the different cases.
\begin{table}[ht]
\centering
\caption{Hyperparameter configurations for FlowDAS.}
\label{tab:FlowDAS_hyperparams}
\begin{tabular}{lccc}
\toprule
\textbf{Experiment} & \textbf{Guidance Strength} $\zeta$ & \textbf{Euler steps}& \textbf{MC guidance members} $J$\\
\midrule
\texttt{Noisy} & 1 & 500&25\\
\texttt{Sparse} & 3 & 500&25\\
\texttt{Multimodal} & 0.5 & 500&25\\
\texttt{Saturating} & 5 & 500&25\\
\texttt{SEVIR}& 0.1 & 500&25\\
\bottomrule
\end{tabular}
\end{table}

\paragraph{LETKF:}
LETKF is a state-of-the-art DA method commonly used in the geosciences \cite{wang2021multiscale}. The hyperparameters used for LETKF are shown in \cref{tab:LETKF_hyperparams}. These are roughly based on the ones in \cite{liang2025ensemble} but finetuned for the different cases. For the high-dimensional case, no parameter search is done due to computational cost.
\begin{table}[ht]
\centering
\caption{Hyperparameter configurations for LETKF.}
\label{tab:LETKF_hyperparams}
\begin{tabular}{lcc}
\toprule
\textbf{Experiment} & \textbf{Localization scale} & \textbf{Relaxation to prior spread}  \\
\midrule
\texttt{Noisy} & \SI{2500}{\kilo\meter} & 0.5 \\
\texttt{Noisy 12h} & \SI{1500}{\kilo\meter} & 0.4 \\
\texttt{Sparse} & \SI{1500}{\kilo\meter} & 0.5 \\
\texttt{Multimodal} & \SI{1500}{\kilo\meter} & 0.5 \\
\texttt{Saturating} & \SI{4500}{\kilo\meter} & 0.9 \\
\texttt{High-dim.} & \SI{1500}{\kilo\meter} & 0.5 \\
\texttt{Non-stationary} & \SI{1000}{\kilo\meter} & 0.2 \\
\texttt{SEVIR} & \SI{3}{px} & 0.0 \\
\bottomrule
\end{tabular}
\end{table}

\subsubsection{Hyperparameters}

To identify suitable hyperparameters for DAISI, we conducted a greedy grid search across all experiments. The final hyperparameter settings for each experiment are listed in \cref{tab:experiment_hyperparams}.

\begin{table}[ht]
\centering
\caption{Hyperparameter configurations for all SQG and SEVIR experiments.
}
\label{tab:experiment_hyperparams}
\begin{tabular}{lccccc}
\toprule
\textbf{Experiment} & \textbf{Guide Method} & \textbf{Solver Steps}& $\boldsymbol{\varepsilon}$ & $\boldsymbol{t_{\min}}$ & \textbf{Guidance Strength} $\zeta$\\
\midrule
\texttt{Noisy}  & DPS& 100 & 0.03  & 0.4 & 10 \\
\texttt{Noisy 12h} & MMPS & 100 & 0.03  & 0.3 & 1 \\
\texttt{Sparse} & MMPS & 100 & 0.03 & 0.3 & 1 \\
\texttt{Multimodal} & MMPS & 100 & 0.03 & 0.3 & 1 \\
\texttt{Saturating} & MMPS & 100 & 0.03 & 0.3 & 20 \\
\texttt{High-dim.} & MMPS & 100 & 0.03& 0.0& 1 \\
\texttt{Non-stationary} & MMPS & 100 & 0.03& 0.5& 3 \\
\texttt{SEVIR} & MMPS & 100 & 0.03 & 0.3& 1 \\
\bottomrule
\end{tabular}
\end{table}

\subsubsection{Results}
\label{app:SQG_results}

We report RMSE and SSR for the SQG and SEVIR experiments. RMSE values in \cref{tab:app-rmse} align closely with the CRPS results in \cref{tab:results}. The SSR in \cref{tab:app-ssr} indicates that DAISI is slightly overdispersive, but as shown in \cref{fig:epsilon-ablation-app}, SSR is sensitive to the choice of $\epsilon$ and tuning this parameter more could further improve calibration.
Examples of assimilated states for all settings and methods are in Figures
\ref{fig:samples_noisy}--\ref{fig:samples_sevir} 

\begin{table}
\centering
\small
\caption{The RMSE for experiments on SQG and SEVIR. We display the mean and standard deviation across 10 independent trajectories, averaged over the last 20 (10 for SEVIR) steps. The best score for each experiment is highlighted in \textbf{bold} and the second best with an \underline{underline}. Since SDA (smoothing) solves a different problem, we exclude it from the relative ranking.}
\label{tab:app-rmse}
\begin{tabular}{c c c c c c | c}
\toprule
\textbf{Experiment} & \textbf{DAISI} & \textbf{LETKF} & \textbf{FlowDAS} & \textbf{EnSF} & \textbf{SDA (filtering)} & \textbf{SDA (smoothing)} \\
\midrule
\texttt{Noisy}		& $\underline{2.46}{\scriptstyle \pm0.23}$	& $2.51{\scriptstyle \pm0.26}$	& $5.01{\scriptstyle \pm0.69}$	& $8.07{\scriptstyle \pm1.01}$	&  $\mathbf{2.11}{\scriptstyle \pm0.18}$ & $2.15{\scriptstyle \pm0.20}$	 \\
\texttt{Sparse}		& $\mathbf{3.40}{\scriptstyle \pm0.36}$	& $4.55{\scriptstyle \pm0.53}$	& $6.29{\scriptstyle \pm0.90}$	& $7.09{\scriptstyle \pm0.74}$	& $\underline{4.17}{\scriptstyle \pm0.44}$ & $2.88{\scriptstyle \pm0.20}$	 \\
\texttt{Multimodal}		& $\underline{3.71}{\scriptstyle \pm0.95}$	& $\mathbf{3.62}{\scriptstyle \pm1.12}$	& $6.93{\scriptstyle \pm0.64}$	& $9.64{\scriptstyle \pm0.89}$	& $6.78{\scriptstyle \pm0.70}$ & $7.88{\scriptstyle \pm0.52}$	 \\
\texttt{Saturating}		& $\underline{2.96}{\scriptstyle \pm0.17}$	& $9.65{\scriptstyle \pm0.69}$	& $4.71{\scriptstyle \pm0.69}$	& $\mathbf{2.67}{\scriptstyle \pm0.35}$	& $7.52{\scriptstyle \pm0.73}$ & $6.73{\scriptstyle \pm0.34}$	 \\
\texttt{SEVIR}		& $\mathbf{0.049}{\scriptstyle \pm0.02}$	& ${0.053}{\scriptstyle \pm0.02}$	& $0.082{\scriptstyle \pm0.02}$ & $0.132{\scriptstyle \pm0.03}$ & $\underline{0.051}{\scriptstyle \pm0.01}$ & $0.041{\scriptstyle \pm0.01}$  \\
\bottomrule
\end{tabular}
\end{table}

\begin{table}
\centering
\small
\caption{The CRPS for experiments on SQG and SEVIR. We display the mean and standard deviation across 10 independent trajectories, averaged over the last 20 (10 for SEVIR) steps. The best score for each experiment is highlighted in \textbf{bold} and the second best with an \underline{underline}. Since SDA (smoothing) solves a different problem, we exclude it from the relative ranking.}
\label{tab:app-crps}
\begin{tabular}{c c c c c c | c}
\toprule
\textbf{Experim.} & \textbf{DAISI} & \textbf{LETKF} & \textbf{FlowDAS} & \textbf{EnSF}  & \textbf{SDA (filtering)} & \textbf{SDA (smoothing)}  \\
\midrule
\texttt{Noisy}		& $\underline{1.32}{\scriptstyle \pm0.13}$	& $1.34{\scriptstyle \pm0.14}$	& $2.81{\scriptstyle \pm0.43}$	& $5.15{\scriptstyle \pm0.78}$	& $\mathbf{1.13}{\scriptstyle \pm0.10}$ & $1.21{\scriptstyle \pm0.11}$	 \\
\texttt{Sparse}		& $\mathbf{1.73}{\scriptstyle \pm0.18}$	& $2.35{\scriptstyle \pm0.28}$	& $3.34{\scriptstyle \pm0.53}$	& $4.20{\scriptstyle \pm0.56}$	& $\underline{2.22}{\scriptstyle \pm0.25}$ & $1.53{\scriptstyle \pm0.11}$	 \\
\texttt{Multimodal}		& $\mathbf{1.81}{\scriptstyle \pm0.44}$	& $\underline{1.97}{\scriptstyle \pm0.69}$	& $3.66{\scriptstyle \pm0.42}$	& $6.38{\scriptstyle \pm0.75}$	& $3.88{\scriptstyle \pm0.48}$ & $3.82{\scriptstyle \pm0.32}$	 \\
\texttt{Saturating}		& $\underline{1.54}{\scriptstyle \pm0.09}$	& $5.24{\scriptstyle \pm0.35}$	& $2.41{\scriptstyle \pm0.39}$	& $\mathbf{1.33}{\scriptstyle \pm0.16}$	& $4.20{\scriptstyle \pm0.44}$ & $3.37{\scriptstyle \pm0.14}$	 \\
\texttt{SEVIR}		& $\mathbf{0.016}{\scriptstyle \pm0.01}$	& $\underline{0.018}{\scriptstyle \pm0.01}$ & $0.045{\scriptstyle \pm0.01}$	& $0.075{\scriptstyle \pm0.02}$ & $\underline{0.018}{\scriptstyle \pm0.01}$ & $0.013{\scriptstyle \pm0.00}$	 \\
\bottomrule
\end{tabular}
\end{table}

\begin{table}
\centering
\small
\caption{The SSR for experiments on SQG and SEVIR. We display the mean and standard deviation across 10 independent trajectories, averaged over the last 20 (10 for SEVIR) steps. The best score for each experiment is highlighted in \textbf{bold} and the second best with an \underline{underline}. Since SDA (smoothing) solves a different problem, we exclude it from the relative ranking.}
\label{tab:app-ssr}
\begin{tabular}{c c c c c c | c}
\toprule
\textbf{Experiment} & \textbf{DAISI} & \textbf{LETKF} & \textbf{FlowDAS} & \textbf{EnSF} & \textbf{SDA (filtering)} & \textbf{SDA (smoothing)}  \\
\midrule
\texttt{Noisy}		& $\mathbf{1.06}{\scriptstyle \pm0.03}$	& $\underline{1.21}{\scriptstyle \pm0.05}$	& $0.78{\scriptstyle \pm0.07}$	& $0.49{\scriptstyle \pm0.05}$	& $1.44{\scriptstyle \pm0.04}$ & $0.78{\scriptstyle \pm0.02}$	 \\
\texttt{Sparse}		& $1.24{\scriptstyle \pm0.04}$	& $1.23{\scriptstyle \pm0.10}$	& $\underline{1.13}{\scriptstyle \pm0.06}$	& $0.57{\scriptstyle \pm0.05}$	& $\mathbf{0.88}{\scriptstyle \pm0.04}$ & $0.83{\scriptstyle \pm0.03}$	 \\
\texttt{Multimodal}		& $1.66{\scriptstyle \pm0.12}$	& $0.76{\scriptstyle \pm0.15}$	& $\underline{1.21}{\scriptstyle \pm0.07}$	& $0.42{\scriptstyle \pm0.04}$	& $\mathbf{0.91}{\scriptstyle \pm0.07}$ & $1.19{\scriptstyle \pm0.04}$	\\
\texttt{Saturating}		& $1.17{\scriptstyle \pm0.03}$	& $0.96{\scriptstyle \pm0.07}$	& $\mathbf{1.02}{\scriptstyle \pm0.11}$	& $1.60{\scriptstyle \pm0.18}$	& $\underline{1.03}{\scriptstyle \pm0.08}$ & $4.81{\scriptstyle \pm0.25}$	 \\
\texttt{SEVIR}		& ${1.19}{\scriptstyle \pm 0.12}$	& $1.35{\scriptstyle \pm0.14}$	& ${1.09}{\scriptstyle \pm0.17}$  & $0.19{\scriptstyle \pm0.08}$ & $\mathbf{1.03}{\scriptstyle \pm0.10}$ & $1.23{\scriptstyle \pm0.09}$	  \\
\bottomrule
\end{tabular}
\end{table}

\begin{figure}
    \noindent
    \begin{minipage}[t]{0.49\textwidth}
        \centering
        \includegraphics[width=\textwidth]{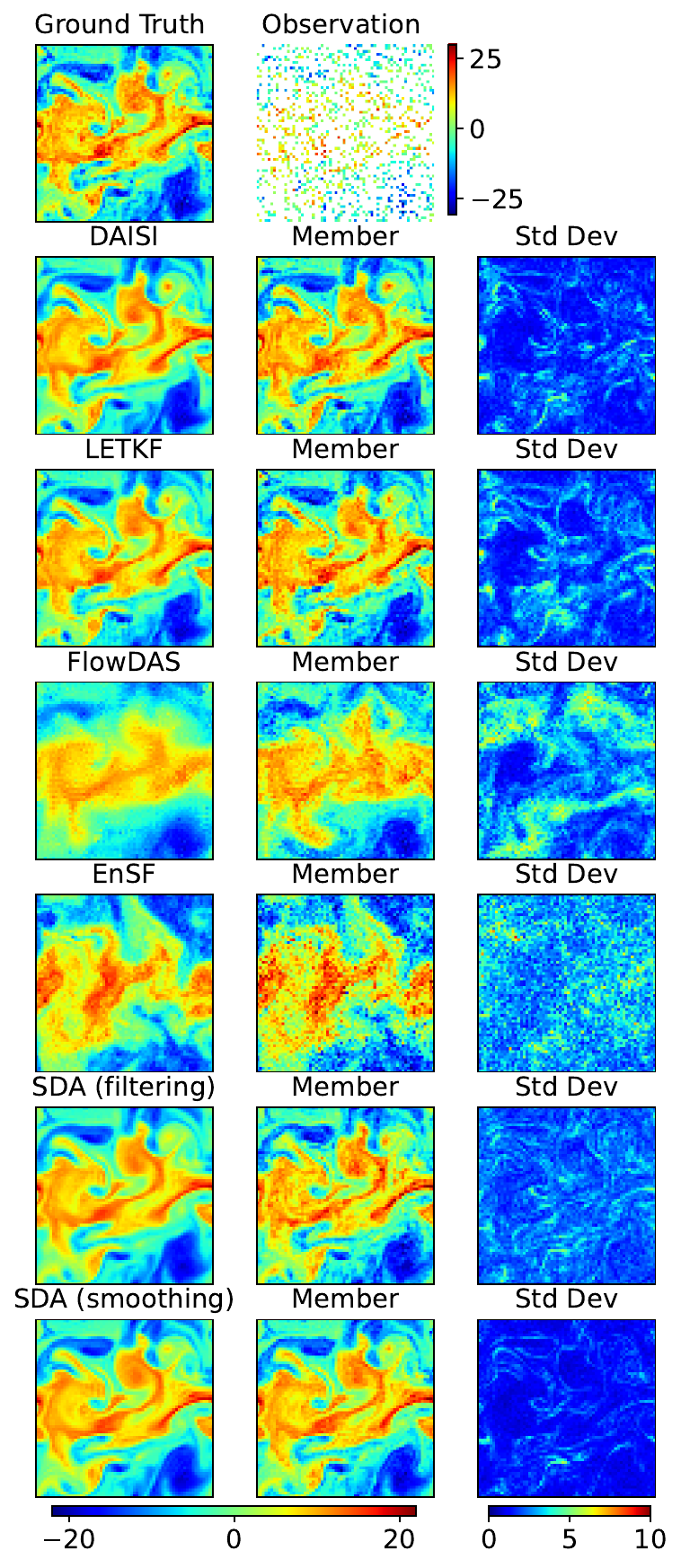}
        \caption{A comparison of the ensemble mean, individual members, and ensemble standard deviation for each method at the last step of the assimilated trajectory for the \texttt{Noisy} experiment.}
        \label{fig:samples_noisy}
    \end{minipage}
    \hfill
    \begin{minipage}[t]{0.49\textwidth}
        \centering
        \includegraphics[width=\textwidth]{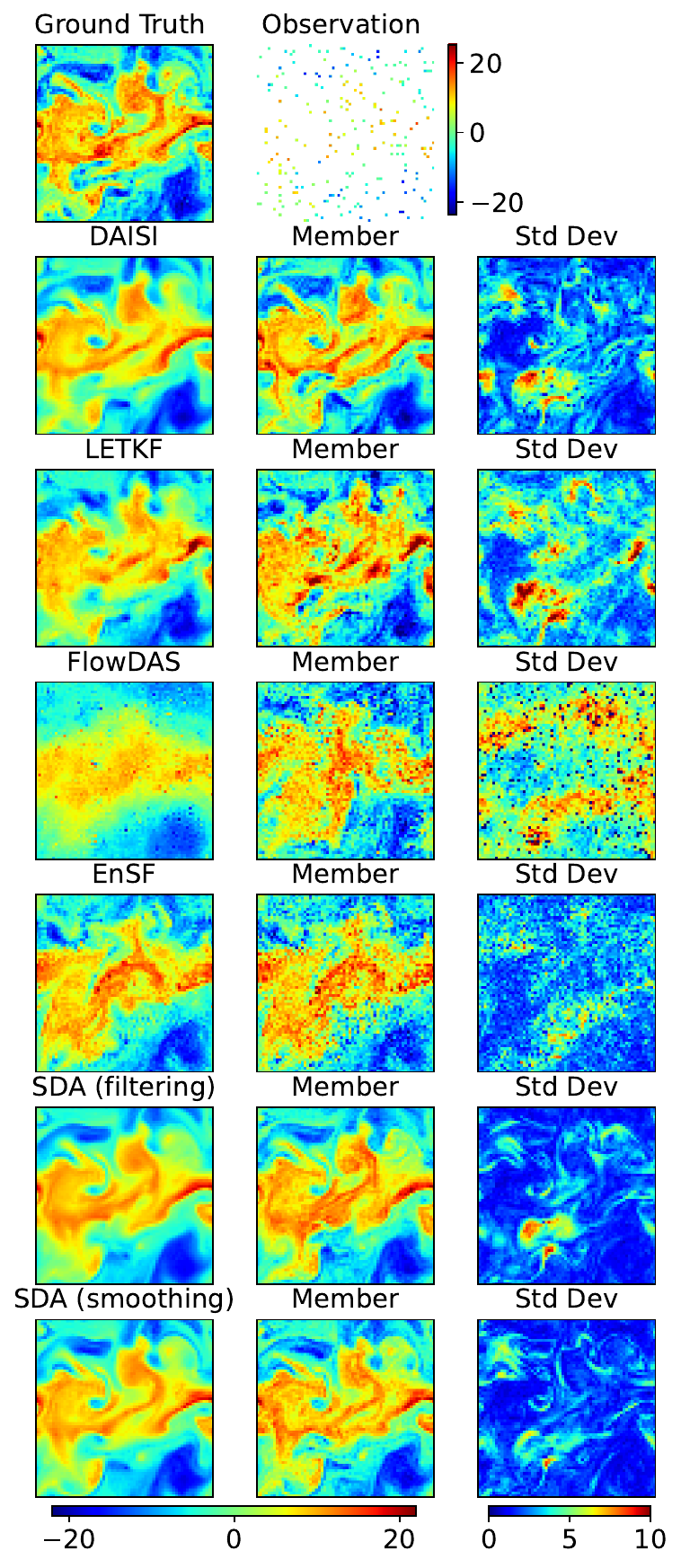}
        \caption{A comparison of the ensemble mean, individual members, and ensemble standard deviation for each method at the last step of the assimilated trajectory for the \texttt{Sparse} experiment.}
        \label{fig:samples_sparse}
    \end{minipage}
\end{figure}

\begin{figure}
    \noindent
    \begin{minipage}[t]{0.49\textwidth}
        \centering
        \includegraphics[width=\textwidth]{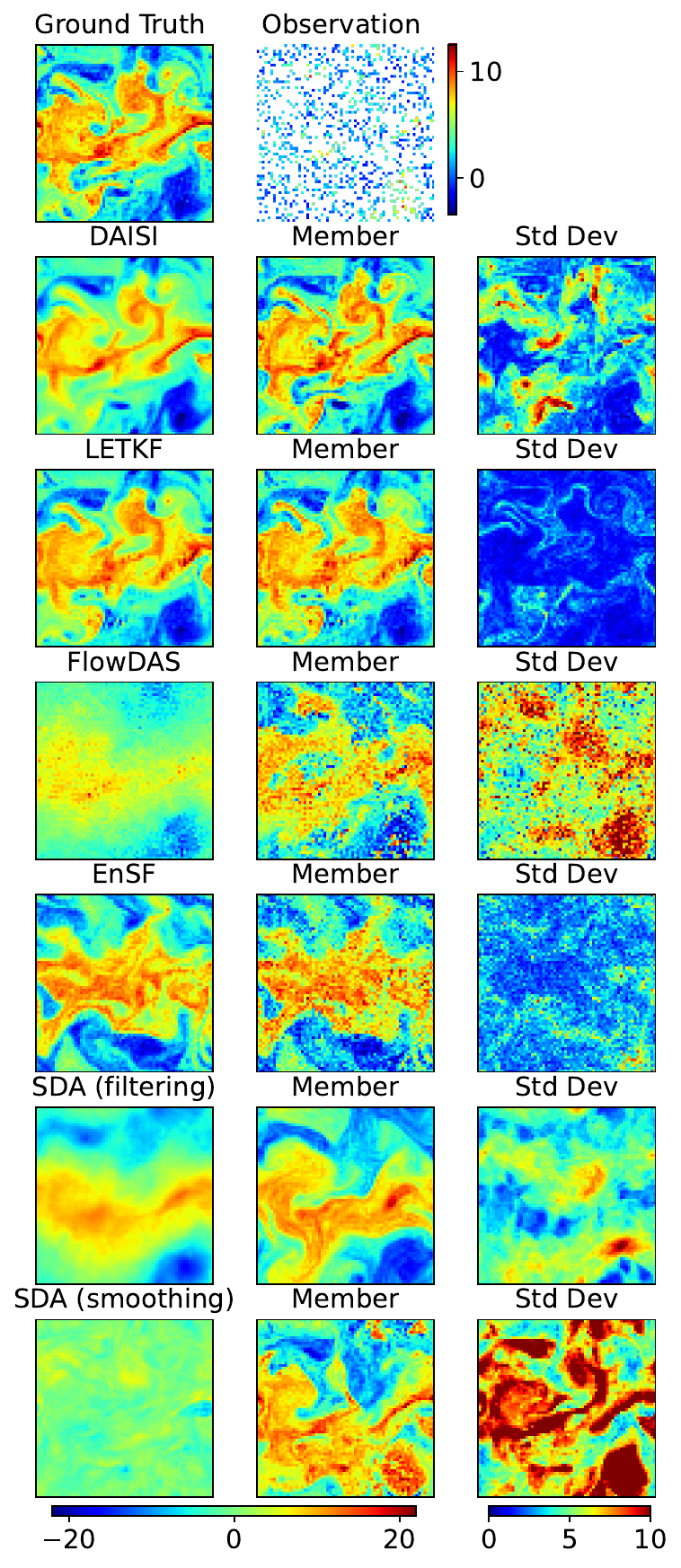}
        \caption{A comparison of the ensemble mean, individual members, and ensemble standard deviation for each method at the last step of the assimilated trajectory of the \texttt{Multimodal} experiment.}
        \label{fig:samples_multimodal}
    \end{minipage}
    \hfill
    \begin{minipage}[t]{0.49\textwidth}
        \centering
        \includegraphics[width=\textwidth]{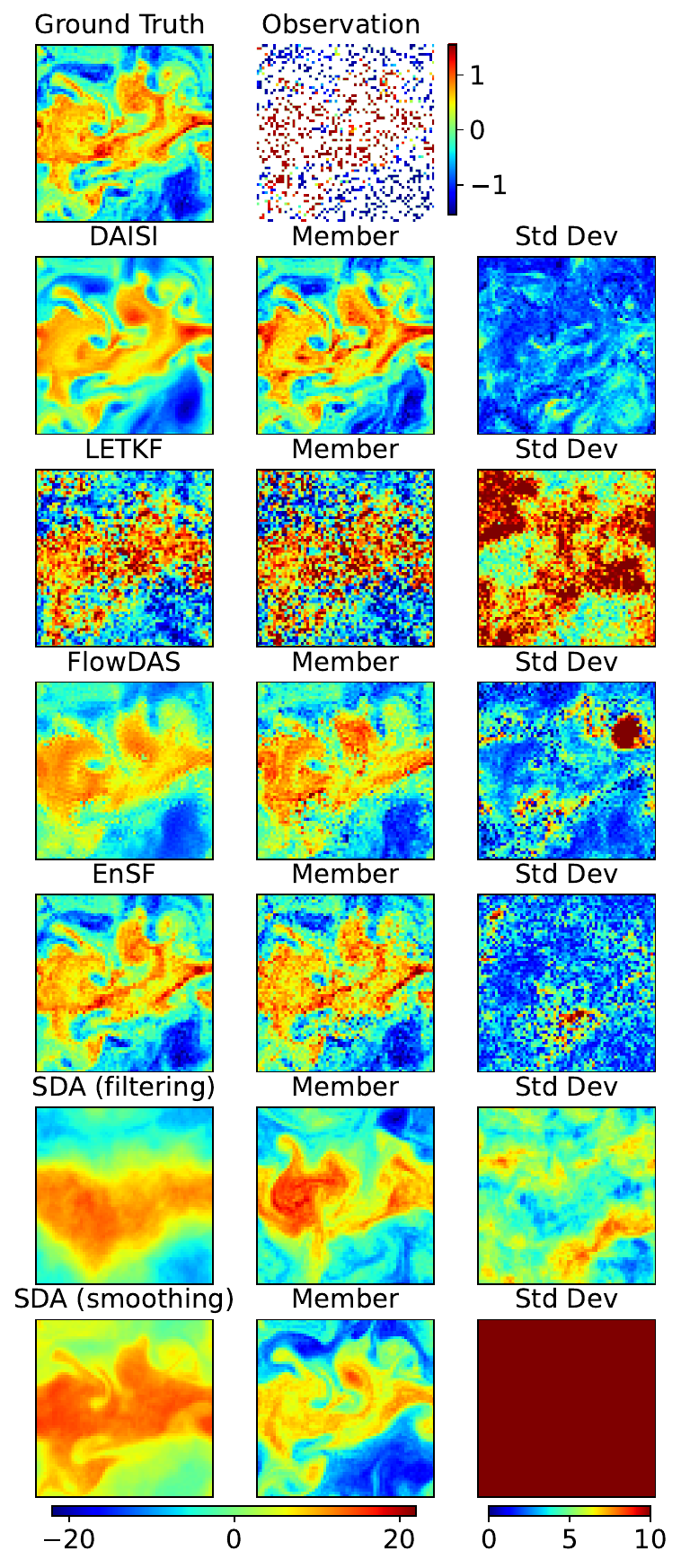}
        \caption{A comparison of the ensemble mean, individual members, and ensemble standard deviation for each method at the last step of the assimilated trajectory of the \texttt{Saturating} experiment.}
        \label{fig:samples_nonlinear}
    \end{minipage}
\end{figure}

\begin{figure}
    \noindent
    \begin{minipage}[t]{0.49\textwidth}
    
        \centering
        \includegraphics[width=\textwidth]{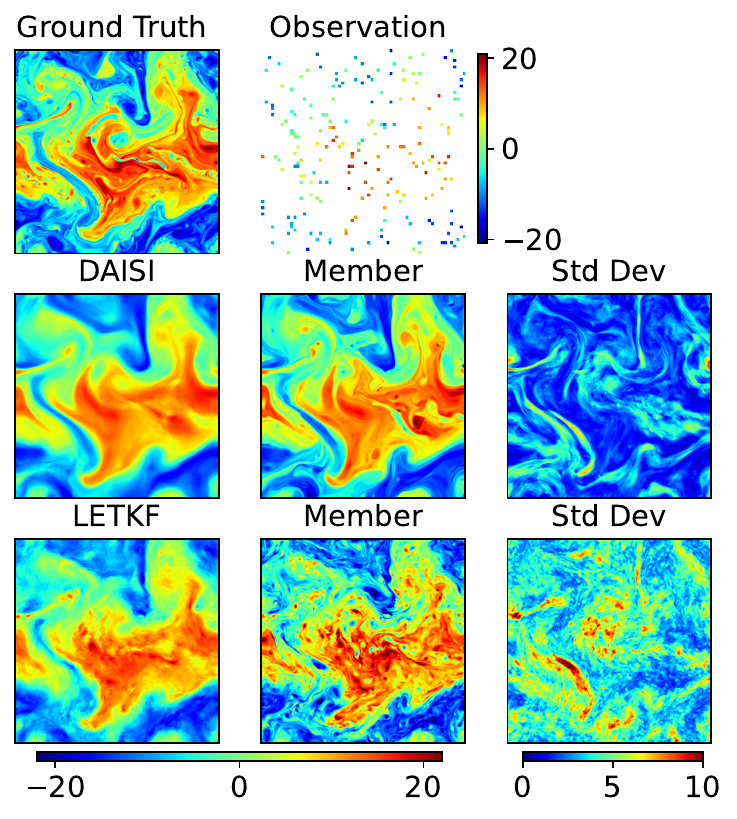}
        \caption{A comparison of the ensemble mean, individual members, and ensemble standard deviation for each method at the last step of the assimilated trajectory of the \texttt{High-dim.} experiment. The left colorbar is valid for the truth, samples and observation.}
        \label{fig:samples_highdim}
    \end{minipage}
    \hfill
    \begin{minipage}[t]{0.49\textwidth}
        \centering
        \includegraphics[width=\textwidth]{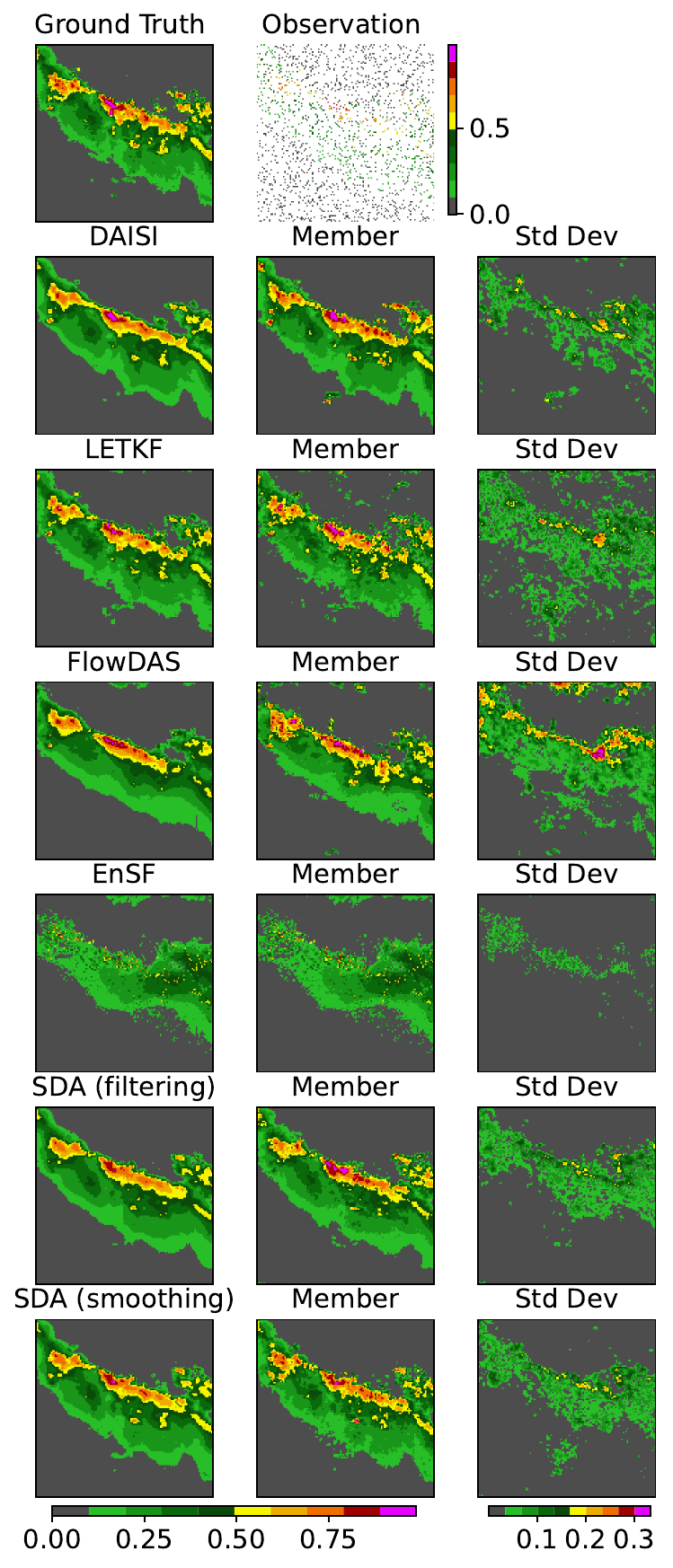}
        \caption{
        A comparison of the ensemble mean, individual members, and ensemble standard deviation for each method at the last step of the assimilated trajectory of the \texttt{SEVIR} experiment. The left colorbar is valid for the truth, samples and observation.}
        \label{fig:samples_sevir}
    \end{minipage}
\end{figure}

\subsubsection{Ablation}
\label{app:sqg_ablation}
We assess the contribution of each component in DAISI through a series of ablations summarized in \cref{tab:ablation}.

\paragraph{SDEdit backward step:} One can replace the inversion step with a single SDEdit \cite{meng2022sdedit} step, in which the forecast $\vz_1 $ is partially noised to the latent $\vz_{t_{\min}}=\alpha_{t_{\min}}\vz_0 + \beta_{t_{\min}}\vz_1$, $\vz_0\sim\mathcal{N}(\boldsymbol{0}, \mat{I})$.
Using a tuned value of $t_{\min}=0.4$, this approach performs worse than using the backward SDE. The reason is that $t_{\min}$ simultaneously controls the strength of the corrective guidance and how much of the forecast information is preserved. This coupling introduces a trade-off, starting later preserves more forecast information but leaves less room for correction. 

\paragraph{No inversion:} Removing the inversion entirely results in a large degradation, underscoring the importance of injecting dynamical information from the forecast into the latent space. As illustrated in \cref{fig:DAISI_vs_DAISI_no_invert}, the inversion step is also critical for producing temporally smooth, physically consistent trajectories.

\paragraph{Hyperparameters:} We further study the sensitivity to the hyperparameters $\epsilon$ and $t_{\min}$ in the \texttt{Sparse} experiment. DAISI is generally robust to the choice of $\epsilon$, however, setting $\epsilon > 0$ improves the  probabilistic metrics (\cref{fig:results_eps}).
In contrast, $t_{\min}$ has a substantial impact on performance (\cref{fig:results_tmin}), suggesting its importance for minimizing the bias inherent in DAISI.

\paragraph{Non-stationary observations:} Finally, we verify that DAISI can handle non-stationary observations, such as those arising from remote sensing instruments. The observation operator is linear with $\sigma_{\text{obs}}=1$ and consists of a band of width 4\,px that shifts 4\,px to the right at each timestep. We find that DAISI performs comparably to LETKF (\cref{fig:results_R0}) while maintaining meaningful uncertainty (\cref{fig:samples_R0}).

\paragraph{Assimilation interval:} We also examine the effect of reduced assimilation frequency. Increasing the interval between observations amplifies forecast nonlinearities, which poses difficulties for LETKF. In the \texttt{Noisy} experiment, both LETKF and DAISI perform similarly with 3-hour assimilation intervals, but when the interval is extended to 12 hours, DAISI clearly outperforms LETKF (\cref{fig:results_noisy_12h}).

\begin{table}
\centering
\small
\caption{The CRPS for ablations on SQG. We display the mean and standard deviation across 10 independent trajectories, averaged over the last 20 (10 for SEVIR) steps.}
\label{tab:ablation}
\begin{tabular}{c c c c}
\toprule
\textbf{Experiment} & \textbf{DAISI} & \textbf{with SDEdit backward step} & \textbf{without Inversion} \\
\midrule
\texttt{Noisy}		& $1.32{\scriptstyle \pm0.13}$	& $2.12{\scriptstyle \pm0.27}$	& $2.63{\scriptstyle \pm0.26}$ \\
\texttt{Sparse}		& $1.73{\scriptstyle \pm0.18}$	& $2.17{\scriptstyle \pm0.25}$	& $2.40{\scriptstyle \pm0.22}$ \\
\texttt{Multimodal}		& $1.81{\scriptstyle \pm0.44}$	& $2.72{\scriptstyle \pm0.45}$	& $4.35{\scriptstyle \pm0.33}$ \\
\texttt{Saturating}		& $1.54{\scriptstyle \pm0.09}$	& $1.85{\scriptstyle \pm0.20}$	& $2.17{\scriptstyle \pm0.19}$ \\
\bottomrule
\end{tabular}
\end{table}

\begin{figure}
    \centering
    \includegraphics[width=\linewidth]{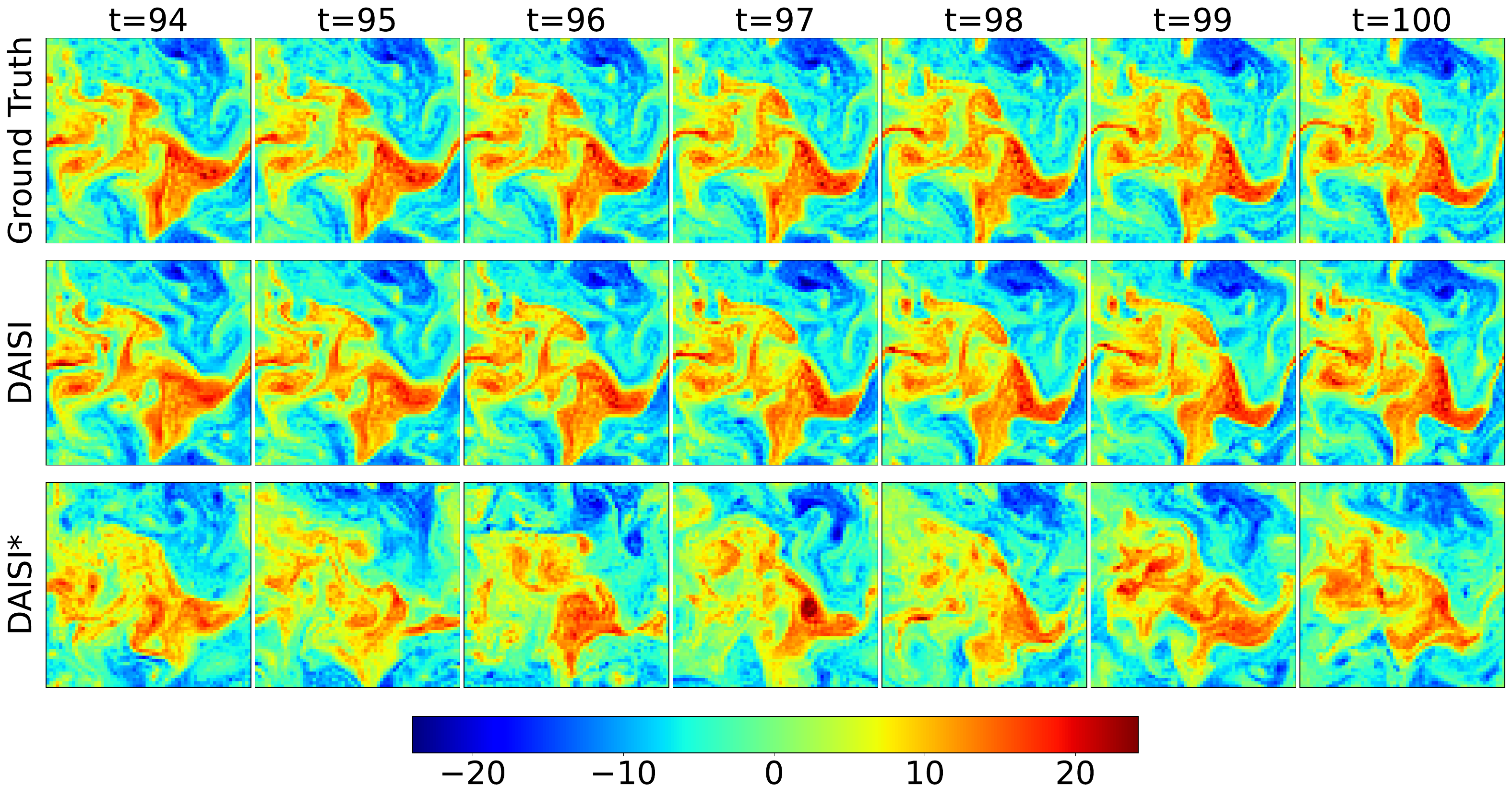}
    \caption{
        Comparison of an ensemble member trajectory on the \texttt{Noisy} experiment between DAISI and DAISI* (without inversion). 
        The DAISI* variant exhibits discontinuities between time steps $t-1$  and $t$, 
        as forecast information is not propagated forward, leading to worse reconstructions and incoherent temporal evolution.
    }
    \label{fig:DAISI_vs_DAISI_no_invert}
\end{figure}

\begin{figure}
    \centering
    \includegraphics[width=0.8\textwidth]{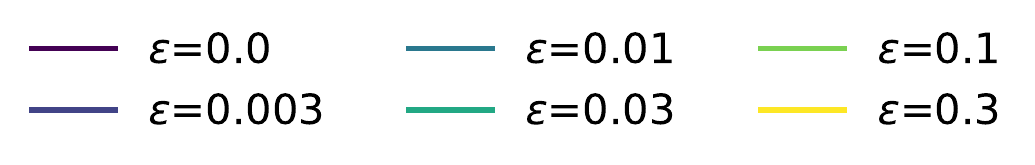}
    \begin{subfigure}[t]{0.3\textwidth}
        \includegraphics[width=\textwidth]{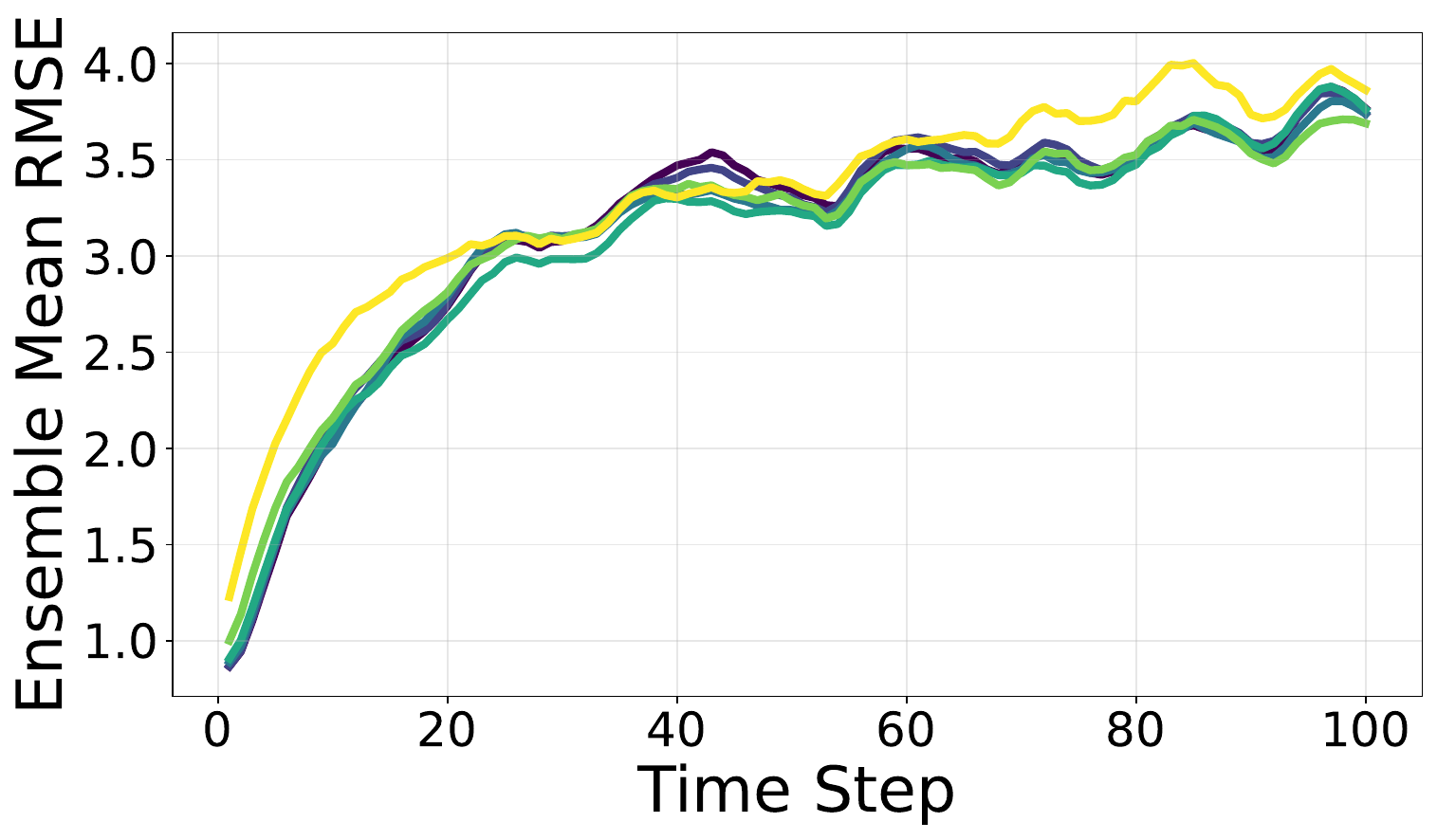}
        \caption{The RMSE of the ensemble mean of the filtering distribution at each time step.}
        \label{fig:rmse_eps}
    \end{subfigure}
    \hfill
    \begin{subfigure}[t]{0.3\textwidth}
        \includegraphics[width=\textwidth]{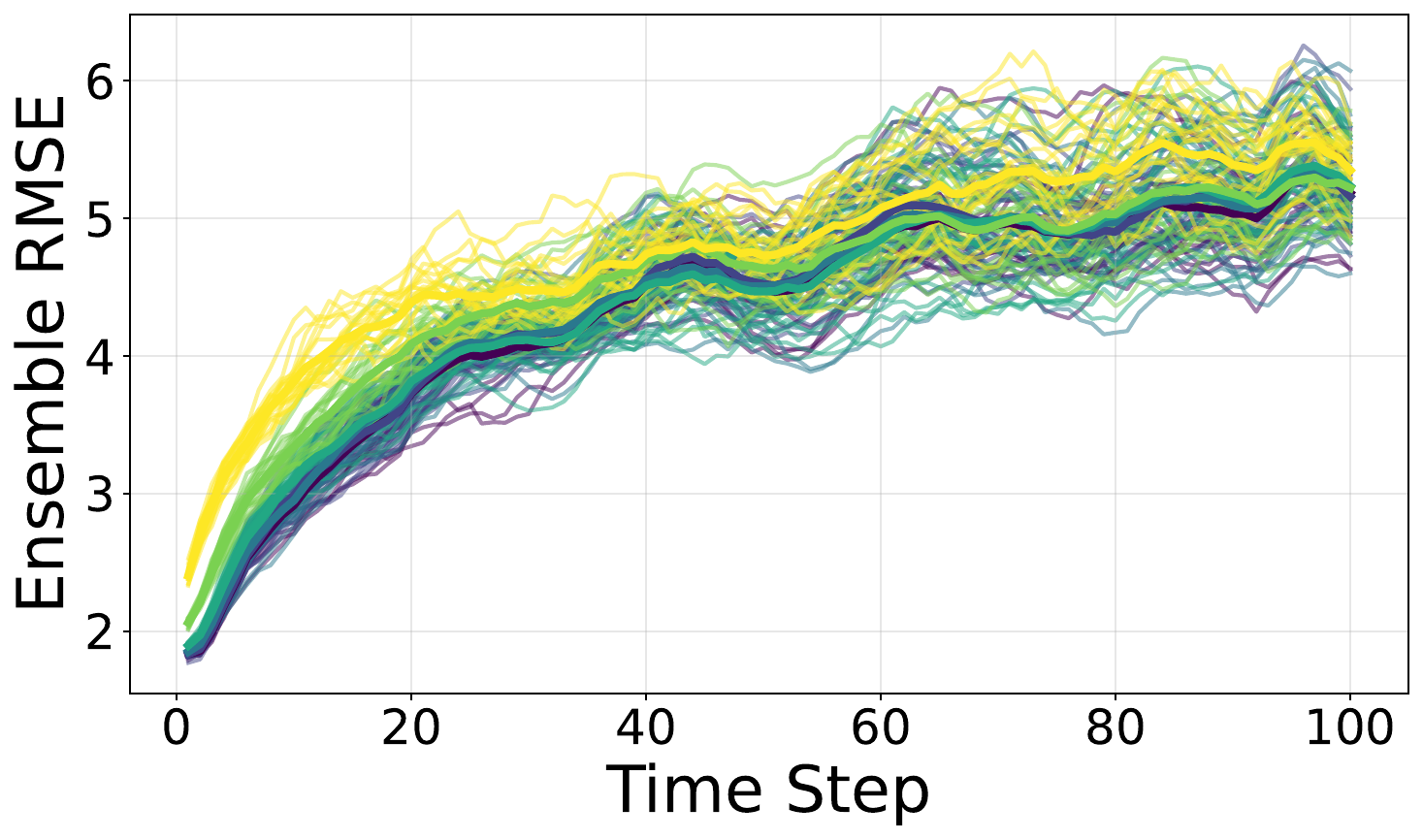}
        \caption{The RMSE for each ensemble member at each time step.}
        \label{fig:ens_rmse_eps}
    \end{subfigure}
    \hfill
    \begin{subfigure}[t]{0.3\textwidth}
        \includegraphics[width=\textwidth]{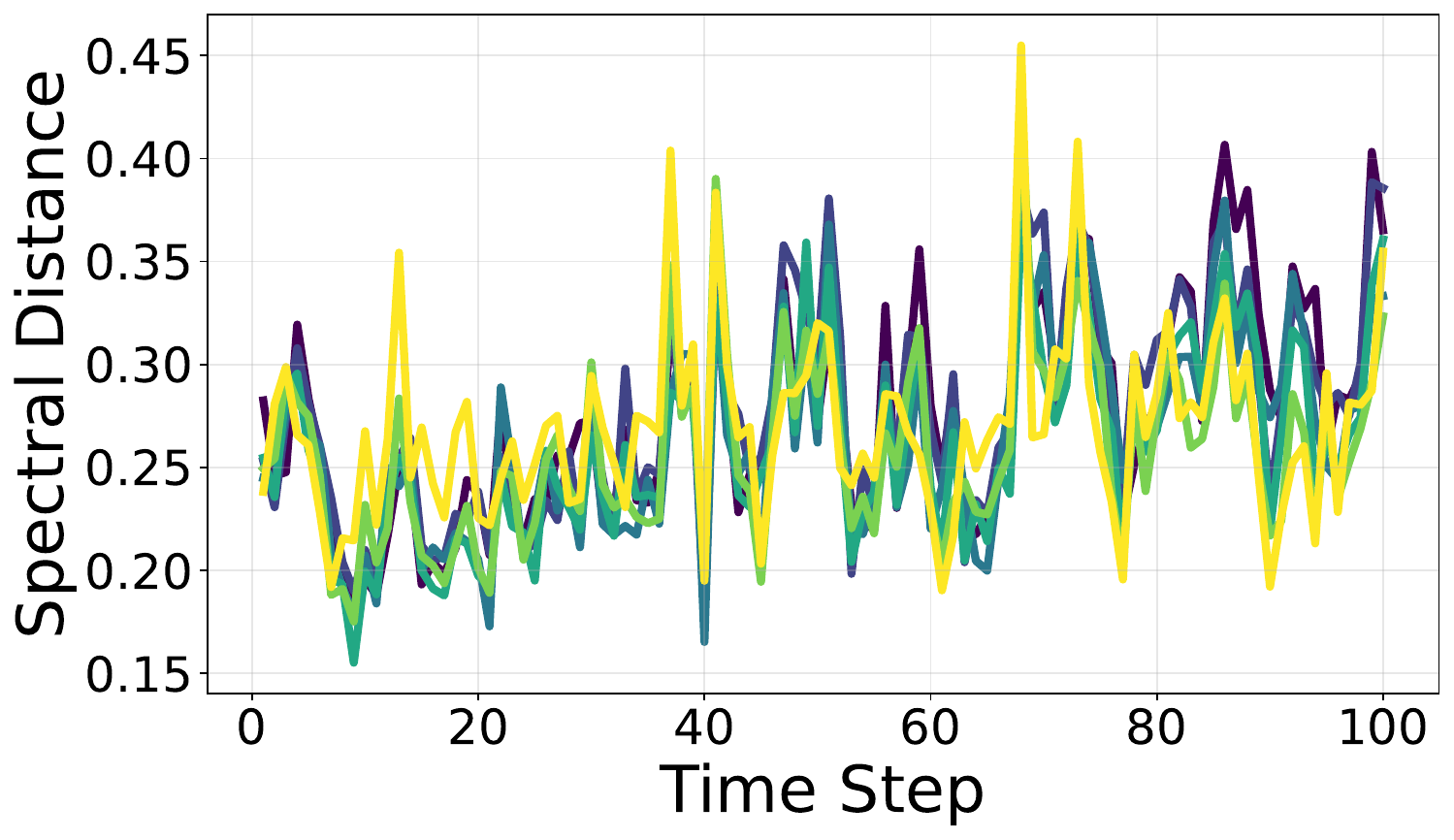}
        \caption{The spectral distance of each ensemble member compared to the ground truth and then averaged over the ensemble members.}
        \label{fig:spectra_eps}
    \end{subfigure}
    \hfill
    \begin{subfigure}[b]{0.3\textwidth}
        \includegraphics[width=\textwidth]{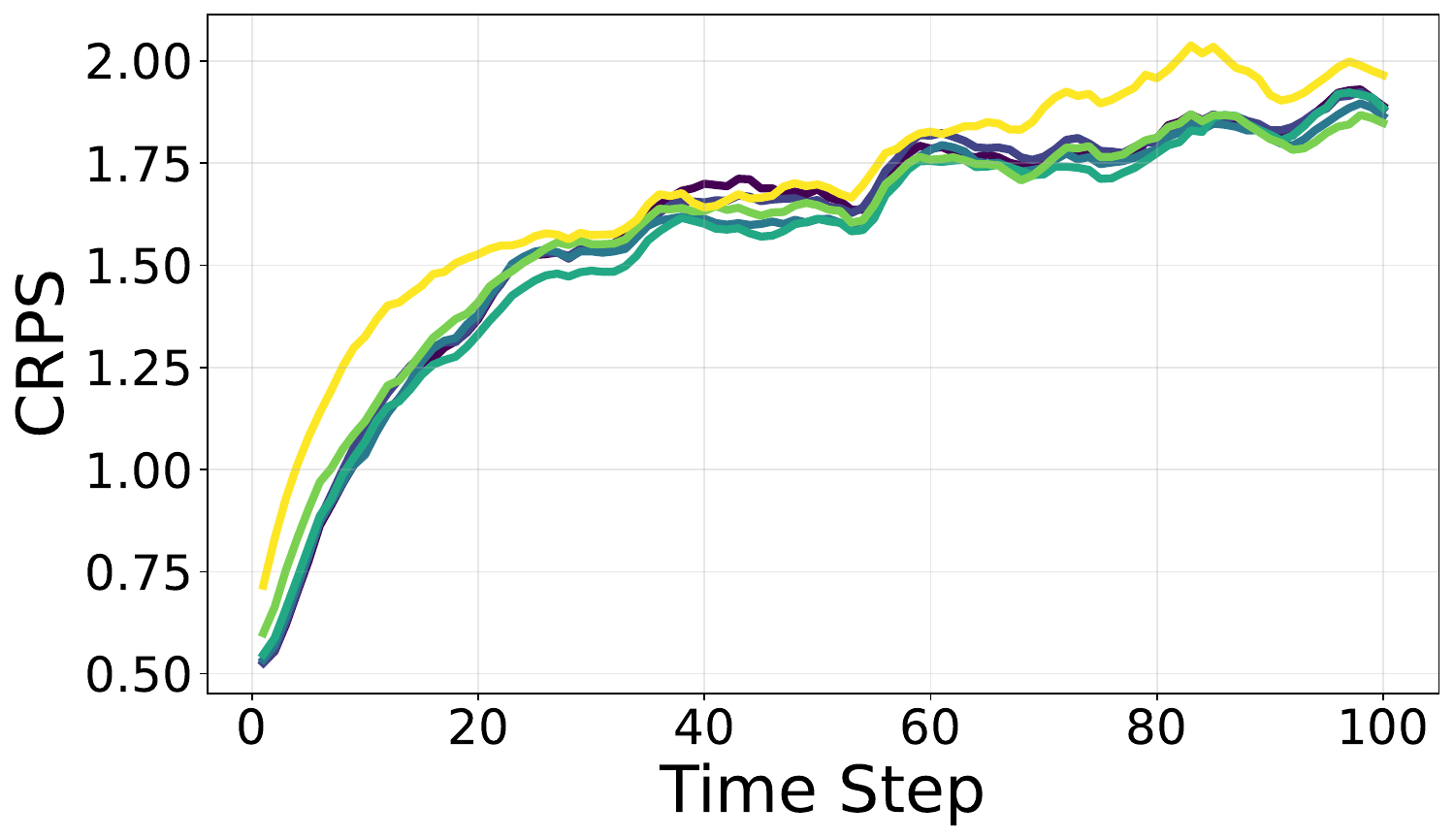}
        \caption{The CRPS of the empirical filtering distribution at each time step.}
        \label{fig:crps_eps}    
    \end{subfigure}
    \hfill
    \begin{subfigure}[b]{0.3\textwidth}
        \includegraphics[width=\textwidth]{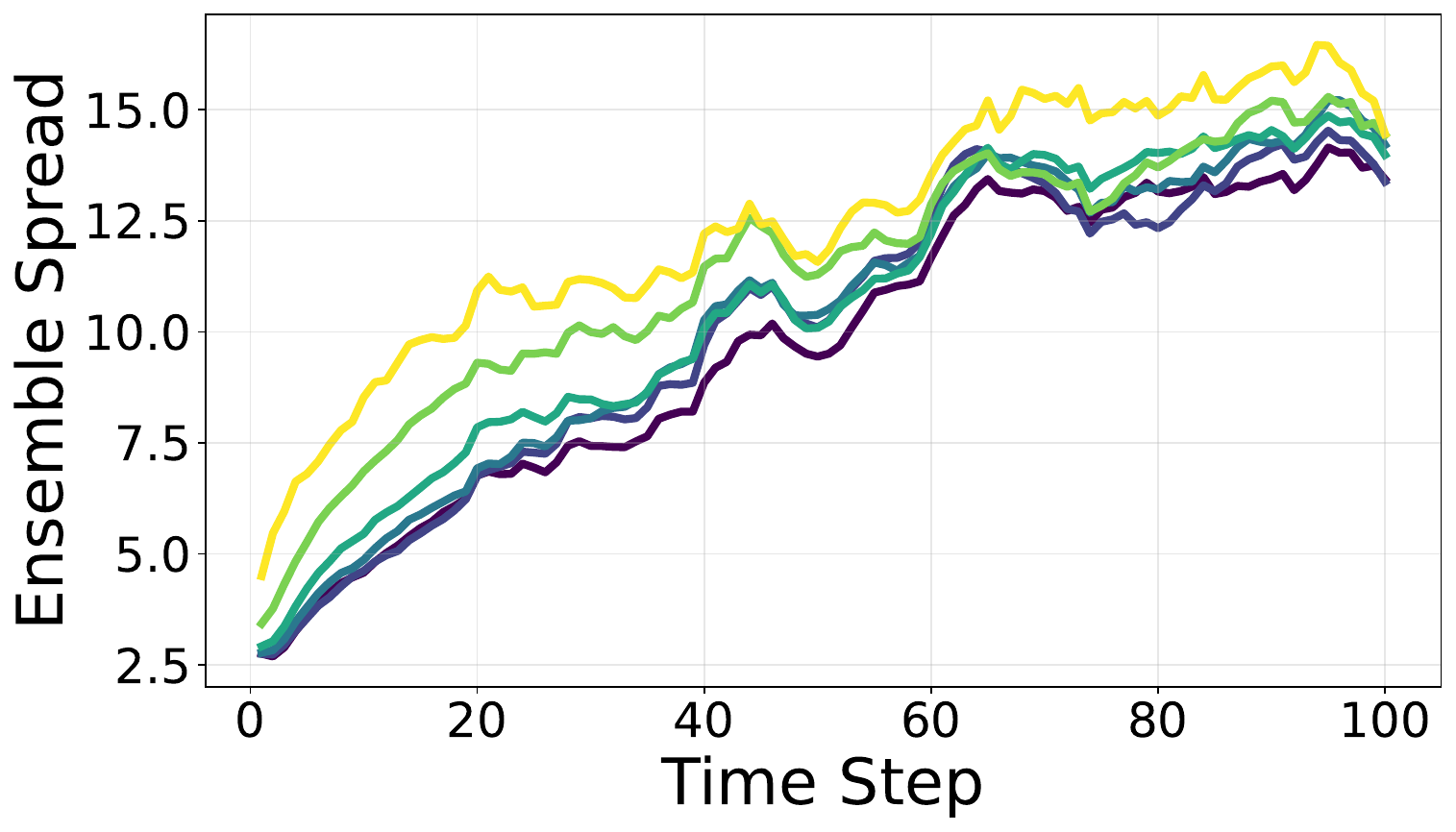}
        \caption{The spread of the ensemble at each time step.}
        \label{fig:spread_eps}
    \end{subfigure}
    \hfill
    \begin{subfigure}[b]{0.3\textwidth}
        \includegraphics[width=\textwidth]{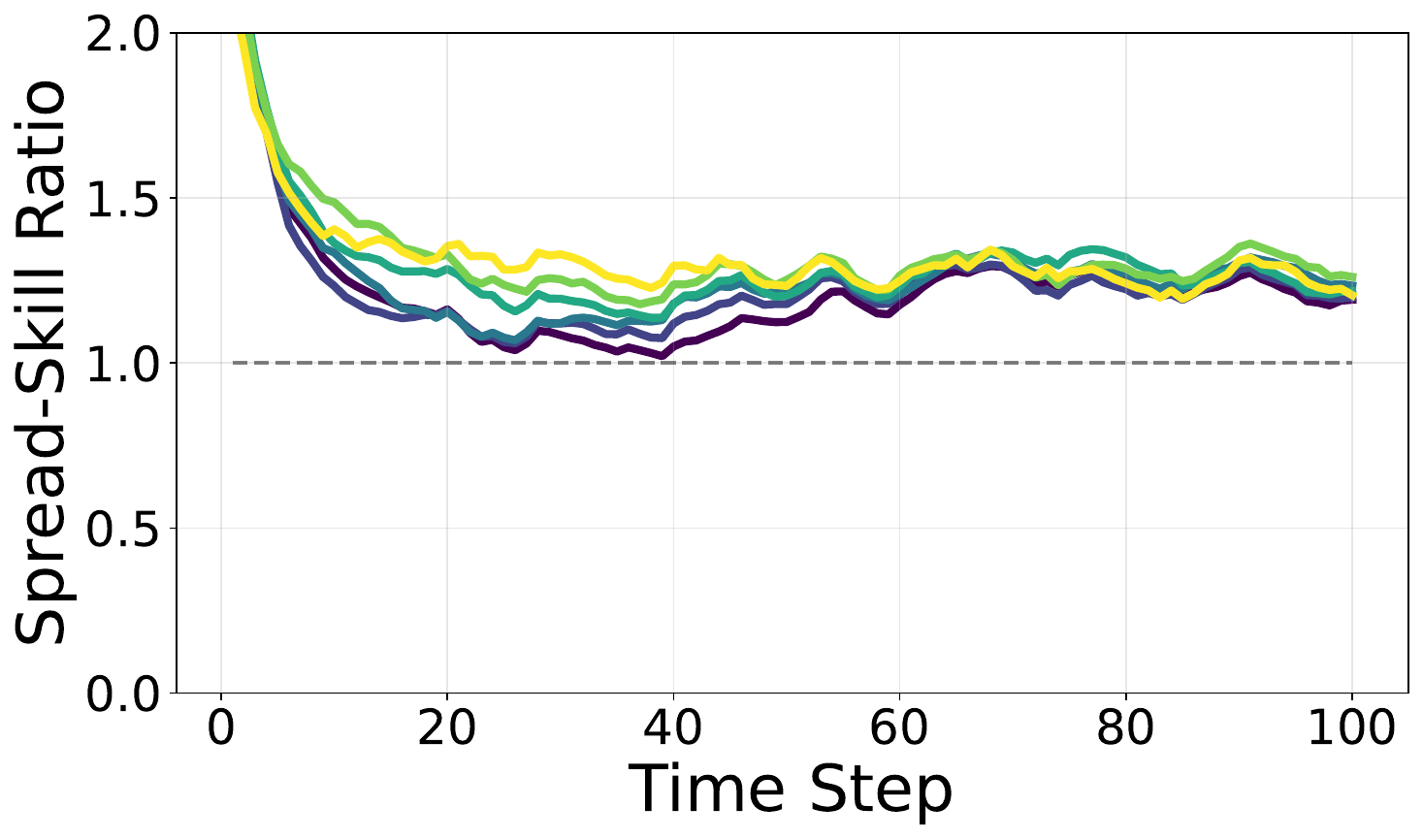}
        \caption{The spread skill ratio of the ensemble at each time step.}
        \label{fig:ssr_eps}
    \end{subfigure}
    \caption{The results for the $\epsilon$ ablation.}
    \label{fig:results_eps}
\end{figure}

\begin{figure}
    \centering
    \includegraphics[width=0.8\textwidth]{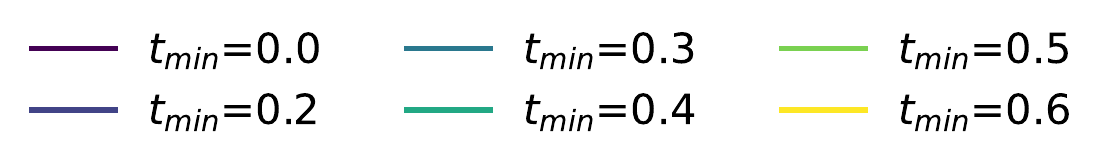}
    \begin{subfigure}[t]{0.3\textwidth}
        \includegraphics[width=\textwidth]{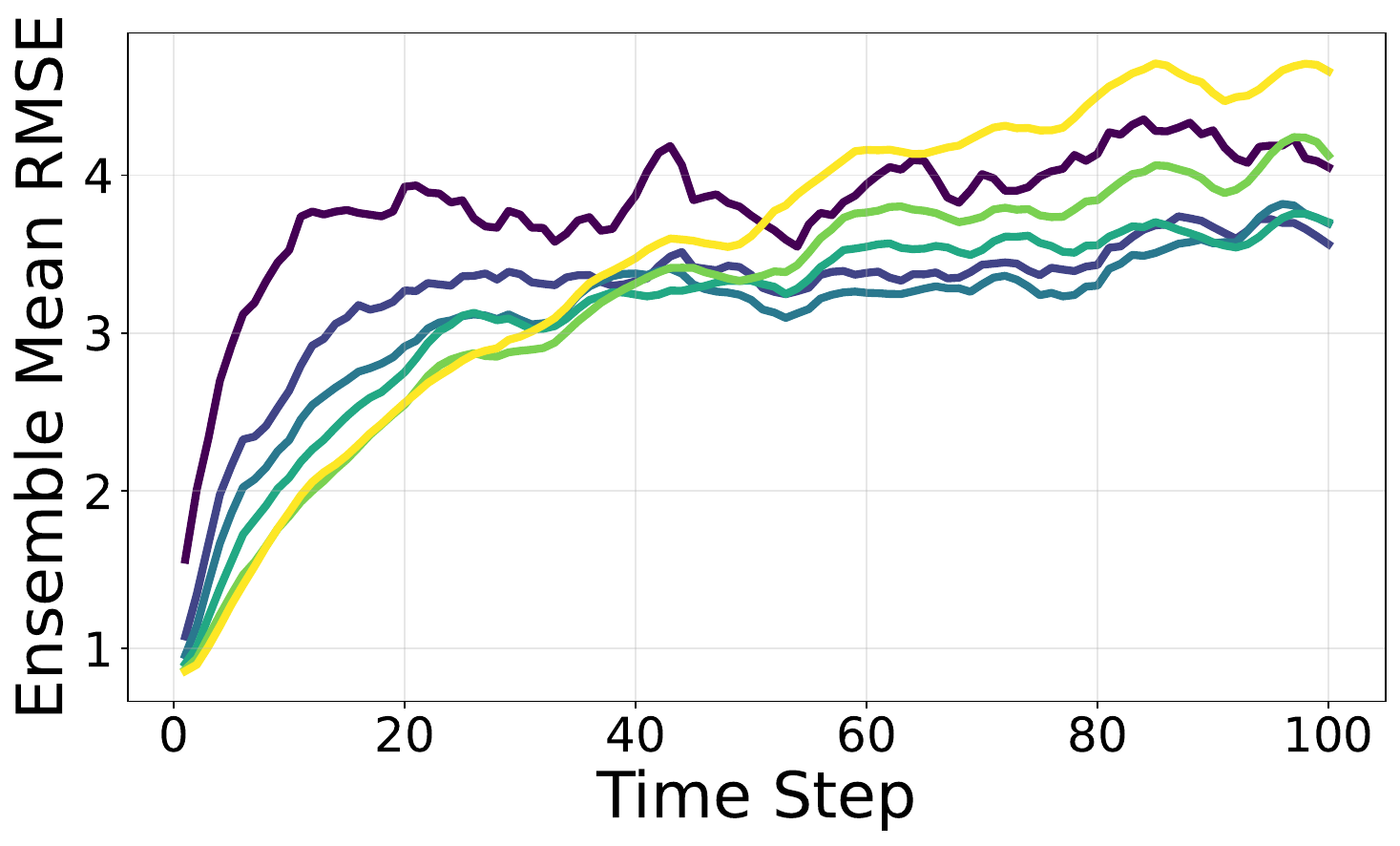}
        \caption{The RMSE of the ensemble mean of the filtering distribution at each time step.}
        \label{fig:rmse_eps}
    \end{subfigure}
    \hfill
    \begin{subfigure}[t]{0.3\textwidth}
        \includegraphics[width=\textwidth]{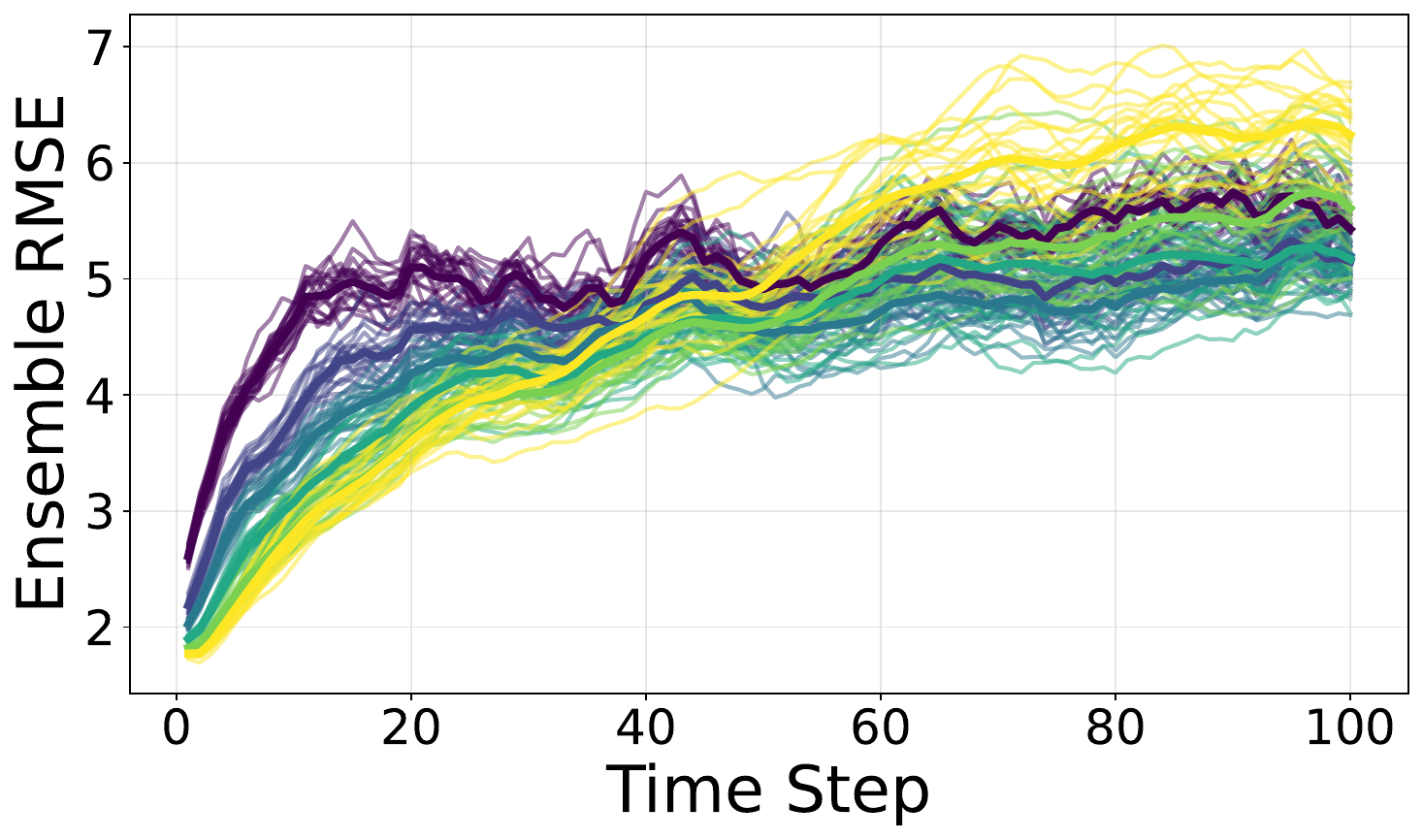}
        \caption{The RMSE for each ensemble member at each time step.}
        \label{fig:ens_rmse_eps}
    \end{subfigure}
    \hfill
    \begin{subfigure}[t]{0.3\textwidth}
        \includegraphics[width=\textwidth]{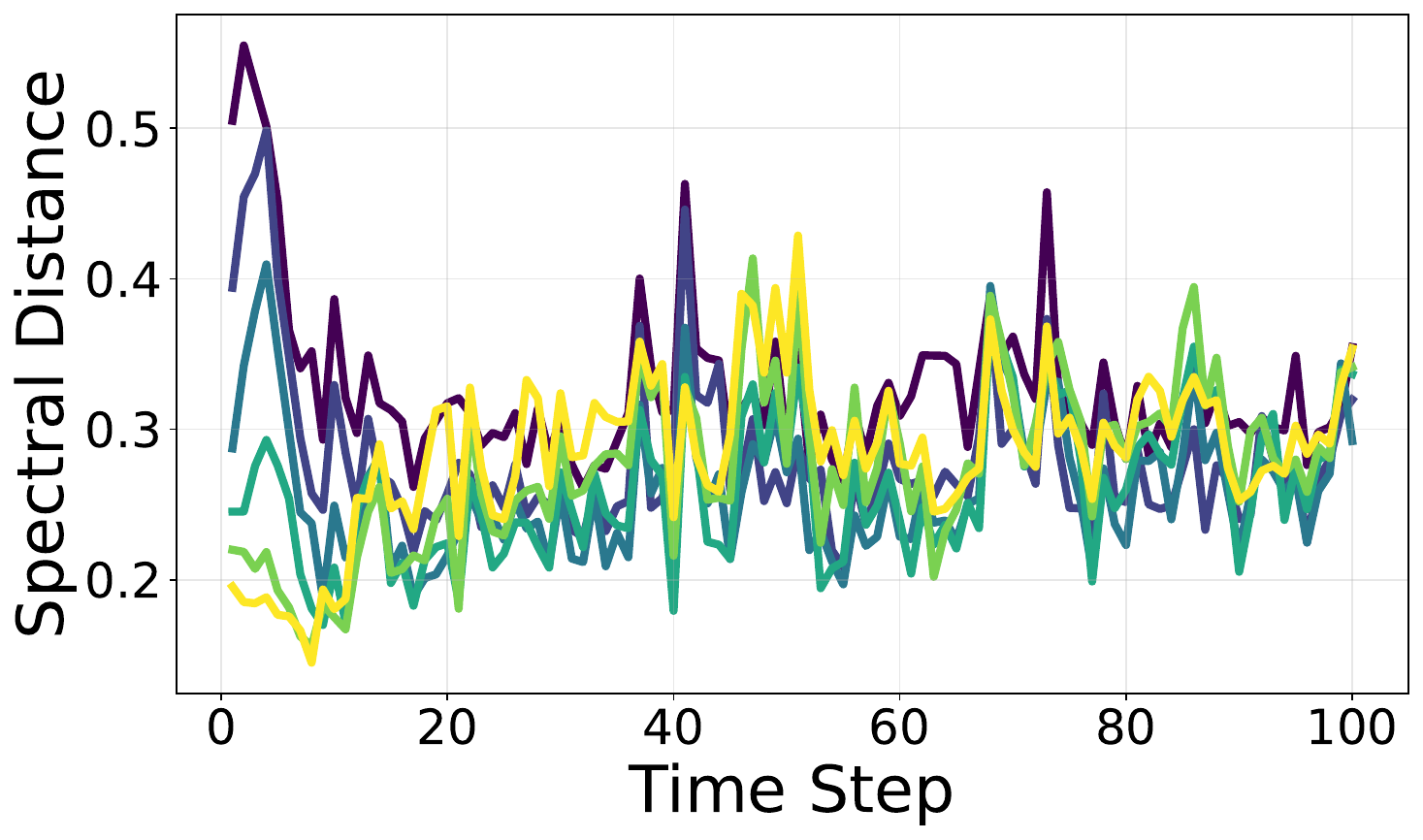}
        \caption{The spectral distance of each ensemble member compared to the ground truth and then averaged over the ensemble members.}
        \label{fig:spectra_eps}
    \end{subfigure}
    \hfill
    \begin{subfigure}[b]{0.3\textwidth}
        \includegraphics[width=\textwidth]{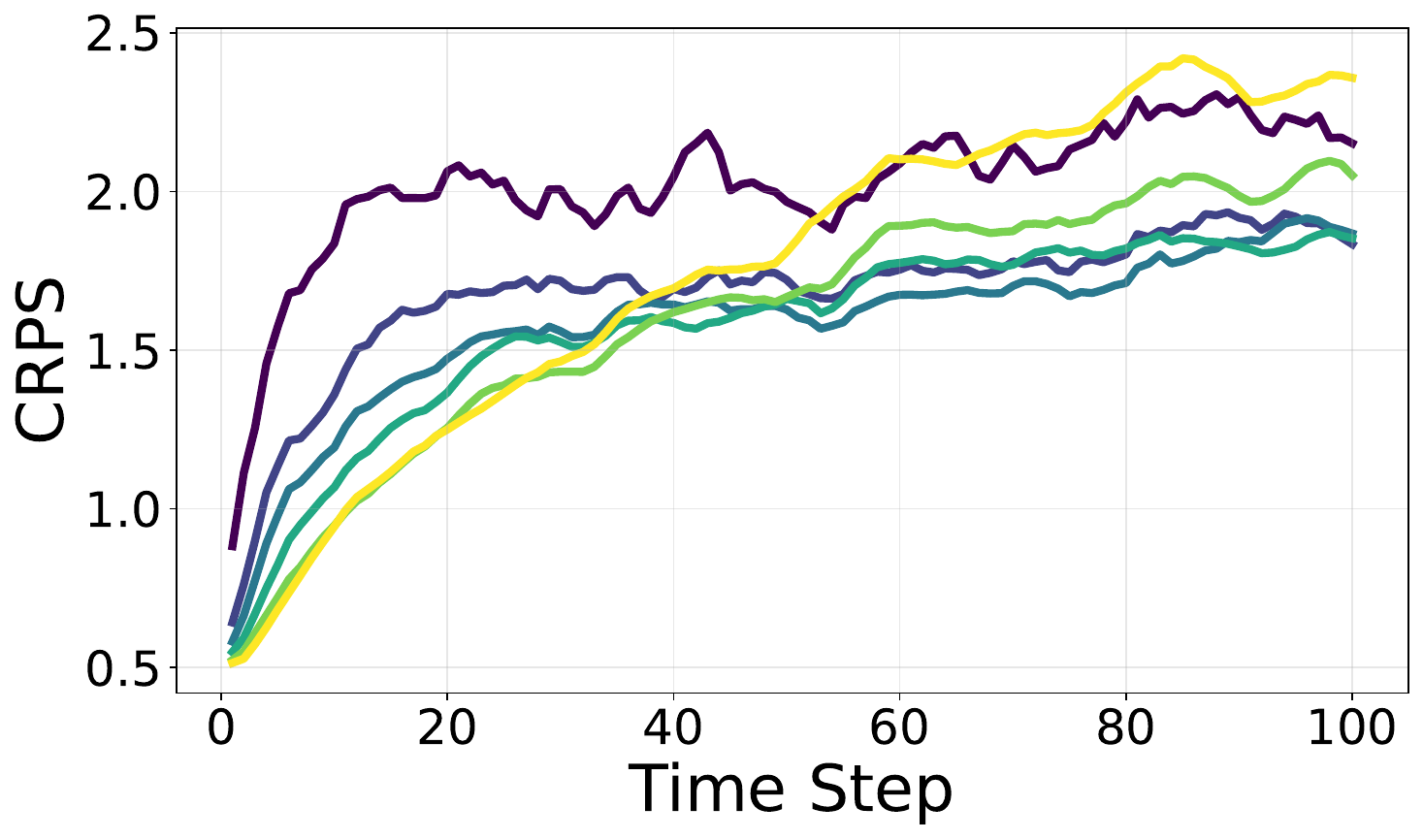}
        \caption{The CRPS of the empirical filtering distribution at each time step.}
        \label{fig:crps_eps}    
    \end{subfigure}
    \hfill
    \begin{subfigure}[b]{0.3\textwidth}
        \includegraphics[width=\textwidth]{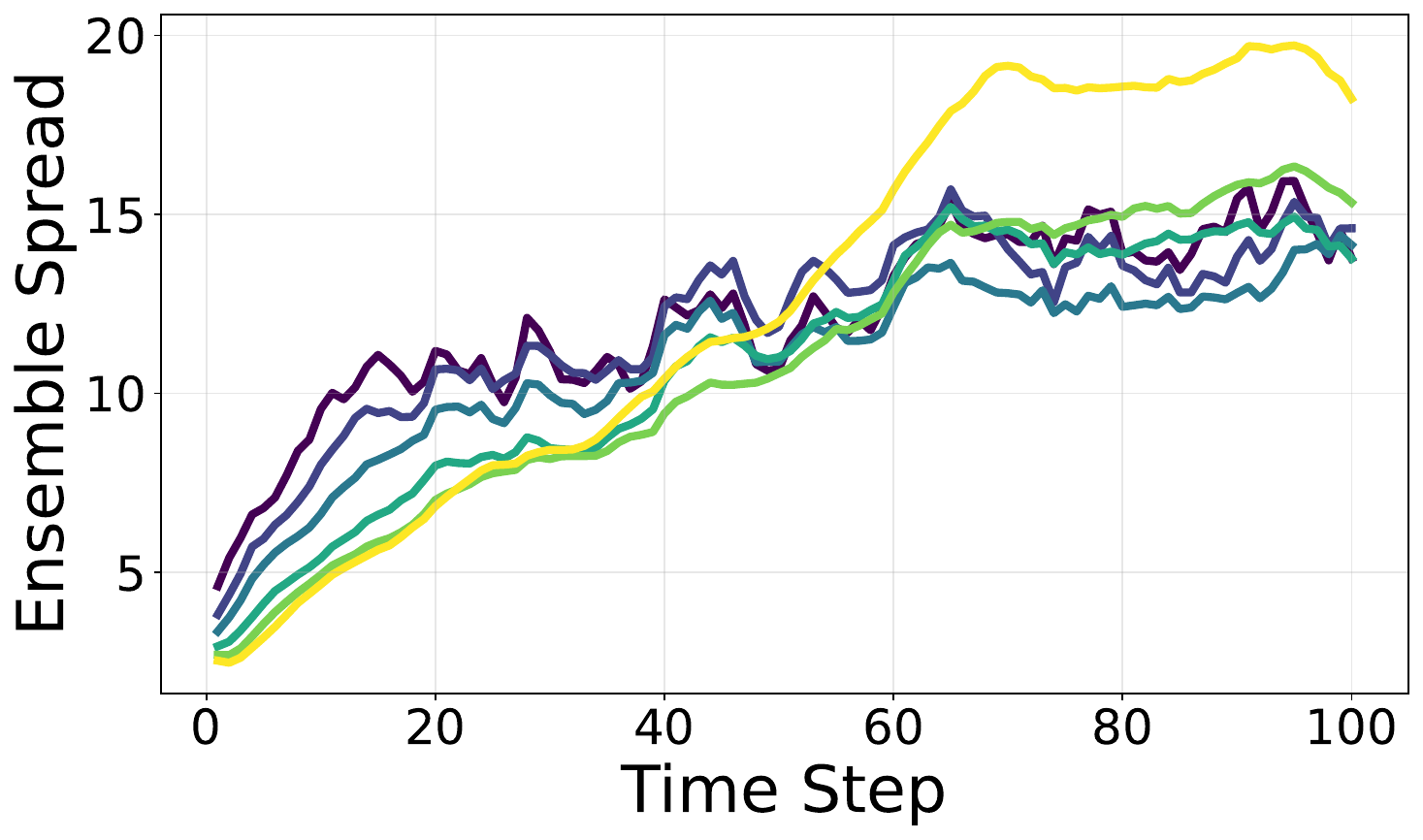}
        \caption{The spread of the ensemble at each time step.}
        \label{fig:spread_eps}
    \end{subfigure}
    \hfill
    \begin{subfigure}[b]{0.3\textwidth}
        \includegraphics[width=\textwidth]{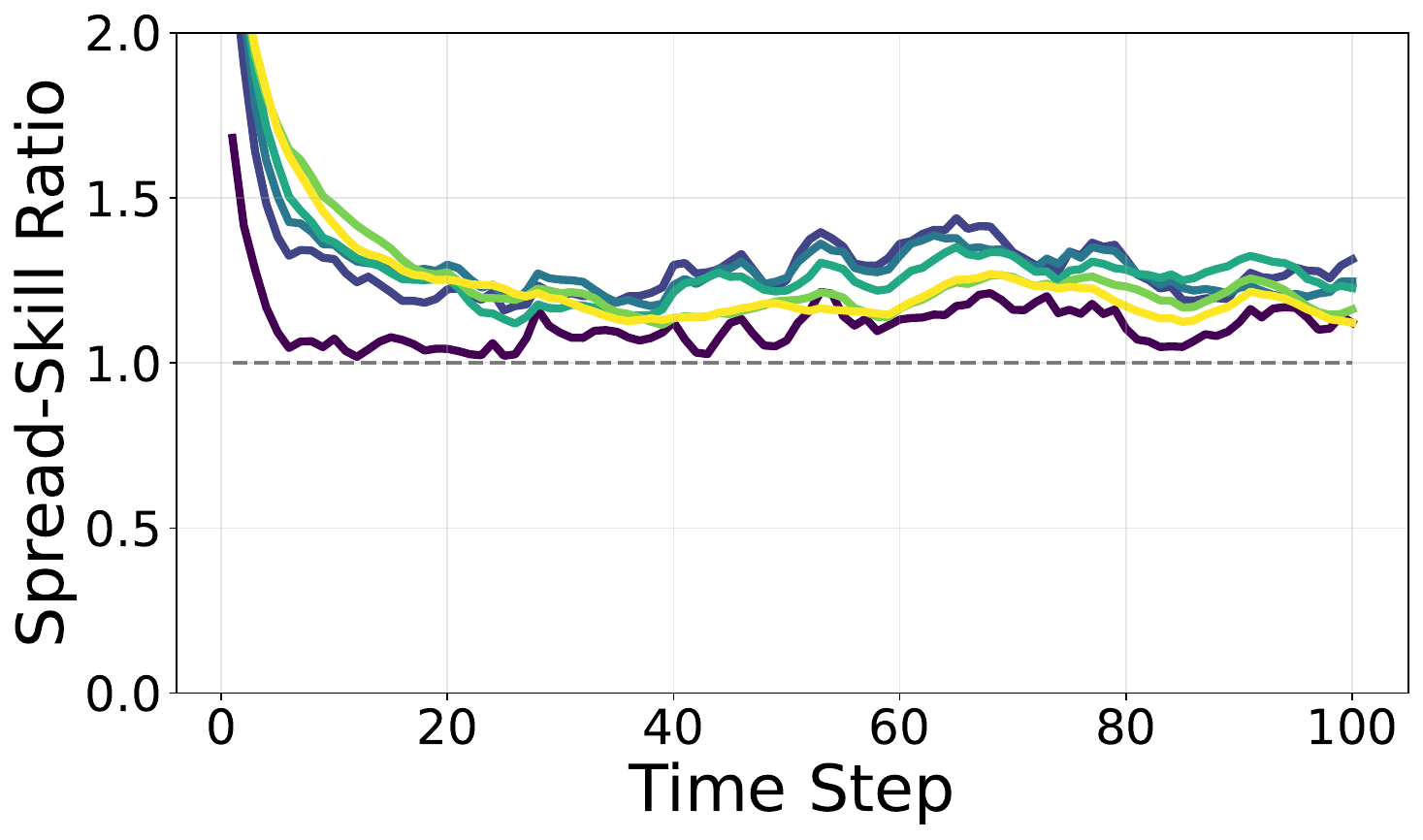}
        \caption{The spread skill ratio of the ensemble at each time step.}
        \label{fig:ssr_eps}
    \end{subfigure}
    \caption{The results for the $t_{\min}$ ablation.}
    \label{fig:results_tmin}
\end{figure}

\begin{figure}
    \noindent
    \begin{minipage}[t]{0.49\textwidth}
        \centering
        \includegraphics[width=\textwidth]{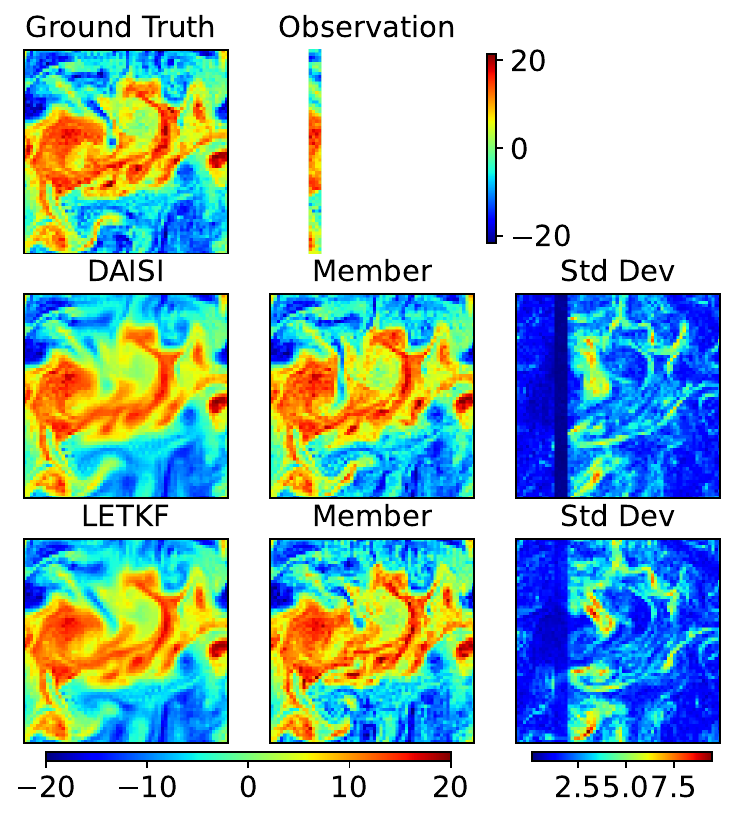}
        \caption{A comparison of the ensemble mean, individual members, and ensemble standard deviation for each method at the last step of the assimilated trajectory for the \texttt{Non-stationary} experiment.}
        \label{fig:samples_R0}
    \end{minipage}
    \hfill
\end{figure}

\begin{figure}
    \centering
    \includegraphics[width=0.8\textwidth]{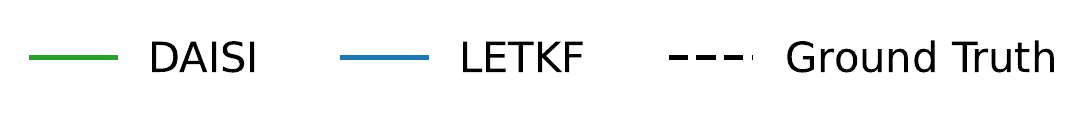}
    \begin{subfigure}[t]{0.3\textwidth}
        \includegraphics[width=\textwidth]{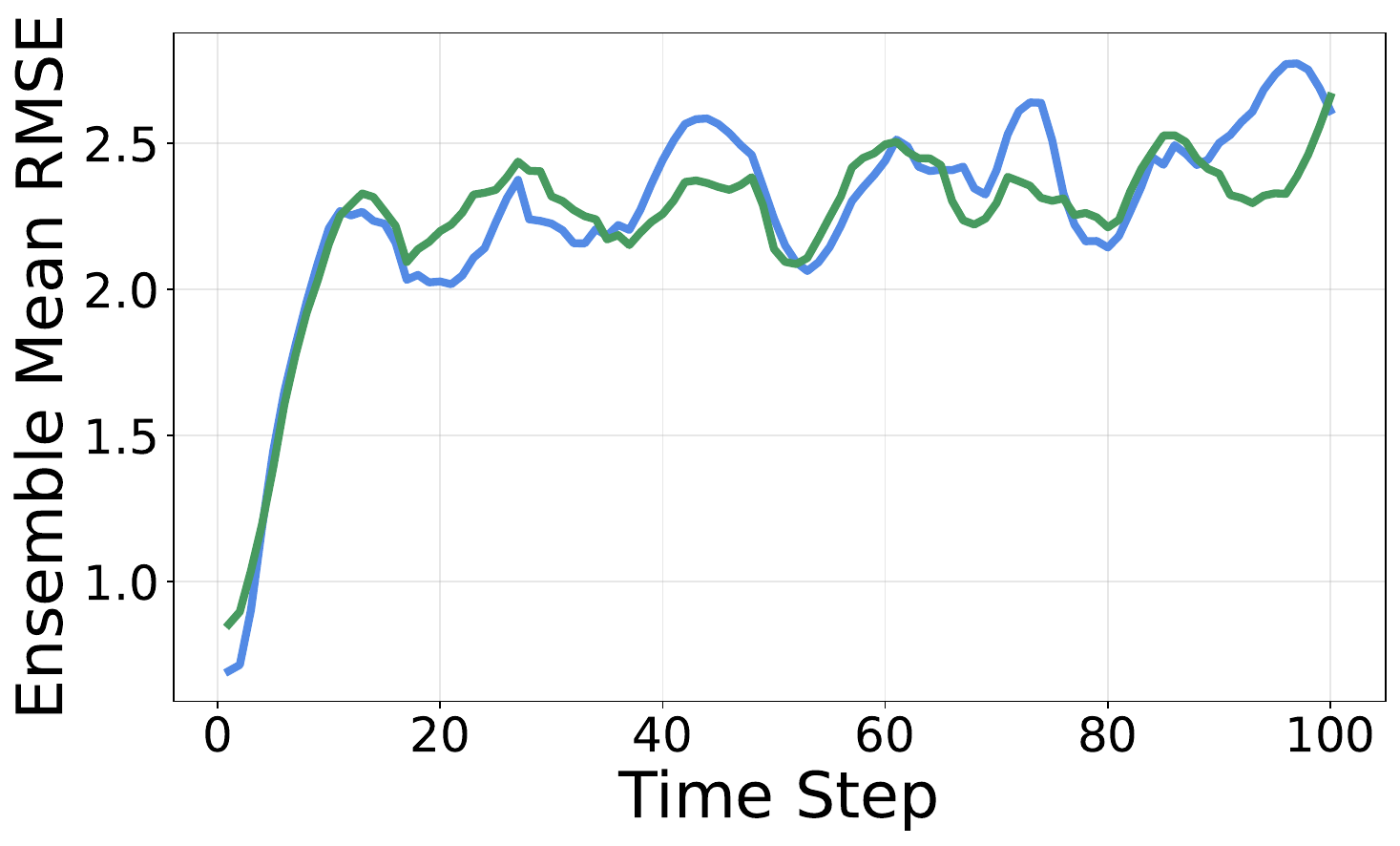}
        \caption{The RMSE of the ensemble mean of the filtering distribution at each time step.}
        \label{fig:rmse_R0}
    \end{subfigure}
    \hfill
    \begin{subfigure}[t]{0.3\textwidth}
        \includegraphics[width=\textwidth]{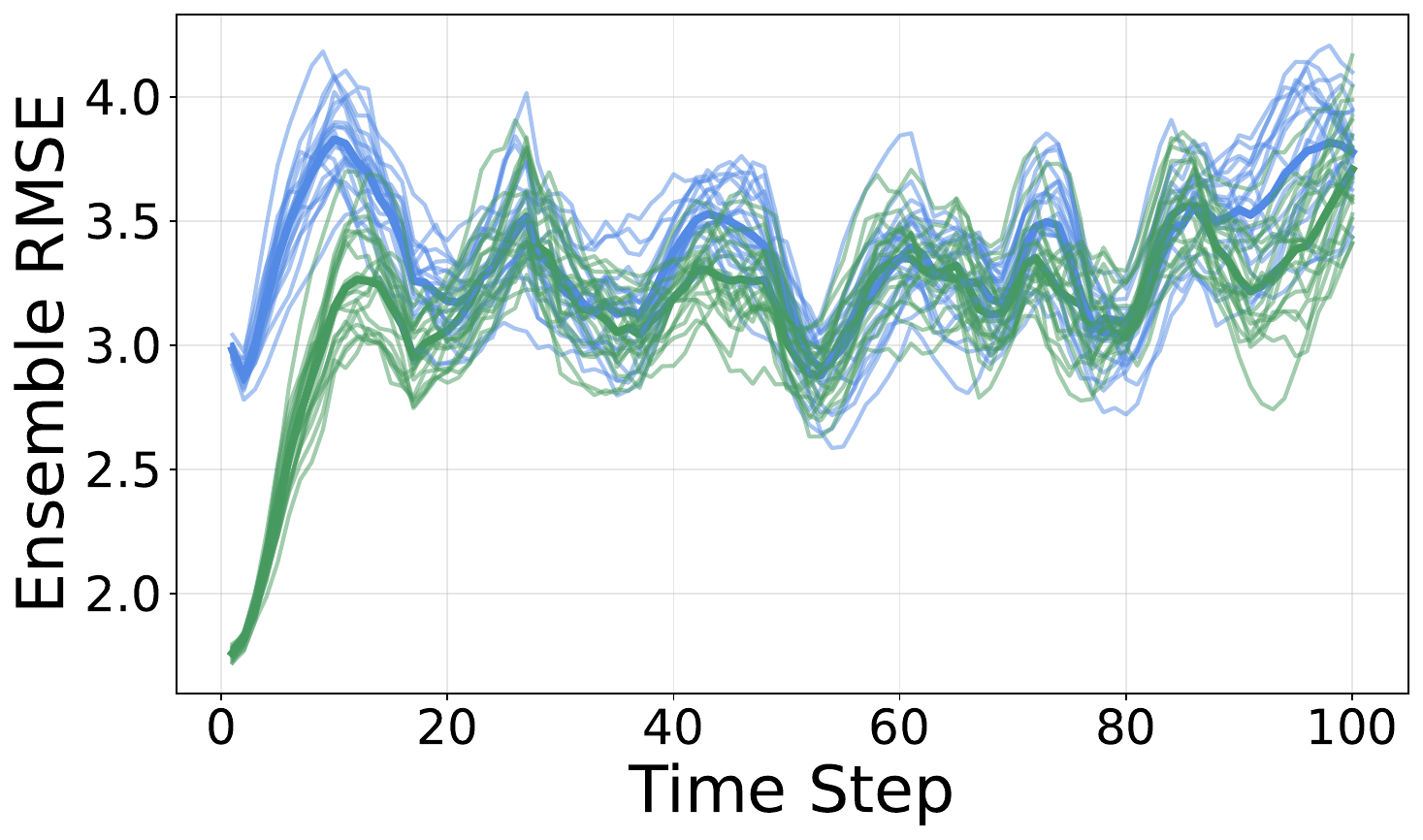}
        \caption{The RMSE for each ensemble member at each time step.}
        \label{fig:ens_rmse_R0}
    \end{subfigure}
    \hfill
    \begin{subfigure}[t]{0.3\textwidth}
        \includegraphics[width=\textwidth]{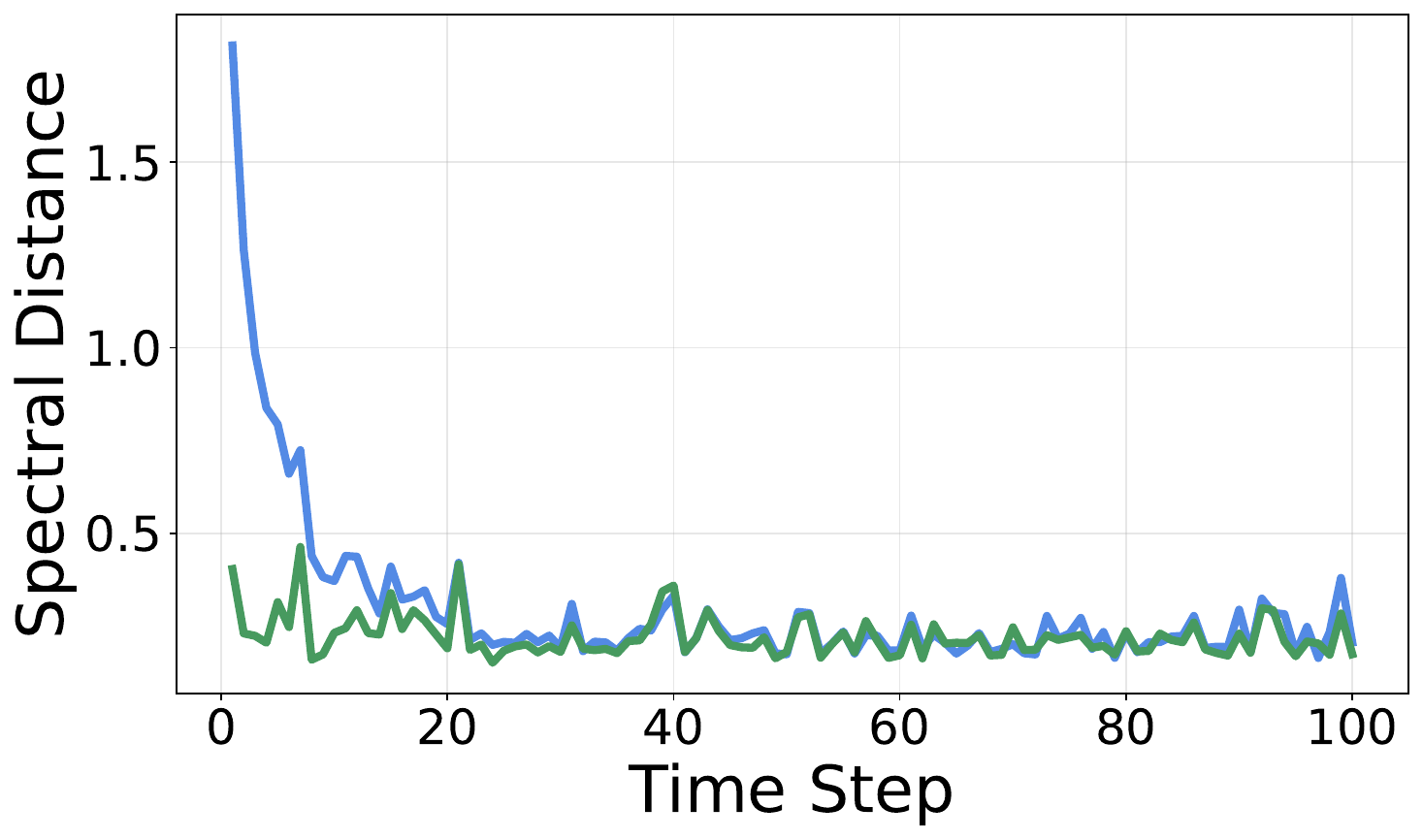}
        \caption{The spectral distance of each ensemble member compared to the ground truth and then averaged over the ensemble members.}
        \label{fig:spectra_R0}
    \end{subfigure}
    \hfill
    \begin{subfigure}[b]{0.3\textwidth}
        \includegraphics[width=\textwidth]{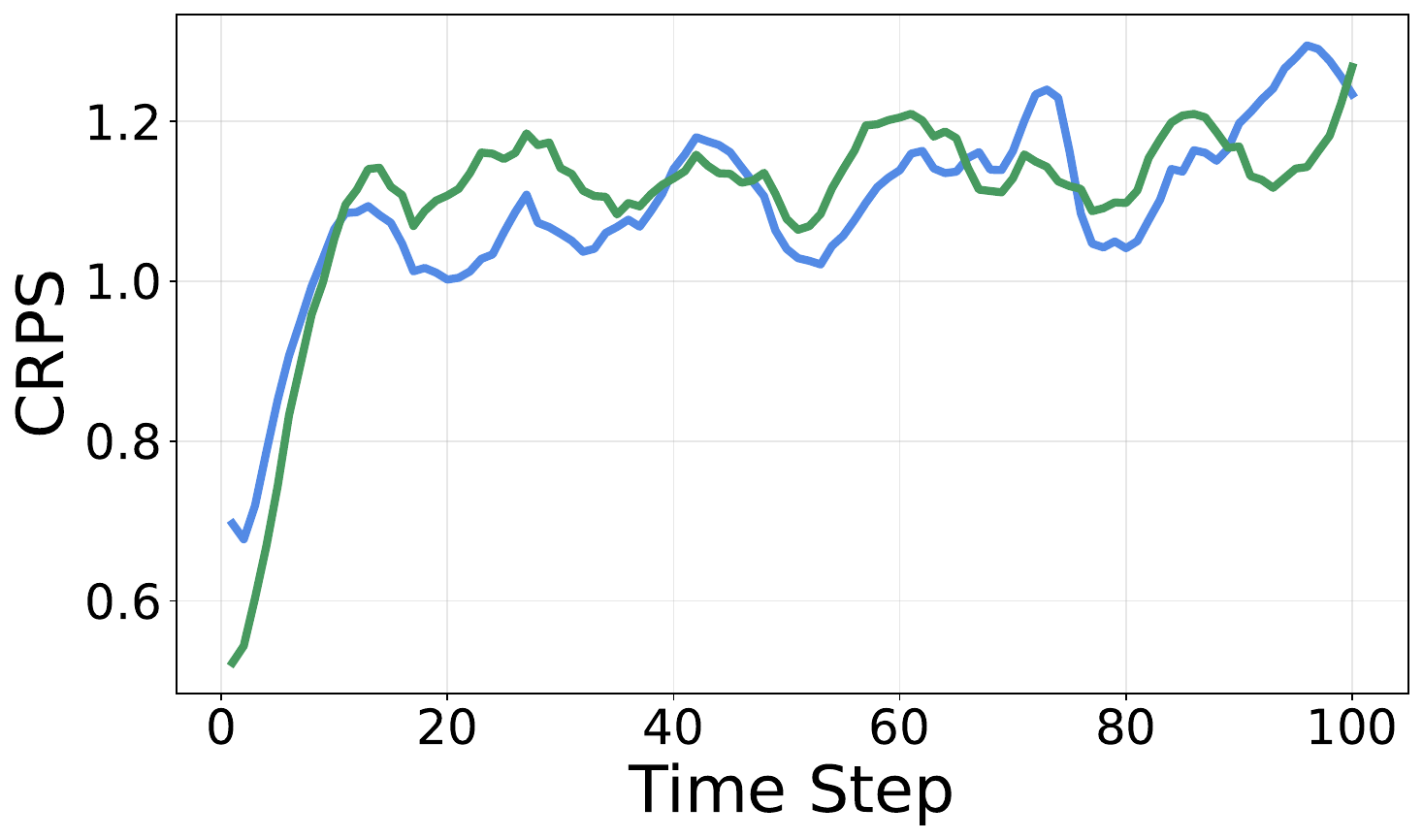}
        \caption{The CRPS of the empirical filtering distribution at each time step.}
        \label{fig:crps_R0}    
    \end{subfigure}
    \hfill
    \begin{subfigure}[b]{0.3\textwidth}
        \includegraphics[width=\textwidth]{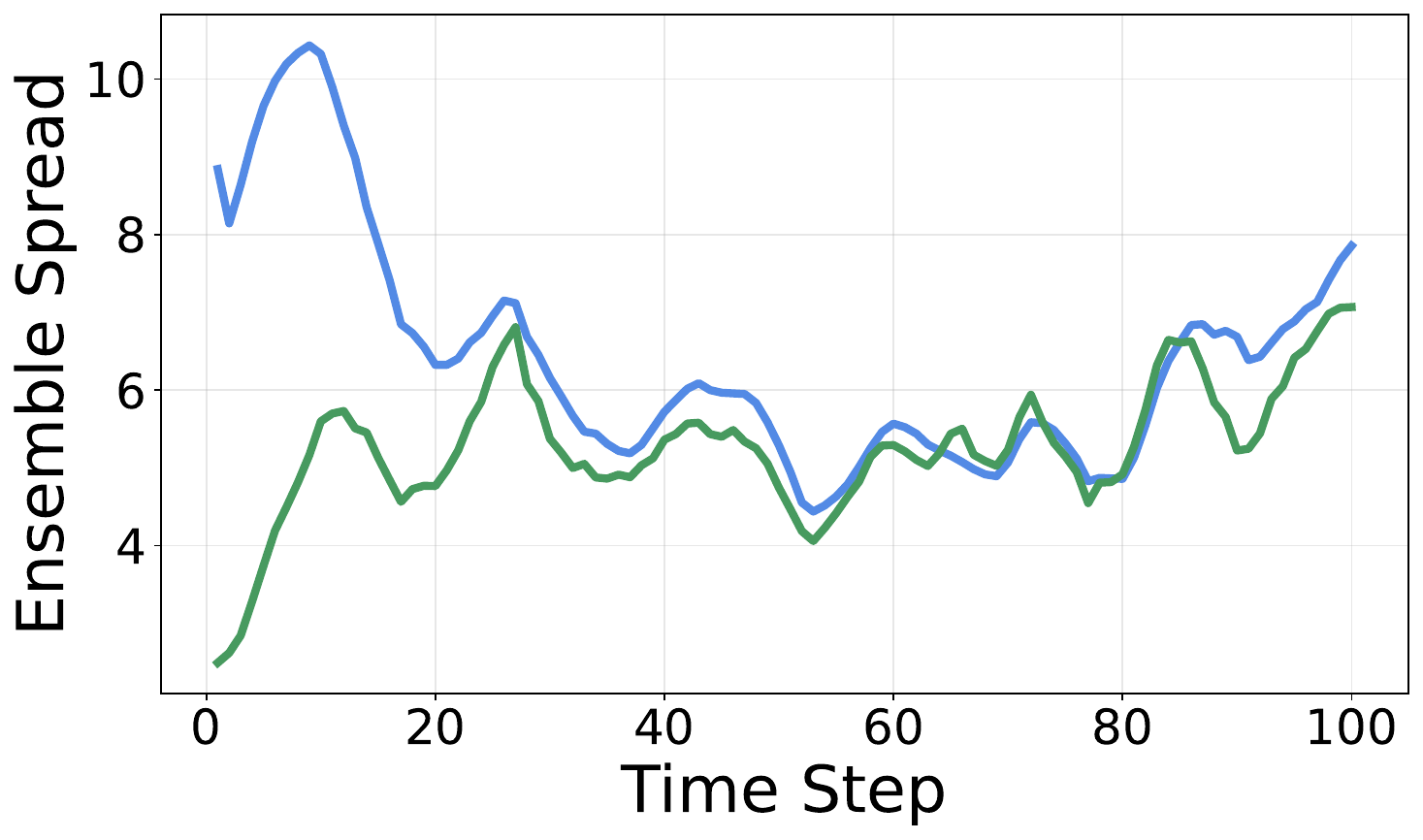}
        \caption{The spread of the ensemble at each time step.}
        \label{fig:spread_R0}
    \end{subfigure}
    \hfill
    \begin{subfigure}[b]{0.3\textwidth}
        \includegraphics[width=\textwidth]{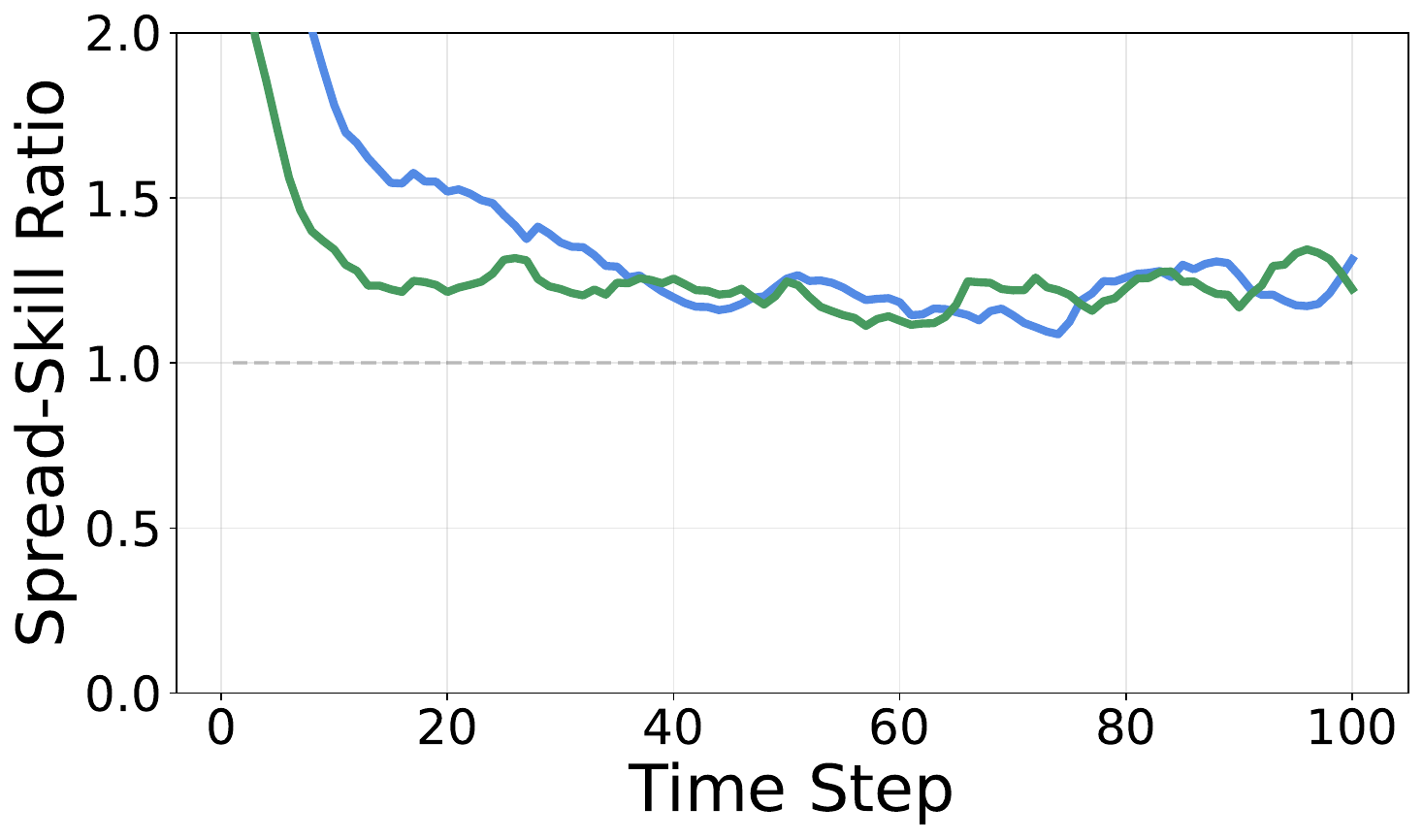}
        \caption{The spread skill ratio of the ensemble at each time step.}
        \label{fig:ssr_R0}
    \end{subfigure}
    \hfill
    \begin{subfigure}[b]{0.3\textwidth}
        \includegraphics[width=\textwidth]{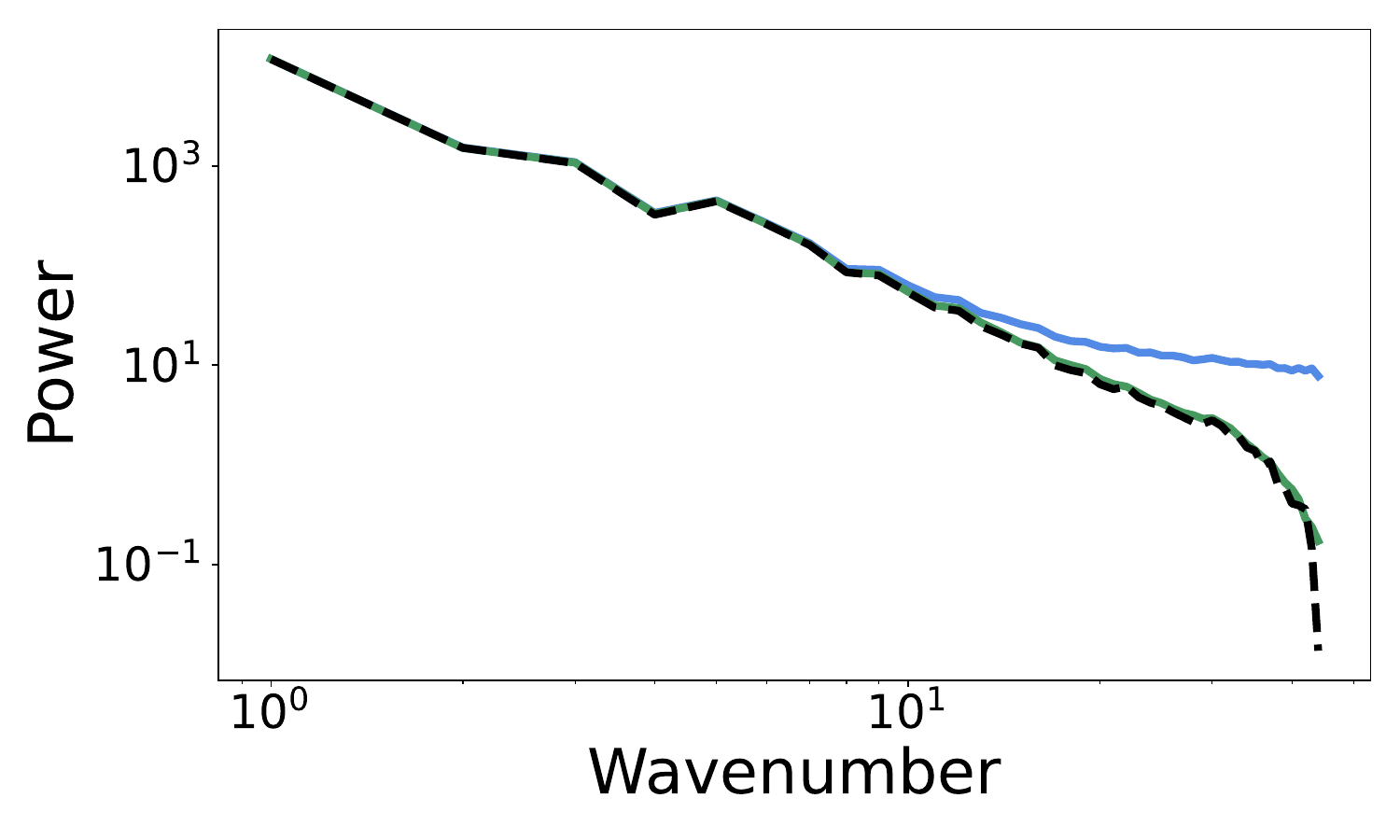}
        \caption{The energy spectra at time step 1.}
        \label{fig:spectra_1_R0}
    \end{subfigure}
    \hfill
    \begin{subfigure}[b]{0.3\textwidth}
        \includegraphics[width=\textwidth]{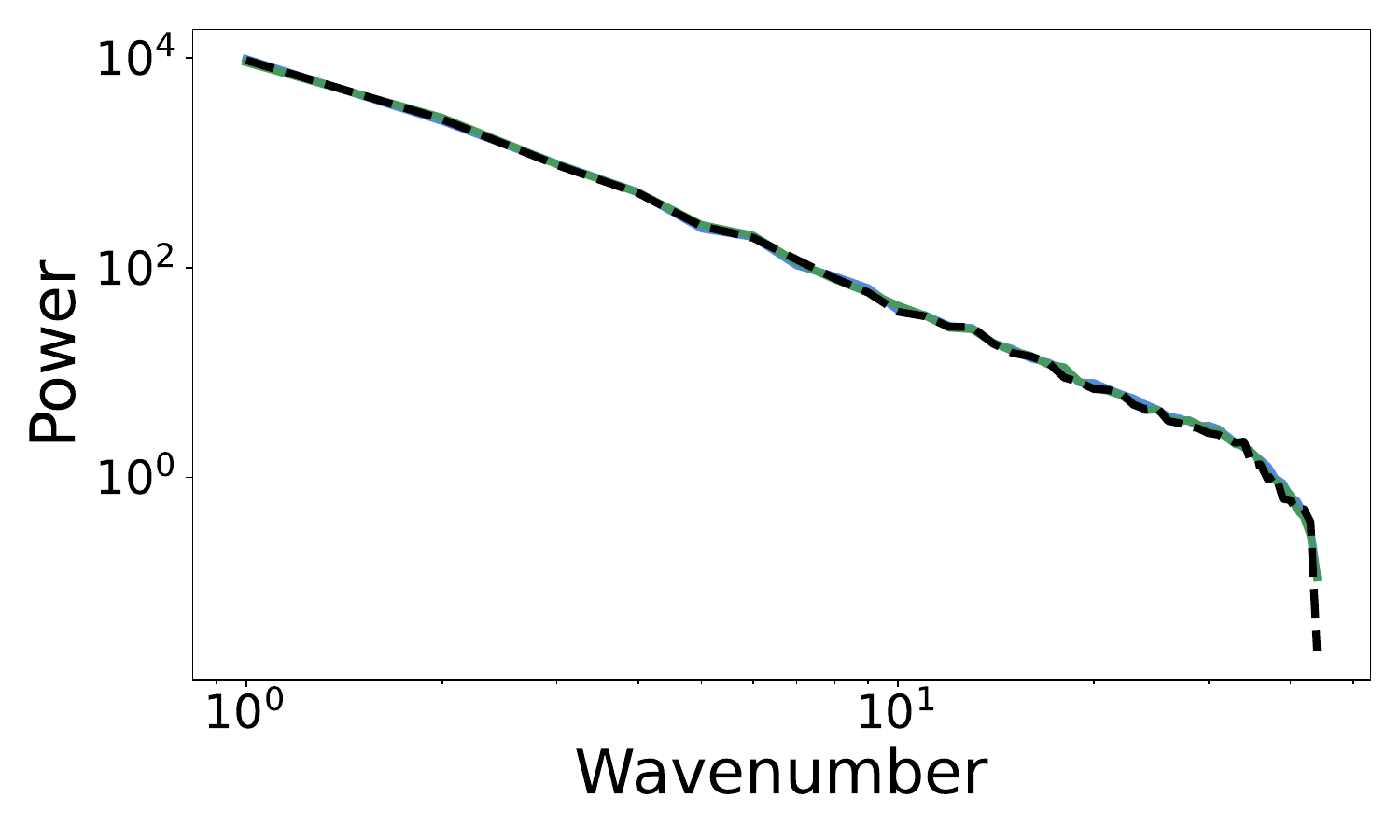}
        \caption{The energy spectra at time step 50.}
        \label{fig:spectra_50_R0}
    \end{subfigure}
    \hfill
    \begin{subfigure}[b]{0.3\textwidth}
        \includegraphics[width=\textwidth]{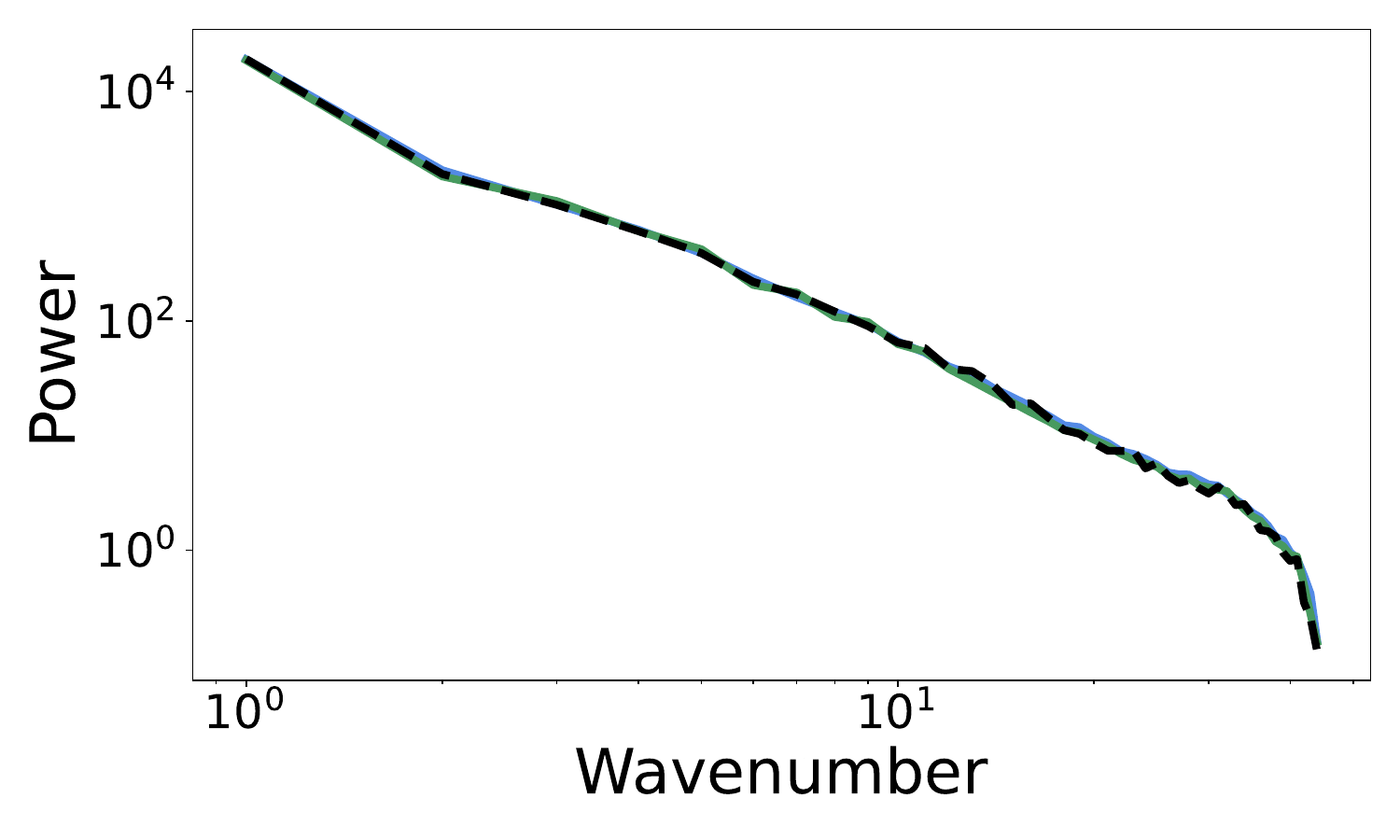}
        \caption{The energy spectra at time step 100.}
        \label{fig:spectra_100_R0}
    \end{subfigure}
    \caption{The results for the \texttt{Non-stationary} experiment.}
    \label{fig:results_R0}
\end{figure}

\begin{figure}
    \centering
    \includegraphics[width=0.8\textwidth]{figures/plots/A5_12h/legend_gt.pdf}
    \begin{subfigure}[t]{0.3\textwidth}
        \includegraphics[width=\textwidth]{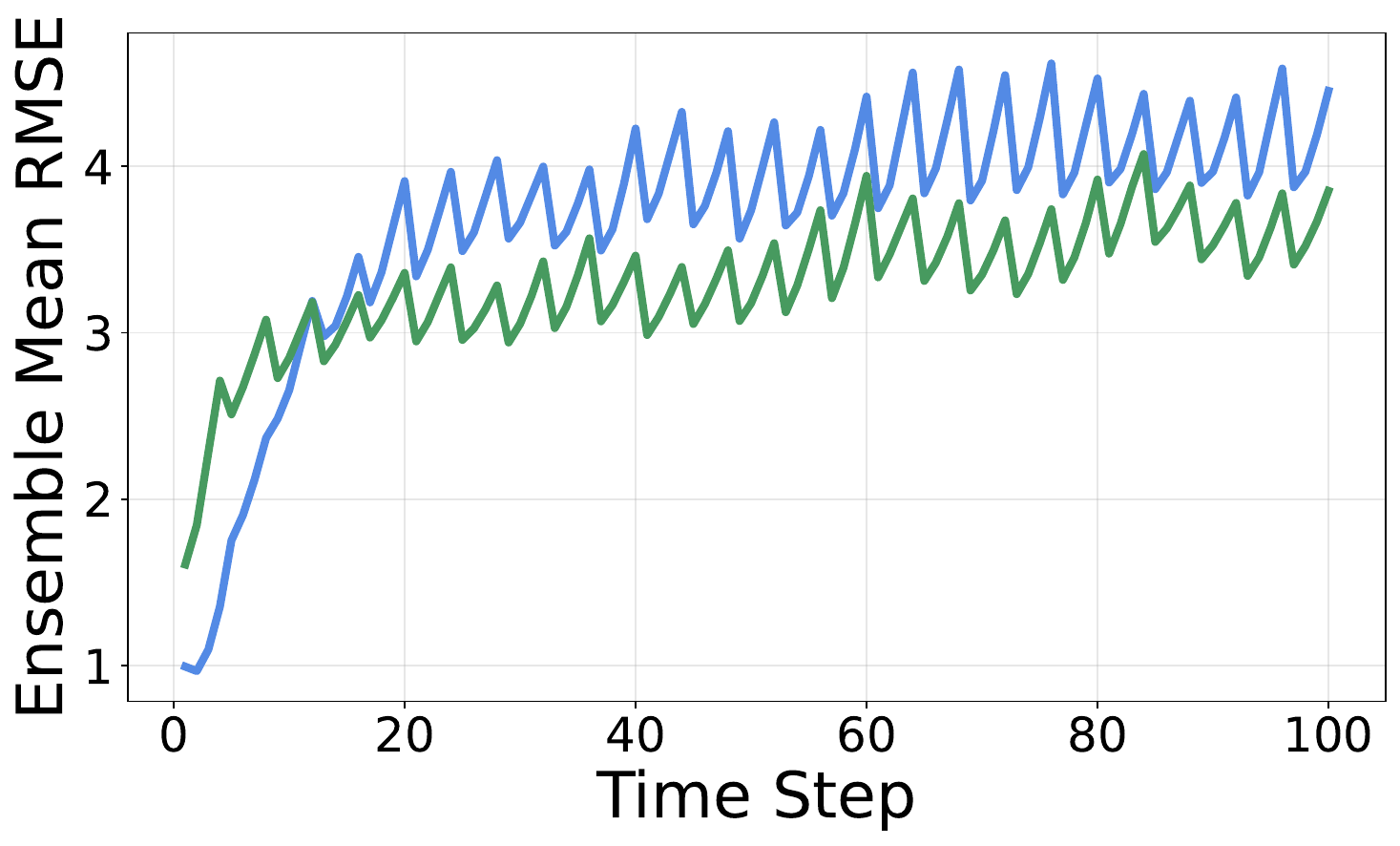}
        \caption{The RMSE of the ensemble mean of the filtering distribution at each time step.}
        \label{fig:rmse_noisy_12h}
    \end{subfigure}
    \hfill
    \begin{subfigure}[t]{0.3\textwidth}
        \includegraphics[width=\textwidth]{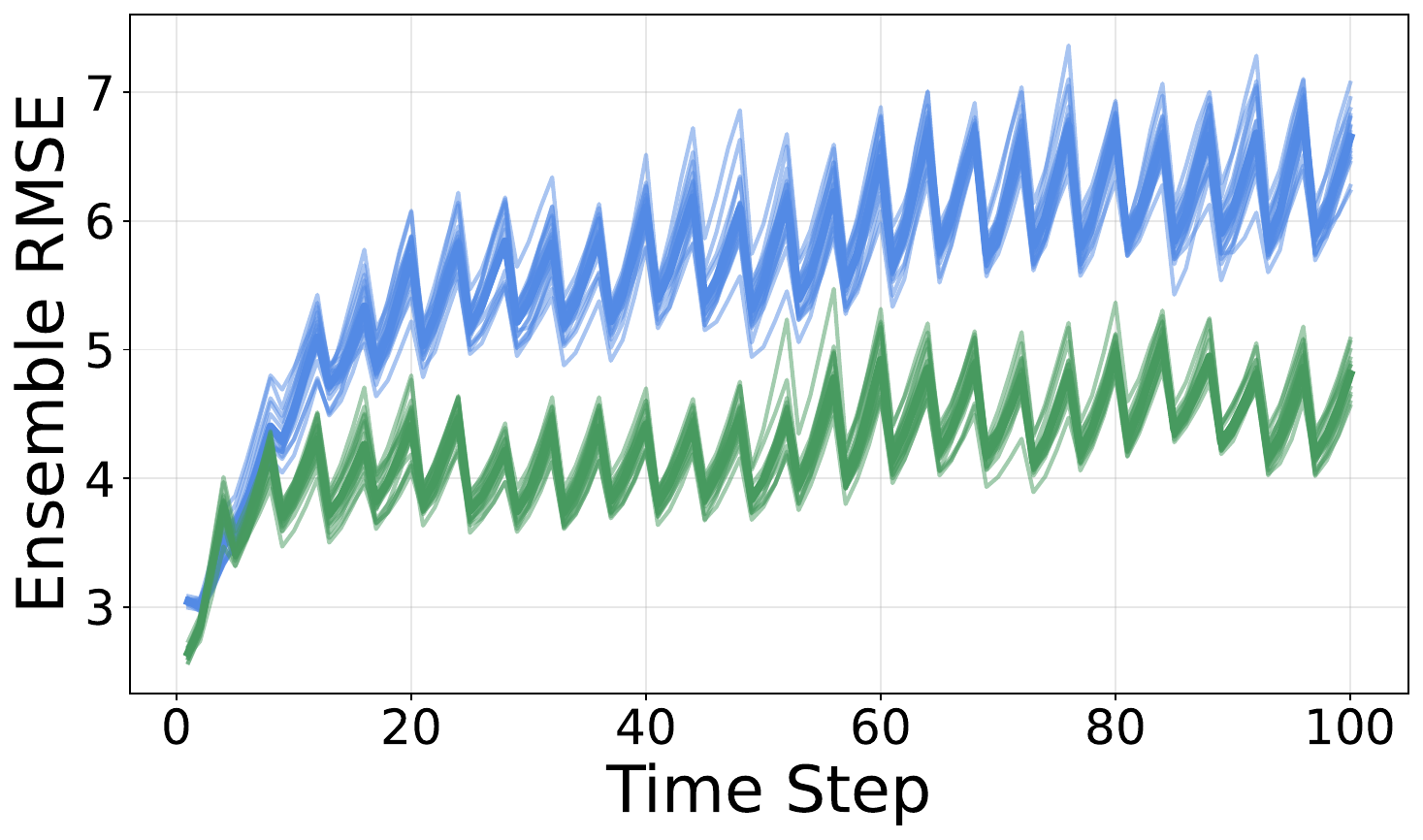}
        \caption{The RMSE for each ensemble member at each time step.}
        \label{fig:ens_rmse_noisy_12h}
    \end{subfigure}
    \hfill
    \begin{subfigure}[t]{0.3\textwidth}
        \includegraphics[width=\textwidth]{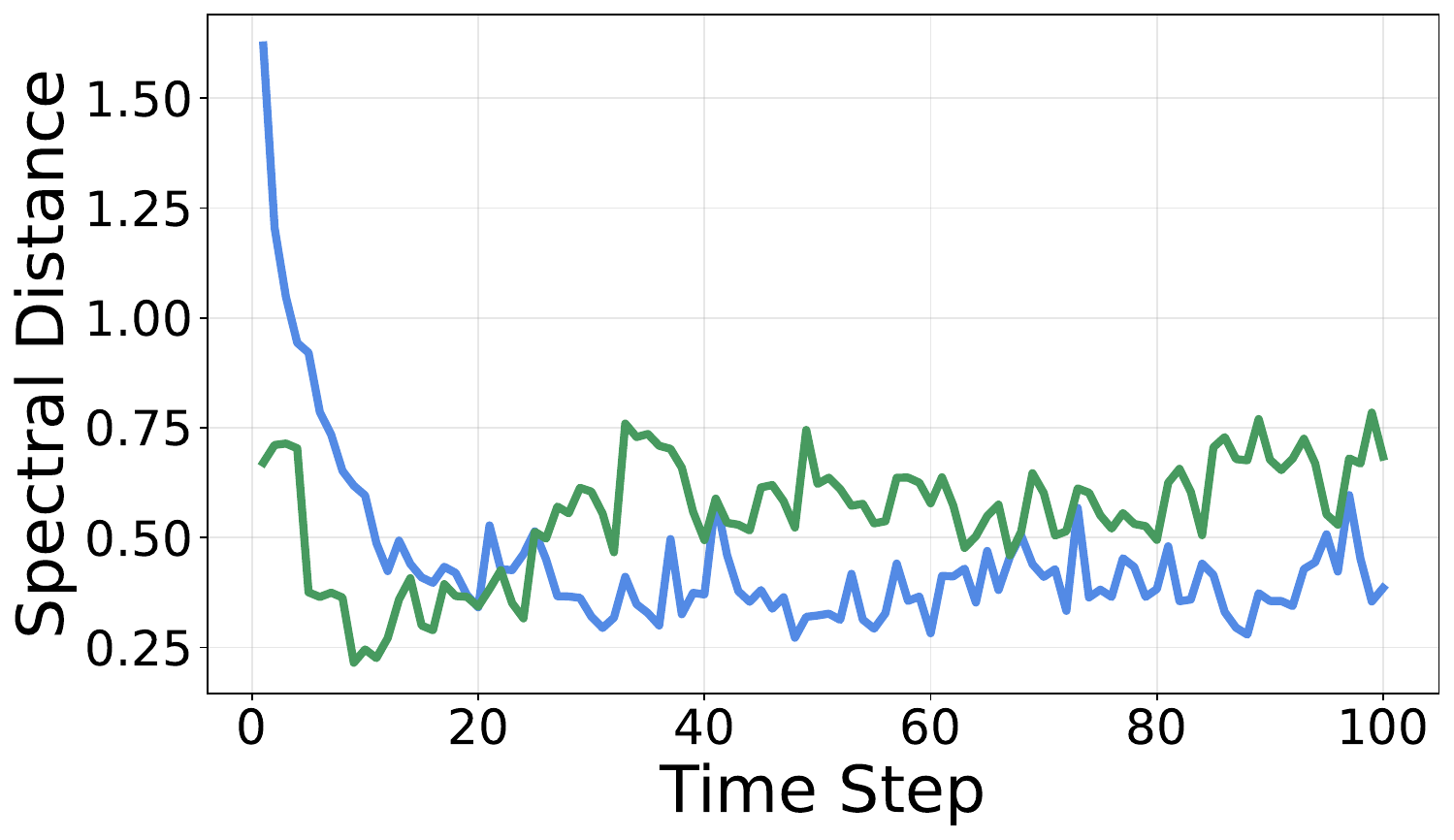}
        \caption{The spectral distance of each ensemble member compared to the ground truth and then averaged over the ensemble members.}
        \label{fig:spectra_noisy_12h}
    \end{subfigure}
    \hfill
    \begin{subfigure}[b]{0.3\textwidth}
        \includegraphics[width=\textwidth]{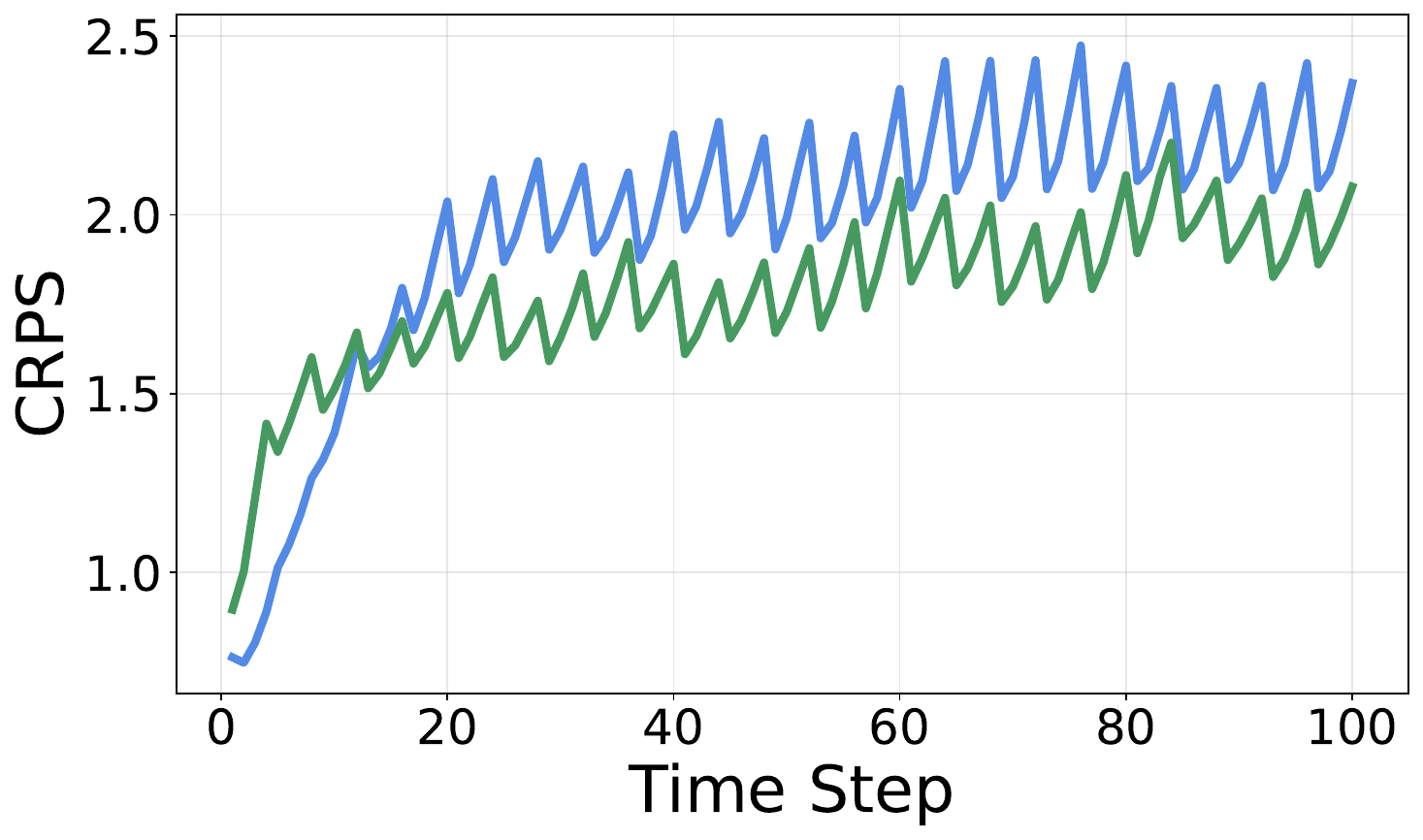}
        \caption{The CRPS of the empirical filtering distribution at each time step.}
        \label{fig:crps_noisy_12h}    
    \end{subfigure}
    \hfill
    \begin{subfigure}[b]{0.3\textwidth}
        \includegraphics[width=\textwidth]{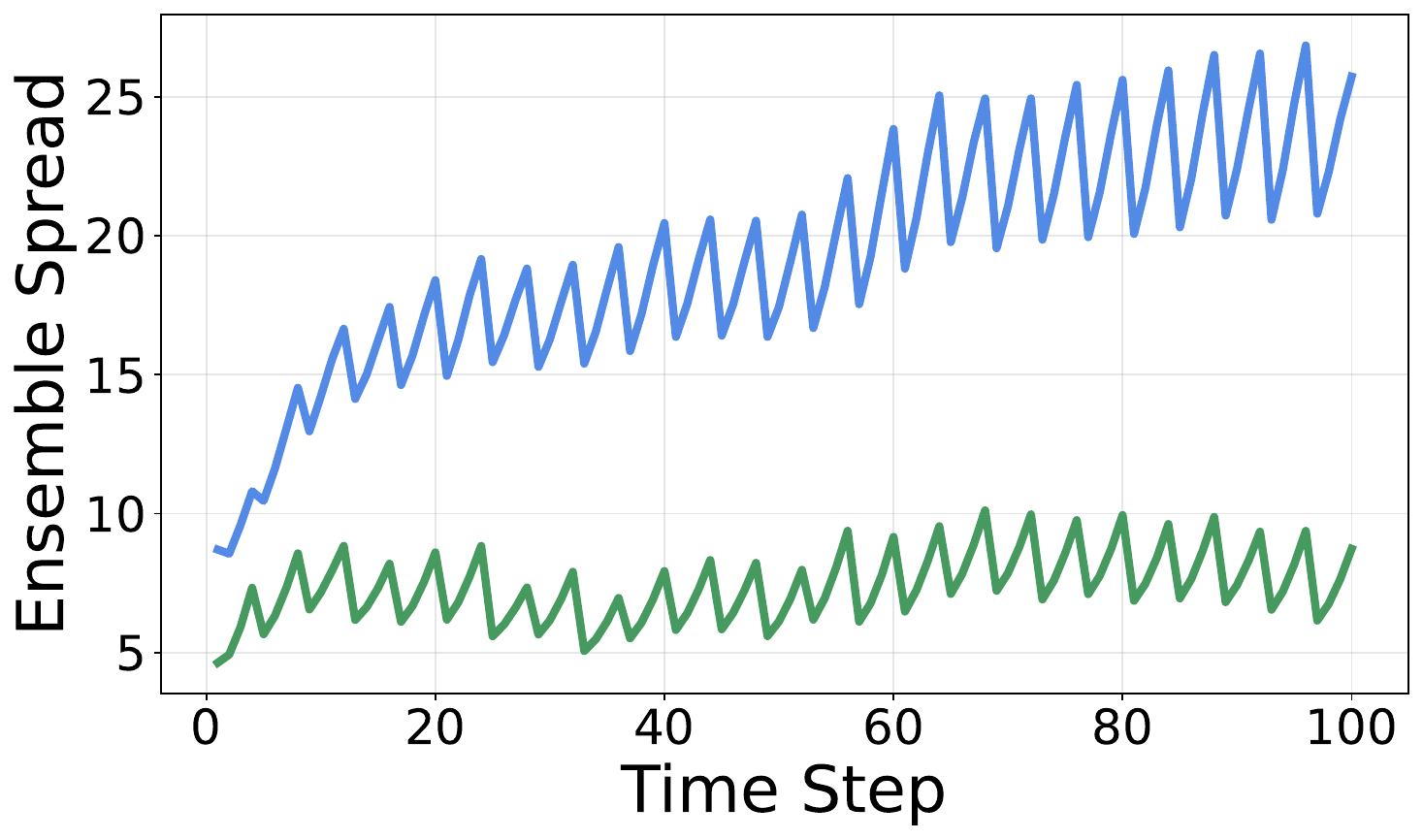}
        \caption{The spread of the ensemble at each time step.}
        \label{fig:spread_noisy_12h}
    \end{subfigure}
    \hfill
    \begin{subfigure}[b]{0.3\textwidth}
        \includegraphics[width=\textwidth]{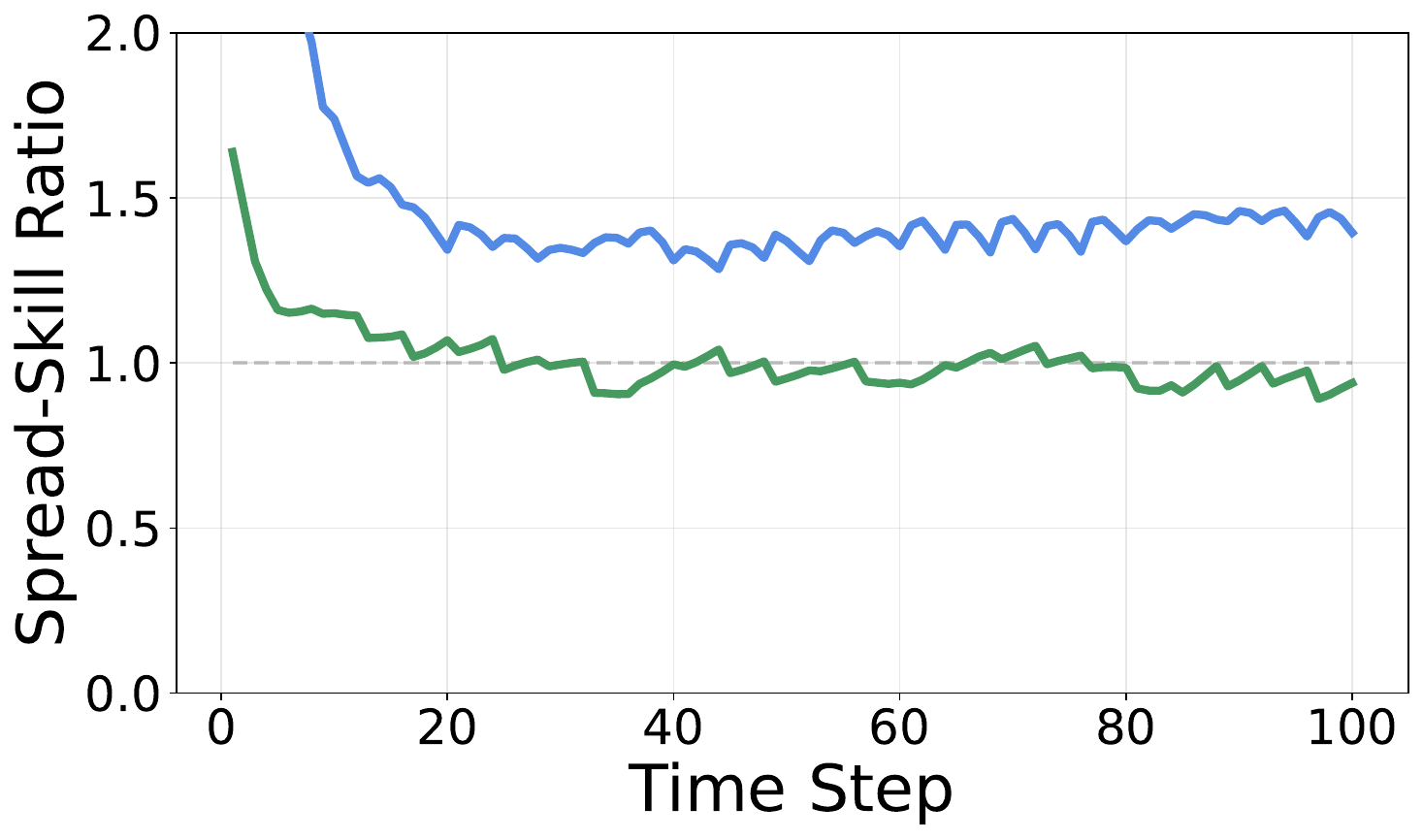}
        \caption{The spread skill ratio of the ensemble at each time step.}
        \label{fig:ssr_noisy_12h}
    \end{subfigure}
    \hfill
    \begin{subfigure}[b]{0.3\textwidth}
        \includegraphics[width=\textwidth]{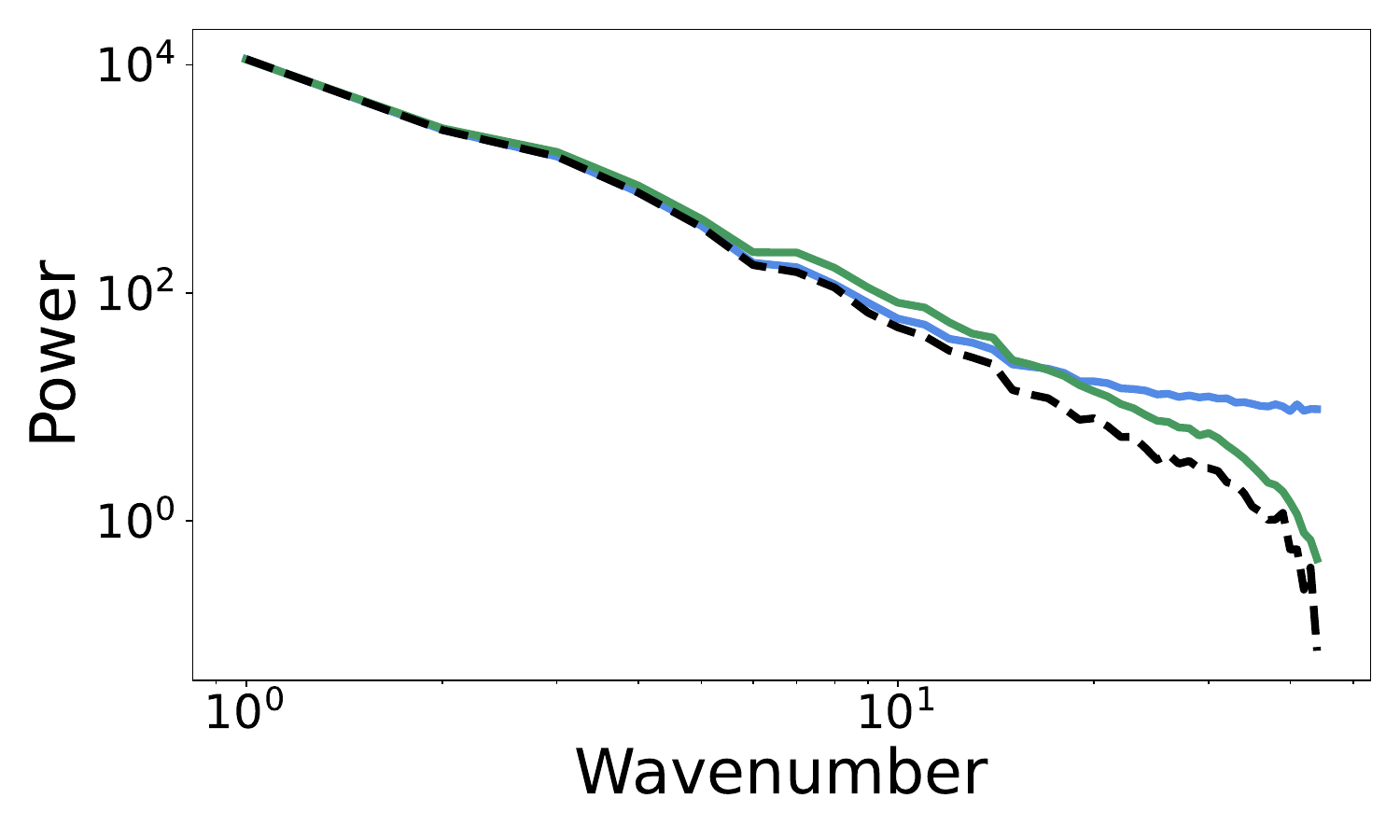}
        \caption{The energy spectra at time step 1.}
        \label{fig:spectra_1_noisy_12h}
    \end{subfigure}
    \hfill
    \begin{subfigure}[b]{0.3\textwidth}
        \includegraphics[width=\textwidth]{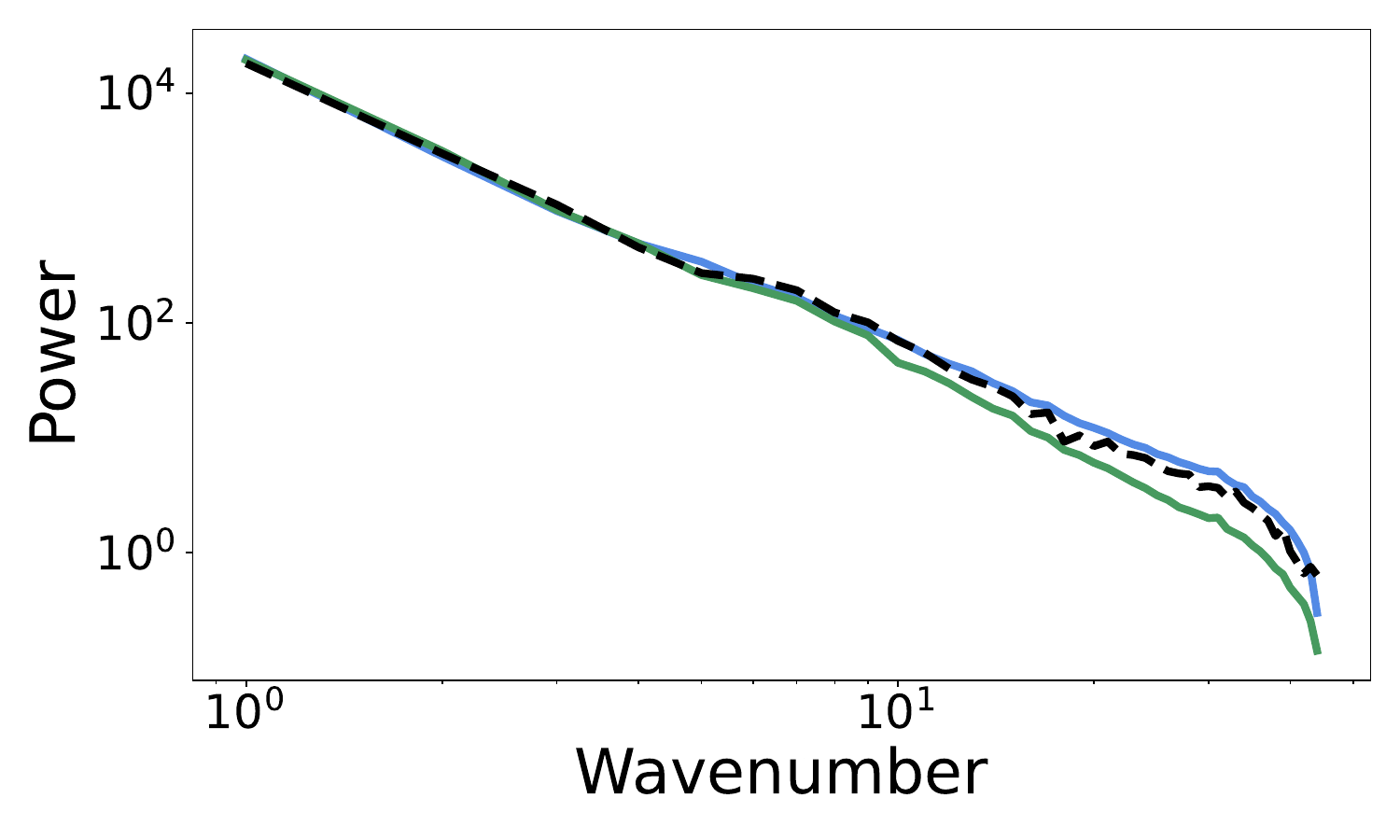}
        \caption{The energy spectra at time step 50.}
        \label{fig:spectra_50_noisy_12h}
    \end{subfigure}
    \hfill
    \begin{subfigure}[b]{0.3\textwidth}
        \includegraphics[width=\textwidth]{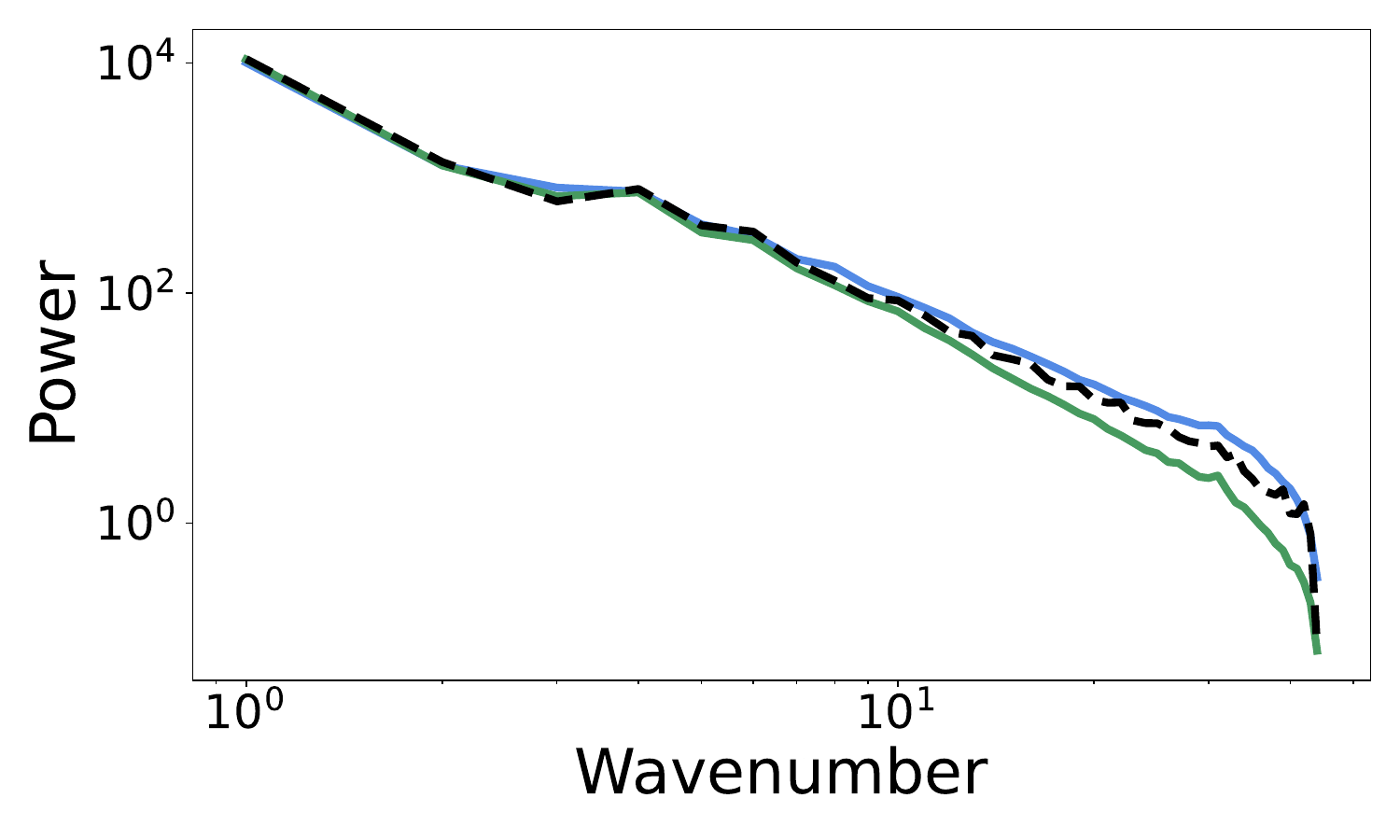}
        \caption{The energy spectra at time step 100.}
        \label{fig:spectra_100_noisy_12h}
    \end{subfigure}
    \caption{The results for the \texttt{Noisy 12h} experiment.}
    \label{fig:results_noisy_12h}
\end{figure}

\subsection{SEVIR}
Here, we consider a real-life large-scale weather forecasting task using the Storm EVent Imagery and Radar (SEVIR) dataset \cite{veillette2020sevir}, which provides observations of severe convective storms across the continental United States. This specific experiment focuses on the Vertically Integrated Liquid (VIL) product, which serves as a 2-D proxy for precipitation intensity. Each data sample is a $128 \times 128$ grid covering a $384\times$\SI{384}{\kilo\meter} area at \SI{2}{\kilo\meter} resolution, with snapshots recorded every \SI{10}{\minute}, for a total of \SI{250}{\minute} (i.e., $25$ snapshots per sample).

For the forecast model, we used the checkpoint available in the official FlowDAS GitHub repository \url{https://github.com/umjiayx/FlowDAS}. This is based on a stochastic interpolant-based generative SDE for probabilistic forecasting, proposed in \cite{chen2024probabilistic}, with a U-Net backbone used for the drift. The model makes predictions autoregressively with inputs given by the past $6$ timesteps. For the guided FlowDAS baseline, we take the exact setup as in \cite{chen2025flowdas}, with parameters listed in \cref{tab:FlowDAS_hyperparams}. When beginning the assimilation, we use the first six ground truth states as initial conditions.
For DAISI we use the same U-Net as for the SQG experiments, without the circular padding. We do not scale the data as it is already in the range $[0,1]$.

\section{Additional figures}
\label{app:additional_figures}

In Figures \ref{fig:results_noisy}--\ref{fig:results_sevir} we show additional metrics for one of the assimilated trajectories.

\begin{figure}[h]
    \centering
    \includegraphics[width=0.8\textwidth]{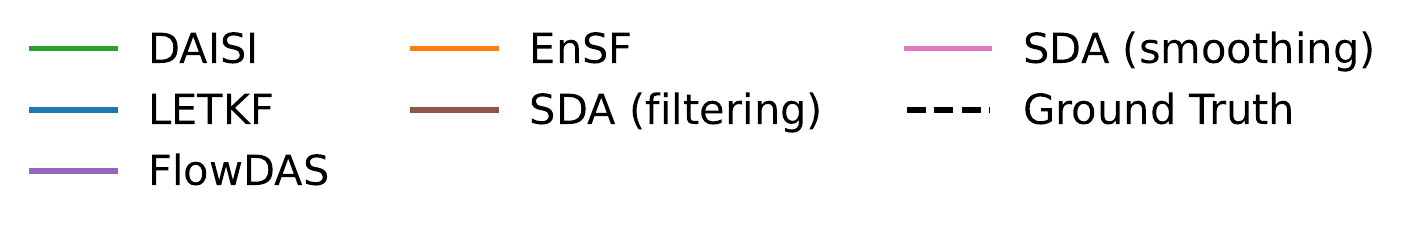}
    \begin{subfigure}[t]{0.3\textwidth}
        \includegraphics[width=\textwidth]{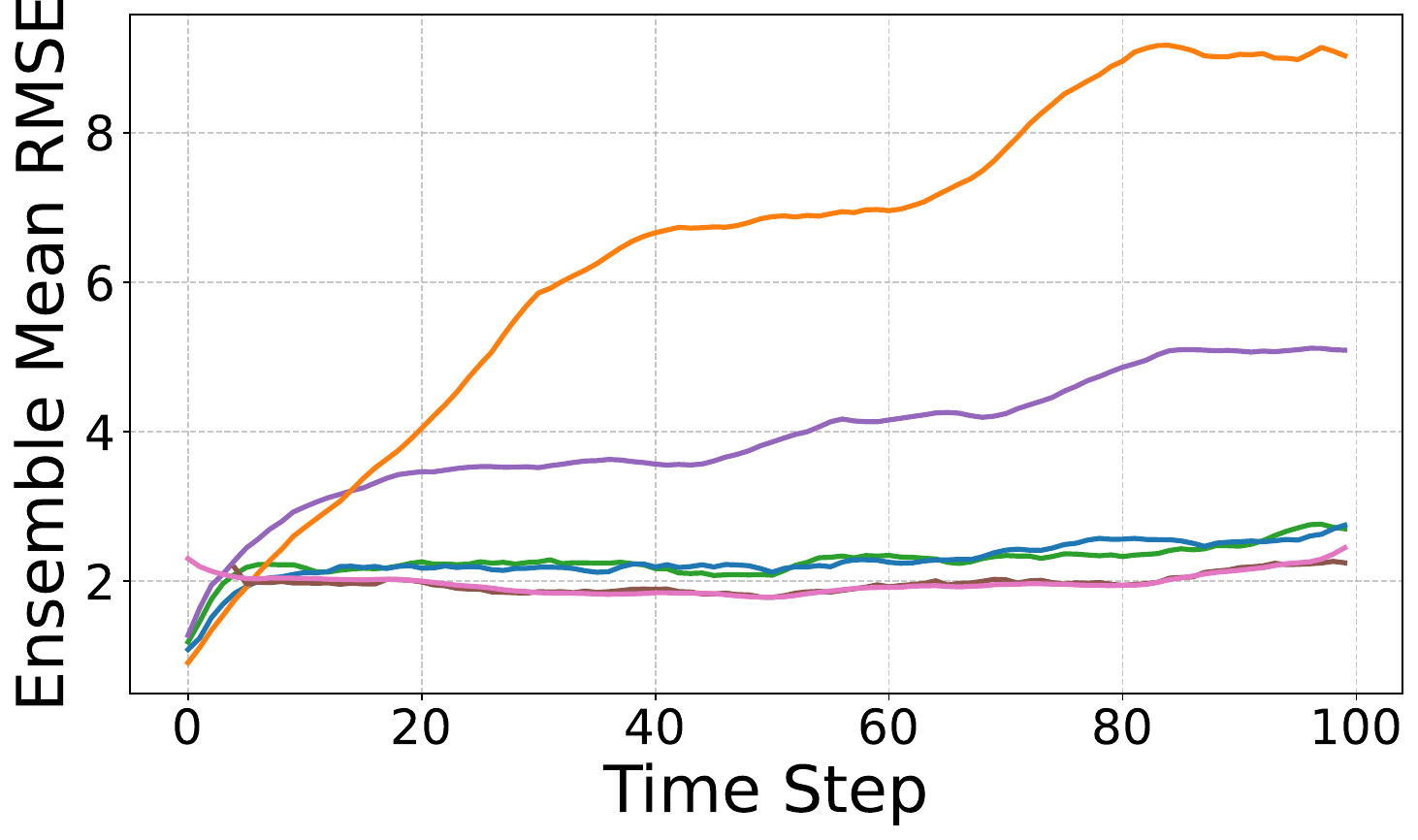}
        \caption{The RMSE of the ensemble mean of the filtering distribution at each time step.}
        \label{fig:rmse_noisy}
    \end{subfigure}
    \hfill
    \begin{subfigure}[t]{0.3\textwidth}
        \includegraphics[width=\textwidth]{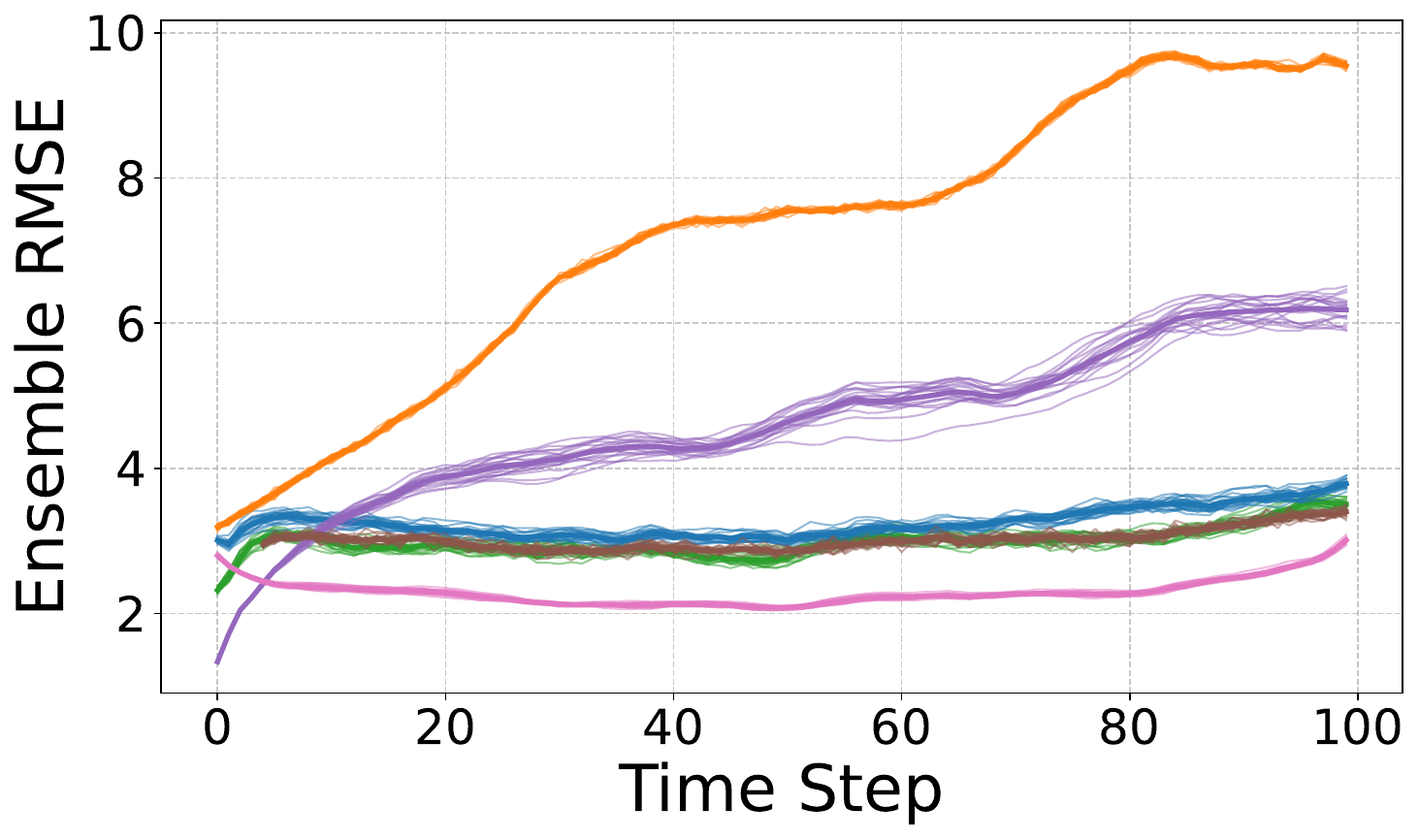}
        \caption{The RMSE for each ensemble member at each time step.}
        \label{fig:ens_rmse_noisy}
    \end{subfigure}
    \hfill
    \begin{subfigure}[t]{0.3\textwidth}
        \includegraphics[width=\textwidth]{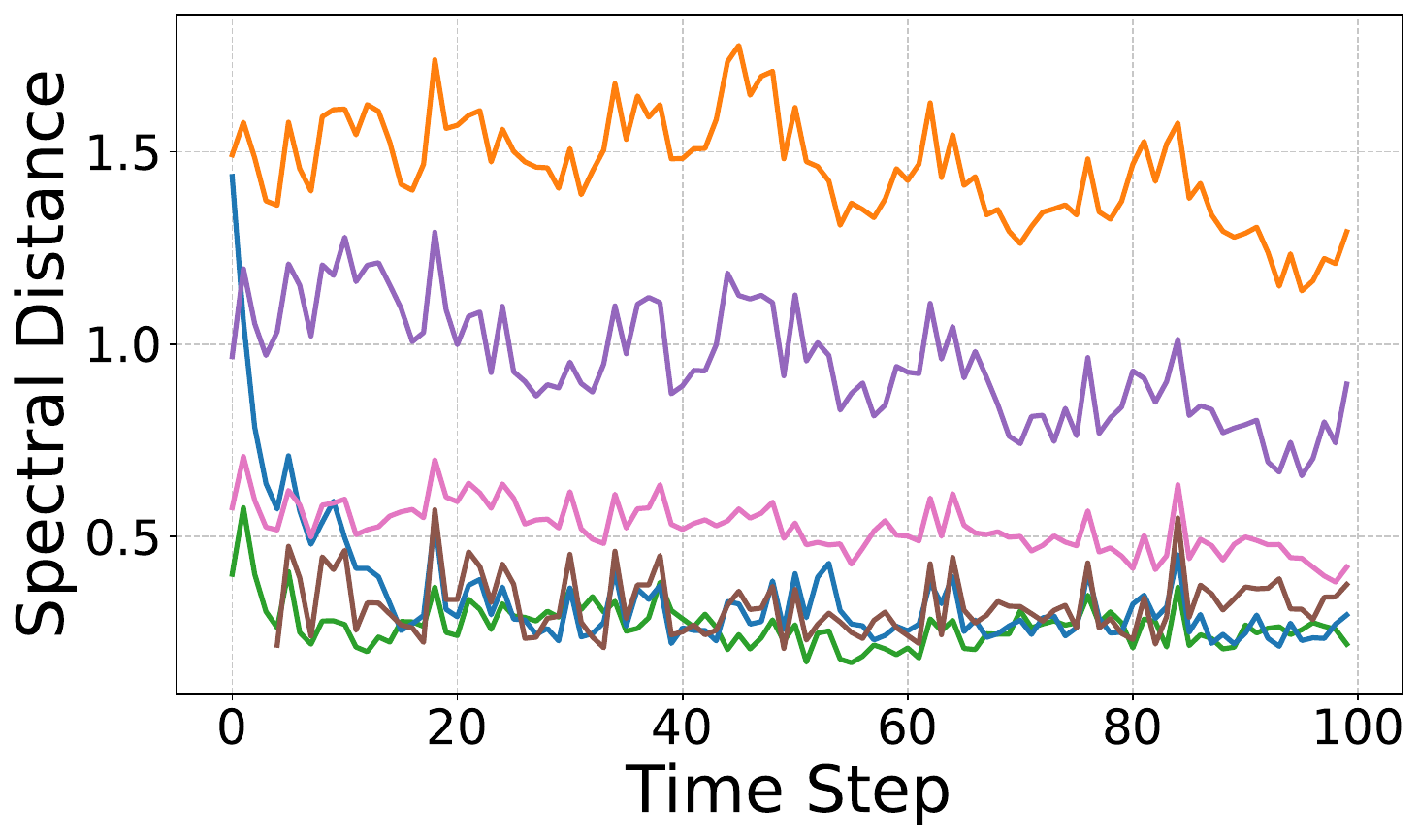}
        \caption{The spectral distance of each ensemble member compared to the ground truth and then averaged over the ensemble members.}
        \label{fig:spectra_noisy}
    \end{subfigure}
    \hfill
    \begin{subfigure}[b]{0.3\textwidth}
        \includegraphics[width=\textwidth]{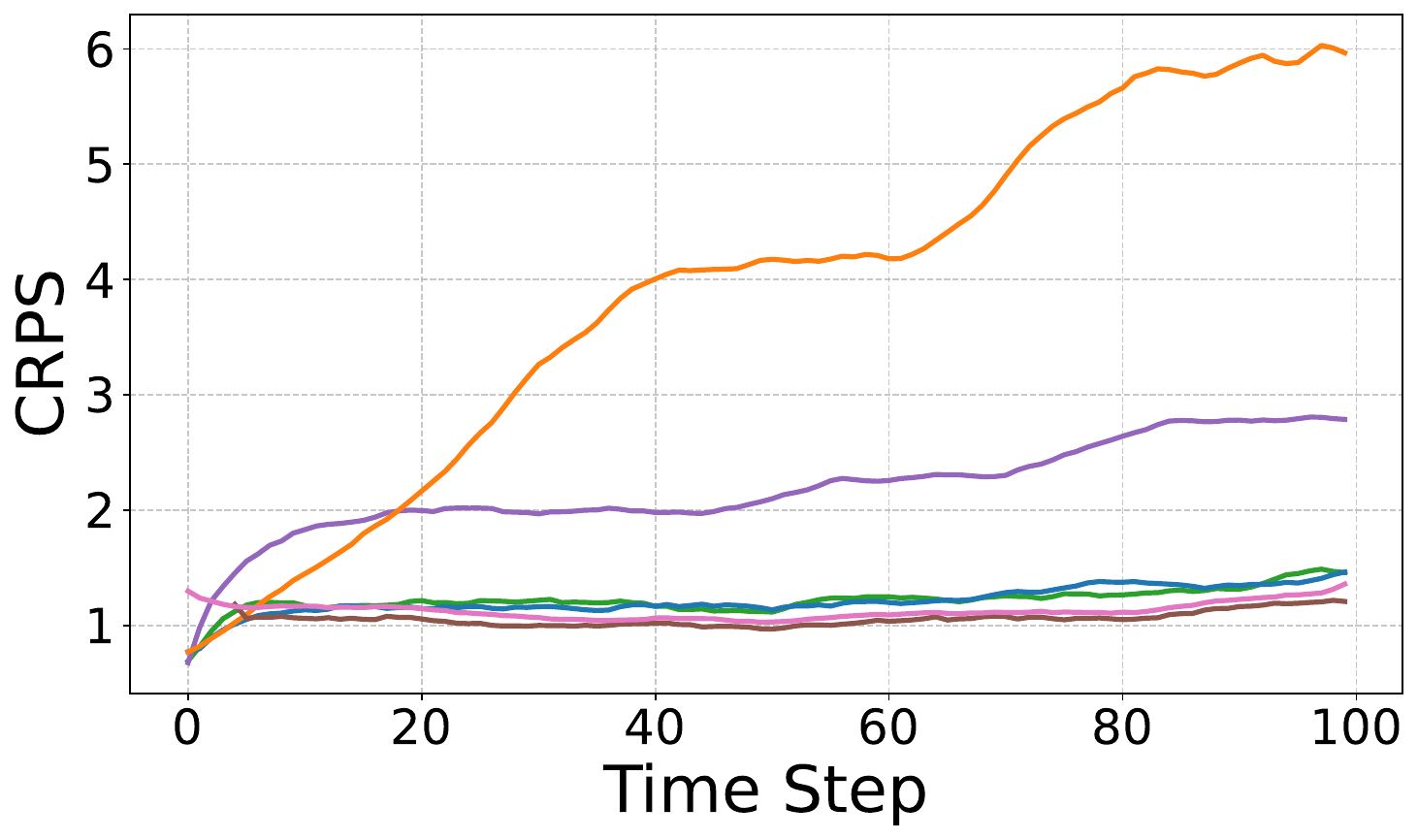}
        \caption{The CRPS of the empirical filtering distribution at each time step.}
        \label{fig:crps_noisy}    
    \end{subfigure}
    \hfill
    \begin{subfigure}[b]{0.3\textwidth}
        \includegraphics[width=\textwidth]{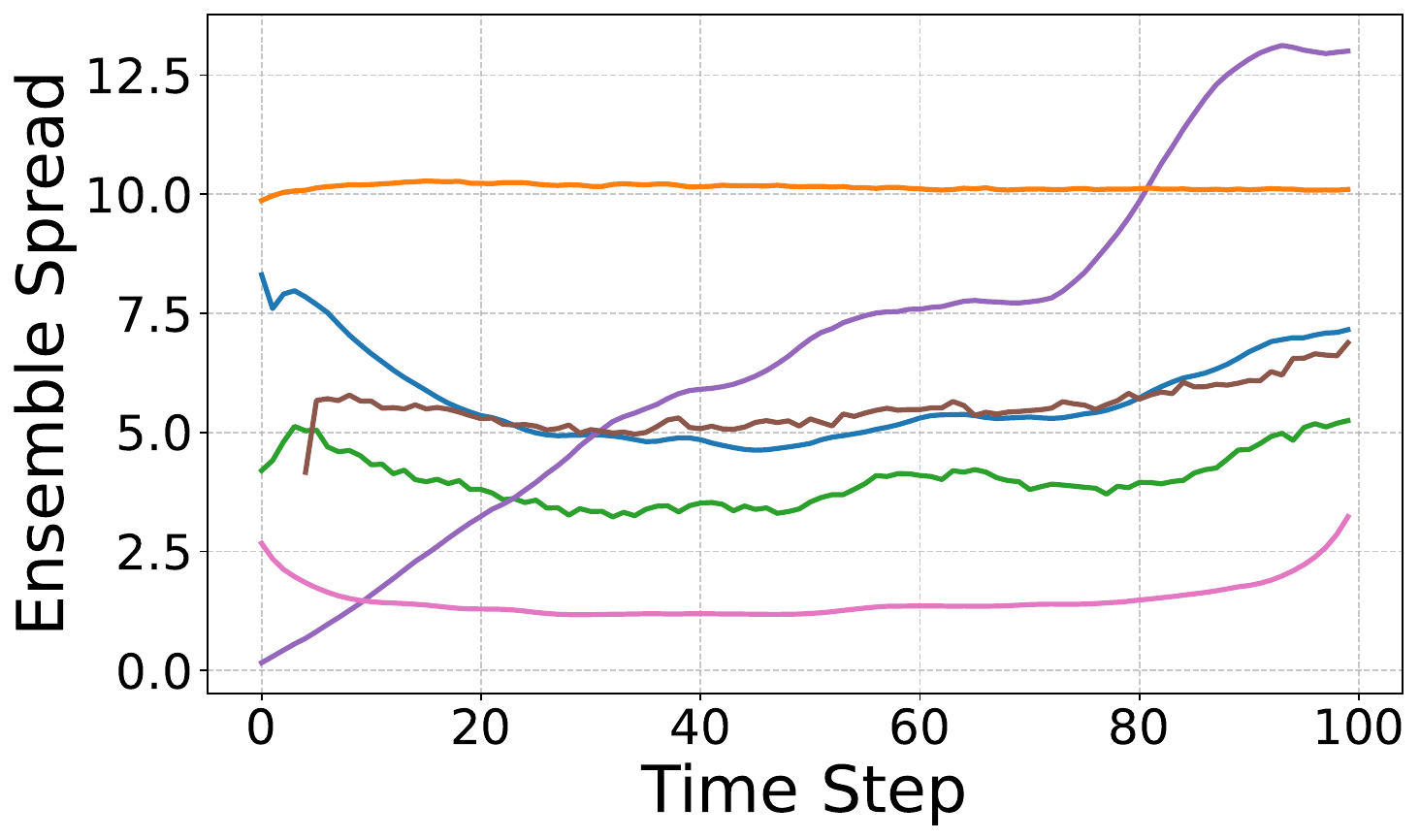}
        \caption{The spread of the ensemble at each time step.}
        \label{fig:spread_noisy}
    \end{subfigure}
    \hfill
    \begin{subfigure}[b]{0.3\textwidth}
        \includegraphics[width=\textwidth]{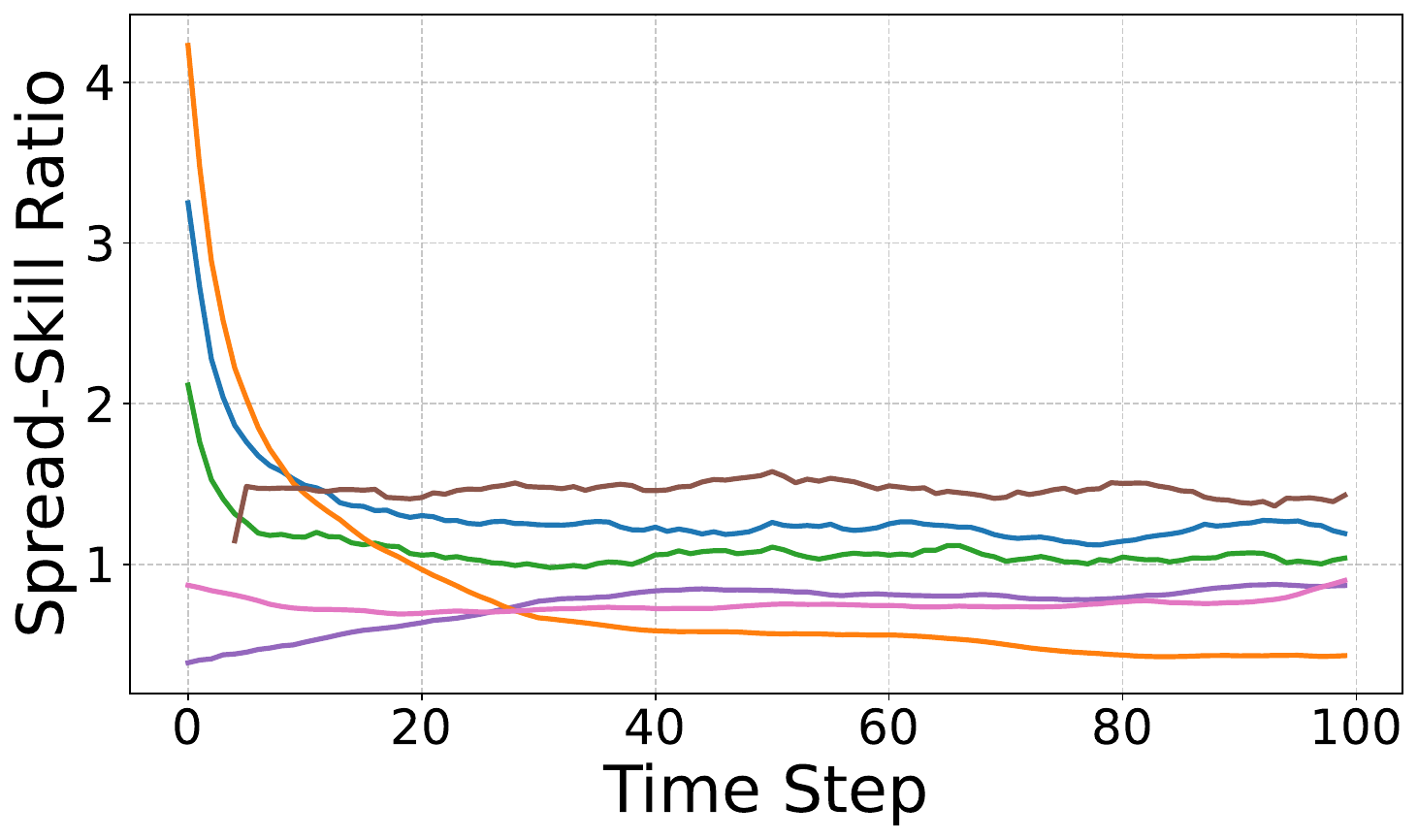}
        \caption{The spread skill ratio of the ensemble at each time step.}
        \label{fig:ssr_noisy}
    \end{subfigure}
    \hfill
    \begin{subfigure}[b]{0.3\textwidth}
        \includegraphics[width=\textwidth]{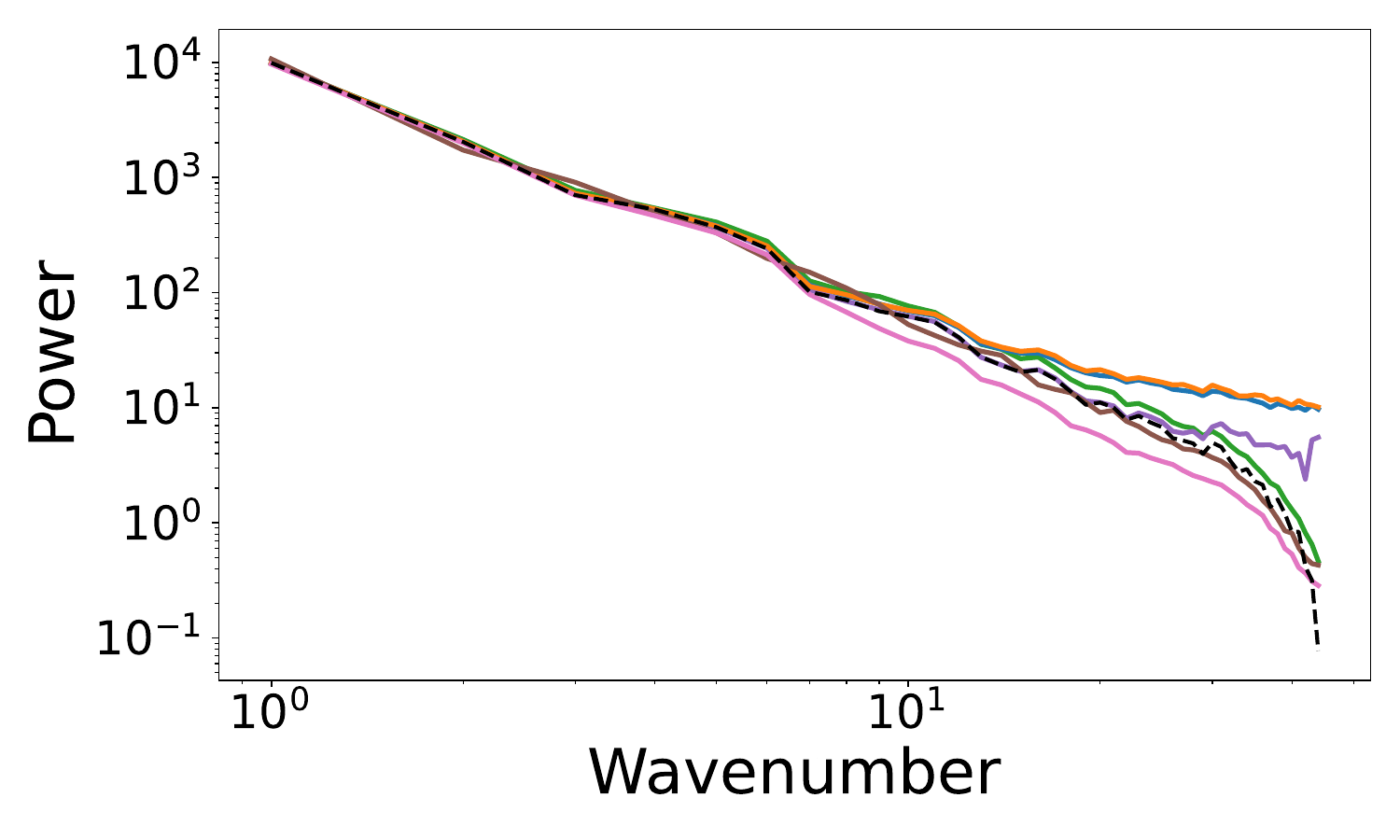}
        \caption{The energy spectra at time step 1.}
        \label{fig:spectra_1_noisy}
    \end{subfigure}
    \hfill
    \begin{subfigure}[b]{0.3\textwidth}
        \includegraphics[width=\textwidth]{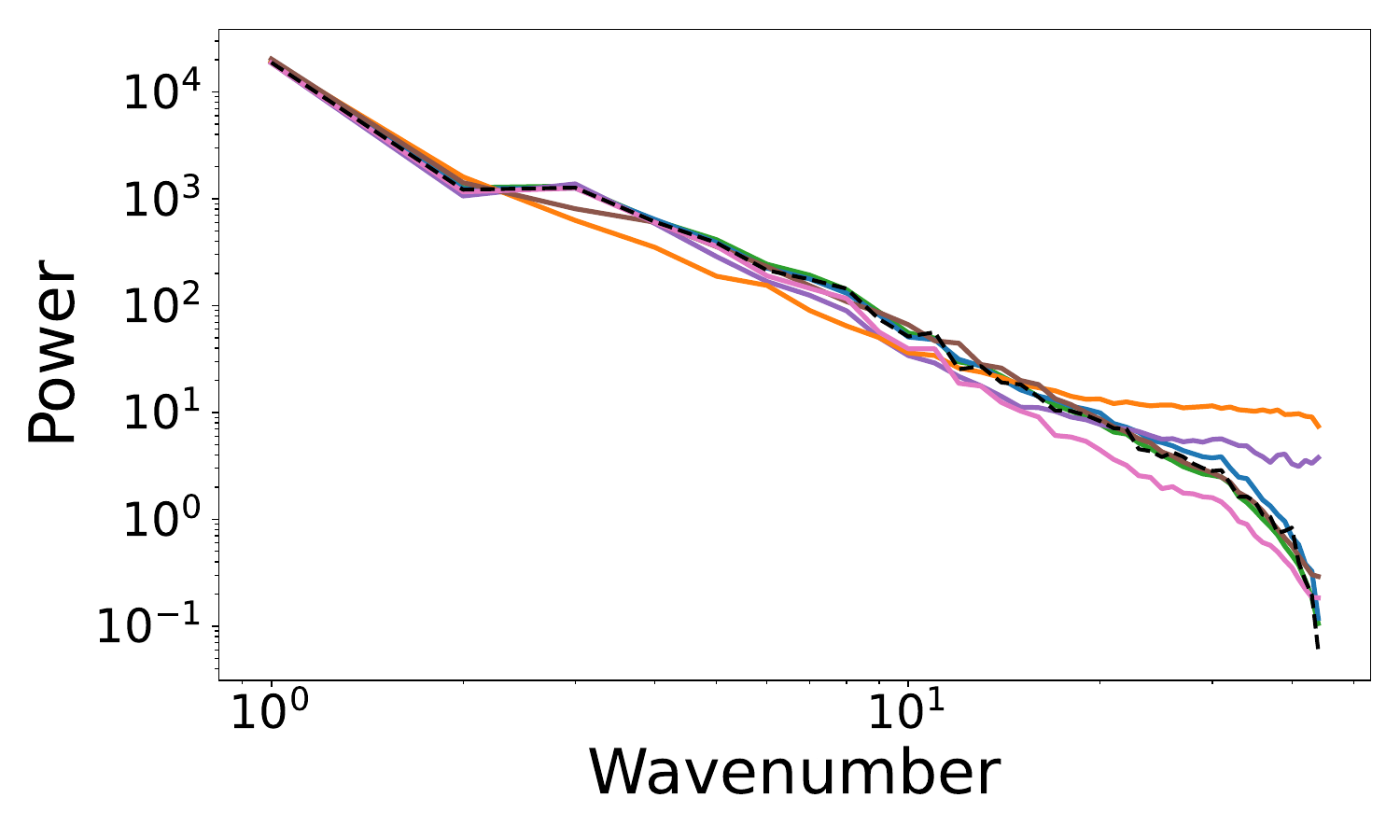}
        \caption{The energy spectra at time step 50.}
        \label{fig:spectra_50_noisy}
    \end{subfigure}
    \hfill
    \begin{subfigure}[b]{0.3\textwidth}
        \includegraphics[width=\textwidth]{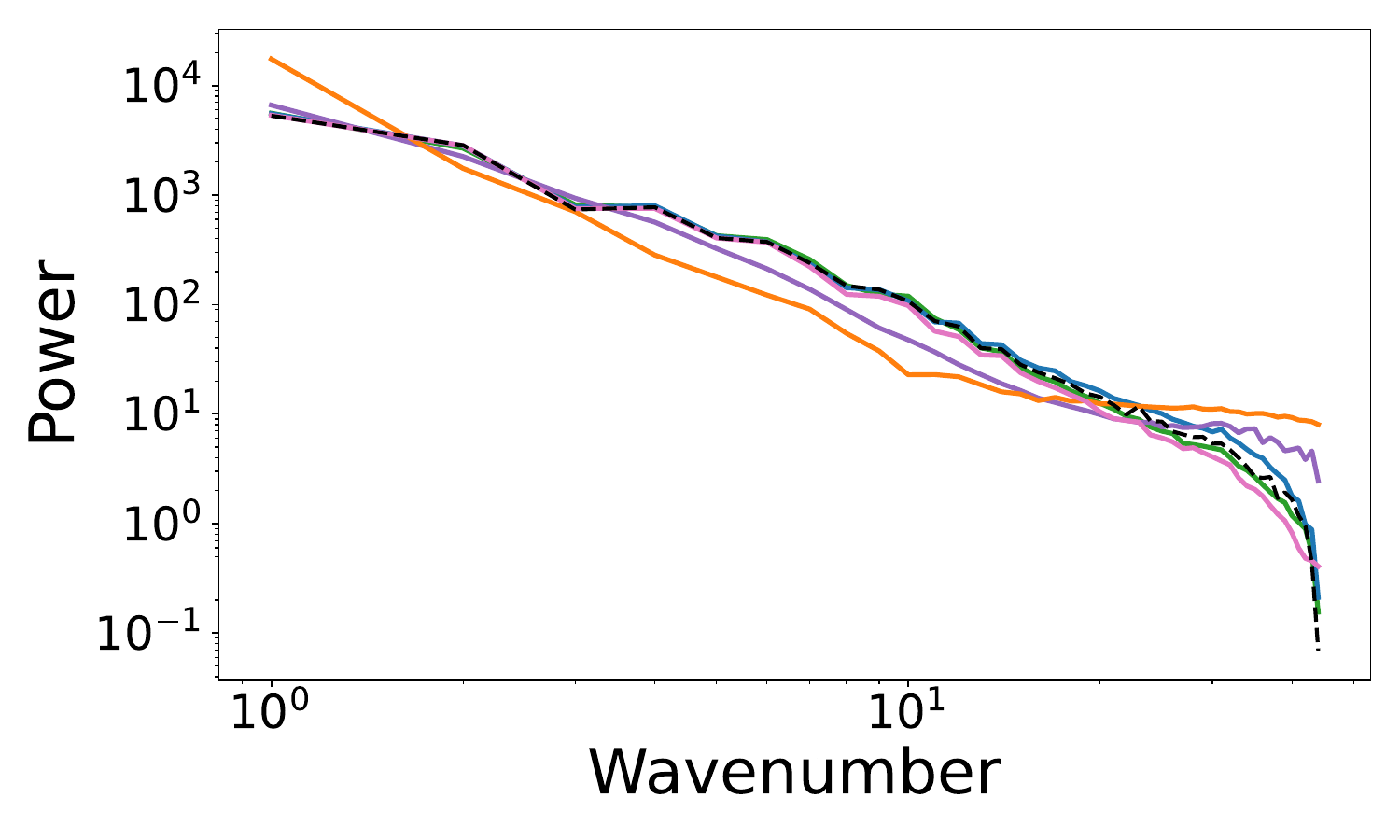}
        \caption{The energy spectra at time step 100.}
        \label{fig:spectra_100_noisy}
    \end{subfigure}
    \caption{The results for the \texttt{Noisy} experiment.}
    \label{fig:results_noisy}
\end{figure}

\begin{figure}
    \centering
    \includegraphics[width=0.8\textwidth]{figures/plots/A2/legend_gt.pdf}
    \begin{subfigure}[t]{0.3\textwidth}
        \includegraphics[width=\textwidth]{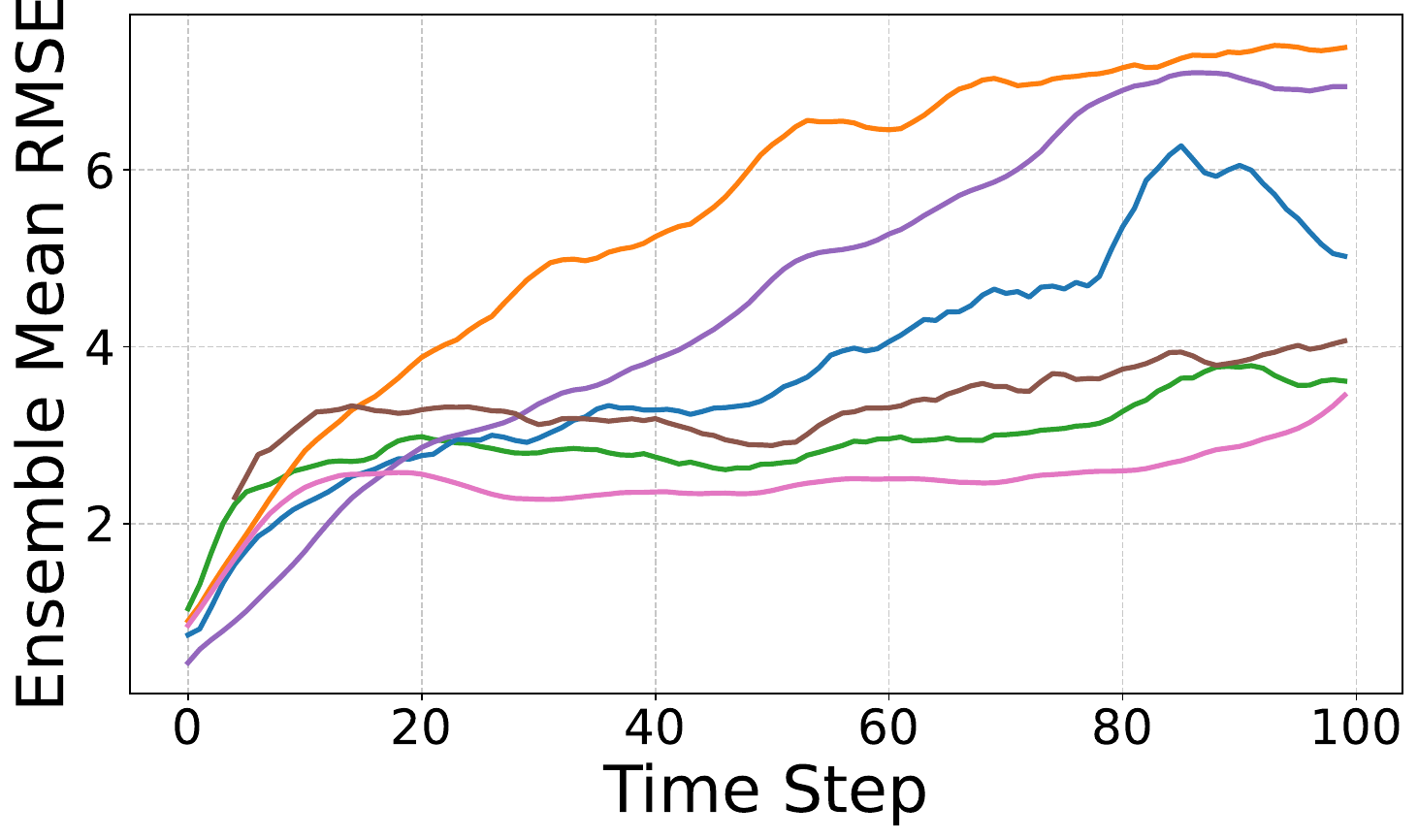}
        \caption{The RMSE of the ensemble mean of the filtering distribution at each time step.}
        \label{fig:rmse_sparse}
    \end{subfigure}
    \hfill
    \begin{subfigure}[t]{0.3\textwidth}
        \includegraphics[width=\textwidth]{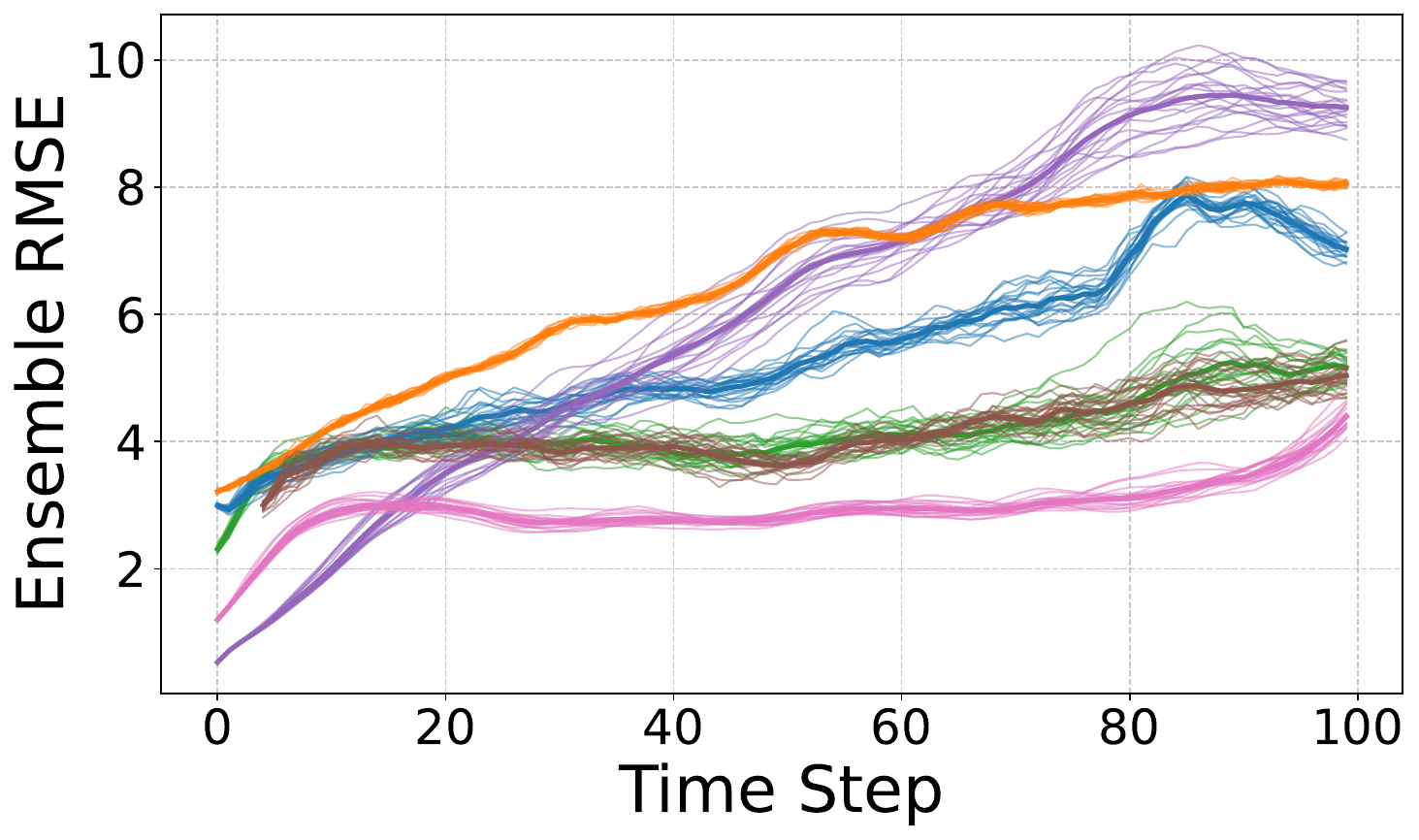}
        \caption{The RMSE for each ensemble member at each time step.}
        \label{fig:ens_rmse_sparse}
    \end{subfigure}
    \hfill
    \begin{subfigure}[t]{0.3\textwidth}
        \includegraphics[width=\textwidth]{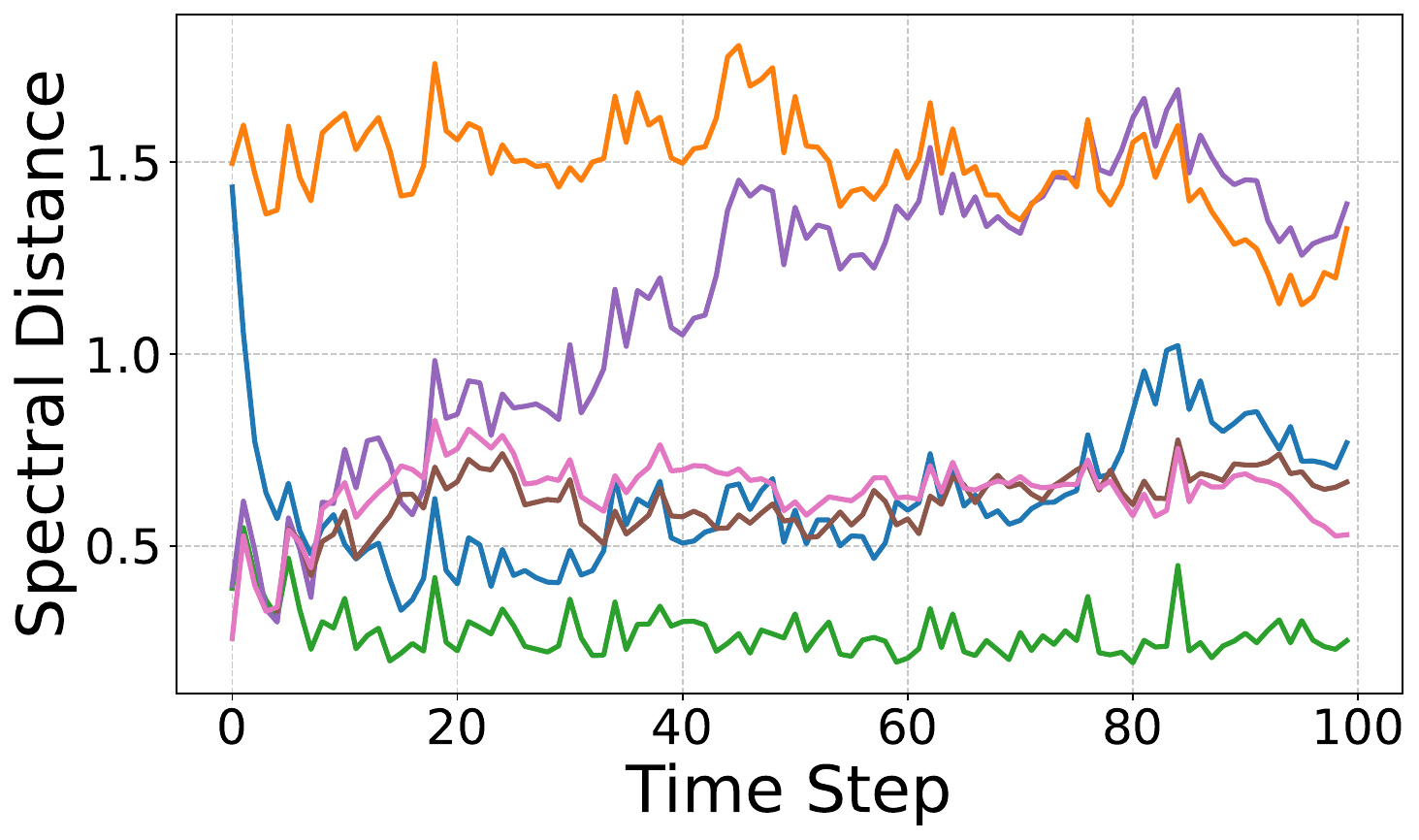}
        \caption{The spectral distance of each ensemble member compared to the ground truth and then averaged over the ensemble members.}
        \label{fig:spectra_sparse}
    \end{subfigure}
    \hfill
    \begin{subfigure}[b]{0.3\textwidth}
        \includegraphics[width=\textwidth]{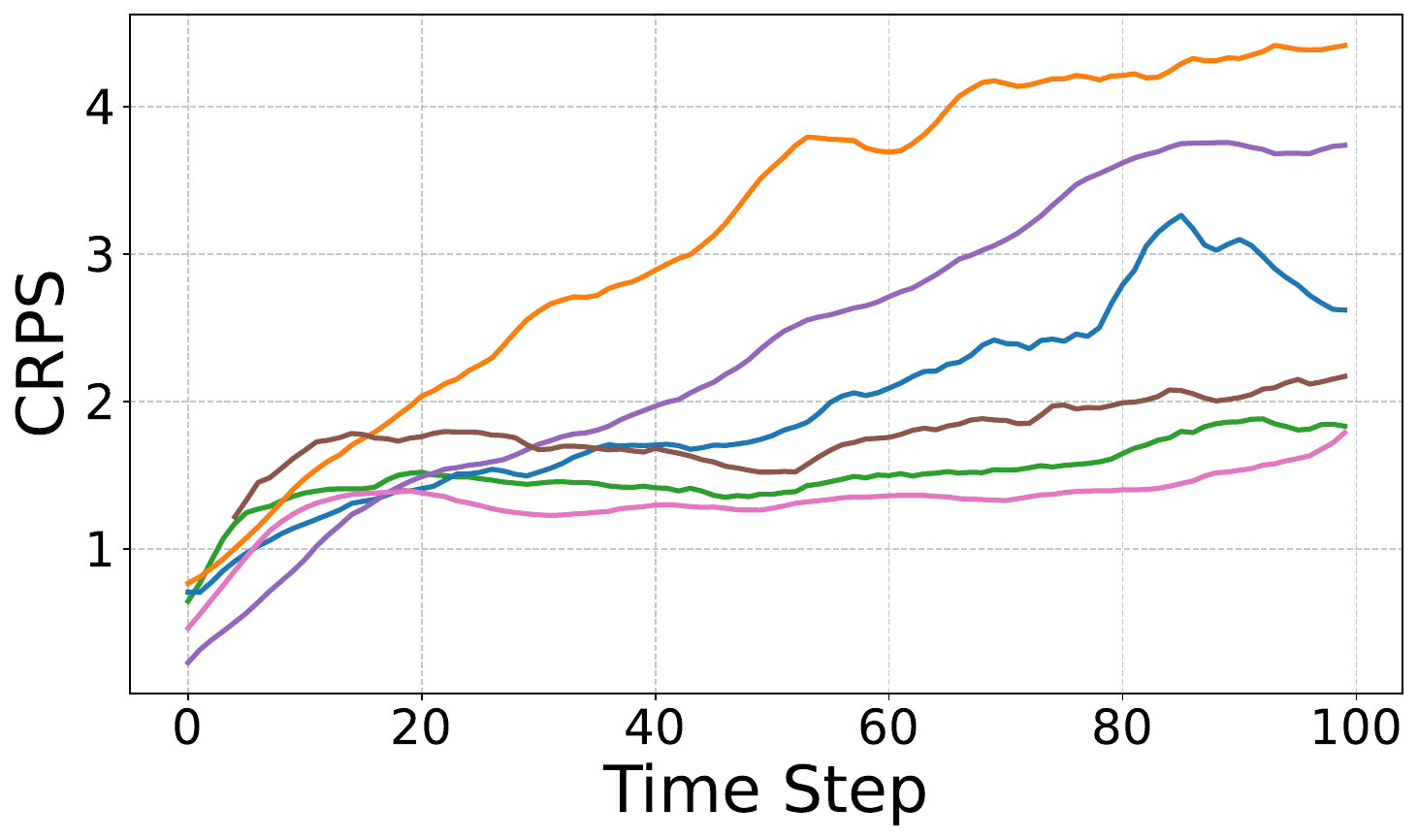}
        \caption{The CRPS of the empirical filtering distribution at each time step.}
        \label{fig:crps_sparse}    
    \end{subfigure}
    \hfill
    \begin{subfigure}[b]{0.3\textwidth}
        \includegraphics[width=\textwidth]{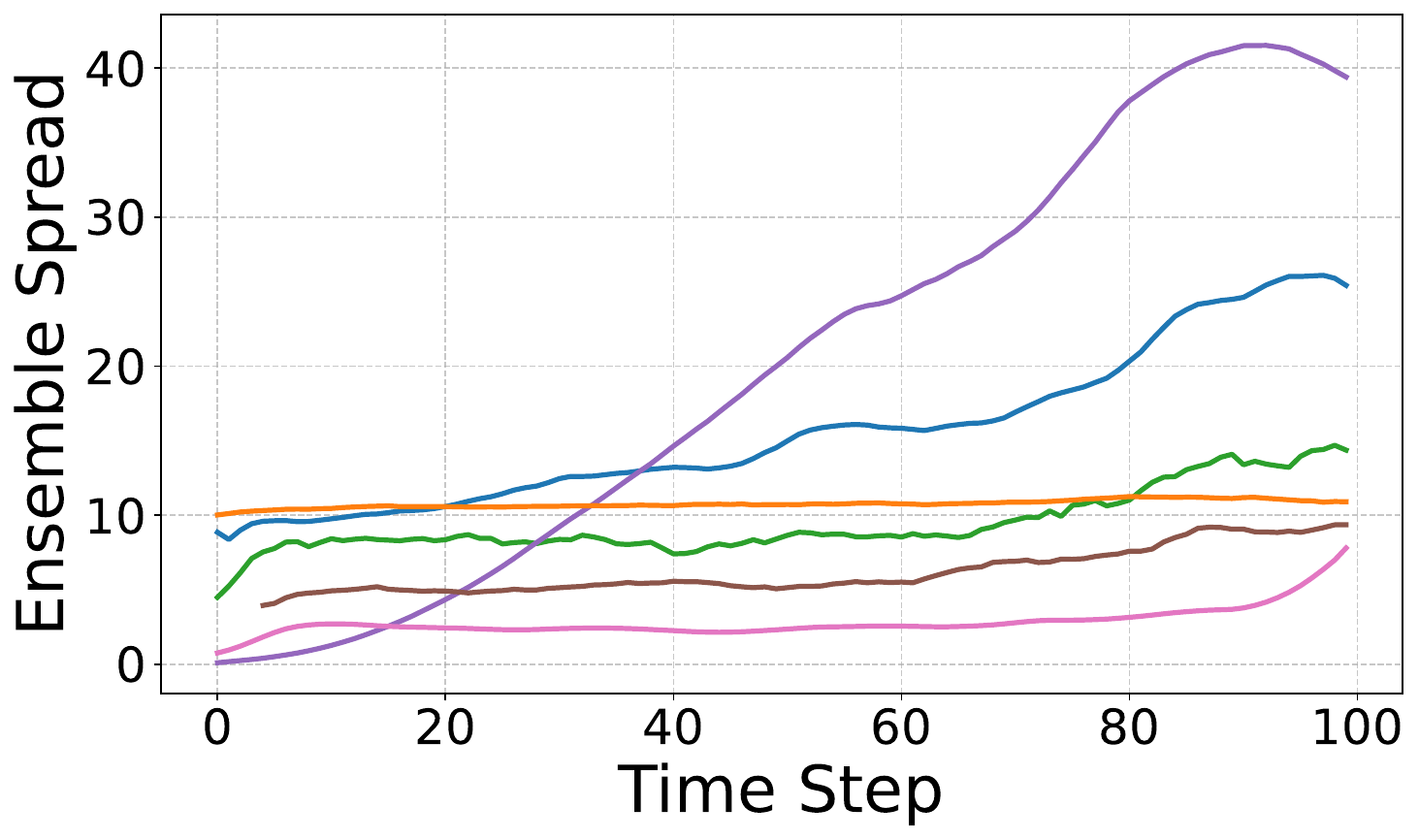}
        \caption{The spread of the ensemble at each time step.}
        \label{fig:spread_sparse}
    \end{subfigure}
    \hfill
    \begin{subfigure}[b]{0.3\textwidth}
        \includegraphics[width=\textwidth]{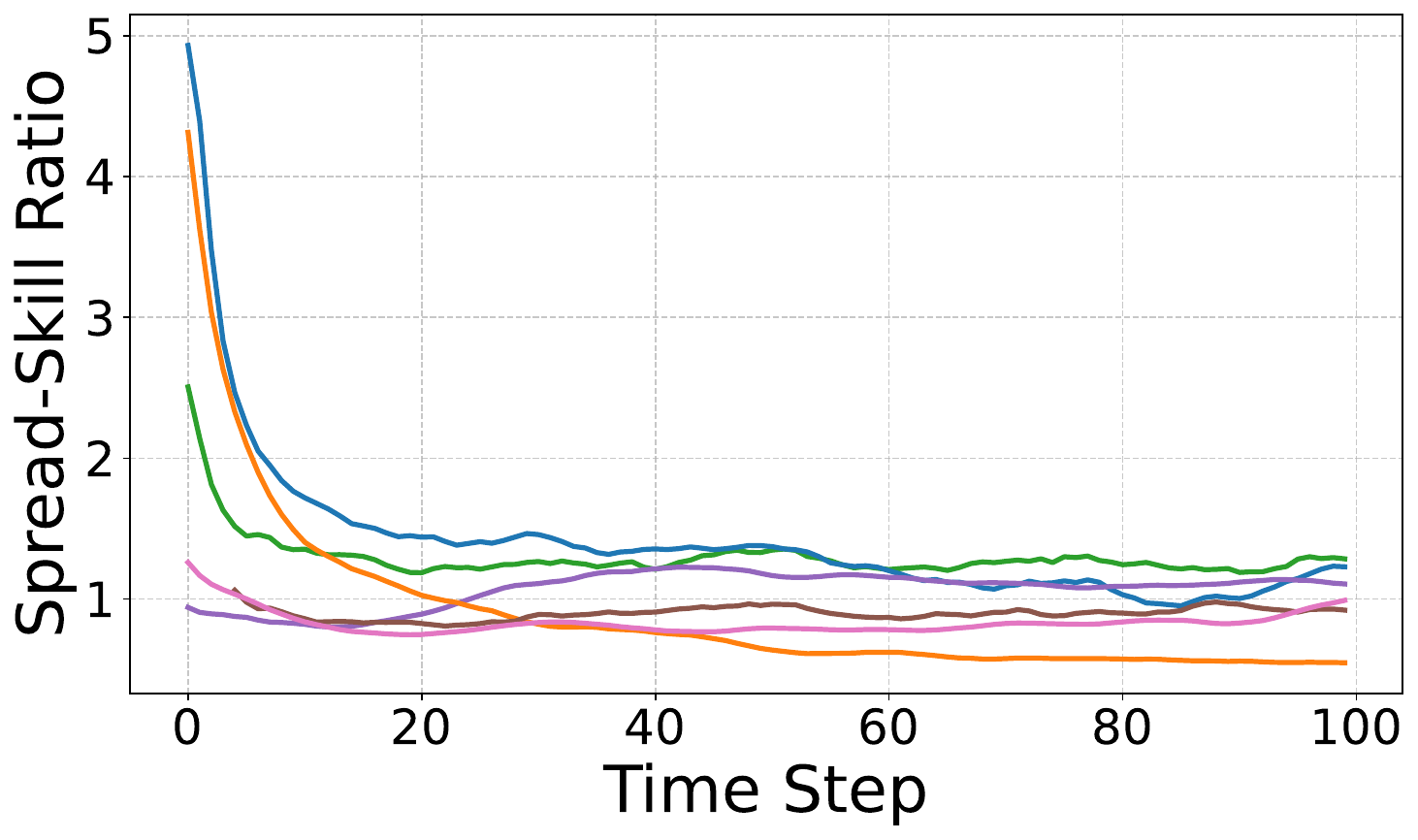}
        \caption{The spread skill ratio of the ensemble at each time step.}
        \label{fig:ssr_sparse}
    \end{subfigure}
    \hfill
    \begin{subfigure}[b]{0.3\textwidth}
        \includegraphics[width=\textwidth]{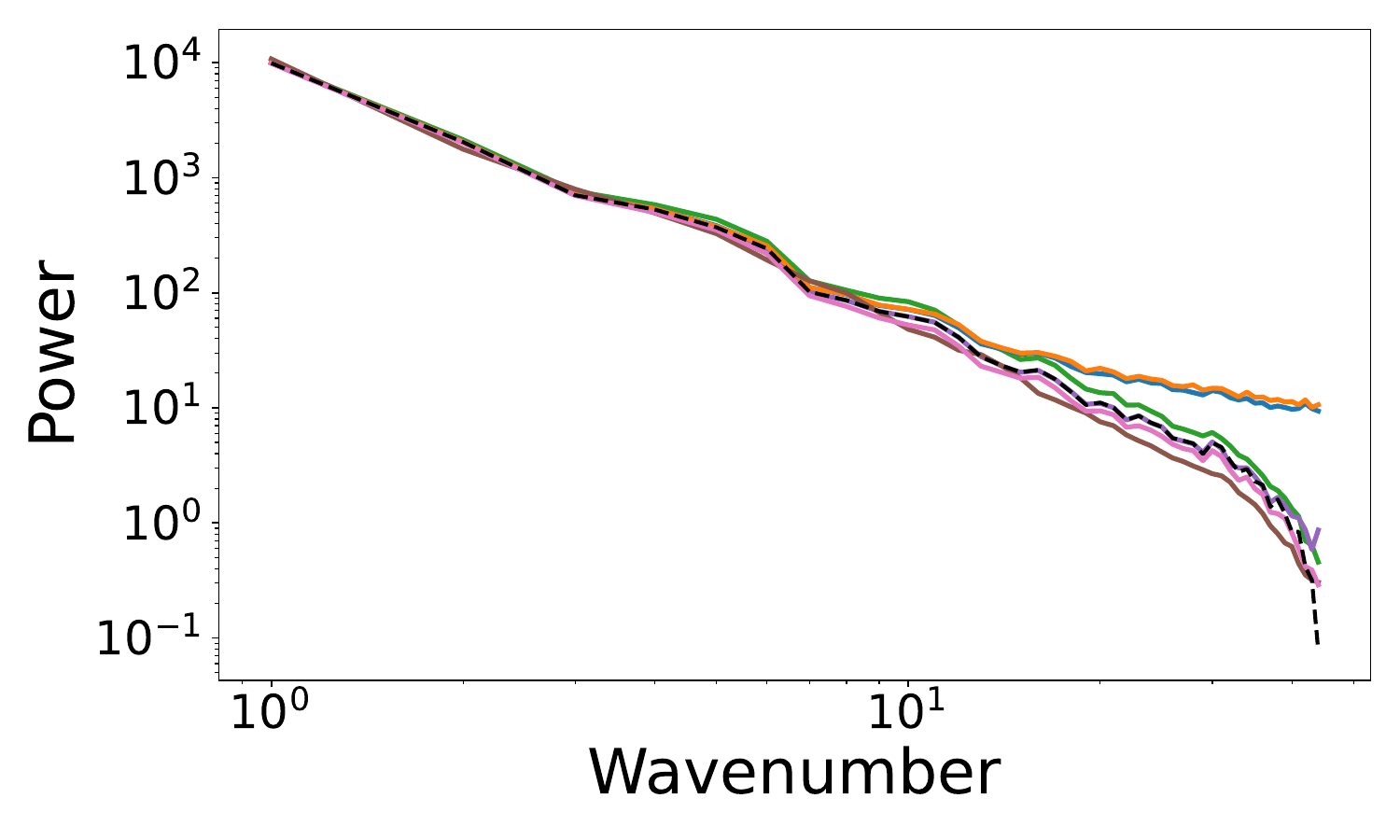}
        \caption{The energy spectra at time step 1.}
        \label{fig:spectra_1_sparse}
    \end{subfigure}
    \hfill
    \begin{subfigure}[b]{0.3\textwidth}
        \includegraphics[width=\textwidth]{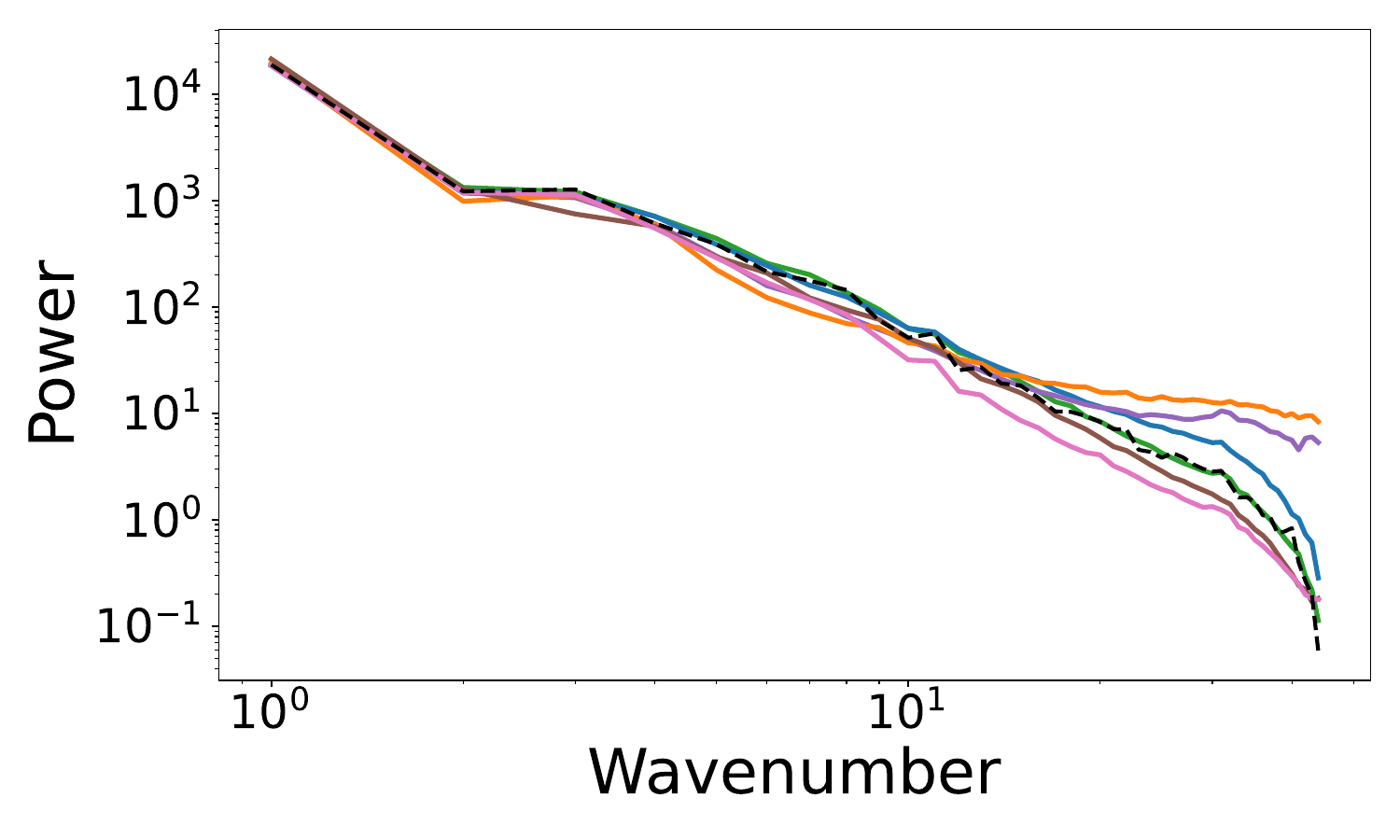}
        \caption{The energy spectra at time step 50.}
        \label{fig:spectra_50_sparse}
    \end{subfigure}
    \hfill
    \begin{subfigure}[b]{0.3\textwidth}
        \includegraphics[width=\textwidth]{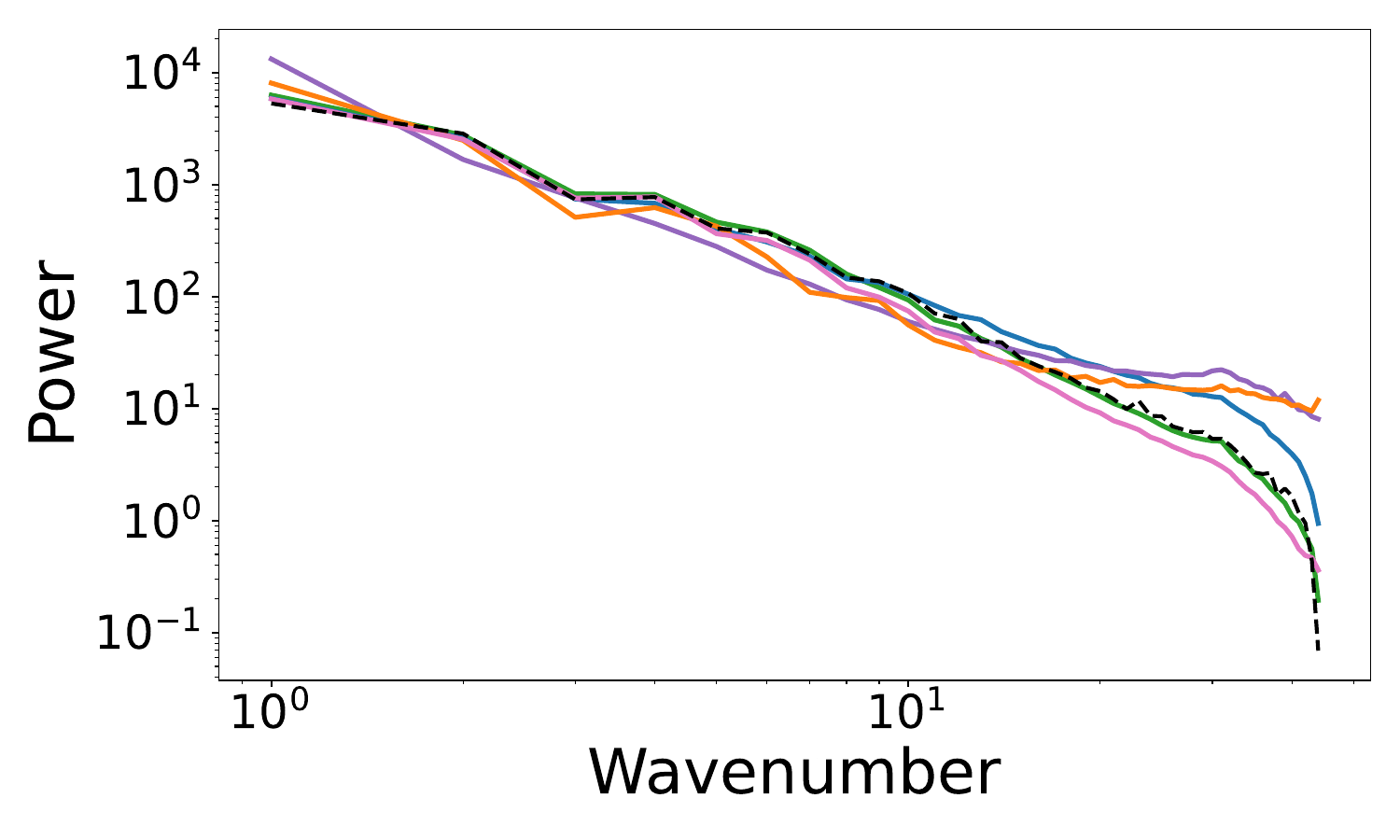}
        \caption{The energy spectra at time step 100.}
        \label{fig:spectra_100_sparse}
    \end{subfigure}
    \caption{The results for the \texttt{Sparse} experiment.}
    \label{fig:results_sparse}
\end{figure}

\begin{figure}
    \centering
    \includegraphics[width=0.8\textwidth]{figures/plots/A2/legend_gt.pdf}
    \begin{subfigure}[t]{0.3\textwidth}
        \includegraphics[width=\textwidth]{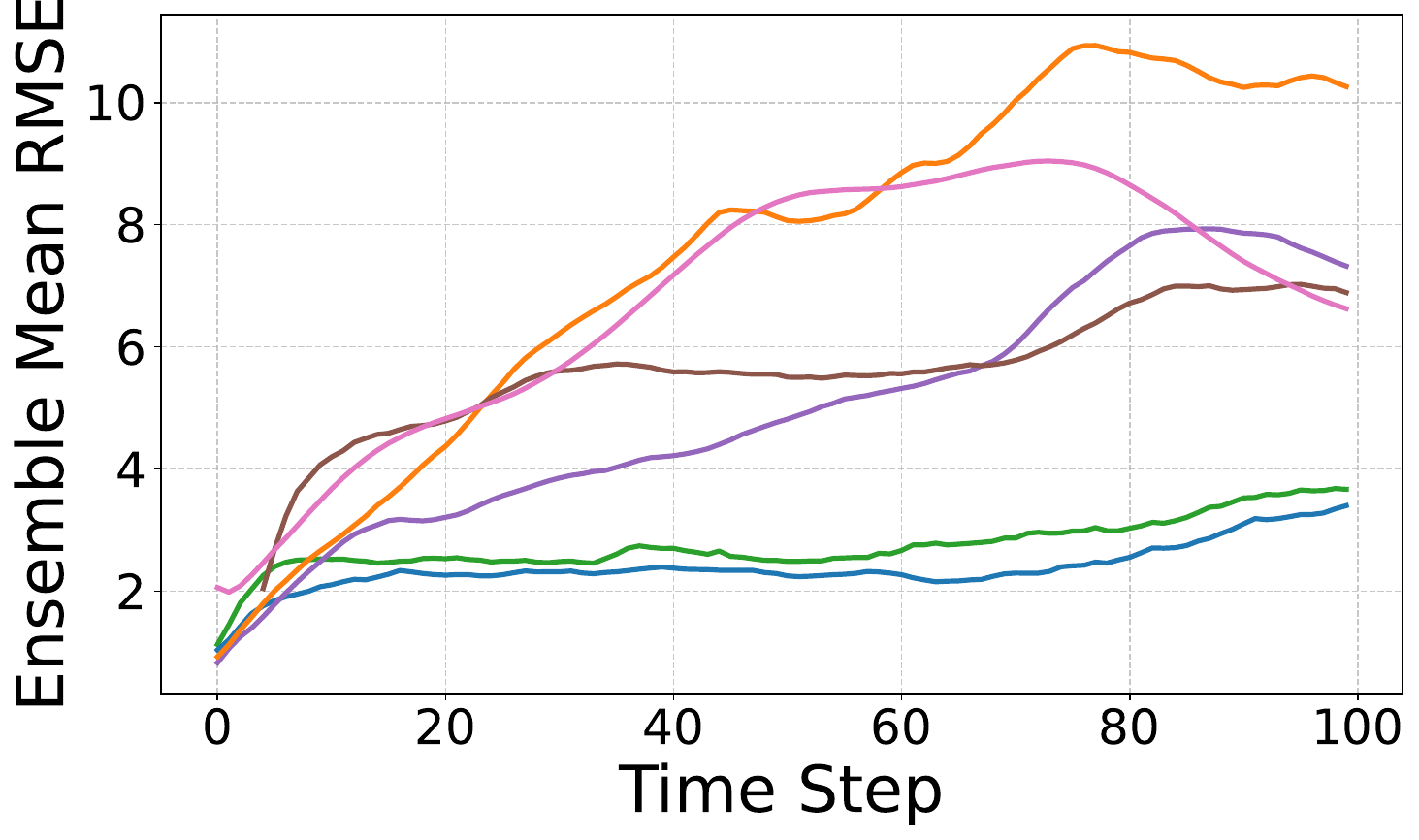}
        \caption{The RMSE of the ensemble mean of the filtering distribution at each time step.}
        \label{fig:rmse_multimodal}
    \end{subfigure}
    \hfill
    \begin{subfigure}[t]{0.3\textwidth}
        \includegraphics[width=\textwidth]{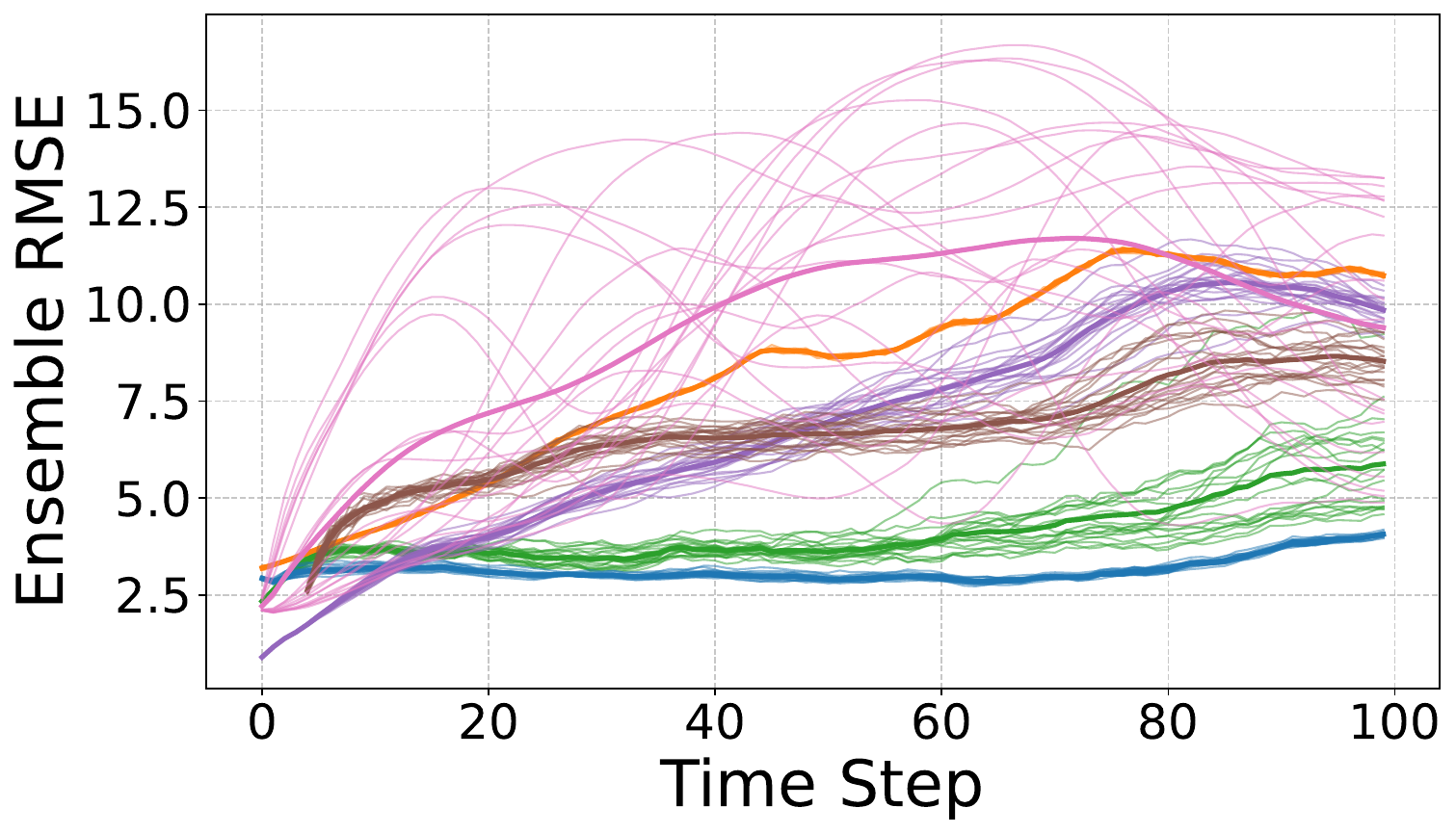}
        \caption{The RMSE for each ensemble member at each time step.}
        \label{fig:ens_rmse_multimodal}
    \end{subfigure}
    \hfill
    \begin{subfigure}[t]{0.3\textwidth}
        \includegraphics[width=\textwidth]{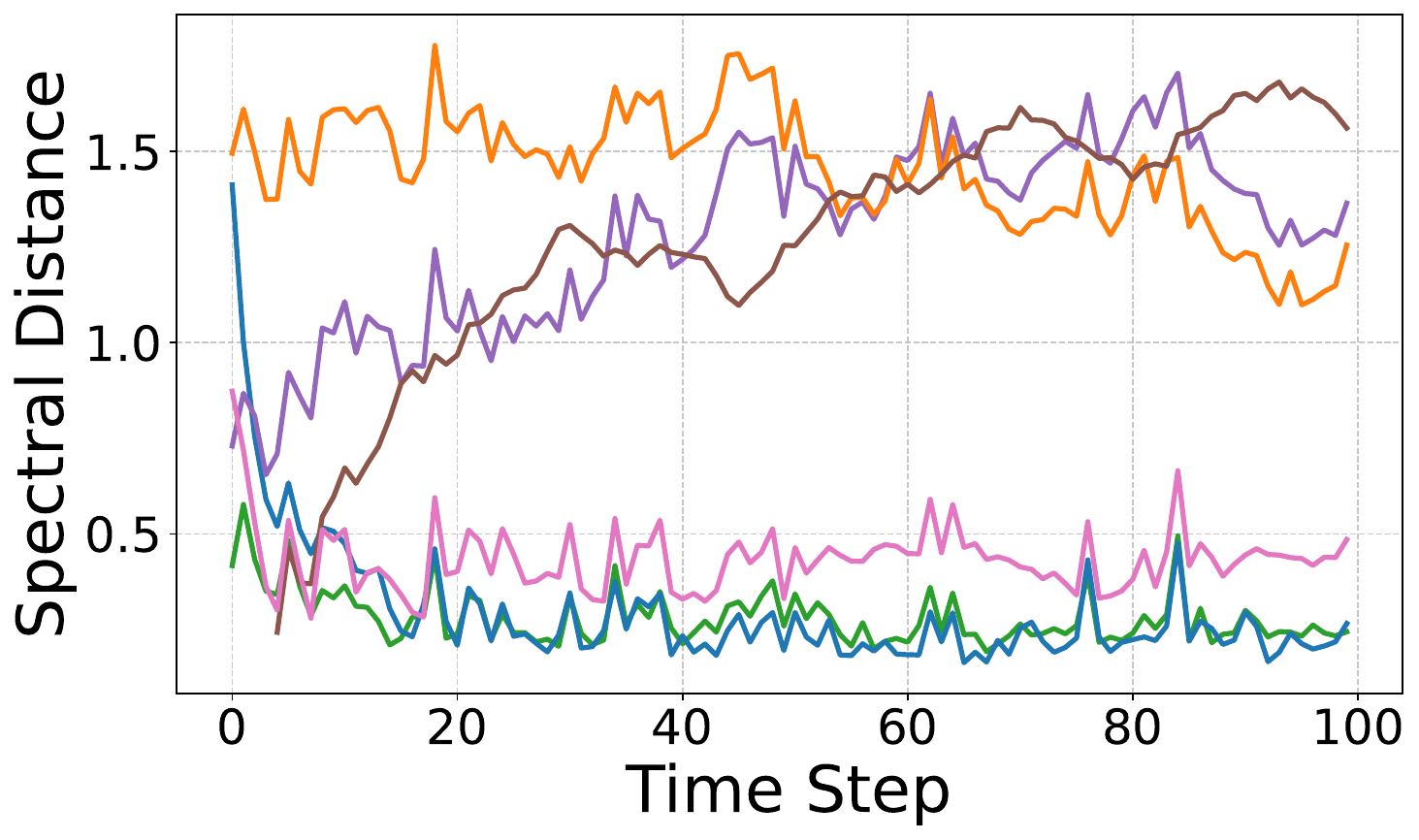}
        \caption{The spectral distance of each ensemble member compared to the ground truth and then averaged over the ensemble members.}
        \label{fig:spectra_multimodal}
    \end{subfigure}
    \hfill
    \begin{subfigure}[b]{0.3\textwidth}
        \includegraphics[width=\textwidth]{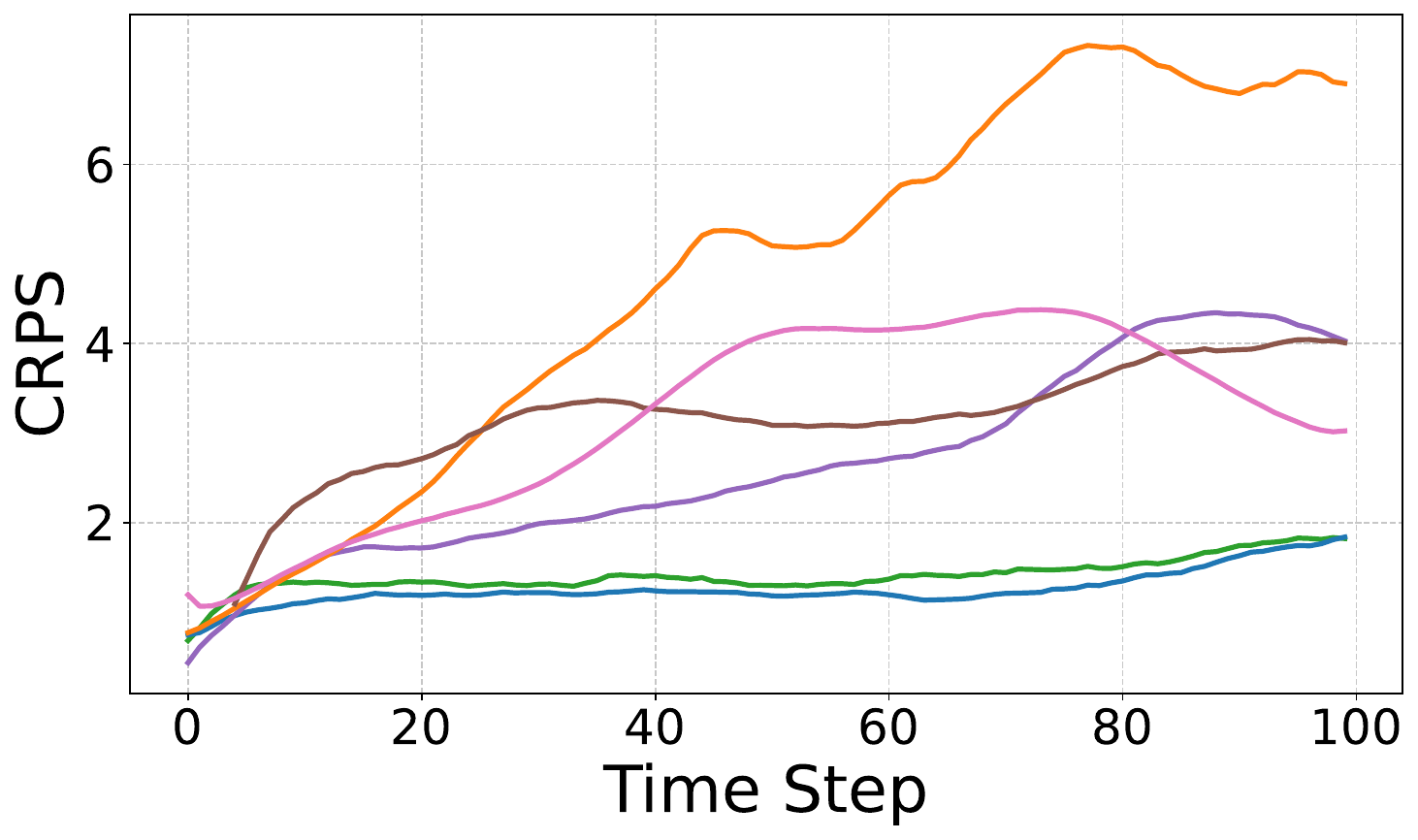}
        \caption{The CRPS of the empirical filtering distribution at each time step.}
        \label{fig:crps_multimodal}    
    \end{subfigure}
    \hfill
    \begin{subfigure}[b]{0.3\textwidth}
        \includegraphics[width=\textwidth]{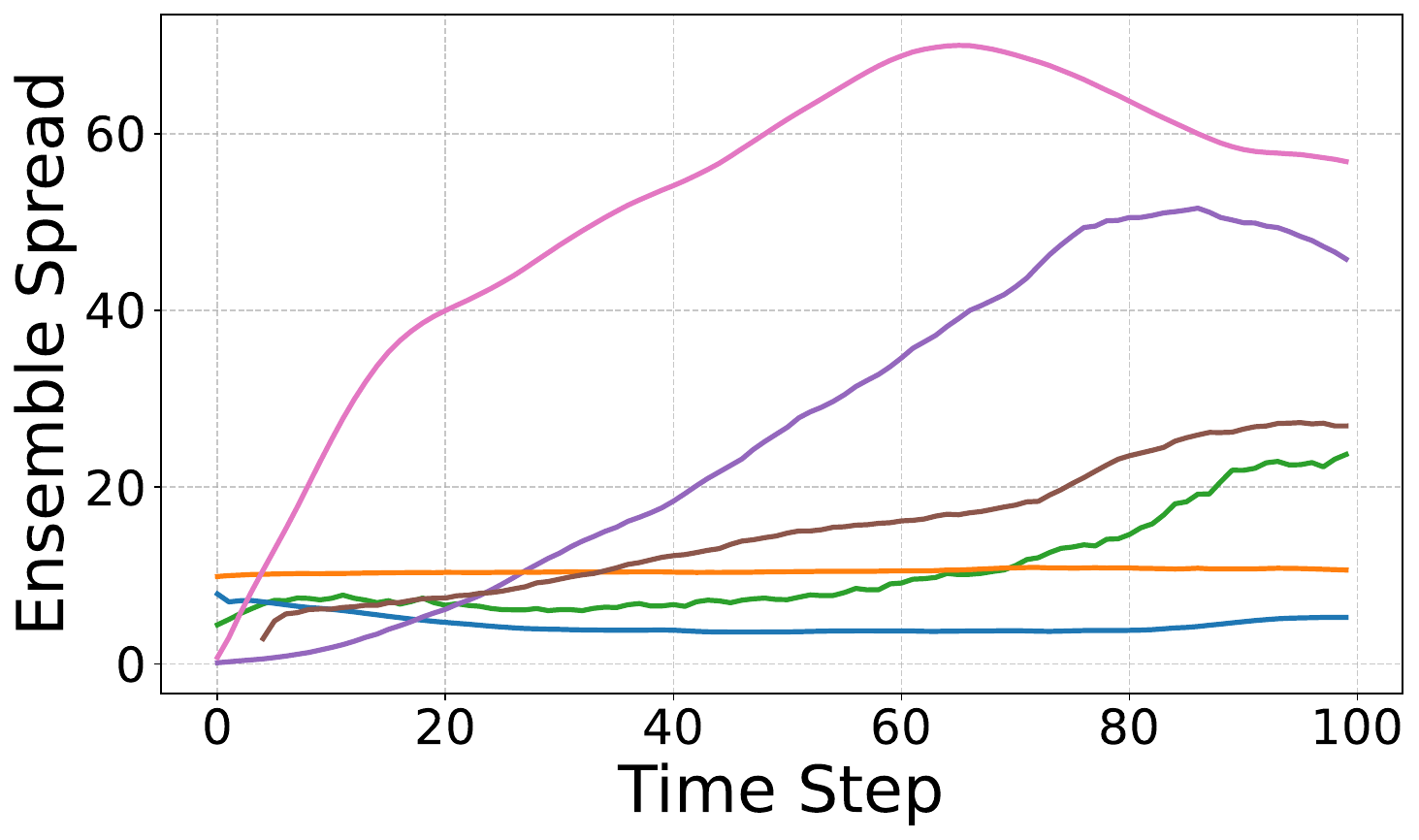}
        \caption{The spread of the ensemble at each time step.}
        \label{fig:spread_multimodal}
    \end{subfigure}
    \hfill
    \begin{subfigure}[b]{0.3\textwidth}
        \includegraphics[width=\textwidth]{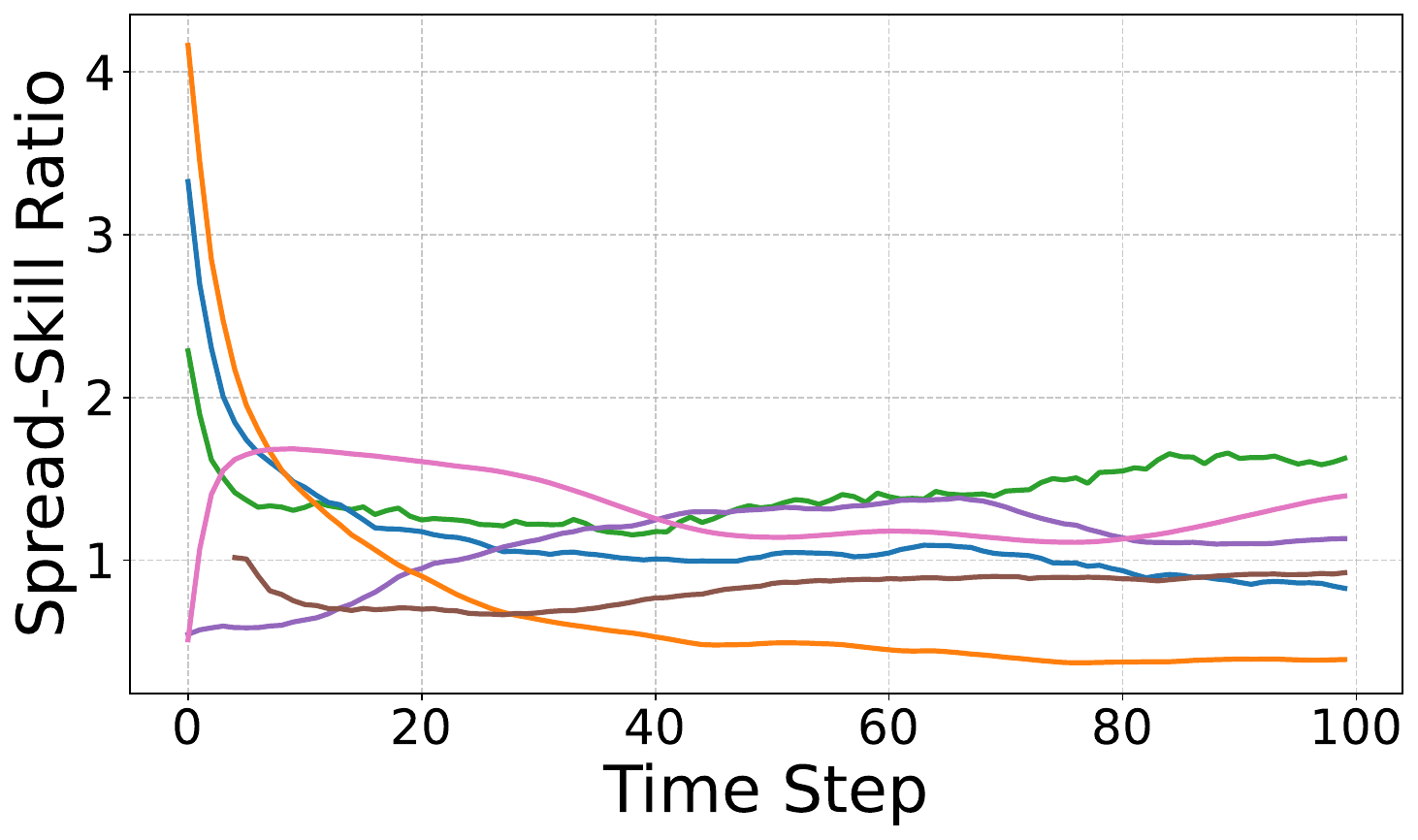}
        \caption{The spread skill ratio of the ensemble at each time step.}
        \label{fig:ssr_multimodal}
    \end{subfigure}
    \hfill
    \begin{subfigure}[b]{0.3\textwidth}
        \includegraphics[width=\textwidth]{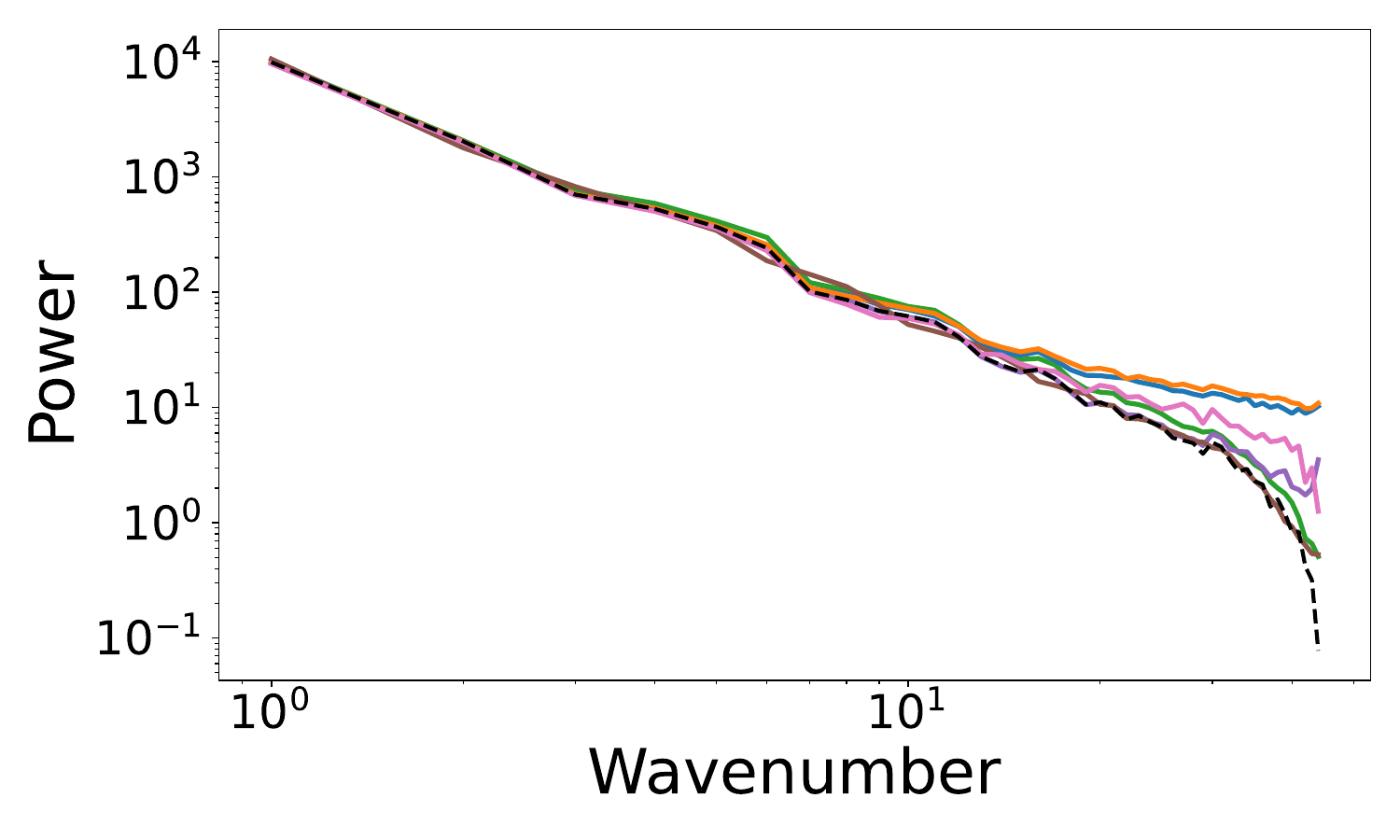}
        \caption{The energy spectra at time step 1.}
        \label{fig:spectra_1_multimodal}
    \end{subfigure}
    \hfill
    \begin{subfigure}[b]{0.3\textwidth}
        \includegraphics[width=\textwidth]{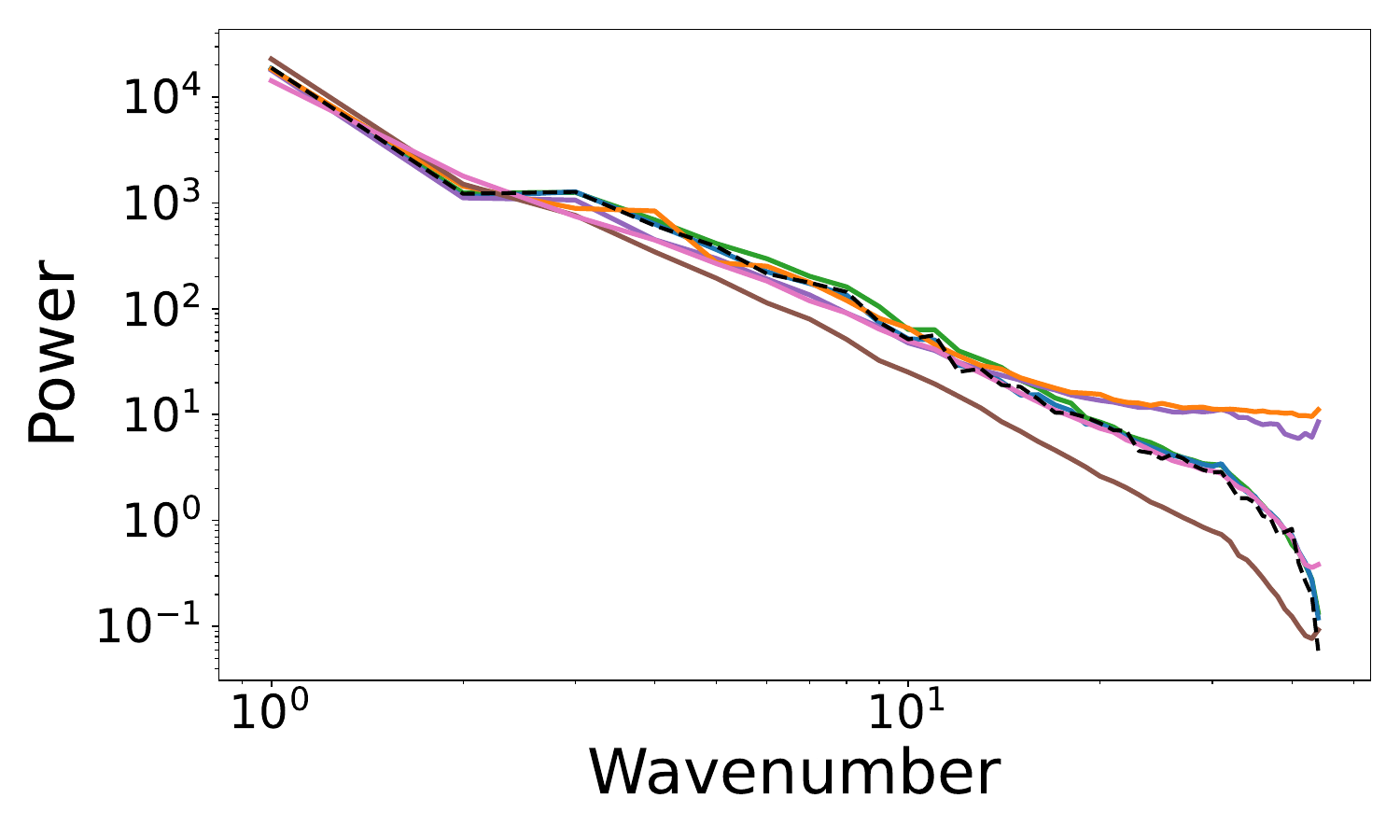}
        \caption{The energy spectra at time step 50.}
        \label{fig:spectra_50_multimodal}
    \end{subfigure}
    \hfill
    \begin{subfigure}[b]{0.3\textwidth}
        \includegraphics[width=\textwidth]{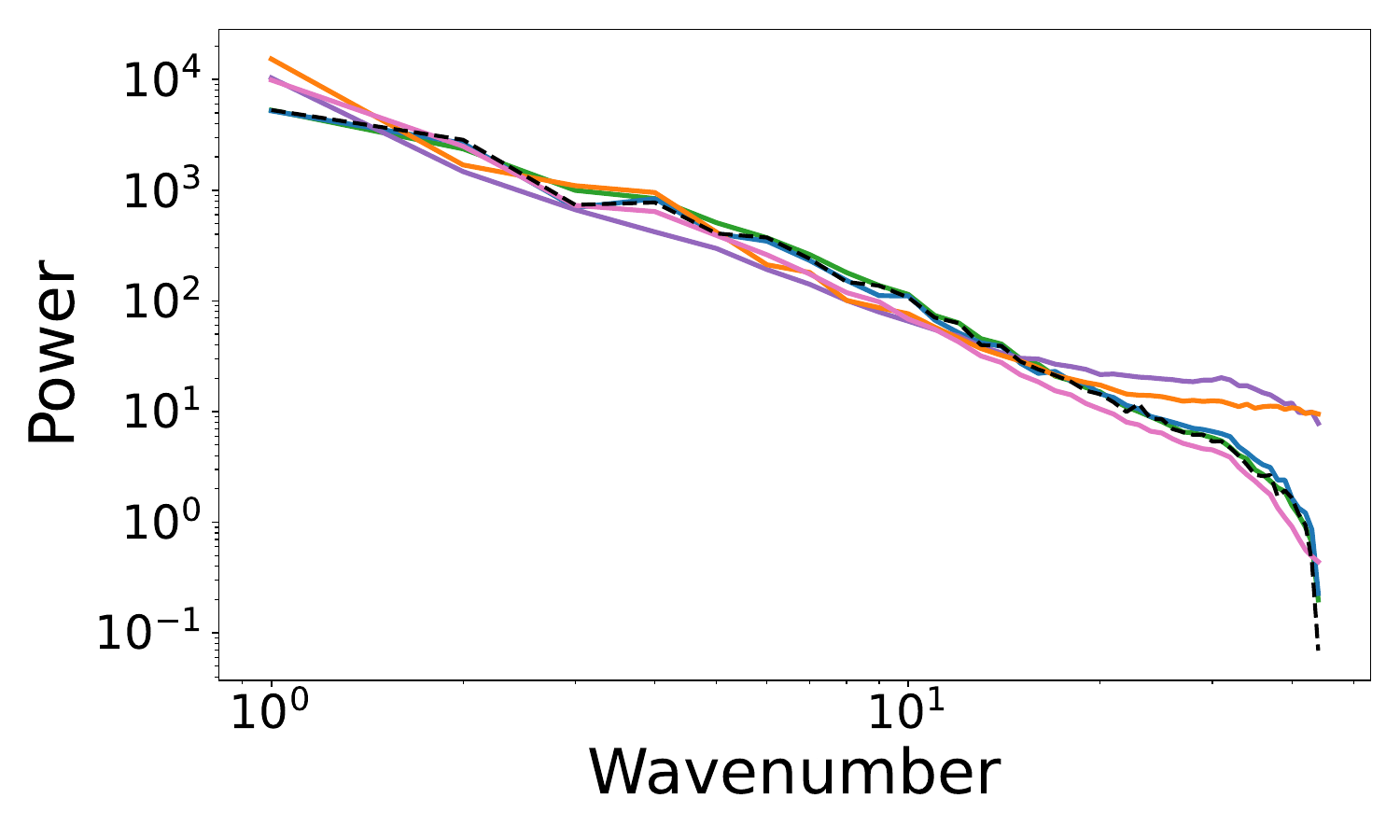}
        \caption{The energy spectra at time step 100.}
        \label{fig:spectra_100_multimodal}
    \end{subfigure}
    \caption{The results for the \texttt{Multimodal} experiment.}
    \label{fig:results_multimodal}
\end{figure}

\begin{figure}
    \centering
    \includegraphics[width=0.8\textwidth]{figures/plots/A2/legend_gt.pdf}
    \begin{subfigure}[t]{0.3\textwidth}
        \includegraphics[width=\textwidth]{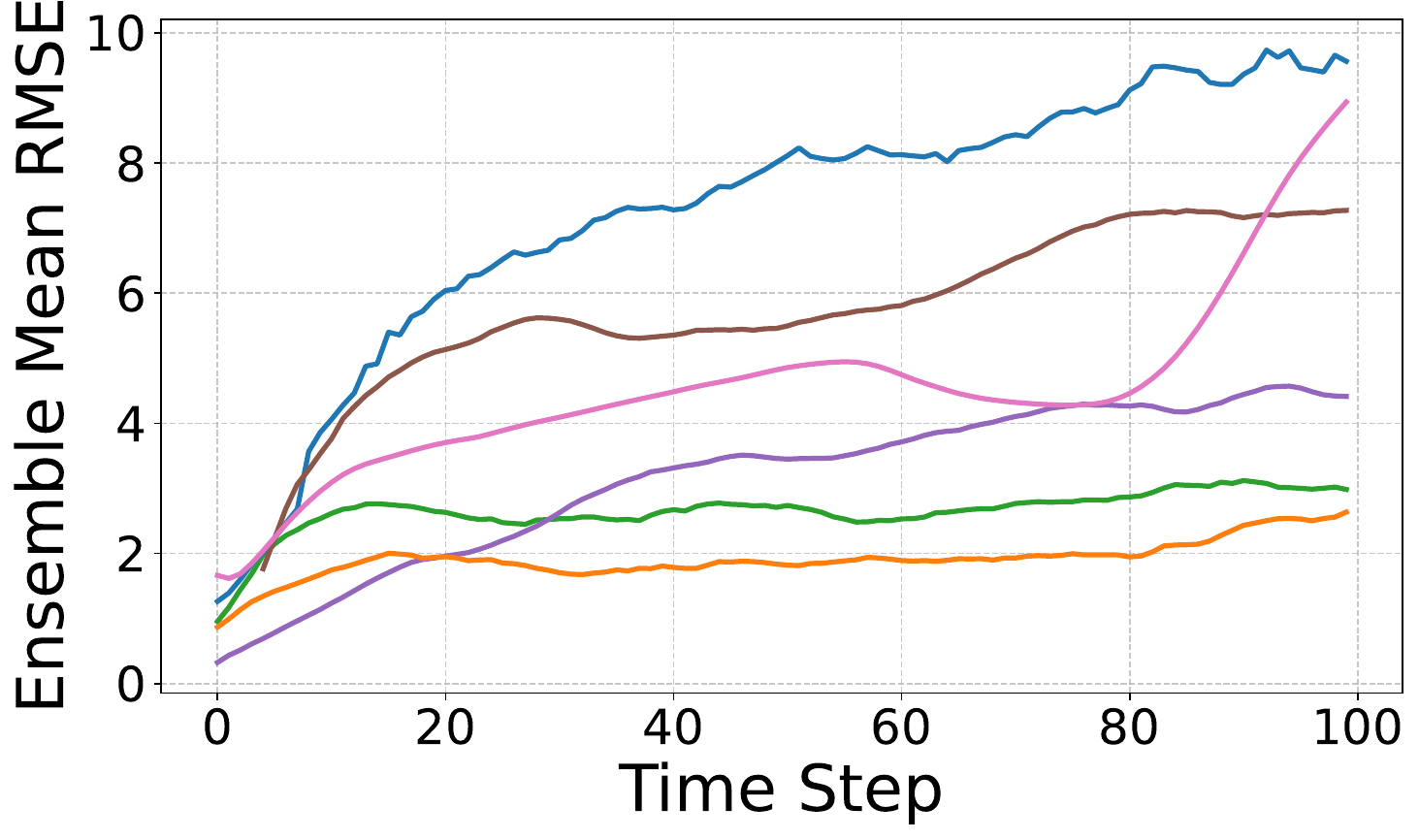}
        \caption{The RMSE of the ensemble mean of the filtering distribution at each time step.}
        \label{fig:rmse_nonlinear}
    \end{subfigure}
    \hfill
    \begin{subfigure}[t]{0.3\textwidth}
        \includegraphics[width=\textwidth]{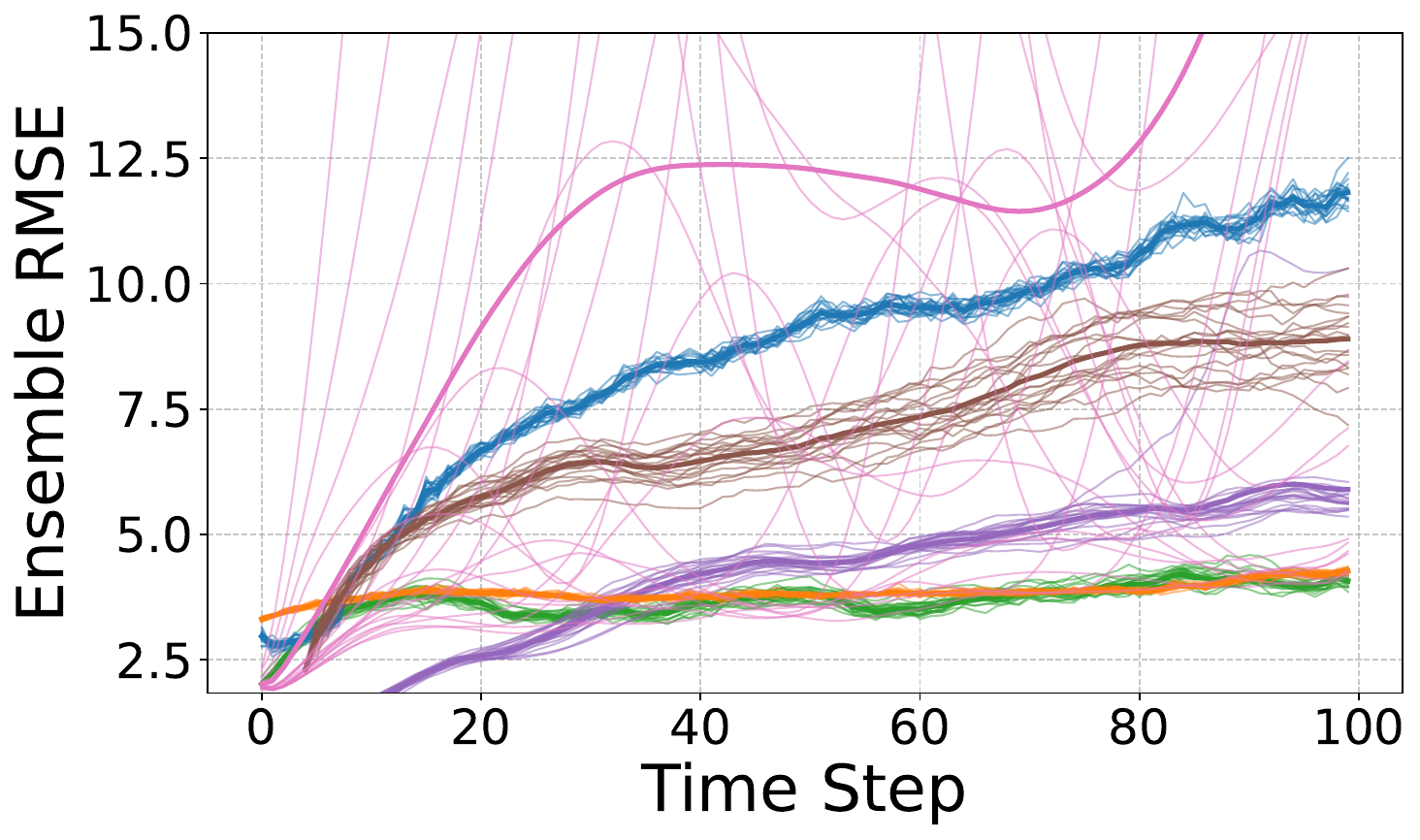}
        \caption{The RMSE for each ensemble member at each time step.}
        \label{fig:ens_rmse_nonlinear}
    \end{subfigure}
    \hfill
    \begin{subfigure}[t]{0.3\textwidth}
        \includegraphics[width=\textwidth]{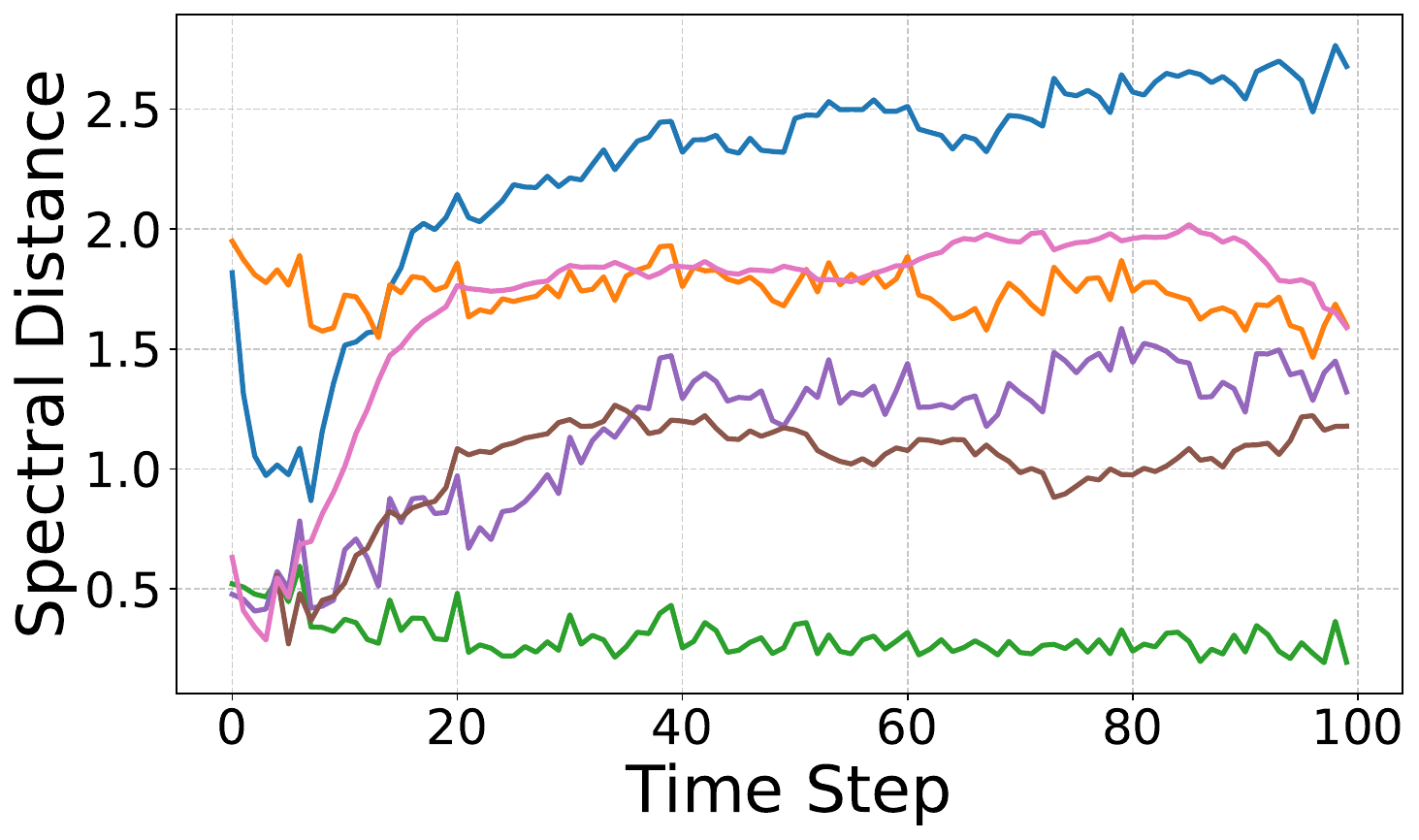}
        \caption{The spectral distance of each ensemble member compared to the ground truth and then averaged over the ensemble members.}
        \label{fig:spectra_nonlinear}
    \end{subfigure}
    \hfill
    \begin{subfigure}[b]{0.3\textwidth}
        \includegraphics[width=\textwidth]{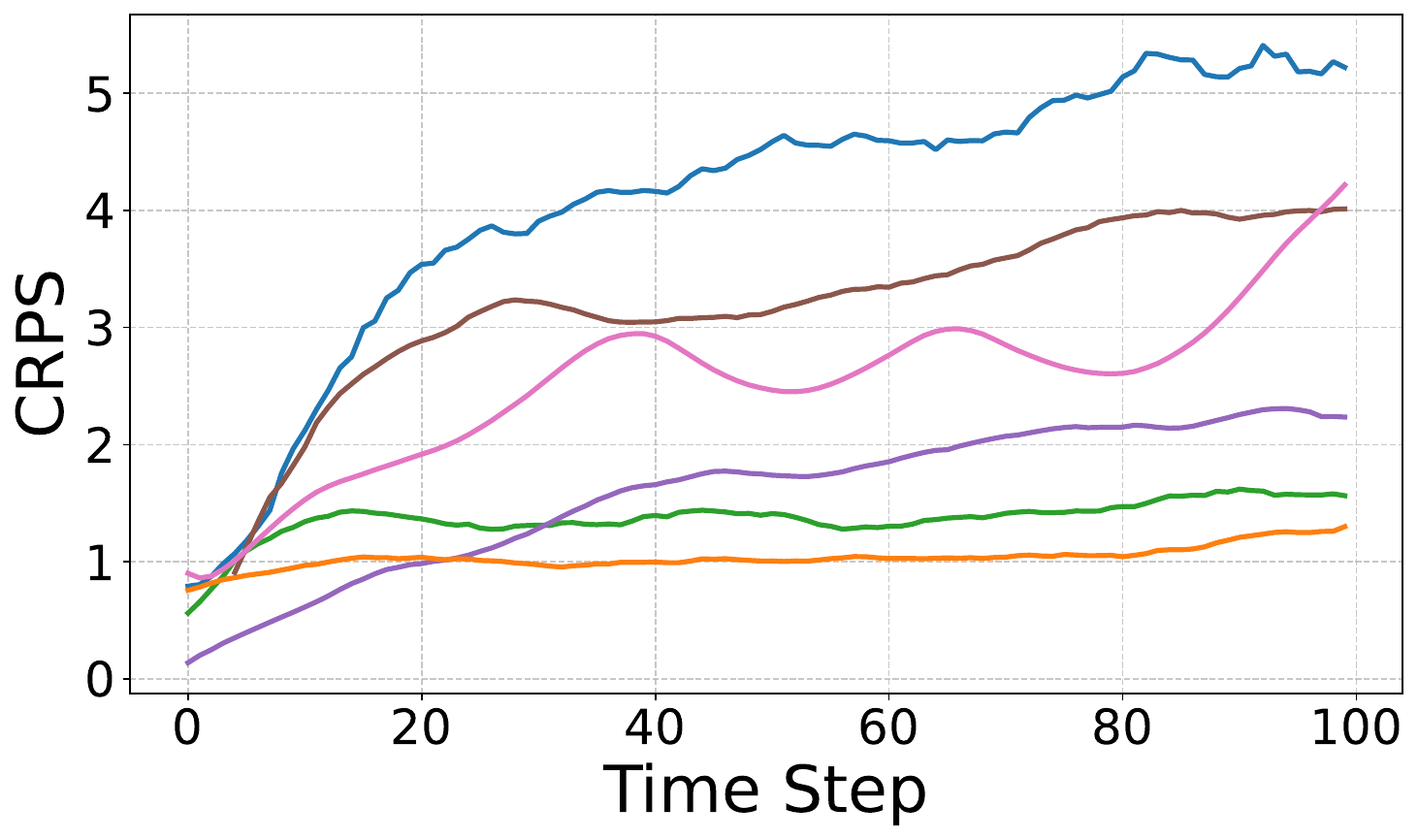}
        \caption{The CRPS of the empirical filtering distribution at each time step.}
        \label{fig:crps_nonlinear}    
    \end{subfigure}
    \hfill
    \begin{subfigure}[b]{0.3\textwidth}
        \includegraphics[width=\textwidth]{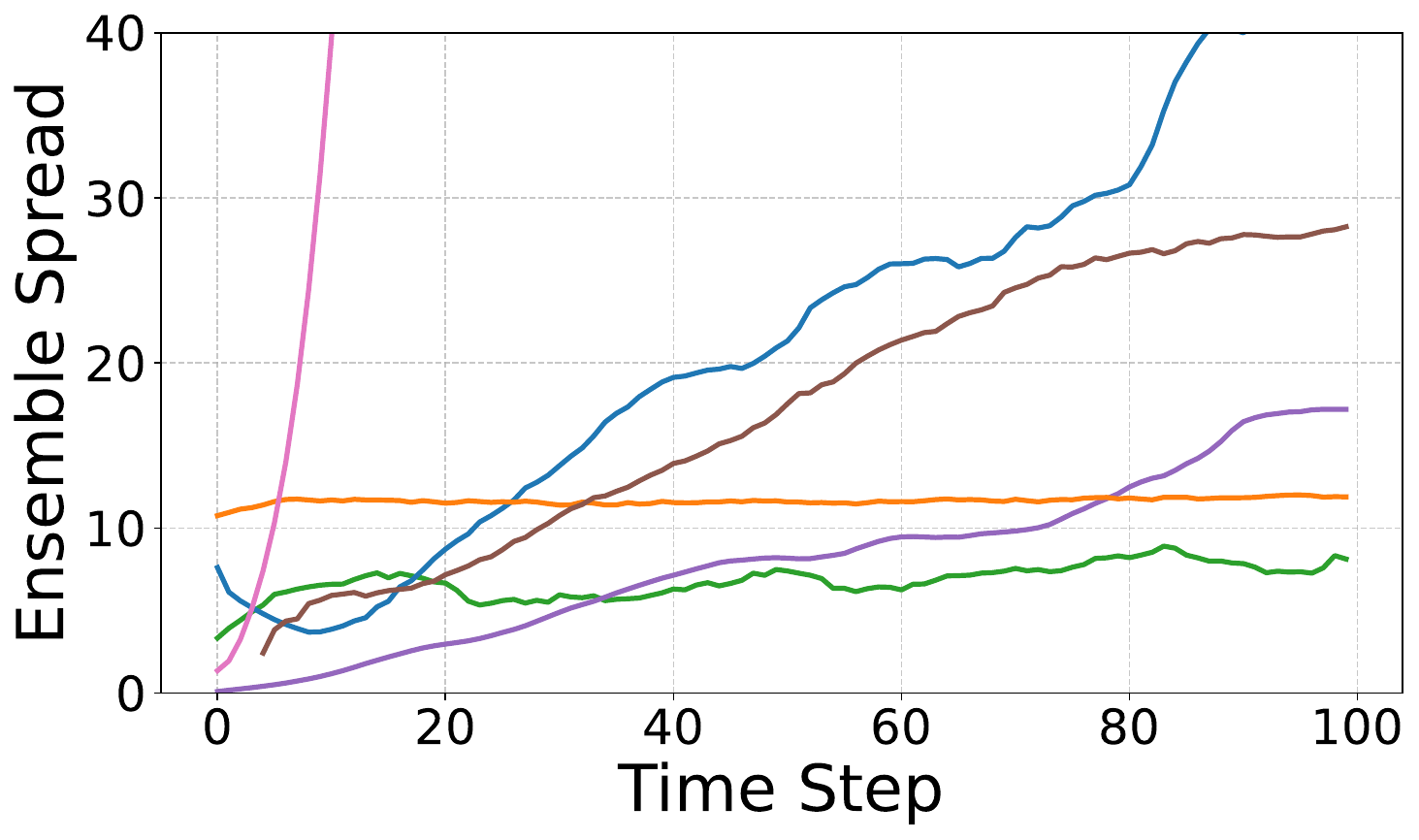}
        \caption{The spread of the ensemble at each time step.}
        \label{fig:spread_nonlinear}
    \end{subfigure}
    \hfill
    \begin{subfigure}[b]{0.3\textwidth}
        \includegraphics[width=\textwidth]{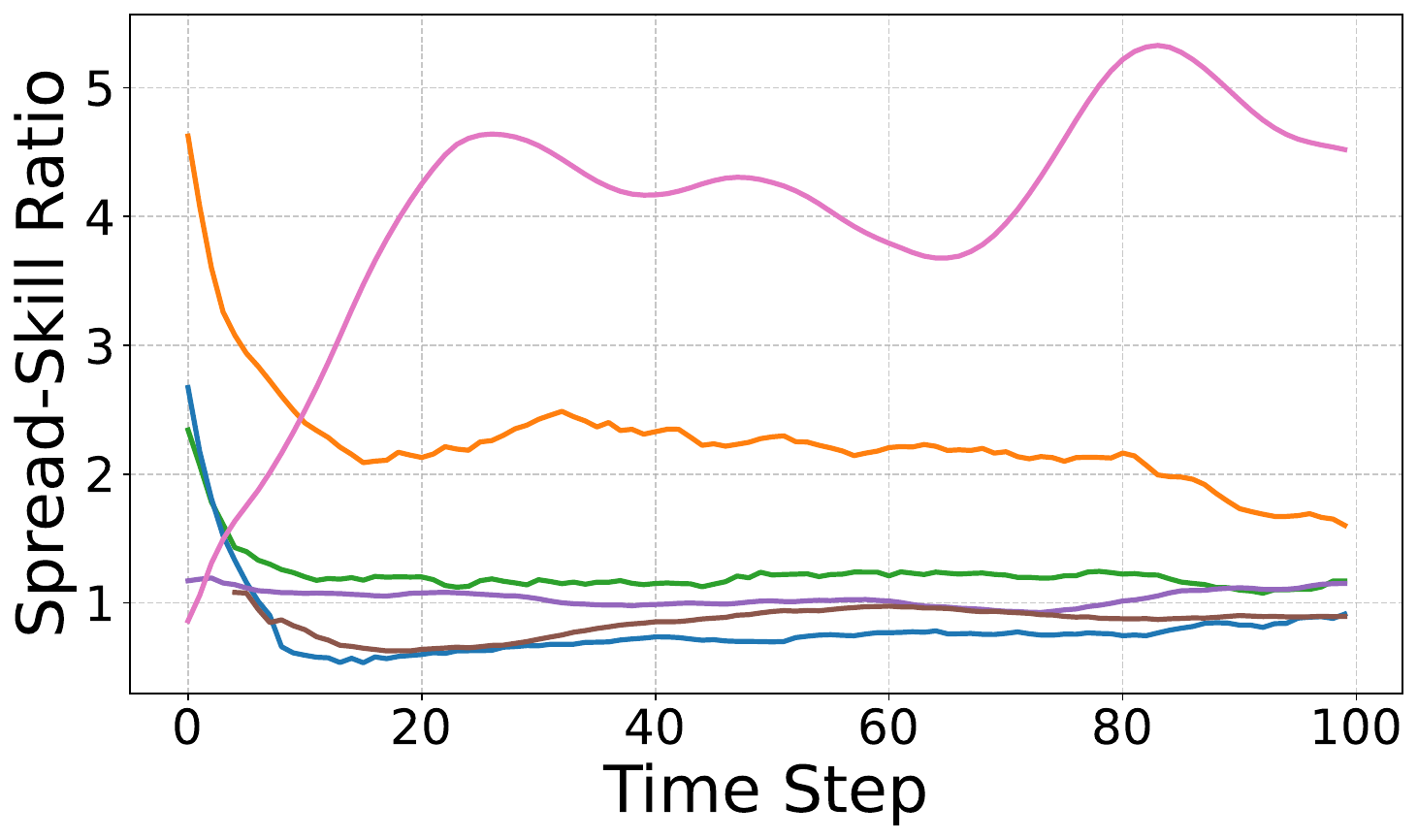}
        \caption{The spread skill ratio of the ensemble at each time step.}
        \label{fig:ssr_nonlinear}
    \end{subfigure}
    \hfill
    \begin{subfigure}[b]{0.3\textwidth}
        \includegraphics[width=\textwidth]{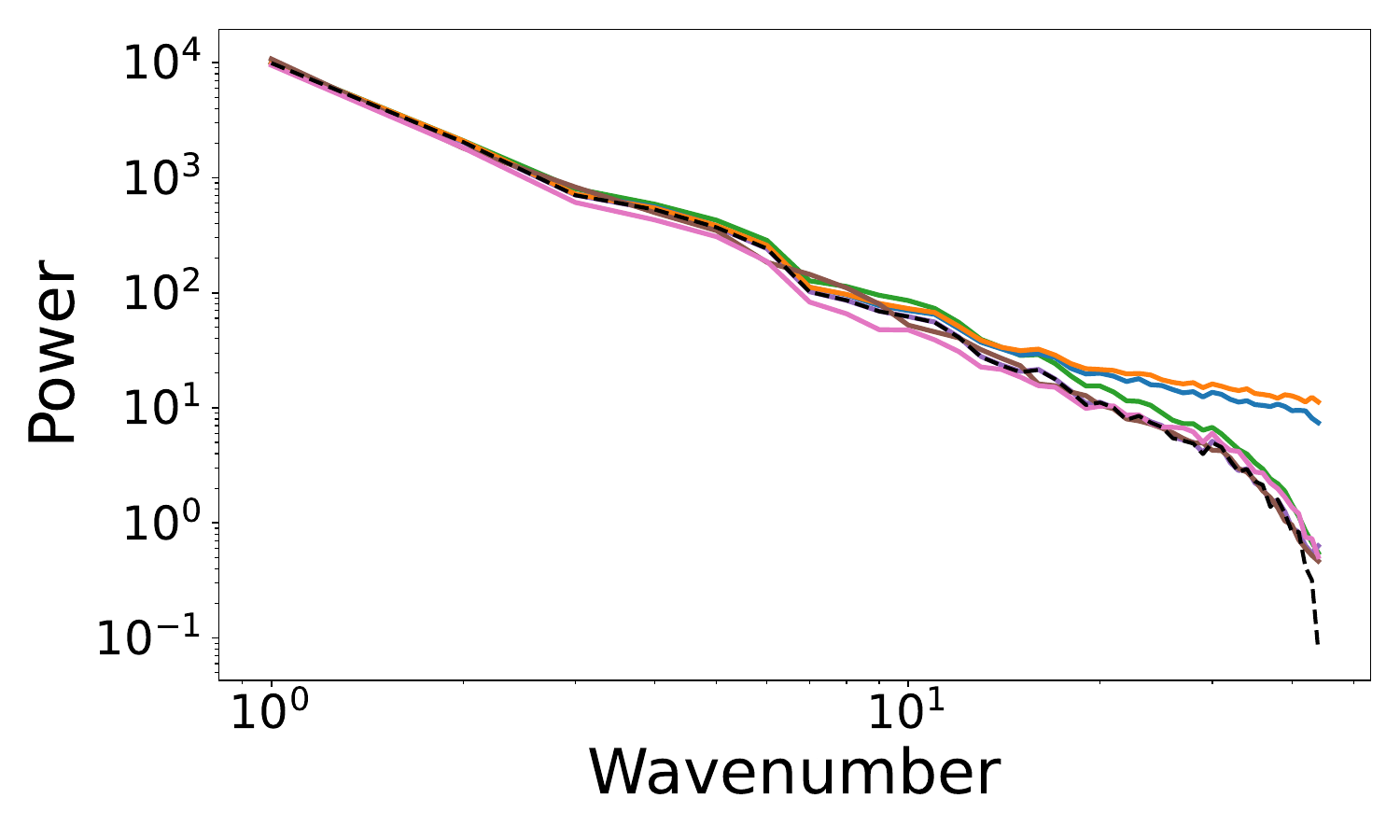}
        \caption{The energy spectra at time step 1.}
        \label{fig:spectra_1_nonlinear}
    \end{subfigure}
    \hfill
    \begin{subfigure}[b]{0.3\textwidth}
        \includegraphics[width=\textwidth]{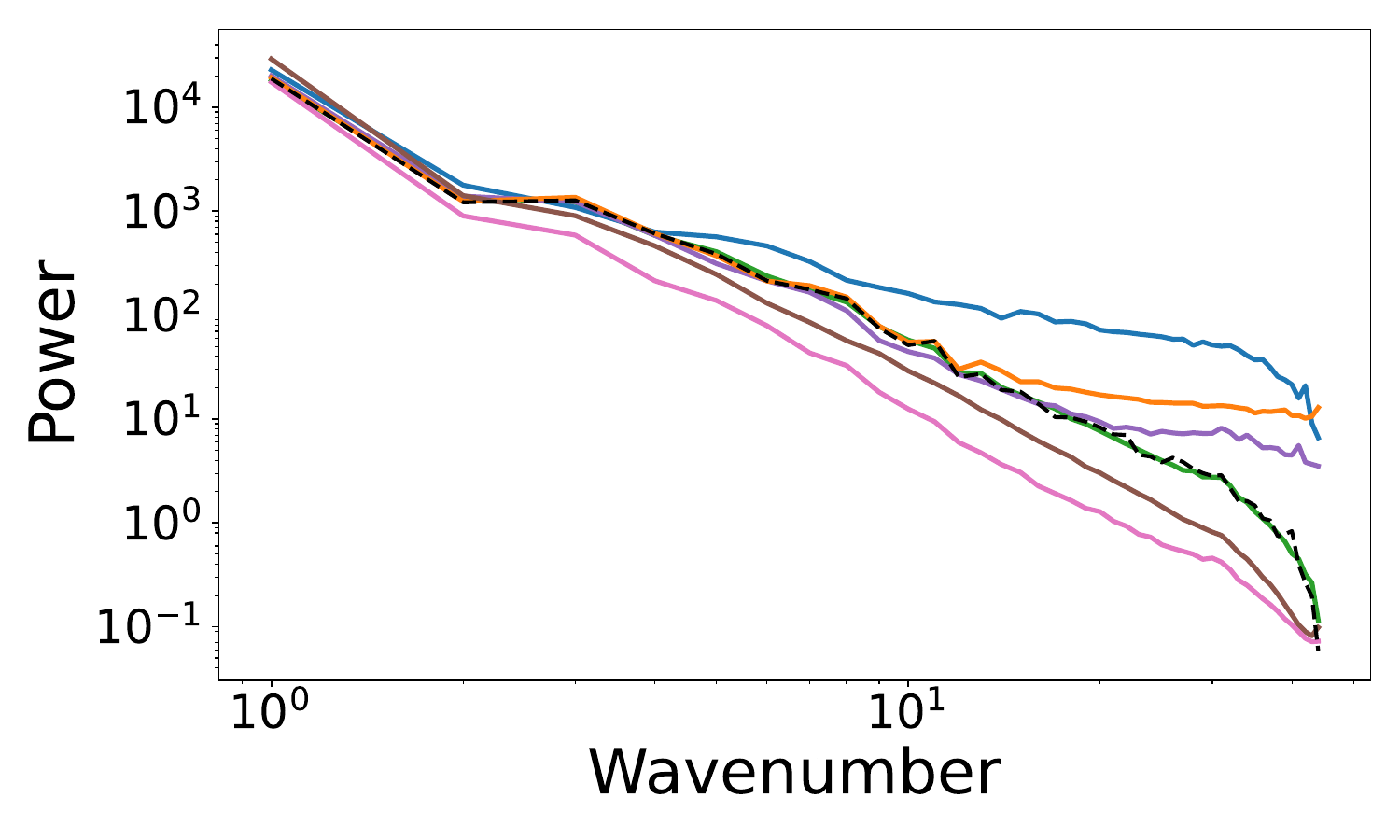}
        \caption{The energy spectra at time step 50.}
        \label{fig:spectra_50_nonlinear}
    \end{subfigure}
    \hfill
    \begin{subfigure}[b]{0.3\textwidth}
        \includegraphics[width=\textwidth]{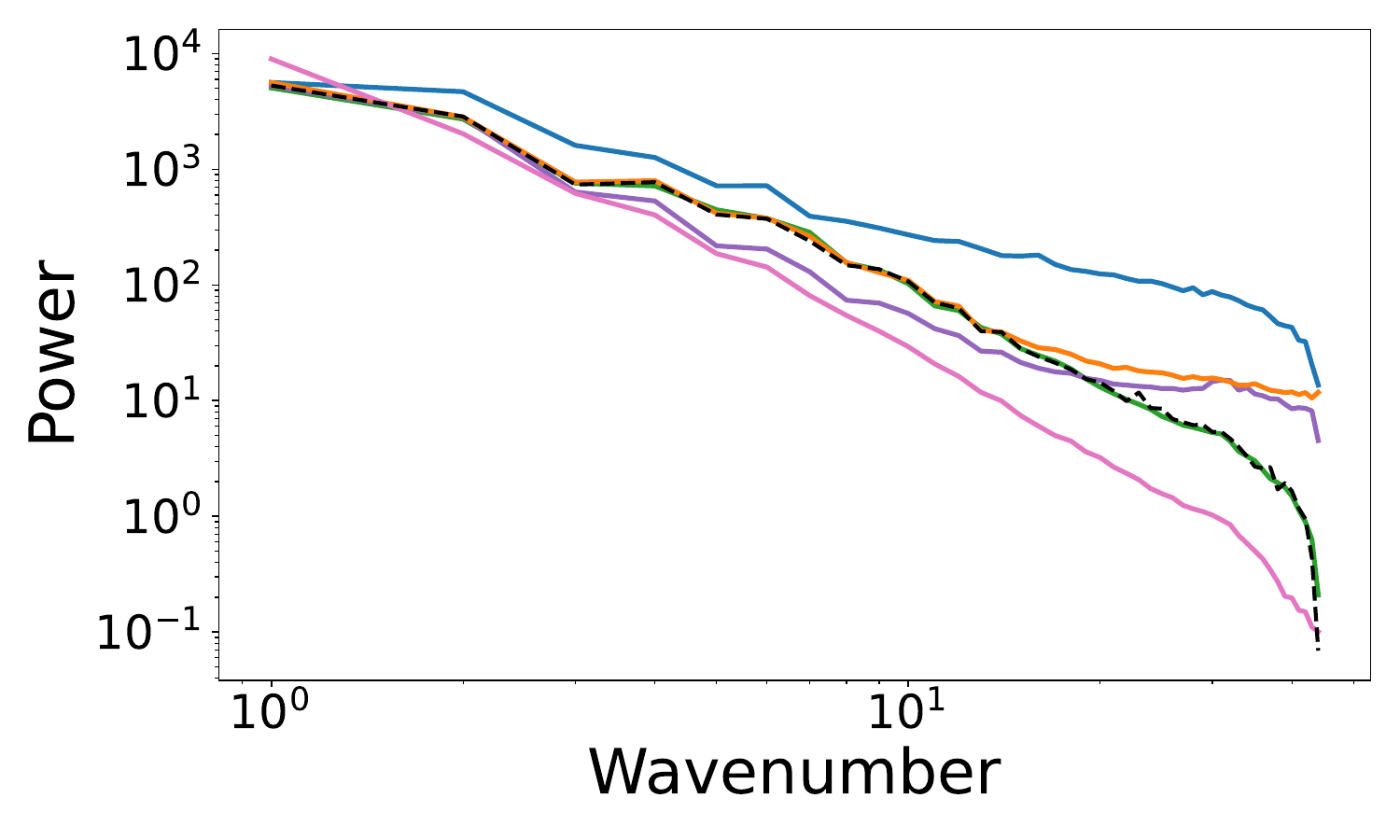}
        \caption{The energy spectra at time step 100.}
        \label{fig:spectra_100_nonlinear}
    \end{subfigure}
    \caption{The results for the \texttt{Saturating} experiment.}
    \label{fig:results_nonlinear}
\end{figure}

\begin{figure}
    \centering
    \includegraphics[width=0.8\textwidth]{figures/plots/A5_12h/legend_gt.pdf}
    \begin{subfigure}[t]{0.3\textwidth}
        \includegraphics[width=\textwidth]{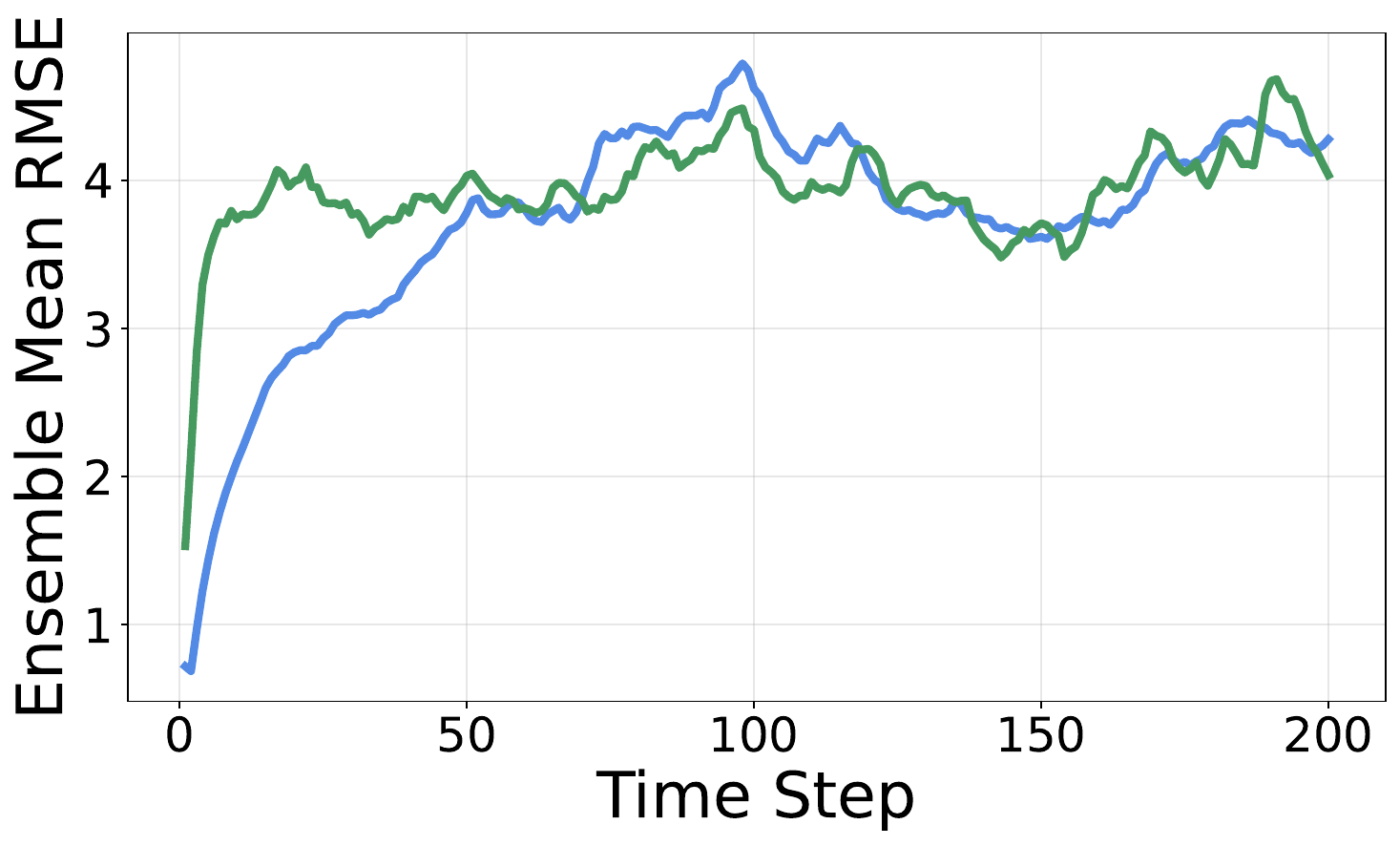}
        \caption{The RMSE of the ensemble mean of the filtering distribution at each time step.}
        \label{fig:rmse_highdim}
    \end{subfigure}
    \hfill
    \begin{subfigure}[t]{0.3\textwidth}
        \includegraphics[width=\textwidth]{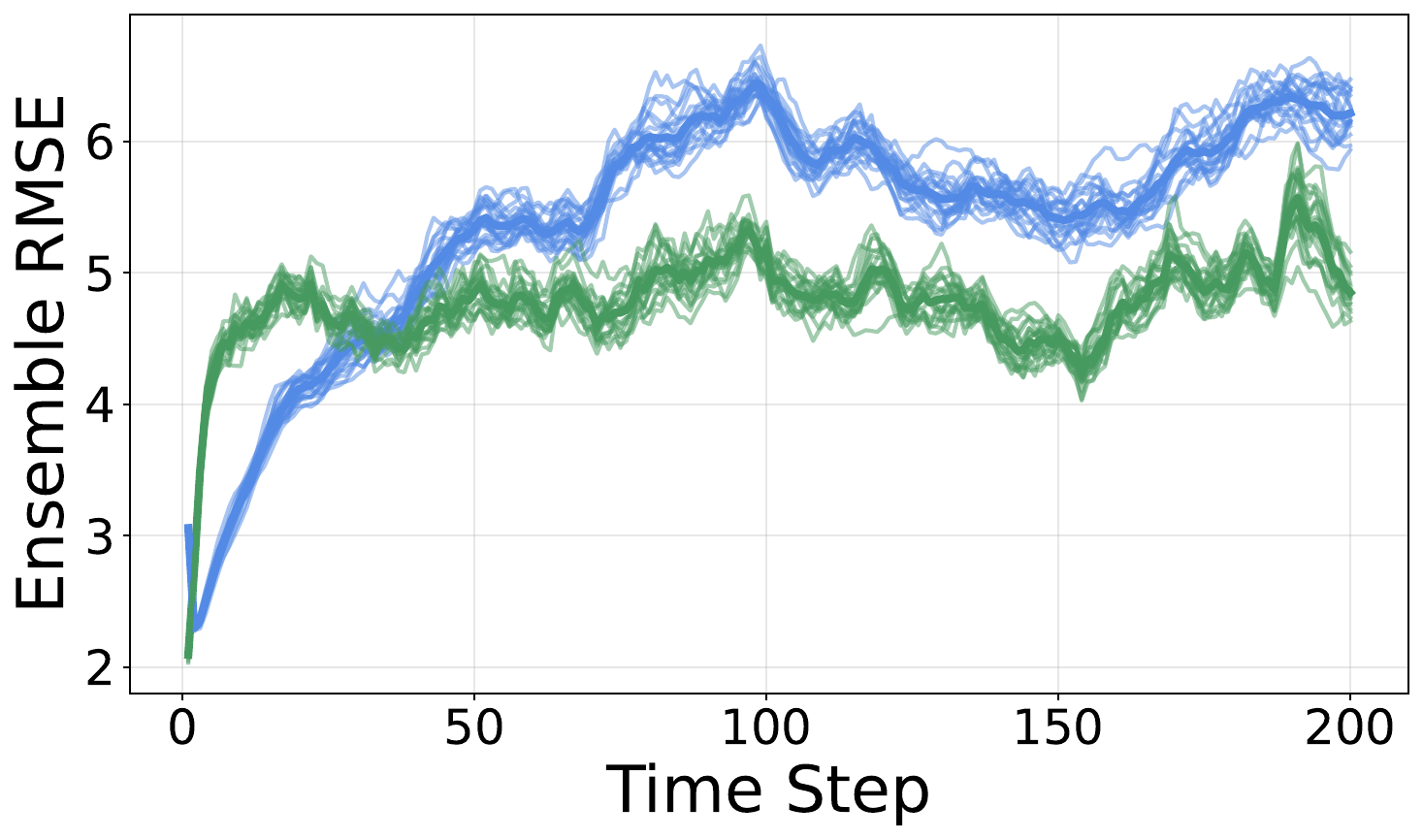}
        \caption{The RMSE for each ensemble member at each time step.}
        \label{fig:ens_rmse_highdim}
    \end{subfigure}
    \hfill
    \begin{subfigure}[t]{0.3\textwidth}
        \includegraphics[width=\textwidth]{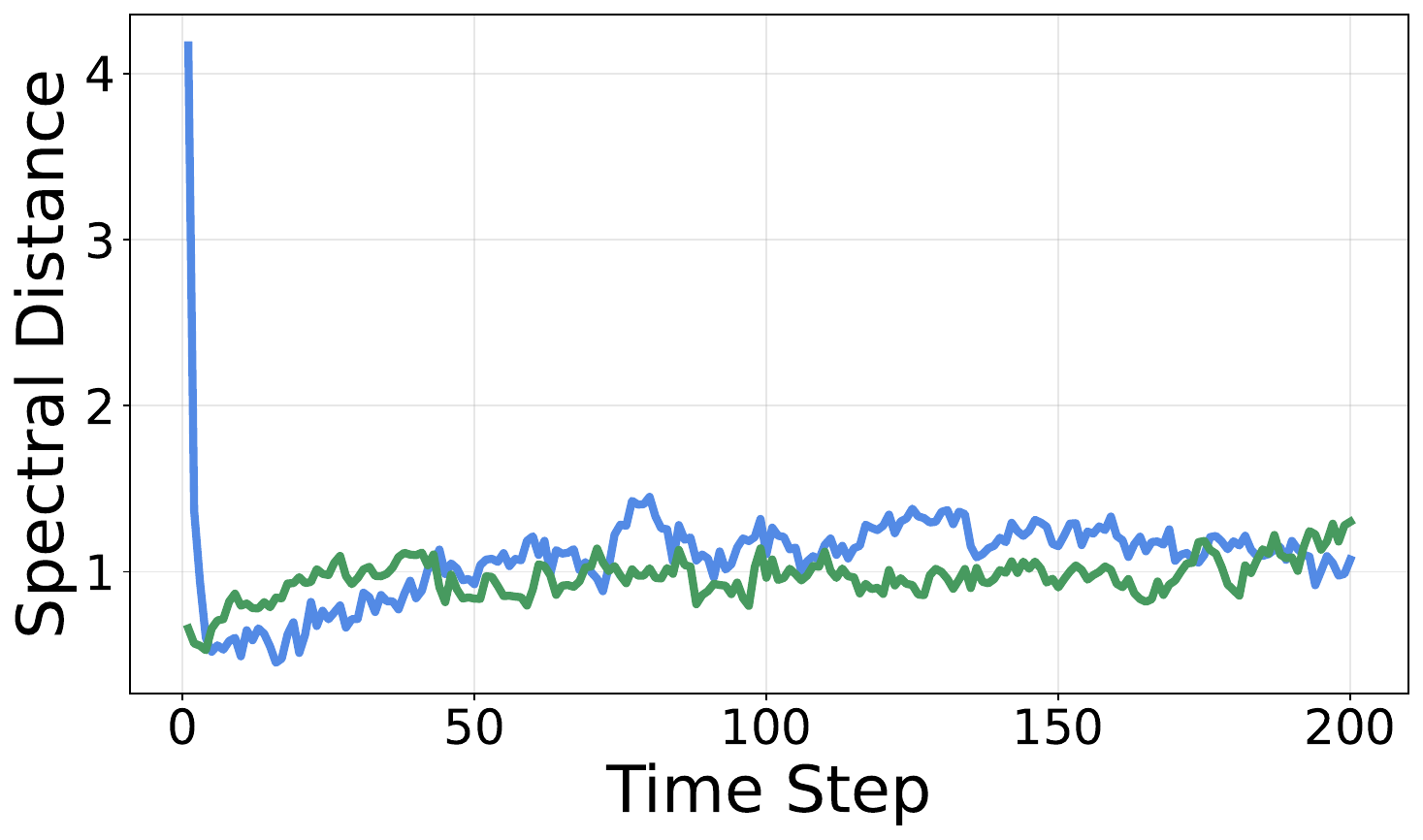}
        \caption{The spectral distance of each ensemble member compared to the ground truth and then averaged over the ensemble members.}
        \label{fig:spectra_highdim}
    \end{subfigure}
    \hfill
    \begin{subfigure}[b]{0.3\textwidth}
        \includegraphics[width=\textwidth]{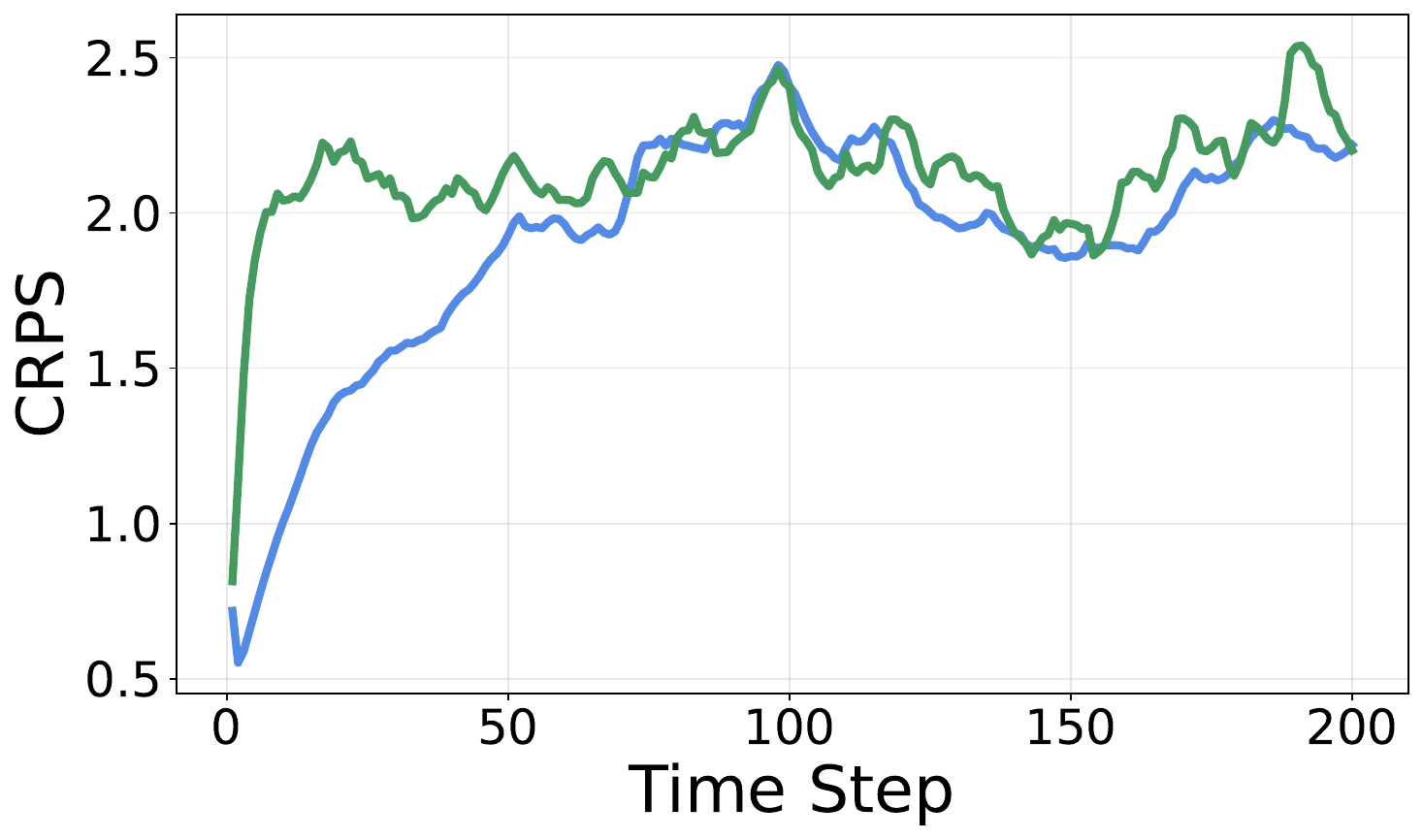}
        \caption{The CRPS of the empirical filtering distribution at each time step.}
        \label{fig:crps_highdim}    
    \end{subfigure}
    \hfill
    \begin{subfigure}[b]{0.3\textwidth}
        \includegraphics[width=\textwidth]{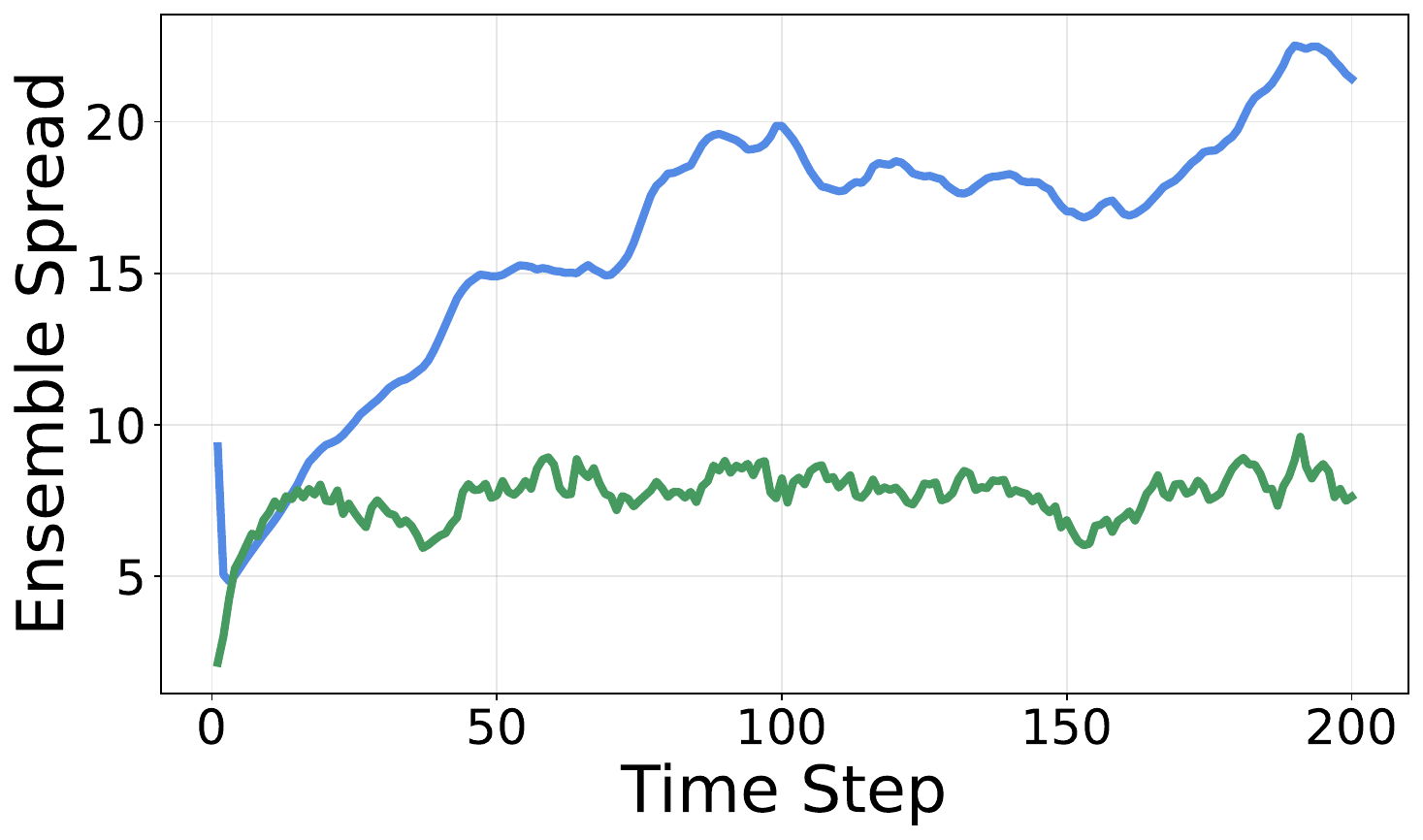}
        \caption{The spread of the ensemble at each time step.}
        \label{fig:spread_highdim}
    \end{subfigure}
    \hfill
    \begin{subfigure}[b]{0.3\textwidth}
        \includegraphics[width=\textwidth]{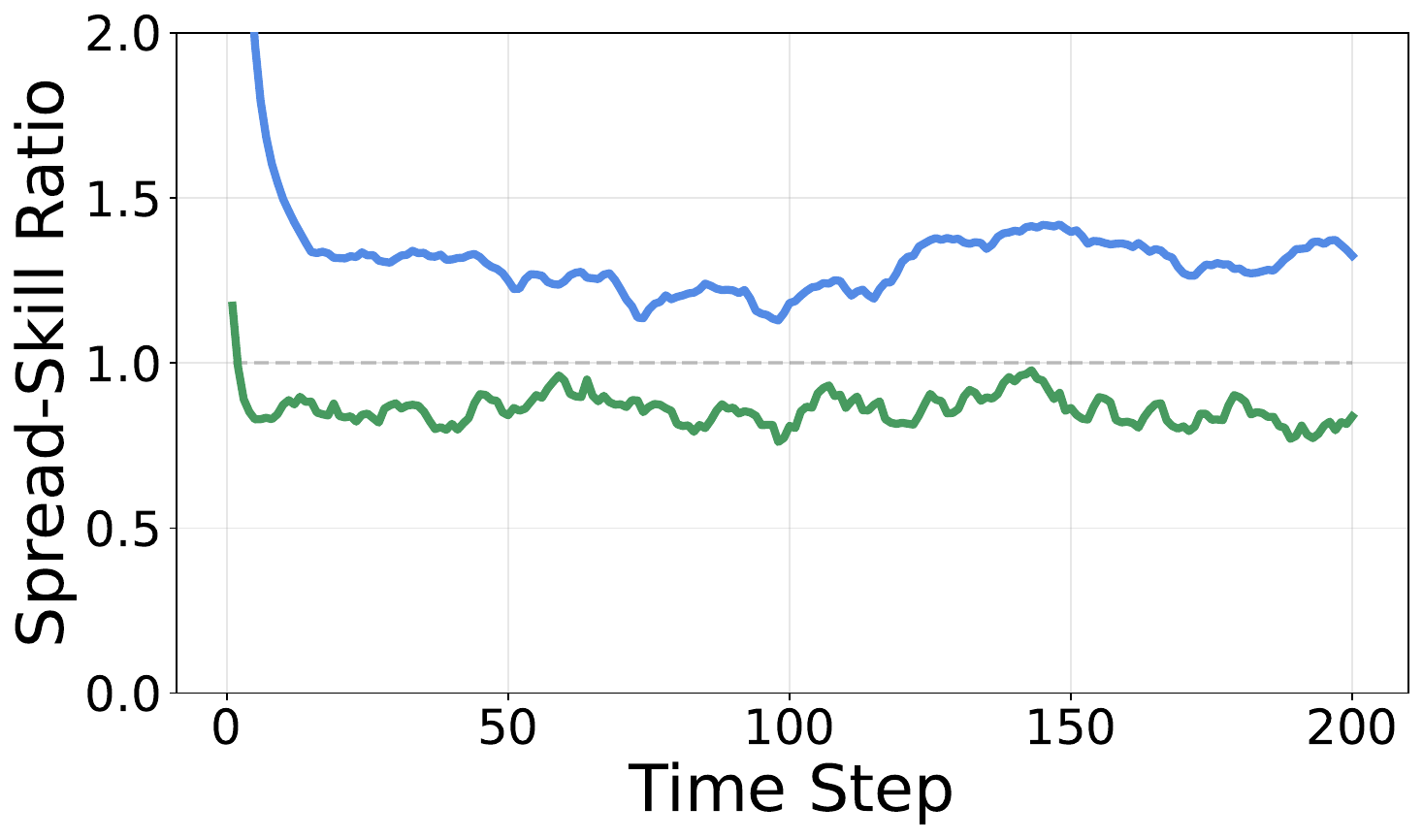}
        \caption{The spread skill ratio of the ensemble at each time step.}
        \label{fig:ssr_highdim}
    \end{subfigure}
    \hfill
    \begin{subfigure}[b]{0.3\textwidth}
        \includegraphics[width=\textwidth]{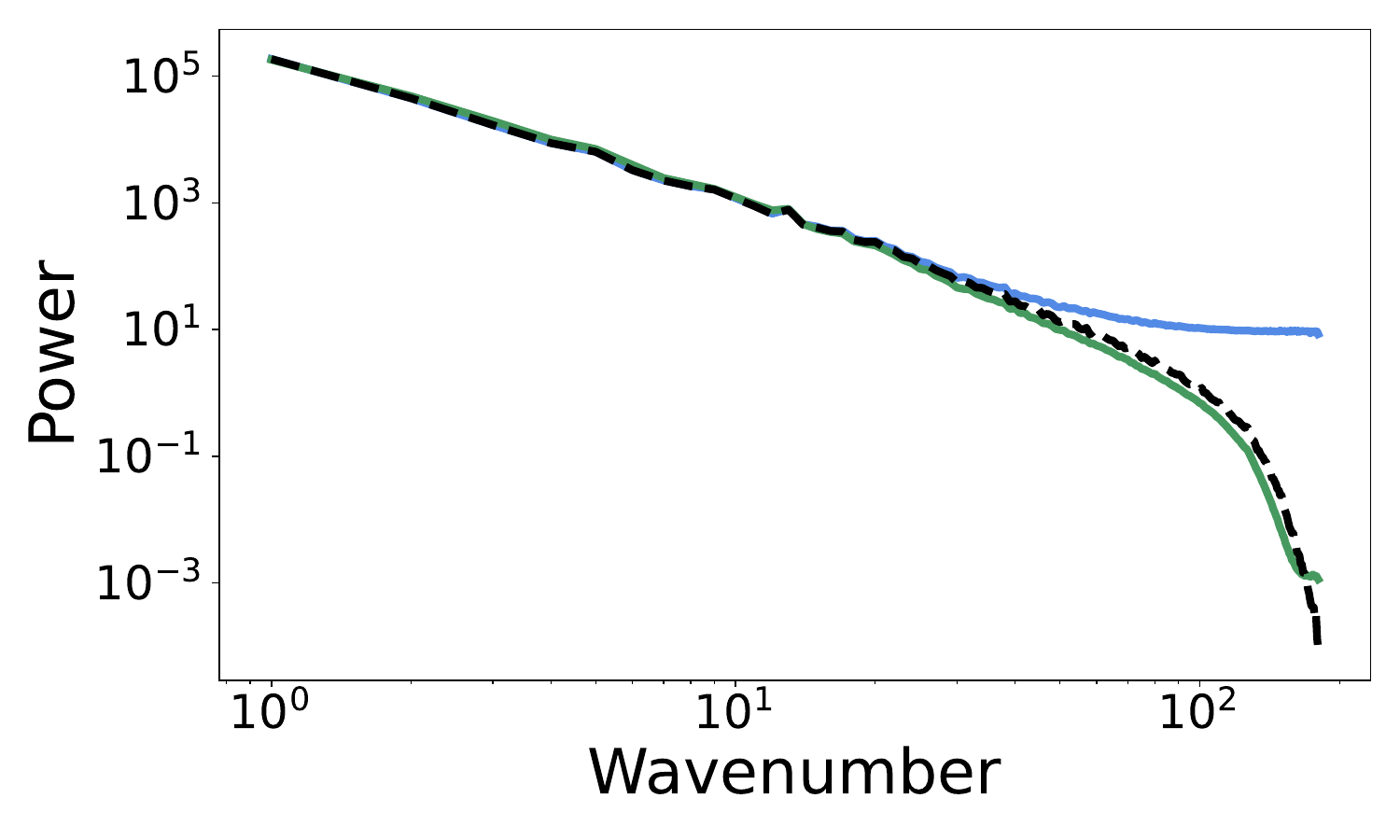}
        \caption{The energy spectra at time step 1.}
        \label{fig:spectra_1_highdim}
    \end{subfigure}
    \hfill
    \begin{subfigure}[b]{0.3\textwidth}
        \includegraphics[width=\textwidth]{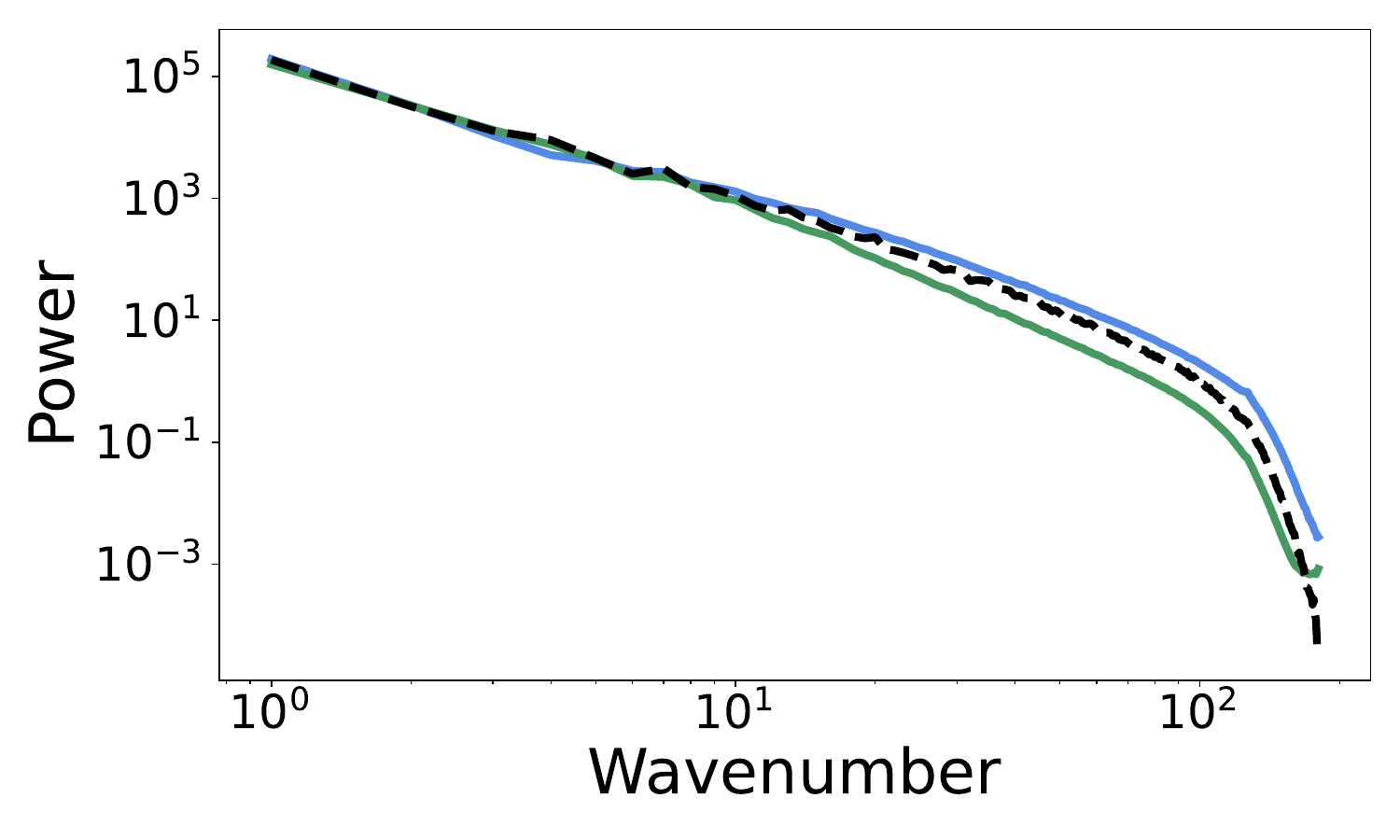}
        \caption{The energy spectra at time step 100.}
        \label{fig:spectra_50_highdim}
    \end{subfigure}
    \hfill
    \begin{subfigure}[b]{0.3\textwidth}
        \includegraphics[width=\textwidth]{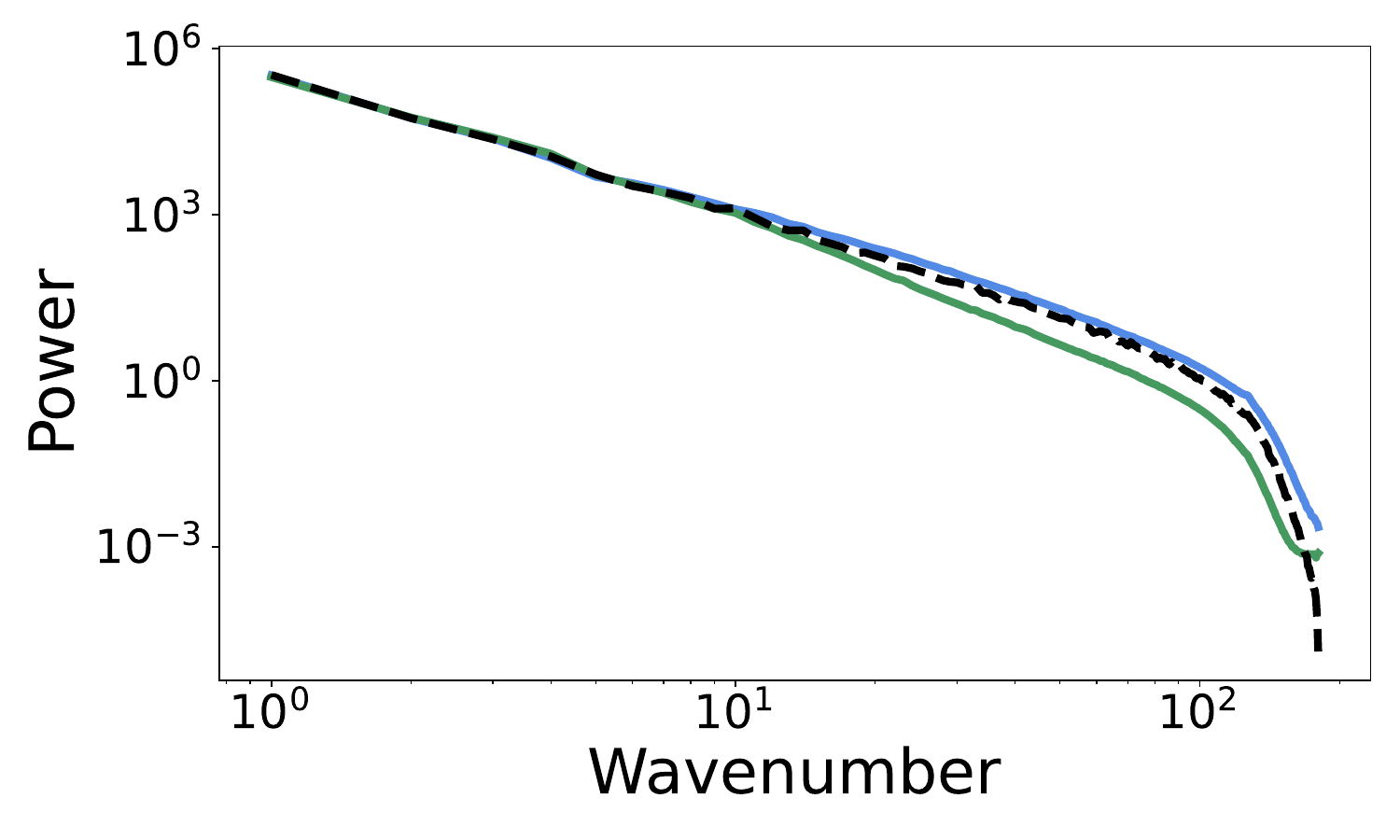}
        \caption{The energy spectra at time step 200.}
        \label{fig:spectra_100_highdim}
    \end{subfigure}
    \caption{The results for the \texttt{High-dim.} experiment.}
    \label{fig:results_highdim}
\end{figure}

\begin{figure}
    \centering
    \includegraphics[width=0.8\textwidth]{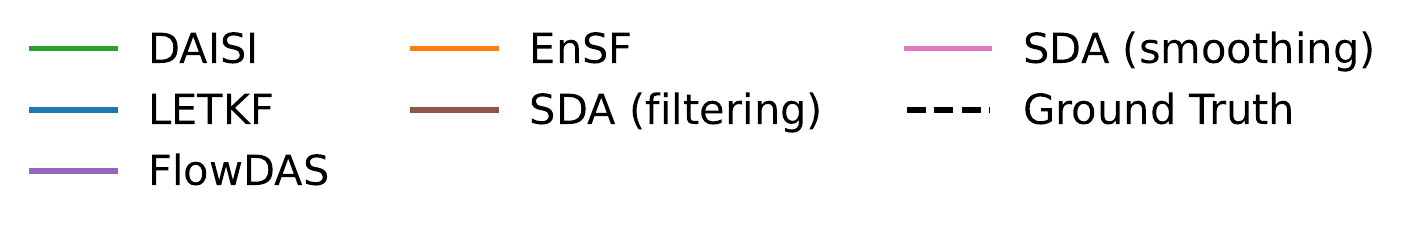}
    \begin{subfigure}[t]{0.3\textwidth}
        \includegraphics[width=\textwidth]{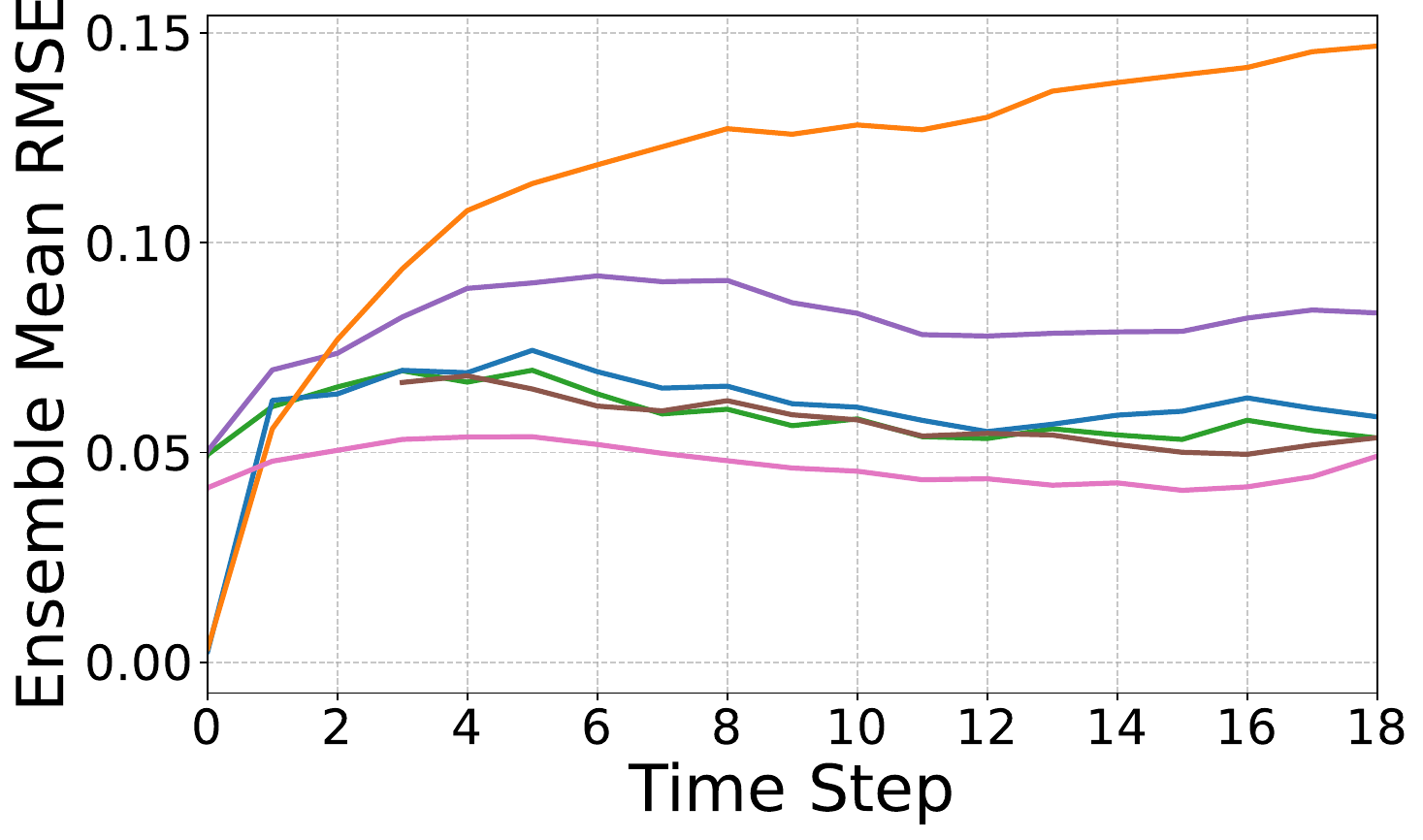}
        \caption{The RMSE of the ensemble mean of the filtering distribution at each time step.}
        \label{fig:rmse_highdim}
    \end{subfigure}
    \hfill
    \begin{subfigure}[t]{0.3\textwidth}
        \includegraphics[width=\textwidth]{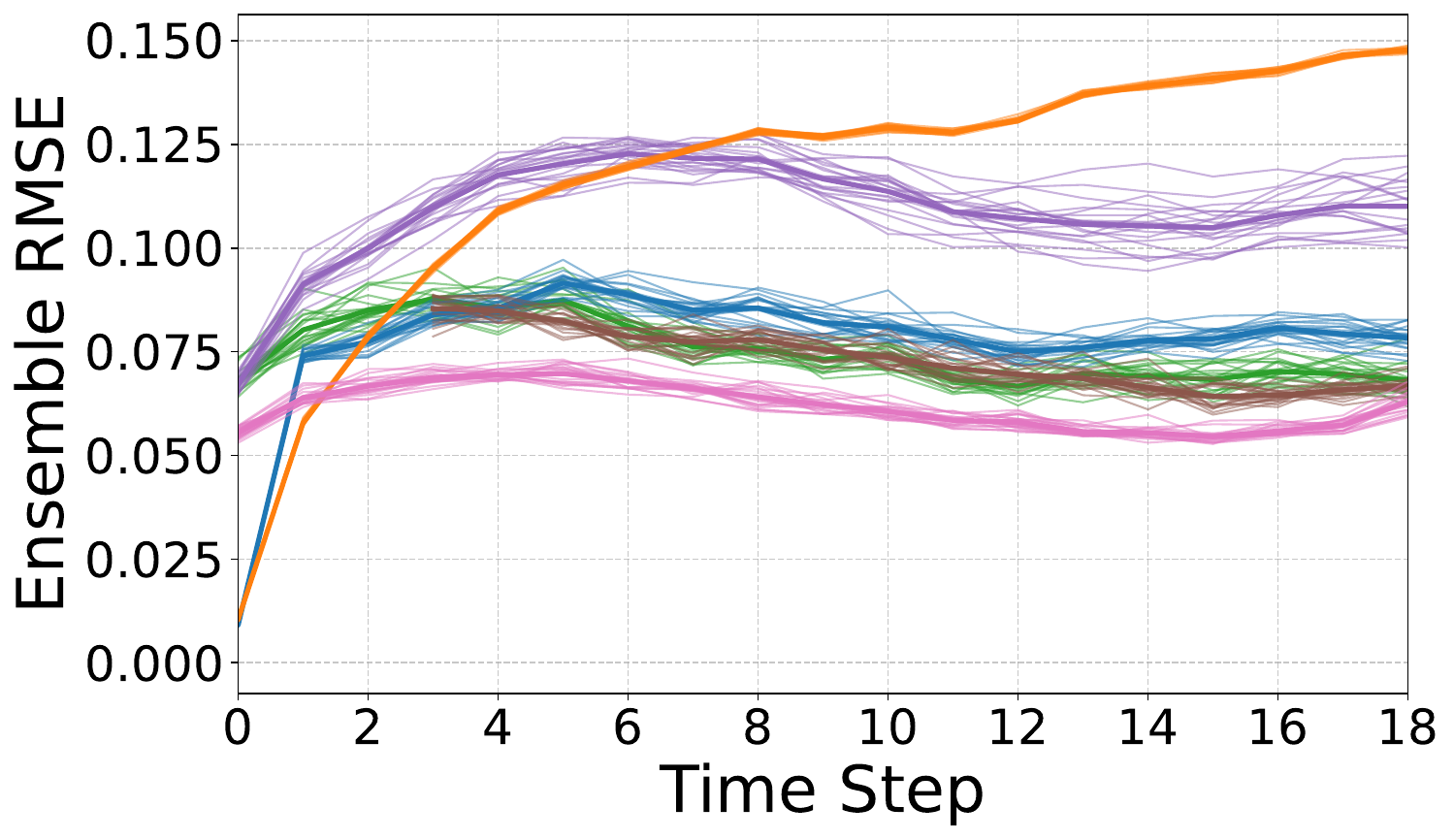}
        \caption{The RMSE for each ensemble member at each time step.}
        \label{fig:ens_rmse_highdim}
    \end{subfigure}
    \hfill
    \begin{subfigure}[t]{0.3\textwidth}
        \includegraphics[width=\textwidth]{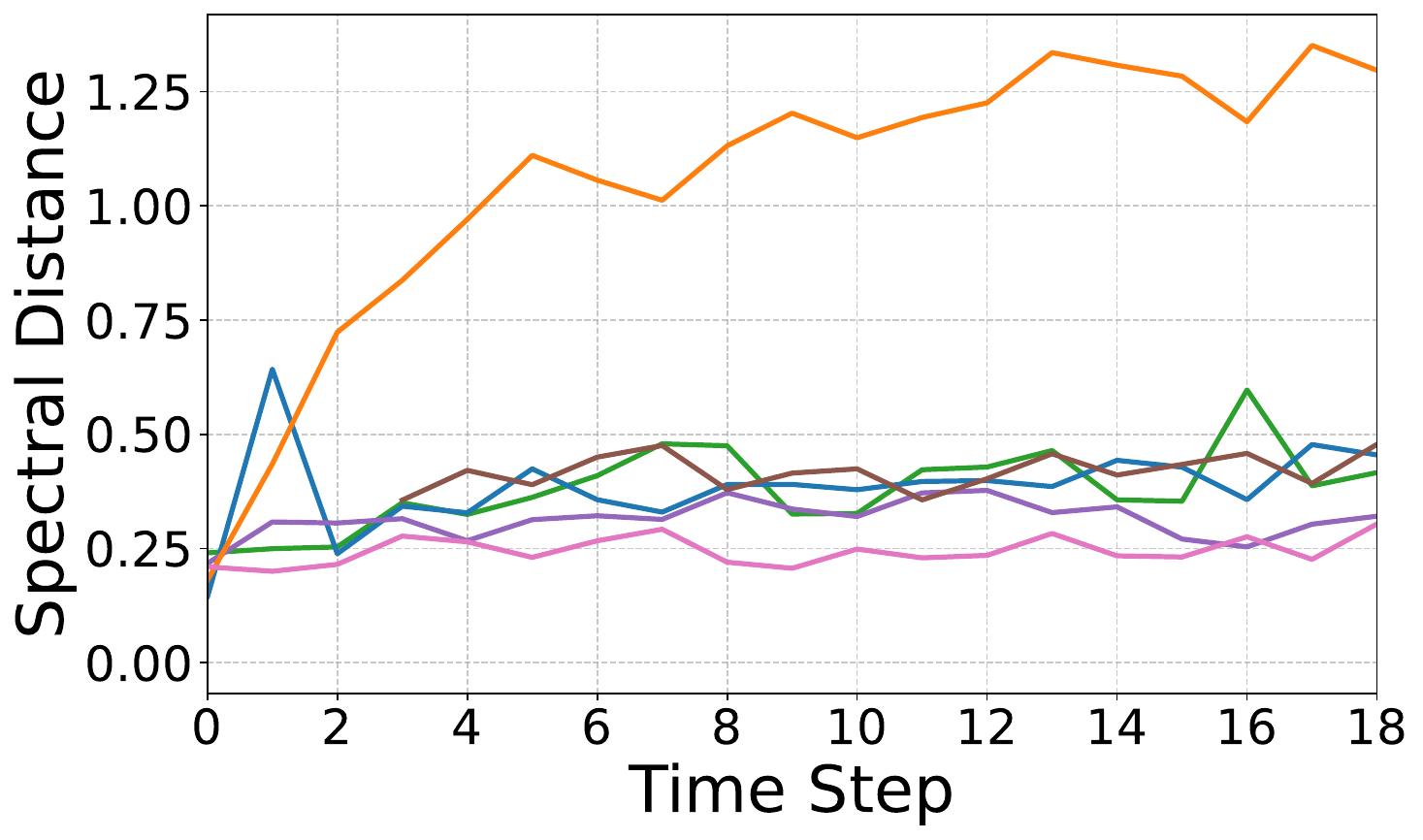}
        \caption{The spectral distance of each ensemble member compared to the ground truth and then averaged over the ensemble members.}
        \label{fig:spectra_highdim}
    \end{subfigure}
    \hfill
    \begin{subfigure}[b]{0.3\textwidth}
        \includegraphics[width=\textwidth]{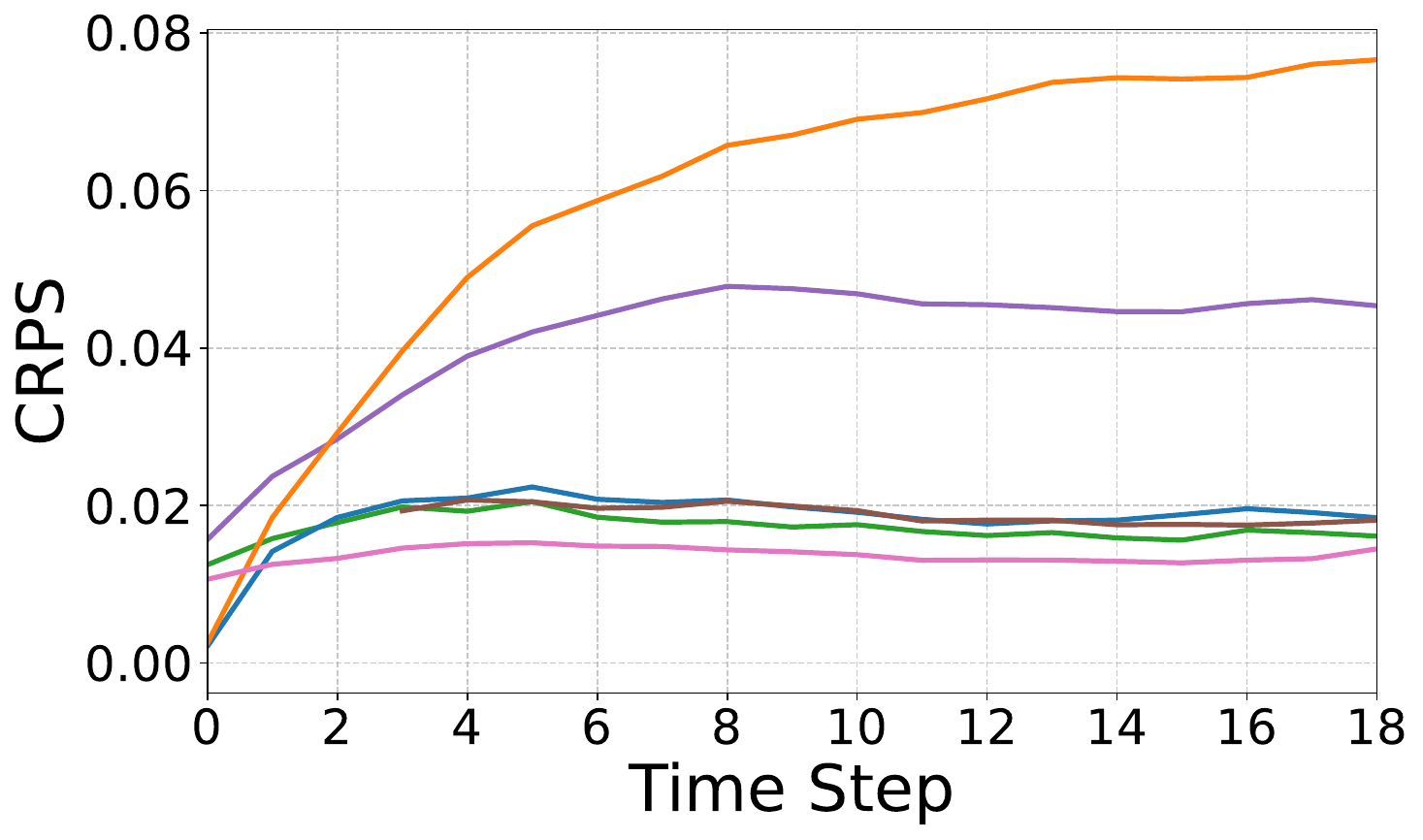}
        \caption{The CRPS of the empirical filtering distribution at each time step.}
        \label{fig:crps_highdim}    
    \end{subfigure}
    \hfill
    \begin{subfigure}[b]{0.3\textwidth}
        \includegraphics[width=\textwidth]{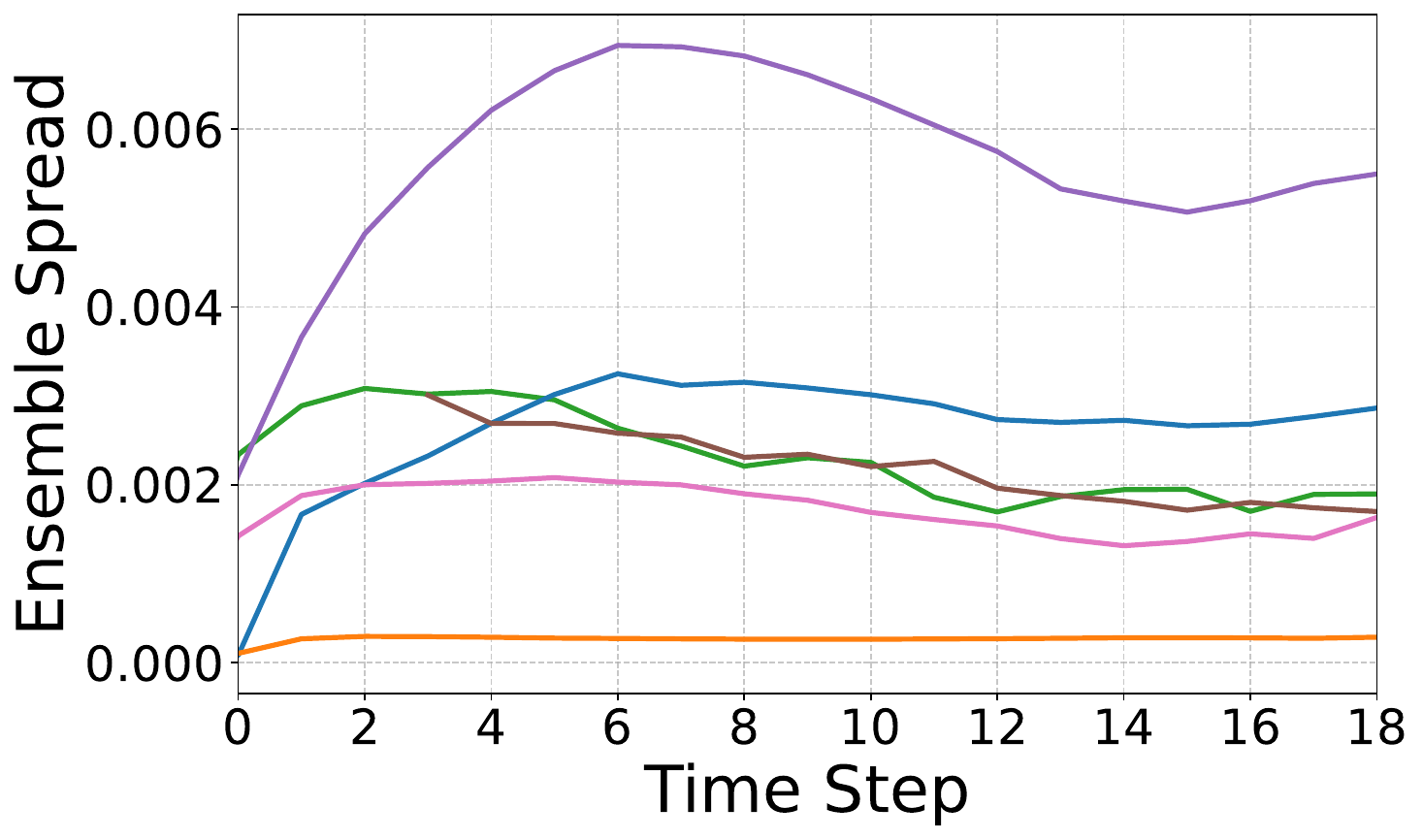}
        \caption{The spread of the ensemble at each time step.}
        \label{fig:spread_highdim}
    \end{subfigure}
    \hfill
    \begin{subfigure}[b]{0.3\textwidth}
        \includegraphics[width=\textwidth]{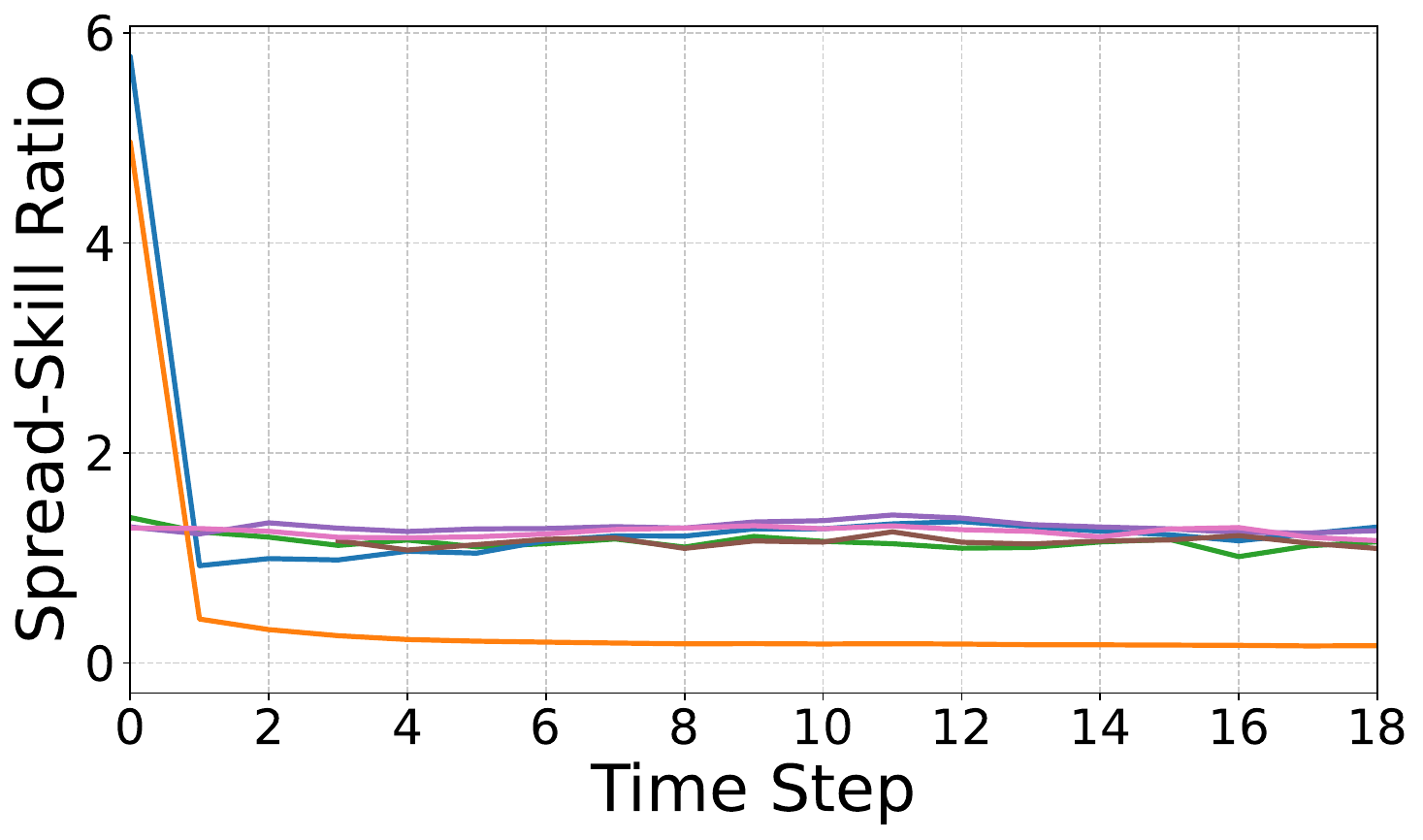}
        \caption{The spread skill ratio of the ensemble at each time step.}
        \label{fig:ssr_highdim}
    \end{subfigure}
    \hfill
    \begin{subfigure}[b]{0.3\textwidth}
        \includegraphics[width=\textwidth]{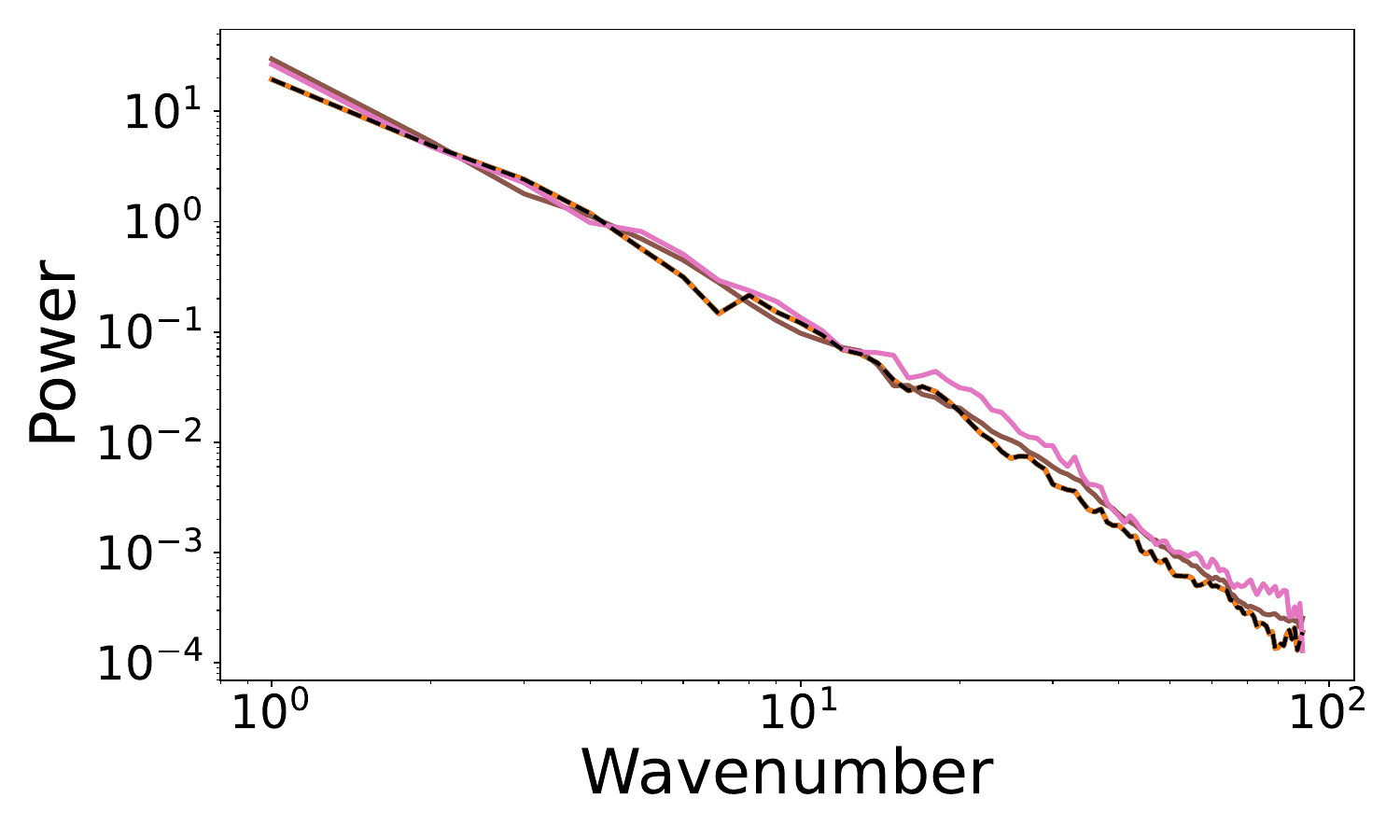}
        \caption{The energy spectra at time step 1.}
        \label{fig:spectra_1_highdim}
    \end{subfigure}
    \hfill
    \begin{subfigure}[b]{0.3\textwidth}
        \includegraphics[width=\textwidth]{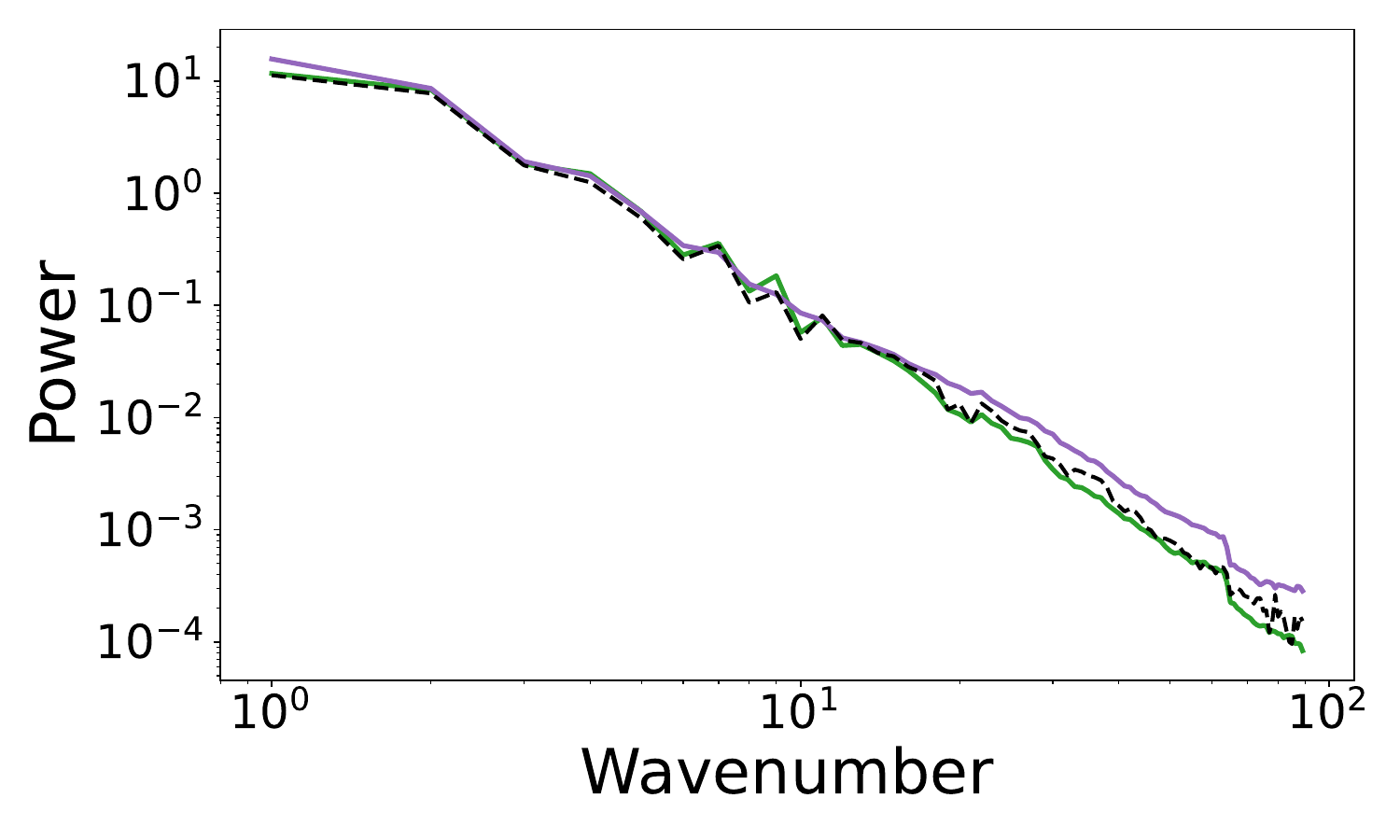}
        \caption{The energy spectra at time step 10.}
        \label{fig:spectra_50_highdim}
    \end{subfigure}
    \hfill
    \begin{subfigure}[b]{0.3\textwidth}
        \includegraphics[width=\textwidth]{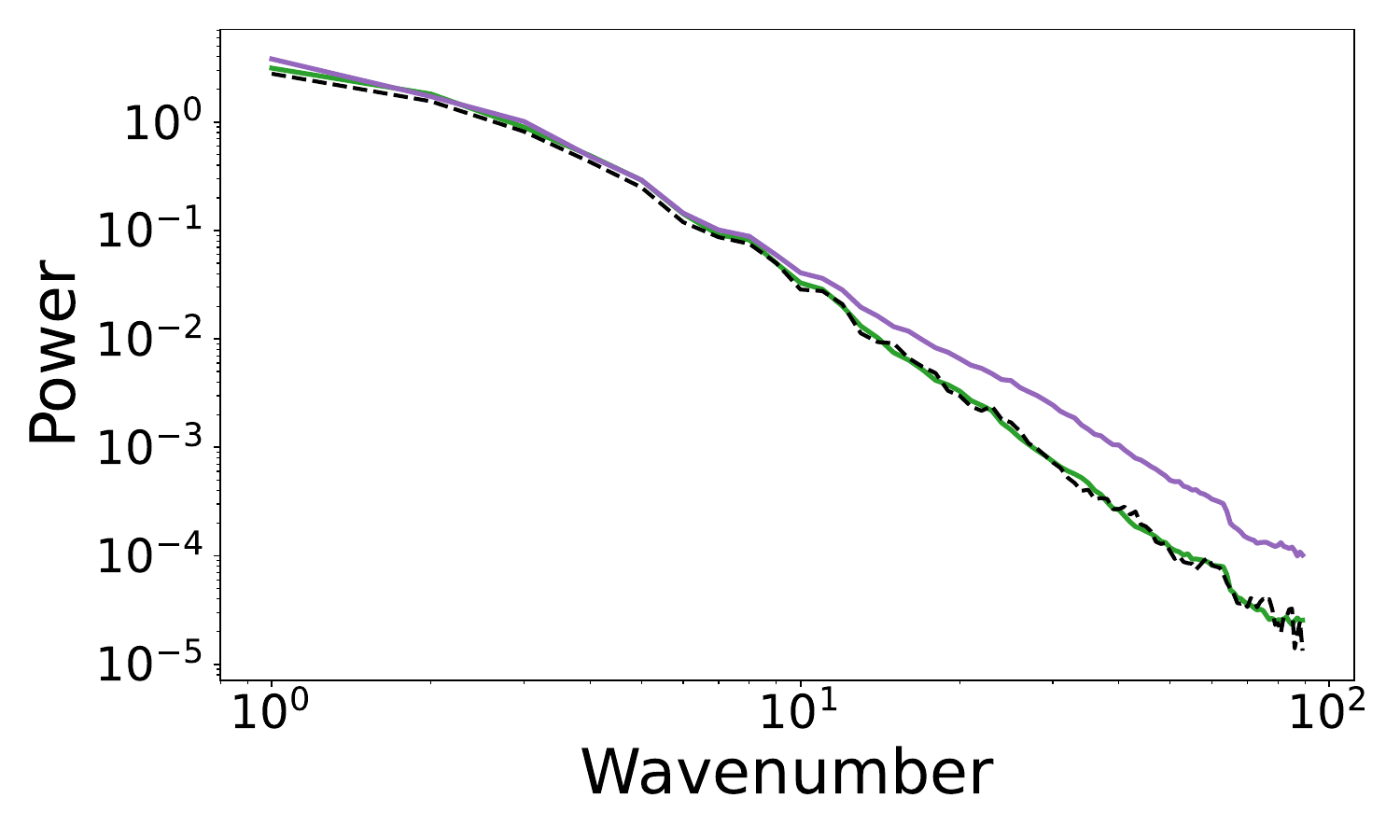}
        \caption{The energy spectra at time step 19.}
        \label{fig:spectra_100_highdim}
    \end{subfigure}
    \caption{The results for the \texttt{SEVIR} experiment.}
    \label{fig:results_sevir}
\end{figure}

\clearpage
\section{Entropy dissipation}\label{app:entropy-dissipation}
In this section, we prove a Bakry-\'Emery-type entropy dissipation result (Proposition \ref{prop:entropy-dissipation}) \cite{bakry2006diffusions}, which describes the influence of the stochastic parameter $\epsilon$ on the information loss of the forecast ensembles when assimilating data with DAISI.

\begin{proposition}\label{prop:entropy-dissipation}
    Assume that the marginal laws $\{\rho_t\}_{t \in [0,1]}$ and $\{\rho_t^\vy\}_{t \in [0,1]}$ of the interpolants bridging $\rho_0=\mathcal{N}(\boldsymbol{0}, \mat{I})$ with $\rho_1 = \mathbb{P}_\infty$ and $\rho_1^{\vy} =\mathbb{P}_\infty^\vy$, respectively, satisfy a log-Sobolev inequality with constant $\lambda > 0$, uniform in $t$. That is,
    \begin{subequations}\label{eq:log-sobolev-both}
    \begin{align}
    &\mathcal{KL}(\mathbb{P} || \rho_t) \leq \frac{1}{2\lambda} \int_{\mathbb{R}^d} \left|\nabla  \log \frac{\diff \mathbb{P}}{\diff \rho_t}(\vx)\right|^2 \mathbb{P}(\diff \vx), \label{eq:log-sobolev} \\
    &\mathcal{KL}(\mathbb{P} || \rho_t^\vy) \leq \frac{1}{2\lambda} \int_{\mathbb{R}^d} \left|\nabla  \log \frac{\diff \mathbb{P}}{\diff \rho_t^\vy}(\vx)\right|^2 \mathbb{P}(\diff \vx), \label{eq:log-sobolev-y}
    \end{align}
    \end{subequations}
    for any $t \in [0, 1]$ and any probability measure $\mathbb{P}$.
    Then, we have
    \begin{align}\label{eq:entropy-dissipation}
        \mathcal{KL}\left(\pi_{n, 0, \epsilon}^{\text{DAISI}} \,||\, \mathbb{P}_\infty^\vy\right) \leq e^{-4\lambda \epsilon} \, \mathcal{KL}\left(\hat{\pi}_n \,||\, \mathbb{P}_\infty\right),
    \end{align}
    with equality at $\epsilon = 0$, where $\mathcal{KL}(q || p)$ denotes the Kullback-Leibler divergence between measures $q$ and $p$.
\end{proposition}

\begin{proof}
Consider a stochastic interpolant $\{\vz_t\}$ bridging between $\rho_0 = \mathcal{N}(\boldsymbol{0}, \mat{I})$ and $\rho_1 = \mathbb{P}_\infty$ and let $p_t$ denote the Lebesgue density of $\rho_t = \mathrm{Law}(\vz_t)$. Then, we know from Proposition \ref{prop:stochastic-interpolant} that $\{p_t\}_{t \in [0, 1]}$
is a solution to the transport equation
\begin{align}\label{eq:transport-p}
    \frac{\partial p_t}{\partial t}(\vx) + \mathrm{div}(\vb_t(\vx) p_t(\vx)) = 0, \quad p_0(\vx) = \frac{\diff \rho_0}{\diff \vx},
\end{align}
where we denoted by $\diff \vx$ the Lebesgue measure on $\mathbb{R}^d$.
Now consider the backward SDE (up to time reparameterization)
\begin{align}\label{eq:back-sde}
    \diff \vz_{\tau} = -\vb_{1-\tau}(\vz_{\tau}) \diff \tau + \epsilon \nabla \log p_{1-\tau}(\vz_{\tau}) + \sqrt{2\epsilon} \,\diff W_\tau, \quad z_0 \sim \hat{\pi}_n,
\end{align}
solved for $\tau = 0 \rightarrow 1$, which is used in the inverse sampling step of DAISI. For simplicity, we also denote $q_\tau := p_{1-\tau}$, which satisfies 
\begin{align}\label{eq:q-evol}
    \frac{\partial q_\tau}{\partial \tau}(\vx) = \mathrm{div}(\vb_{1-\tau}(\vx) q_\tau(\vx)), \quad q_0(\vx) = \frac{\diff \mathbb{P}_\infty}{\diff \vx}(\vx), 
\end{align}
by \eqref{eq:transport-p}.
One can check that the Fokker-Planck equation of the SDE \eqref{eq:back-sde} reads
\begin{align}\label{eq:r-evol}
    \frac{\partial r_\tau}{\partial \tau}(\vx) =  \mathrm{div}\Big(\vb_{1-\tau}(\vx) r_\tau(\vx) + \epsilon \, r_\tau \nabla \log(r_\tau(\vx) / q_\tau(\vx))\Big), \quad r_0(\vx) = \frac{\diff \hat{\pi}_n}{\diff \vx}(\vx)
\end{align}
and consider the evolution of the  Kullback-Leibler divergence between the distributions given by \eqref{eq:q-evol} and \eqref{eq:r-evol}
\begin{align}\label{eq:kl-rq}
    \mathcal{KL}(r_\tau || q_\tau) = \int_{\mathbb{R}^{d_x}} r_\tau(\vx) \log \frac{r_\tau(\vx)}{q_\tau(\vx)}\diff \vx.
\end{align}
Taking the time derivative of \eqref{eq:kl-rq} yields
\begin{align}
    \frac{\diff}{\diff \tau}\mathcal{KL}(r_\tau || q_\tau) &= \int_{\mathbb{R}^{d_x}} \frac{\partial r_\tau}{\partial \tau}(\vx) \log \frac{r_\tau(\vx)}{q_\tau(\vx)}\diff \vx + \int_{\mathbb{R}^{d_x}} r_\tau(\vx) \frac{\partial}{\partial \tau} \left(\log \frac{r_\tau(\vx)}{q_\tau(\vx)}\right) \diff \vx \\
    &\stackrel{\eqref{eq:r-evol}}{=} \int_{\mathbb{R}^{d_x}} \mathrm{div}\Big(\vb_{1-\tau}(\vx) r_\tau(\vx) + \epsilon r_\tau(\vx) \nabla \log(r_\tau(\vx) / q_\tau(\vx))\Big) \log \frac{r_\tau(\vx)}{q_\tau(\vx)}\diff \vx \nonumber \\
    &\qquad + \int_{\mathbb{R}^{d_x}} r_\tau(\vx) \left(\frac{1}{r_\tau(\vx)}\frac{\partial r_\tau}{\partial \tau}(\vx) - \frac{1}{q_\tau(\vx)}\frac{\partial q_\tau}{\partial \tau}(\vx) \right)\diff \vx \\
    &= - \int_{\mathbb{R}^{d_x}} \vb_{1-\tau}(\vx) r_\tau(\vx) \nabla  \log \frac{r_\tau(\vx)}{q_\tau(\vx)} \diff \vx - \epsilon \int_{\mathbb{R}^{d_x}} r_\tau(\vx) \left|\nabla  \log \frac{r_\tau(\vx)}{q_\tau(\vx)}\right|^2 \diff \vx + \cancel{\frac{\diff}{\diff \tau} \int_{\mathbb{R}^{d_x}} r_\tau(\vx) \diff \vx} \nonumber \\
    &\qquad - \int_{\mathbb{R}^{d_x}} \frac{r_\tau(\vx)}{q_\tau(\vx)} \mathrm{div}(\vb_{1-\tau}(\vx) q_\tau(\vx)) \diff \vx,
\end{align}
where we used integration-by-parts and \eqref{eq:q-evol} to arrive at the last line.
To further simplify this expression, we note that
\begin{align}
    - \int_{\mathbb{R}^{d_x}} \vb_{1-\tau}(\vx) r_\tau(\vx) \nabla  \log \frac{r_\tau(\vx)}{q_\tau(\vx)} \diff \vx &= - \int_{\mathbb{R}^{d_x}} \vb_{1-\tau}(\vx) r_\tau(\vx) \nabla  \log r_\tau(\vx) \diff \vx + \int_{\mathbb{R}^{d_x}} \vb_{1-\tau}(\vx) r_\tau(\vx) \nabla  \log q_\tau(\vx) \diff \vx \\
    &= - \int_{\mathbb{R}^{d_x}} \vb_{1-\tau}(\vx) \nabla r_\tau(\vx) \diff \vx + \int_{\mathbb{R}^{d_x}} \vb_{1-\tau}(\vx) r_\tau(\vx) \nabla \log q_\tau(\vx) \diff \vx  \\
    &= \int_{\mathbb{R}^{d_x}} r_\tau(\vx) \mathrm{div}(\vb_{1-\tau}(\vx)) \diff \vx + \int_{\mathbb{R}^{d_x}} \vb_{1-\tau}(\vx) r_\tau(\vx) \nabla  \log q_\tau(\vx) \diff \vx,
\end{align}
and also
\begin{align}
    -\int_{\mathbb{R}^{d_x}} \frac{r_\tau(\vx)}{q_\tau(\vx)} \mathrm{div}(\vb_{1-\tau}(\vx) q_\tau(\vx)) \diff \vx &= - \int_{\mathbb{R}^{d_x}} r_\tau(\vx) \mathrm{div}(\vb_{1-\tau}(\vx)) \diff \vx - \int_{\mathbb{R}^{d_x}} \frac{r_\tau(\vx) \vb_{1-\tau}(\vx)}{q_\tau(\vx)} \nabla q_\tau(\vx) \diff \vx \\
    &= - \int_{\mathbb{R}^{d_x}} r_\tau(\vx) \mathrm{div}(\vb_{1-\tau}(\vx)) \diff \vx - \int_{\mathbb{R}^{d_x}} \vb_{1-\tau}(\vx) r_\tau(\vx) \nabla \log q_\tau(\vx) \diff \vx.
\end{align}
Hence, these two terms cancel out, giving us
\begin{align}\label{eq:kl-rq-evo}
    \frac{\diff}{\diff t}\mathcal{KL}(r_\tau || q_\tau) = -\epsilon \int_{\mathbb{R}^{d_x}} r_\tau(\vx) \left|\nabla  \log \frac{r_\tau(\vx)}{q_\tau(\vx)}\right|^2 \diff \vx.
\end{align}

Now, from our assumption that $\{p_t\}_{t \in [0,1]}$ and therefore $\{q_\tau\}_{\tau \in [0, 1]}$ satisfies the log-Sobolev inequality \eqref{eq:log-sobolev}, we obtain the estimate
\begin{align}
    \frac{\diff}{\diff \tau}\mathcal{KL}(r_\tau || q_\tau) \leq -2\lambda \epsilon \mathcal{KL}(r_\tau || q_\tau).
\end{align}
Then, applying Gr\"onwall's inequality, we arrive at
\begin{align}
    \mathcal{KL}(r_\tau || q_\tau) &\leq e^{-2\lambda \epsilon \tau} \mathcal{KL}(r_0 || q_0) \\
    &= e^{-2\lambda \epsilon \tau} \mathcal{KL}(\hat{\pi}_n || \mathbb{P}_\infty),
\end{align}
where we slightly abused notation and used the same notation $\mathcal{KL}(\,\cdot\, || \,\cdot\,)$ to denote the Kullback-Leibler divergences between two measures and their corresponding Lebesgue densities.
Now, taking the limit $\tau \rightarrow 1$, we get
\begin{align}\label{eq:entropy-dissipation-tau1}
    \mathcal{KL}(\tilde{\rho}_0^\epsilon || \rho_0) \leq e^{-2\lambda \epsilon} \mathcal{KL}(\hat{\pi} || \mathbb{P}_\infty),
\end{align}
where we note that $\rho_0(\diff \vx) = q_1(\vx) \diff \vx$ and we defined $\hat{\rho}_0^\epsilon(\diff \vx) := r_1(\vx) \diff \vx$, which may be thought of as the latent representation of the predictive distribution $\hat{\pi}$ in noise space.

Next, we apply an identical argument for the guided sampling step of DAISI. That is, we now consider an interpolant $\{\vz_t\}_{t \in [0, 1]}$ bridging between $\rho_0 = \mathcal{N}(\boldsymbol{0}, \mat{I})$ and $\rho_1 = \mathbb{P}_\infty^\vy$. As per discussion in Section \ref{app:conditional-drift}, the corresponding law $p_t^\vy$ satisfies the transport equation
\begin{align}\label{eq:transport-py}
    \frac{\partial p_t^\vy}{\partial t}(\vx) + \mathrm{div}(\vb_t^\vy(\vx) p_t^\vy(\vx)) = 0, \quad p_0^\vy(\vx) = \frac{\diff \rho_0}{\diff \vx}(\vx),
\end{align}
with $\vb_t^\vy$ defined in \eqref{eq:by-definition}. Now, consider the forward guided SDE
\begin{align}\label{eq:fwd-sde}
    \diff \vz_{t} = \vb^\vy_{t}(\vz_t) \diff t + \epsilon \nabla \log p^\vy_{t}(\vz_{t}) + \sqrt{2\epsilon} \,\diff W_t, \quad z_0 \sim \hat{\rho}_0^\epsilon,
\end{align}
whose marginal laws can be checked to solve the following Fokker-Planck equation
\begin{align}\label{eq:ry-evol}
    \frac{\partial r_t^\vy}{\partial t}(\vx) = \mathrm{div}\Big(-\vb_t^\vy(\vx) r_t^\vy(\vx) + \epsilon \, r_t^\vy \nabla \log(r_t^\vy(\vx) / p_t^\vy(\vx))\Big), \quad r_0^\vy(\vx) = \frac{\diff \hat{\rho}_0^\epsilon}{\diff \vx}(\vx).
\end{align}
Then, by a similar computation as before, we obtain
\begin{align}\label{eq:kl-rqy-evo}
    \frac{\diff}{\diff t}\mathcal{KL}(r_t^\vy || p_t^\vy) = -\epsilon \int_{\mathbb{R}^{d_x}} r_t^\vy(\vx) \left|\nabla  \log \frac{r_t^\vy(\vx)}{p_t^\vy(\vx)}\right|^2 \diff \vx,
\end{align}
and again assuming that the conditional laws $\{p_t^\vy\}_{t \in [0,1]}$ satisfy the uniform log-Sobolev inequality \eqref{eq:log-sobolev-y} with uniform constant $\lambda$, we get
\begin{align}
    &\frac{\diff}{\diff t}\mathcal{KL}(r_t^\vy || p_t^\vy) \leq -2\lambda \epsilon \mathcal{KL}(r_t^\vy || p_t^\vy) \\
    &\stackrel{\text{Gr\"onwall}}{\Longrightarrow} \mathcal{KL}(r_t^\vy || p_t^\vy) \leq e^{-2\lambda \epsilon t} \mathcal{KL}(\hat{\rho}_0^\epsilon || \rho_0).
\end{align}
Taking the limit $t \rightarrow 1$, we get
\begin{align}
    \mathcal{KL}(\pi_{n, 0, \epsilon}^{\text{DAISI}} || \mathbb{P}_\infty^\vy) &\leq e^{-2\lambda \epsilon} \mathcal{KL}(\hat{\rho}_0^\epsilon || \rho_0) \\
    &\stackrel{\eqref{eq:entropy-dissipation}}{\leq} e^{-4\lambda \epsilon} \mathcal{KL}(\hat{\pi}_n || \mathbb{P}_\infty),
\end{align}
which proves the inequality \eqref{eq:entropy-dissipation}.

Now, from \eqref{eq:kl-rq-evo} and \eqref{eq:kl-rqy-evo}, we also see that when $\epsilon = 0$, we have that
\begin{align}
    \mathcal{KL}(\pi_{n, 0, 0}^{\text{DAISI}} || \mathbb{P}_\infty^\vy) = \mathcal{KL}(\tilde{\rho}_0^0 || \rho_0) = \mathcal{KL}(\hat{\pi}_n || \mathbb{P}_\infty),
\end{align}
which completes the proof.
\end{proof}

\begin{remark}
    Inequality \eqref{eq:entropy-dissipation} describes how DAISI exponentially contracts the KL divergence to the posterior measure $\mathbb{P}_\infty^\vy$ as $\epsilon \rightarrow \infty$. In other words, this shows how noise in the generative SDE progressively ``erases" the discrepancy between the forecast measure $\hat{\pi}_n$ and the background measure $\mathbb{P}_\infty$. On the other extreme, when $\epsilon = 0$, this discrepancy (i.e., information about the forecast) is exactly preserved.
    We also note that the proof relies on the assumption that $t_{\min} = 0$, however, we expect similar behaviour to hold for $t_{\min} > 0$.
\end{remark}

\begin{remark}
    The uniform log-Sobolev assumption \eqref{eq:log-sobolev-both} is a strong regularity assumption, indicating that the interpolant path $\{\rho_t\}_{t \in [0, 1]}$ is smooth and does not have any irregularities, e.g., develop narrow spikes or heavy tails. While this is normally an unreasonable assumption for high-dimensional distributions arising in generative modelling, the information loss behaviour as $\epsilon \rightarrow \infty$ can be observed empirically.
\end{remark}

\end{document}